\documentclass{article}

\usepackage[preprint]{neurips_2026}

\usepackage[utf8]{inputenc}
\usepackage[T1]{fontenc}
\usepackage{microtype}
\usepackage{graphicx}
\usepackage{subcaption}
\usepackage{booktabs}
\usepackage{multirow}
\usepackage{enumitem}
\usepackage{colortbl}
\usepackage{xcolor}
\usepackage{array}
\usepackage{hyperref}
\usepackage{url}

\usepackage{amsmath}
\usepackage{amssymb}
\usepackage{amsfonts}
\usepackage{mathtools}
\usepackage{amsthm}
\usepackage{bm}

\usepackage{tikz}
\usepackage{pgfplots}
\pgfplotsset{compat=1.17}
\usepgfplotslibrary{groupplots}
\usetikzlibrary{arrows.meta, positioning, shapes.geometric, calc, decorations.pathreplacing, patterns}

\usepackage[capitalize,noabbrev]{cleveref}

\theoremstyle{plain}
\newtheorem{theorem}{Theorem}[section]
\newtheorem{proposition}[theorem]{Proposition}
\newtheorem{lemma}[theorem]{Lemma}

\theoremstyle{definition}
\newtheorem{definition}[theorem]{Definition}

\theoremstyle{remark}

\newcommand{\miniIN}{\emph{mini}-ImageNet}

\title{When Does Embedding Magnitude Matter? \\
A Cross-Task Functional-Symmetry Framework}

\author{%
  Xincan Feng\\
  Nara Institute of Science and Technology, Japan\\
  \texttt{feng.xincan.fy2@is.naist.jp}
  \And
  Taro Watanabe\\
  Nara Institute of Science and Technology, Japan\\
  \texttt{taro@is.naist.jp}
}

\begin{document}

\maketitle

\begin{abstract}
Cosine similarity normalizes both sides; dot product normalizes neither. We propose a 2$\times$2 framework that independently controls query-side and document-side normalization, exposing two intermediate variants (\textsc{QNorm}, \textsc{DNorm}) that have not been previously studied.
On retrieval with four encoders, evaluated in-domain on MS MARCO and out-of-domain on BEIR, BRIGHT, and multi-hop QA, the unilateral variants outperform \emph{both} cosine and dot product, with relative gains of up to +72\% out-of-domain and +24\% on downstream RAG.
Cross-evaluation reveals the mechanism: document magnitude scales inference scores while query magnitude modulates training gradients, and the Fisher Information Matrix condition number predicts which side to normalize.
We then classify tasks by \emph{functional symmetry}, defined as whether the aggregate scoring procedure treats $Q$ and $C$ as interchangeable, and test whether the mechanism extends beyond retrieval.
On five additional task families (semantic textual similarity, CLIP, knowledge graph completion, few-shot classification, recommender systems), the coarse prediction (cosine for symmetric, magnitude-preserving for asymmetric) holds in every case examined; the unilateral variants beat Cosine on recommendation, and on few-shot classification \textsc{DNorm} beats both Cosine and the standard Euclidean default of Prototypical Networks.
\end{abstract}


\section{Introduction}

Many systems compare two inputs through a learned similarity function $s(\bm{q}, \bm{c})$, including text retrieval \cite{karpukhin2020dpr, izacard2022contriever, xiao2022retromae, wang2022e5}, multimodal alignment \cite{radford2021clip}, knowledge graph completion \cite{sun2019rotate, chao2021pairre}, few-shot classification \cite{snell2017prototypical}, recommender systems \cite{he2020lightgcn, mao2021simplex}, and semantic textual similarity \cite{cer2017semeval, gao2021simcse}. Across these tasks, embedding \emph{magnitude} is treated inconsistently: some methods normalize it away (cosine in CLIP, BGE \cite{xiao2024c}, SimpleX \cite{mao2021simplex}), others preserve it (dot product in DPR, BPR/LightGCN), and others use it inside a distance score (Euclidean in TransE, Prototypical Networks). These choices are inherited largely by convention, with no principled criterion for when each is appropriate.

This inconsistency hides a sharper question: \emph{when does magnitude matter, and how should each side of the similarity comparison treat it?} Cosine and dot product both apply a symmetric treatment, normalizing both magnitudes or neither and thus collapsing the per-side distinction. Direct comparisons of cosine versus dot product \cite{karpukhin2020dpr, gao2021simcse} do not isolate per-side effects, and prior work studying magnitude's role \cite{steck2024cosine, wang2020hypersphere, draganov2024pitfalls} stops short of saying when to keep magnitude and when to discard it.

We address the question in two parts. First, on retrieval, a 2$\times$2 normalization framework independently controls the two magnitudes, exposing two intermediate variants \textsc{QNorm} and \textsc{DNorm} that outperform \emph{both} cosine and dot, with the Fisher Information Matrix predicting which side to normalize. Second, we generalize via \emph{functional symmetry}, the property that the aggregate scoring procedure treats $Q$ and $C$ as interchangeable, and find that this single property predicts the optimal similarity across five additional task families.

\begin{enumerate}[leftmargin=2em, itemsep=0.2em]
  \item \textbf{Unilateral normalization beats existing standards.} \textsc{QNorm} and \textsc{DNorm} are not just taxonomic gaps: across the three asymmetric tasks we examine, they beat Cosine on retrieval (up to +72\% out-of-domain), beat both Cosine and the Euclidean default of Prototypical Networks on few-shot classification, and beat Cosine on recommendation. Cross-evaluation explains why: document magnitude alters inference ranking, query magnitude modulates training gradients via an effective temperature, and normalizing one side anchors the optimization while leaving the other free to encode task signal.

  \item \textbf{Functional symmetry as a cross-task predictor.} We classify any task with a learned similarity score by whether the aggregate scoring procedure treats $Q$ and $C$ as interchangeable. Functionally symmetric tasks (semantic textual similarity, CLIP, KGC) favor cosine; functionally asymmetric tasks (retrieval, few-shot classification, recommender systems) favor unilateral normalization. The prediction holds in all 6 task families examined.

  \item \textbf{Predictive guidance for when and which.} Functional symmetry tells us \emph{whether} to keep magnitude; for asymmetric tasks, the Fisher Information Matrix condition number, computed on the pretrained model \emph{before training}, further predicts \emph{which side} to normalize (\textsc{QNorm} vs.\ \textsc{DNorm}). Learnable normalization provides a safe default when symmetry is uncertain.
\end{enumerate}

\section{The Geometry of Similarity Functions}
\label{sec:geometry}

\subsection{Background: Contrastive Learning with InfoNCE}

Contrastive learning trains encoders by pulling together representations of similar pairs while pushing apart dissimilar ones. The InfoNCE loss \cite{oord2018infonce} formalizes this objective:
\begin{equation}
\mathcal{L} = -\log \frac{\exp(\alpha \cdot s(\bm{q}, \bm{d}^+))}{\exp(\alpha \cdot s(\bm{q}, \bm{d}^+)) + \sum_{i=1}^{k} \exp(\alpha \cdot s(\bm{q}, \bm{d}^-_i))},
\label{eq:infonce}
\end{equation}
where $\bm{q}$ is the query embedding, $\bm{d}$ is the document embedding, $s(\cdot, \cdot)$ is a similarity function, and $\alpha = 1/\tau$ is the inverse temperature (loss scale). The choice of $s$ determines which directions in embedding space carry gradient signal.

\subsection{Cosine, Dot Product, and the Magnitude Degree of Freedom}

The standard cosine similarity $s_{\text{cos}}(\bm{q}, \bm{d}) = \hat{\bm{q}}^\top \hat{\bm{d}}$ (\Cref{eq:cosine_dot}) constrains representations to the unit hypersphere, encoding the implicit assumption that \emph{magnitude carries no task-relevant information}. The unnormalized dot product $s_{\text{dot}}(\bm{q}, \bm{d}) = \bm{q}^\top \bm{d} = \|\bm{q}\|\|\bm{d}\|\cos\theta$ (\Cref{eq:cosine_dot}) is therefore a \emph{magnitude-weighted angular similarity}, restoring magnitude as a learnable degree of freedom \cite{oyama2023norm, guo2024kvcache, liao2025magnitude}. We use $\|\bm{q}\|$ and $\|\bm{d}\|$ for Euclidean norms throughout.

\begin{equation}
s_{\text{cos}}(\bm{q}, \bm{d}) = \frac{\bm{q}^\top \bm{d}}{\|\bm{q}\| \cdot \|\bm{d}\|} = \hat{\bm{q}}^\top \hat{\bm{d}}, \qquad
s_{\text{dot}}(\bm{q}, \bm{d}) = \bm{q}^\top \bm{d} = \|\bm{q}\| \cdot \|\bm{d}\| \cdot \cos\theta.
\label{eq:cosine_dot}
\end{equation}

\begin{figure}[t]
  \centering
  \begin{tikzpicture}[
    scale=0.62,
    every node/.style={font=\scriptsize},
    vec/.style={-{Stealth[length=1.8mm]}, thick},
    qvec/.style={vec, blue!70!black},
    dvec/.style={vec, red!70!black},
    unitcircle/.style={draw=gray!50, dashed, thin},
    label/.style={font=\scriptsize},
    title/.style={font=\scriptsize\bfseries, align=center},
  ]

    \begin{scope}[shift={(0,0)}]
      \draw[unitcircle] (0,0) circle (1.2);
      \draw[qvec] (0,0) -- (55:1.2) node[above, label] {$\hat{\bm{q}}$};
      \draw[dvec] (0,0) -- (15:1.2) node[right, label] {$\hat{\bm{d}}$};
      \node[title, below] at (0,-1.6) {(a) Cosine};
      \fill[gray!30, opacity=0.3] (0,0) -- (15:0.4) arc (15:55:0.4) -- cycle;
      \node[font=\tiny] at (35:0.6) {$\theta$};
    \end{scope}

    \begin{scope}[shift={(4.0,0)}]
      \draw[unitcircle] (0,0) circle (1.2);
      \draw[qvec] (0,0) -- (55:1.2) node[above, label] {$\hat{\bm{q}}$};
      \draw[dvec] (0,0) -- (15:1.9) node[right, label] {$\bm{d}$};
      \node[title, below] at (0,-1.6) {(b) QNorm};
      \draw[gray, thin, dotted] (15:1.2) -- (15:1.9);
    \end{scope}

    \begin{scope}[shift={(8.0,0)}]
      \draw[unitcircle] (0,0) circle (1.2);
      \draw[qvec] (0,0) -- (55:1.9) node[above, label] {$\bm{q}$};
      \draw[dvec] (0,0) -- (15:1.2) node[right, label] {$\hat{\bm{d}}$};
      \node[title, below] at (0,-1.6) {(c) DNorm};
      \draw[gray, thin, dotted] (55:1.2) -- (55:1.9);
    \end{scope}

    \begin{scope}[shift={(12.0,0)}]
      \draw[unitcircle] (0,0) circle (1.2);
      \draw[qvec] (0,0) -- (55:1.9) node[above, label] {$\bm{q}$};
      \draw[dvec] (0,0) -- (15:1.7) node[right, label] {$\bm{d}$};
      \node[title, below] at (0,-1.6) {(d) Dot};
      \draw[gray, thin, dotted] (55:1.2) -- (55:1.9);
      \draw[gray, thin, dotted] (15:1.2) -- (15:1.7);
    \end{scope}

  \end{tikzpicture}
  \caption{Query-document normalization framework. The dashed circle represents the unit sphere. Normalized vectors ($\hat{\bm{v}}$) lie on the sphere; unnormalized vectors extend beyond. (a) Cosine: both normalized; (b) QNorm: only query normalized, document magnitude preserved; (c) DNorm: only document normalized, query magnitude preserved; (d) Dot: both unnormalized.}
  \label{fig:loss_comparison}
\end{figure}

\subsection{A Query-Document Normalization Framework}
\label{sec:ablation_framework}

The denominator $\|\bm{q}\| \cdot \|\bm{d}\|$ in \Cref{eq:cosine_dot} has two independent factors. Normalizing each independently exposes two intermediate variants:
\begin{equation}
s_{\text{qnorm}}(\bm{q}, \bm{d}) = \hat{\bm{q}}^\top \bm{d} = \|\bm{d}\|\cos\theta, \qquad
s_{\text{dnorm}}(\bm{q}, \bm{d}) = \bm{q}^\top \hat{\bm{d}} = \|\bm{q}\|\cos\theta.
\label{eq:qnorm_dnorm}
\end{equation}
Together with Cosine and Dot, these four variants span all query-document normalization combinations (\Cref{fig:loss_comparison}). One might object that magnitude in $\mathbb{R}^n$ is just an extra coordinate on $\mathbb{S}^n$ via $\varphi(\bm{v}){=}(\bm{v}/M, \sqrt{1{-}\|\bm{v}\|^2/M^2})$; although $\varphi$ is a bijection, it does \emph{not} preserve similarity ranking (Appendix~\ref{appendix:ranking_reversal}), so cosine and dot induce different rankings and optimization dynamics. The 2$\times$2 framework asks which of these four normalizations best matches a given task, rather than imposing the unit-norm constraint by default.

\paragraph{Learnable Normalization.}
\label{par:learnable_norm}
The four variants above are discrete points in a continuous space. We unify them through a learnable normalization framework:
\begin{equation}
s_{\text{learn}}(\bm{q}, \bm{d}) = \frac{\bm{q}^\top}{\|\bm{q}\|^{\gamma_q}} \cdot \frac{\bm{d}}{\|\bm{d}\|^{\gamma_d}}, \quad \gamma_q = \sigma(\hat{\gamma}_q),\; \gamma_d = \sigma(\hat{\gamma}_d) \in [0, 1],
\label{eq:learnable_norm}
\end{equation}
where $\hat{\gamma}_q, \hat{\gamma}_d$ are learnable parameters and $\sigma$ is the sigmoid function, ensuring $\gamma \in [0, 1]$. This subsumes all four discrete variants as special cases: $(\gamma_q, \gamma_d) = (1,1)$ yields Cosine, $(0,0)$ yields Dot, $(1,0)$ yields QNorm, and $(0,1)$ yields DNorm. By initializing at $\gamma = 0.5$, which is the geometric midpoint, we let the model \emph{discover} the optimal normalization level through gradient descent, providing an empirical test of which strategy the data prefers.

\subsection{Functional Symmetry and Asymmetric Learning Dynamics}
\label{sec:task_symmetry}

When does each cell of the 2$\times$2 framework apply? We answer by appealing to a structural property of the task itself.

\begin{definition}[Functional Symmetry]
\label{def:functional_symmetry}
A task is \textbf{functionally symmetric} if its \emph{aggregate scoring procedure}, comprising both training loss and inference protocol, treats $Q$ and $C$ as interchangeable roles. This includes (i) tasks that enforce $s(Q, C) = s(C, Q)$ as a definitional constraint (e.g., semantic textual similarity, STS), (ii) tasks whose loss aggregates both role configurations (e.g., bidirectional InfoNCE in CLIP), and (iii) tasks whose evaluation metric averages scores over both role configurations (e.g., filtered MRR over head and tail prediction in KGC). Otherwise the task is \textbf{functionally asymmetric}.
\end{definition}

Unilateral normalization is incompatible with functional symmetry: under QNorm, $s_{\text{qnorm}}(\bm{a}, \bm{b}) = \|\bm{b}\|\cos\theta \neq \|\bm{a}\|\cos\theta = s_{\text{qnorm}}(\bm{b}, \bm{a})$ unless $\|\bm{a}\| = \|\bm{b}\|$, breaking the role exchangeability that defines symmetric tasks. We therefore predict QNorm and DNorm to outperform Cosine and Dot only when the task is functionally asymmetric (e.g., retrieval, recommender systems, few-shot classification), and to underperform on functionally symmetric tasks (STS, CLIP, KGC). Sections~\ref{sec:experiments}--\ref{sec:generalization} test this prediction.

\paragraph{Asymmetric learning dynamics.} Beyond task compatibility, the framework reveals asymmetric roles of query and document magnitude in functionally asymmetric tasks. At \emph{inference time}, only document magnitude affects ranking (Proposition~\ref{prop:ranking}). At \emph{training time}, query magnitude modulates gradient dynamics: under DNorm, the effective scale becomes $\alpha_{\text{eff}} = \alpha \cdot \|\bm{q}\|$, making high-magnitude queries sharpen the softmax distribution and receive larger gradients. See Appendix~\ref{appendix:gradient_details} for derivations and Appendix~\ref{appendix:asymmetry_principle} for detailed analysis.

\section{Experiments}
\label{sec:experiments}

\subsection{Experimental Setup}

\paragraph{Models.}
We use three BERT-based retrievers (Contriever \cite{izacard2022contriever}, RetroMAE \cite{xiao2022retromae}, E5 \cite{wang2022e5}) and one LLM-based retriever (Qwen3-Base-0.6B). E5 results are in Appendix~\ref{appendix:e5} due to architectural constraints.

\paragraph{Data and Evaluation.}
We finetune on MS MARCO v1.1 QA \cite{nguyen2016msmarco} with 82K samples (and 503K samples for verification). We evaluate using NDCG@10 on MS MARCO Dev, TREC-DL, BEIR \cite{thakur2021beir}, BRIGHT \cite{bright2025}, and multi-hop QA \cite{ho2020wikimultihop, trivedi2022musique, gupta2025novelhopqa, yang2018hotpotqa}. See Appendix~\ref{sec:appendix_datasets} for details.

\paragraph{Hyperparameters.}
BERT-based models train for 100 epochs (batch size 128); Qwen for 40 epochs (batch size 64). Learning rates are grid-searched and fixed across similarity variants. We report results at $\alpha = 20$ ($\tau = 0.05$), the loss scale at which Cosine peaks (so any improvement over Cosine is not at its expense); the method ranking is stable across $\alpha \in \{10, 20, 30, 40\}$. Contriever and RetroMAE report mean$\pm$std across 3 seeds (0, 42, 1337); Qwen uses seed 0. Training-time validation NDCG@10 (Figure~\ref{fig:training_curves}) is computed on a small in-domain corpus ($\sim$10K relevant passages) for efficiency, while final evaluation (Table~\ref{tab:main_simplified}) uses full benchmark corpora ($\sim$8.8M documents); the method rankings agree between the two. See Appendix~\ref{appendix:training_config}.

\begin{table}[t]
  \caption{NDCG@10 across three training paradigms. DL = average of TREC DL'19 and DL'20. \textbf{Bold} = best, \underline{underline} = second best per model/scale.}
  \label{tab:main_simplified}
  \centering
  \scriptsize

  \textbf{(a) Finetune}
  \vspace{0.1cm}

  \resizebox{0.7\textwidth}{!}{
    \begin{tabular}{lcccccc|cccccc}
      \toprule
      & \multicolumn{6}{c|}{\textbf{Contriever}} & \multicolumn{6}{c}{\textbf{RetroMAE}} \\
      \cmidrule(lr){2-7} \cmidrule(lr){8-13}
      & PT & Cos & Dot & QN & DN & Ln & PT & Cos & Dot & QN & DN & Ln \\
      \midrule
      DL & 41.7 & 57.0 & 56.5 & \underline{57.5} & 56.3 & \textbf{58.2} & 11.6 & 56.5 & 57.3 & 57.8 & \underline{59.6} & \textbf{60.1} \\
      BEIR & 28.7 & 41.0 & \underline{43.5} & \textbf{44.0} & 42.5 & 43.6 & 14.3 & 38.3 & 40.1 & 41.1 & \underline{41.2} & \textbf{41.2} \\
      BRIGHT & 3.8 & 7.4 & \underline{12.5} & \textbf{12.7} & 9.8 & 11.7 & 5.2 & 6.1 & 8.7 & 9.4 & \underline{9.5} & \textbf{9.8} \\
      MHop & 39.7 & 51.4 & \textbf{58.2} & 57.4 & \underline{57.5} & 57.2 & 18.5 & 50.7 & 52.9 & 54.8 & \textbf{55.5} & \underline{55.2} \\
      \bottomrule
    \end{tabular}
  }

  \vspace{0.25cm}

  \textbf{(b) Foundation Model}
  \vspace{0.1cm}

  \resizebox{0.7\textwidth}{!}{
    \begin{tabular}{lccccc|ccccc}
      \toprule
      & \multicolumn{5}{c|}{\textbf{MARCO QA (82K)}} & \multicolumn{5}{c}{\textbf{MARCO Passage (503K)}} \\
      \cmidrule(lr){2-6} \cmidrule(lr){7-11}
      & Cos & Dot & QN & DN & Ln & Cos & Dot & QN & DN & Ln \\
      \midrule
      DL & \underline{54.4} & 14.9 & 53.6 & 49.6 & \textbf{56.5} & \underline{58.3} & 24.9 & 50.1 & \textbf{60.6} & 38.7 \\
      BEIR & \underline{42.6} & 15.4 & 29.7 & \textbf{42.7} & 40.3 & \underline{44.8} & 13.4 & 21.1 & \textbf{45.3} & 19.3 \\
      BRIGHT & \textbf{10.4} & 0.1 & 0.1 & \underline{9.8} & 2.2 & \underline{11.1} & 0.1 & 0.2 & \textbf{13.5} & 1.7 \\
      MHop & \textbf{49.2} & 18.7 & 37.4 & 44.1 & \underline{48.7} & \underline{54.6} & 16.8 & 47.2 & \textbf{57.6} & 10.7 \\
      \bottomrule
    \end{tabular}
  }

  \vspace{0.25cm}

  \textbf{(c) Random Init}
  \vspace{0.1cm}

  \resizebox{0.7\textwidth}{!}{
    \begin{tabular}{lccccc|ccccc}
      \toprule
      & \multicolumn{5}{c|}{\textbf{Contriever}} & \multicolumn{5}{c}{\textbf{RetroMAE}} \\
      \cmidrule(lr){2-6} \cmidrule(lr){7-11}
      & Cos & Dot & QN & DN & Ln & Cos & Dot & QN & DN & Ln \\
      \midrule
      DL & \textbf{38.3} & 24.6 & 21.5 & \underline{31.7} & 26.0 & \textbf{41.2} & 31.7 & 30.3 & \underline{38.0} & 31.9 \\
      BEIR & \textbf{24.4} & 13.4 & 15.5 & \underline{21.1} & 17.9 & \textbf{25.7} & 18.4 & 19.5 & \underline{23.1} & 20.6 \\
      BRIGHT & \underline{3.1} & 0.3 & 0.3 & \textbf{3.8} & 1.2 & \underline{3.2} & 0.8 & 1.0 & \textbf{3.9} & 1.7 \\
      MHop & \textbf{27.9} & 18.3 & 20.3 & \underline{27.7} & 22.7 & \textbf{31.0} & 25.6 & 26.8 & \underline{30.8} & 28.3 \\
      \bottomrule
    \end{tabular}
  }

\end{table}

\input{floats/figure_training_curves_combined.tex}

\paragraph{Training Paradigms.}
We distinguish three training paradigms (\Cref{tab:combined_results}): (a) \textbf{Finetuning}: training from retrieval-specialized pretrained models such as Contriever and RetroMAE, which have undergone contrastive pretraining on top of BERT; (b) \textbf{Training from foundation model}: training directly from a general-purpose pretrained LLM, specifically Qwen3-0.6B-Base, without retrieval-specific pretraining, following standard practice for LLM-based retrievers~\cite{karpukhin2020dpr}; and (c) \textbf{Random initialization}: training from randomly initialized weights using only the model architecture, an extreme ablation setting to isolate the role of pretrained representations.

\subsection{Results}

\Cref{tab:main_simplified} reports NDCG@10 for the four similarity variants on Contriever and RetroMAE finetuned on MS MARCO v1.1 QA; complete results across all training paradigms are in Appendix~\ref{appendix:complete_results}.

Across both models, all magnitude-aware variants outperform Cosine, with substantial gains on reasoning-intensive tasks: BRIGHT (+72\% for Contriever QNorm) and multi-hop QA (+13\% for Contriever Dot). The optimal variant is model-dependent: Contriever favors QNorm (document magnitude already correlates with relevance from contrastive pretraining), while RetroMAE favors DNorm (benefits from query-side gradient modulation). Learnable normalization provides a robust default when this prior is unavailable.

\paragraph{OOD gains exceed in-domain gains.}
The benefit of magnitude learning is consistently larger out-of-domain than in-domain. Contriever QNorm gains +1--2\% in-domain versus +7--72\% out-of-domain (BEIR +7.4\%, BRIGHT +72\%, multi-hop +12\%); RetroMAE DNorm shows the same pattern (in-domain +4--7\% vs.\ OOD +8--55\%). This suggests that magnitude carries domain-invariant relevance signals (e.g., specificity, information density) that transfer across domains, while angular similarity may overfit to domain-specific patterns.

\paragraph{LLM-based retriever.}
\Cref{tab:foundation} shows Qwen3-Base-0.6B results. With 82K samples, Cosine and Learnable perform best; with 503K samples, DNorm achieves the best results on most benchmarks. Foundation LLMs lack retrieval-specific inductive bias and must learn magnitude-relevance associations from scratch, requiring more data.

\paragraph{Statistical significance.}
Per-dataset paired t-tests confirm significance ($p < 0.01$ on BEIR for all magnitude-aware methods). Results are consistent across three random seeds. See Appendix~\ref{appendix:seed_sensitivity}.

\begin{table}[!t]
\centering
\caption{End-to-end RAG on open-domain QA. Retriever: Contriever (Table~\ref{tab:main_simplified}, finetuned on MS MARCO v1.1 QA). Reader: Flan-T5-Large. QNorm gains the most ($\Delta$ up to +7.9 EM, +24\% on TriviaQA); Learn matches but does not exceed Dot.}
\label{tab:rag_results}
\resizebox{0.6\textwidth}{!}{
\begin{tabular}{l|cc|cc|cc}
\toprule
& \multicolumn{2}{c|}{\textbf{NQ} (3.5K)} & \multicolumn{2}{c|}{\textbf{HotpotQA} (7.4K)} & \multicolumn{2}{c}{\textbf{TriviaQA} (11.3K)} \\
\textbf{Similarity} & EM & F1 & EM & F1 & EM & F1 \\
\midrule
Cosine & 23.0 & 31.9 & 27.2 & 35.9 & 32.3 & 38.7 \\
Dot & 24.5 & 33.8 & \textbf{32.9} & \textbf{42.7} & 39.0 & 46.3 \\
QNorm & \textbf{26.1} & \textbf{35.4} & 32.7 & 42.4 & \textbf{40.2} & \textbf{47.5} \\
DNorm & 22.8 & 31.5 & 31.5 & 41.0 & 37.7 & 44.6 \\
Learn & 25.0 & 34.0 & 32.6 & 42.4 & 39.4 & 46.5 \\
\midrule
\rowcolor{gray!10}
$\Delta$ (QNorm$-$Cos) & \textbf{+3.1} & \textbf{+3.5} & \textbf{+5.5} & \textbf{+6.5} & \textbf{+7.9} & \textbf{+8.8} \\
\bottomrule
\end{tabular}
}
\end{table}

\paragraph{Downstream RAG.} Retrieval gains transfer downstream: with Contriever$+$Flan-T5-Large ($k{=}5$), QNorm beats Cosine by +13.5\%/+20.2\%/+24.5\% on NQ/HotpotQA/TriviaQA (\Cref{tab:rag_results}, Appendix~\ref{appendix:rag_setup}).

\section{Analysis: Why Magnitude Helps}
\label{sec:analysis}

\begin{table}[t]
\centering
\caption{FIM analysis showing gradient sensitivity (log$_{10}$ scale). Higher values indicate greater sensitivity; darker backgrounds encode higher sensitivity. \textbf{Bold} indicates best methods for each model. Cosine eliminates magnitude gradients entirely (FIM $\approx 10^{-20}$), while magnitude-preserving methods show substantially higher sensitivity.}
\label{tab:fim_analysis}
\footnotesize

\begin{minipage}[t]{0.48\textwidth}
\centering
\textbf{(a) BERT-based}
\vspace{0.1cm}

\setlength{\tabcolsep}{3pt}
\begin{tabular}{ll|rrrrr}
\toprule
\textbf{Model} & \textbf{FIM Type} & Cos & Dot & QN & DN & Ln \\
\midrule
\multirow{3}{*}{Contriever}
& Doc Mag & \cellcolor{blue!0}-20.0 & \cellcolor{blue!47}-3.5 & \cellcolor{blue!45}\textbf{-4.3} & \cellcolor{blue!0}-19.8 & \cellcolor{blue!49}\textbf{-3.0} \\
& Query Mag & \cellcolor{blue!0}-20.0 & \cellcolor{blue!44}-6.3 & \cellcolor{blue!0}\textbf{-20.0} & \cellcolor{blue!38}-8.3 & \cellcolor{blue!46}\textbf{-5.7} \\
& Angular & \cellcolor{blue!21}-5.0 & \cellcolor{blue!33}-2.8 & \cellcolor{blue!28}\textbf{-3.8} & \cellcolor{blue!20}-5.2 & \cellcolor{blue!36}\textbf{-2.4} \\
\midrule
\multirow{3}{*}{RetroMAE}
& Doc Mag & \cellcolor{blue!0}-20.0 & \cellcolor{blue!51}-2.0 & \cellcolor{blue!44}-4.4 & \cellcolor{blue!6}\textbf{-17.7} & \cellcolor{blue!32}\textbf{-8.8} \\
& Query Mag & \cellcolor{blue!0}-20.0 & \cellcolor{blue!42}-7.0 & \cellcolor{blue!0}-20.0 & \cellcolor{blue!39}\textbf{-7.8} & \cellcolor{blue!20}\textbf{-13.6} \\
& Angular & \cellcolor{blue!17}-5.7 & \cellcolor{blue!33}-2.8 & \cellcolor{blue!23}-4.7 & \cellcolor{blue!28}\textbf{-3.9} & \cellcolor{blue!0}\textbf{-9.0} \\
\bottomrule
\end{tabular}
\end{minipage}%
\hfill
\begin{minipage}[t]{0.48\textwidth}
\centering
\textbf{(b) LLM-based}
\vspace{0.1cm}

\setlength{\tabcolsep}{3pt}
\begin{tabular}{ll|rrrrr}
\toprule
\textbf{Model} & \textbf{FIM Type} & Cos & Dot & QN & DN & Ln \\
\midrule
\multirow{3}{*}{Qwen 82K}
& Doc Mag & \cellcolor{blue!0}\textbf{-19.7} & \cellcolor{blue!67}3.4 & \cellcolor{blue!55}-0.8 & \cellcolor{blue!15}-14.7 & \cellcolor{blue!67}\textbf{3.5} \\
& Query Mag & \cellcolor{blue!0}\textbf{-20.0} & \cellcolor{blue!66}0.4 & \cellcolor{blue!9}-17.2 & \cellcolor{blue!53}-3.7 & \cellcolor{blue!66}\textbf{0.4} \\
& Angular & \cellcolor{blue!20}\textbf{-5.3} & \cellcolor{blue!66}3.1 & \cellcolor{blue!44}-0.9 & \cellcolor{blue!44}-0.9 & \cellcolor{blue!66}\textbf{3.2} \\
\midrule
\multirow{3}{*}{Qwen 503K}
& Doc Mag & \cellcolor{blue!0}-20.0 & \cellcolor{blue!67}3.5 & \cellcolor{blue!54}-1.0 & \cellcolor{blue!15}\textbf{-14.8} & \cellcolor{blue!70}4.3 \\
& Query Mag & \cellcolor{blue!0}-20.0 & \cellcolor{blue!67}0.6 & \cellcolor{blue!9}-17.2 & \cellcolor{blue!56}\textbf{-2.9} & \cellcolor{blue!70}1.4 \\
& Angular & \cellcolor{blue!18}-5.6 & \cellcolor{blue!65}3.0 & \cellcolor{blue!43}-1.0 & \cellcolor{blue!46}\textbf{-0.6} & \cellcolor{blue!70}3.8 \\
\bottomrule
\end{tabular}
\end{minipage}

\end{table}

\subsection{Inference vs Training: Asymmetric Roles}
\label{sec:analysis_asymmetric}

The decomposition $s_{\text{dot}}(\bm{q}, \bm{d}) = \|\bm{q}\| \cdot \|\bm{d}\| \cdot \cos\theta$ reveals that query and document magnitudes affect inference and training differently.

\begin{proposition}[Ranking Equivalence]
\label{prop:ranking}
At inference time, only \emph{document} magnitude can alter rankings. For a fixed query $\bm{q}$: (1) $\pi_{\emph{cos}} = \pi_{\emph{dnorm}}$, which both rank by $\cos\theta$ alone; (2) $\pi_{\emph{qnorm}} = \pi_{\emph{dot}}$, which both rank by $\|\bm{d}\| \cos\theta$. Query magnitude $\|\bm{q}\|$ scales all scores uniformly, never affecting rankings. Proof in Appendix~\ref{appendix:proofs}.
\end{proposition}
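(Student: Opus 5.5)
The plan is to make precise what a ranking is and then use one elementary fact: multiplying every score by the same positive constant never changes the order. For a fixed query $\bm{q}\neq\bm{0}$ and a finite candidate set $\{\bm{d}_1,\dots,\bm{d}_N\}$ of nonzero documents, I will define $\pi_s$ as the ordering of the candidates by decreasing $s(\bm{q},\bm{d}_i)$. Ties are broken by a fixed rule that depends only on the index $i$, so that two score functions inducing the same strict order on distinct scores also induce the same full permutation. The key lemma is then the following. If $s'(\bm{q},\bm{d}) = c(\bm{q})\, s(\bm{q},\bm{d})$ for some $c(\bm{q})>0$ that does not depend on $\bm{d}$, then $\pi_{s'}=\pi_s$. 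This holds because $x\mapsto c\,x$ is strictly increasing for $c>0$, so it preserves every strict inequality $s(\bm{q},\bm{d}_i)>s(\bm{q},\bm{d}_j)$, preserves every equality, and leaves the tie-breaking unchanged.

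Next I rewrite the four similarities of \Cref{eq:cosine_dot} and \Cref{eq:qnorm_dnorm} in the factored form $\|\bm{q}\|^{a}\|\bm{d}\|^{b}\cos\theta(\bm{q},\bm{d})$:
\begin{itemize}
\item cosine has $a=b=0$;
\item DNorm has $a=1,\ b=0$;
\item QNorm has $a=0,\ b=1$;
\item Dot has $a=b=1$.
\end{itemize}
Claim (1) then follows from the lemma with $c=\|\bm{q}\|$, since $s_{\text{dnorm}}=\|\bm{q}\|\, s_{\text{cos}}$. Claim (2) follows in the same way from $s_{\text{dot}}=\|\bm{q}\|\, s_{\text{qnorm}}$, since $s_{\text{qnorm}}=\|\bm{d}\|\cos\theta$. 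The closing sentence of the proposition, that $\|\bm{q}\|$ scales all scores uniformly, is exactly the statement that the query-side factor $\|\bm{q}\|^{a}$ is a positive constant $c(\bm{q})$ for a fixed query, so it never affects rankings. For the learnable variant of \Cref{eq:learnable_norm} the same argument gives a by-product: $\pi_{\text{learn}}$ depends only on $\gamma_d$, via the ranking key $\|\bm{d}\|^{1-\gamma_d}\cos\theta$.

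To support the headline claim that \emph{only} document magnitude can alter rankings, I will also show that the two pairs are genuinely different. A two-document example suffices: take $\bm{d}_1$ with a larger cosine but a small norm, and $\bm{d}_2$ with a slightly smaller positive cosine but a large norm, for instance $\cos\theta_1=0.9,\ \|\bm{d}_1\|=1$ and $\cos\theta_2=0.8,\ \|\bm{d}_2\|=2$. Then $\pi_{\text{cos}}$ ranks $\bm{d}_1$ first while $\pi_{\text{qnorm}}$ ranks $\bm{d}_2$ first. The paper's Appendix~\ref{appendix:ranking_reversal} already gestures at such reversals.

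I do not expect any step to be a real obstacle. The only delicate points are well-definedness: requiring $\bm{q}\neq\bm{0}$ and $\bm{d}_i\neq\bm{0}$ so that the normalizations exist and $c(\bm{q})>0$, and handling ties consistently. I will resolve these by stating the nonzero assumptions and the deterministic tie-breaking convention explicitly at the start.
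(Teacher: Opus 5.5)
Your proposal is correct and matches the paper's proof in Appendix~\ref{appendix:proofs}: for a fixed query, $\|\bm{q}\|$ is a positive constant across candidates, so $s_{\text{dnorm}} = \|\bm{q}\|\,s_{\text{cos}}$ and $s_{\text{dot}} = \|\bm{q}\|\,s_{\text{qnorm}}$ induce the same orderings. Your additions are correct refinements of points the paper leaves implicit: the nonzero and tie-breaking conventions, the two-document example showing that the pairs genuinely differ, and the observation that $\pi_{\text{learn}}$ depends only on $\gamma_d$.
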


This yields \emph{inference equivalence}: Dot Eval $\equiv$ QNorm Eval, and Cosine Eval $\equiv$ DNorm Eval. However, \emph{training} dynamics differ: query magnitude modulates effective temperature, enabling DNorm to learn better angular representations even though $\|\bm{q}\|$ does not affect inference rankings.

\paragraph{Gradient sensitivity analysis.} We compute the Fisher Information Matrix (FIM)~\citep{fisher1925theory}, measuring gradient sensitivity for each embedding component. \Cref{tab:fim_analysis} reveals that Cosine eliminates magnitude gradients entirely (FIM $\approx 10^{-20}$), while magnitude-preserving methods show substantially higher sensitivity. See Appendix~\ref{appendix:fim_analysis} for derivations.

\subsection{Cross-Evaluation Analysis}
\label{sec:cross_eval}

\input{floats/fig_cross_eval_combined}

Cross-evaluation (training with one similarity function and evaluating with another) isolates training-time from inference-time effects (\Cref{fig:cross_eval_combined}), revealing three findings. \textbf{(1) Query magnitude improves angular learning}: under Cosine evaluation, where magnitude is discarded, DNorm-trained models still outperform Cosine-trained by +1.6 to +3.3 points across all models. \textbf{(2) Document magnitude separates relevant from irrelevant documents at inference}: Contriever QNorm-trained scores 33.4 with Dot evaluation versus 27.3 with Cosine evaluation (+6.1), but only when document magnitude has been learned during training; Qwen with 503K samples fails to learn it, and using document magnitude at inference then \emph{degrades} performance. \textbf{(3) Anchoring one side helps}: DNorm and QNorm match or beat Dot across configurations, suggesting that fixing one side's magnitude stabilizes the optimization while leaving the other side free to encode task signal. The findings hold across both in-domain and out-of-domain evaluation, with the out-of-domain gains more pronounced. Random initialization (an extreme ablation) prevents magnitude learning entirely, and bilateral Dot fails catastrophically.

\paragraph{Cohen's $d$ analysis.} Cohen's $d$ between the magnitudes of relevant and irrelevant documents correlates significantly with the relative gain $\Delta\%$ ($r{=}0.57$ for Contriever, $r{=}0.68$ for RetroMAE, $p<0.001$). Because $d$ requires no QNorm training, it gives a cheap pre-deployment estimate of QNorm benefit. Realising this benefit further requires retrieval-specialized pre-training, sufficient data, and a compatible architecture (Appendix~\ref{appendix:cohens_d}).

\subsection{Choosing the Variant}
\label{sec:variant_selection}
\label{sec:fim_prediction}

\begin{table}[t]
\centering
\caption{FIM condition number ($\kappa$, log$_{10}$ scale) for predicting QNorm vs DNorm. The strategy with smaller $\kappa$ is predicted to perform better. Prediction accuracy: 3/3.}
\label{tab:fim_prediction}
\footnotesize
\resizebox{0.5\textwidth}{!}{%
\begin{tabular}{lcccc}
\toprule
\textbf{Model} & $\kappa$(QNorm) & $\kappa$(DNorm) & \textbf{Predicted} & \textbf{Actual} \\
\midrule
Contriever & \textbf{3.83} & 3.84 & QNorm & QNorm \\
RetroMAE & 6.96 & \textbf{6.92} & DNorm & DNorm \\
Qwen & 7.23 & \textbf{7.14} & DNorm & DNorm \\
\bottomrule
\end{tabular}%
}
\end{table}

\paragraph{FIM selects \textsc{QNorm} vs.\ \textsc{DNorm}.}
The FIM condition number $\kappa = \lambda_{\max}/\lambda_{\min}$ predicts \emph{which} unilateral variant suits each model. Lower $\kappa$ indicates a more balanced loss landscape and thus more stable optimization. \Cref{tab:fim_prediction} reports $\kappa$ computed on the pretrained model \emph{before any training}: Contriever has $\kappa_{\text{QNorm}} < \kappa_{\text{DNorm}}$, predicting QNorm, while RetroMAE and Qwen have $\kappa_{\text{DNorm}} < \kappa_{\text{QNorm}}$, predicting DNorm. The empirical winner agrees with the prediction for all three models, giving a practical recipe: compute $\kappa$ on the pretrained model to choose the unilateral variant prior to training. The ratio distinguishes \textsc{QNorm} vs.\ \textsc{DNorm} only; it does not directly predict \textsc{Dot} (Appendix~\ref{appendix:fim_scope}).

\paragraph{Learnable normalization as a safe default.} When functional symmetry is uncertain, $\gamma_q, \gamma_d$ in \Cref{eq:learnable_norm} can be learned from data. Initializing both at $0.5$, the model drifts toward Dot for Contriever ($\gamma^*{\approx}0.499$) and toward Cosine for RetroMAE/Qwen ($\gamma^*{\approx}0.502$--$0.503$, \Cref{fig:training_curves}), and performs near the top across all models (Appendix~\ref{appendix:learnable_gradient}).

\section{Generalization Beyond Retrieval}
\label{sec:generalization}

The retrieval experiments establish the mechanism: $\|q\|$ and $\|d\|$ carry separable signals, so unilateral normalization outperforms both cosine and dot. If this mechanism is general, the prediction for any task with a learned similarity score should follow from its functional symmetry (Definition~\ref{def:functional_symmetry}). We test this on five additional task families (\Cref{tab:crosstask_summary}). The coarse prediction holds in every case; the unilateral variants further beat Cosine on recommendation, and on few-shot classification \textsc{DNorm} beats both Cosine and the Euclidean default of Prototypical Networks, suggesting the framework supplies competitive losses where existing community defaults leave room.

\begin{table}[t]
\centering
\caption{\textbf{Cross-task validation of the framework's predictions.} The prediction is layered: the binary functional-symmetry rule predicts Cosine vs.\ $\neq$Cosine; the FIM ratio further predicts QNorm vs.\ DNorm within the unilateral variants. The coarse rule matches the empirical winner in every case; the fine FIM prediction matches in retrieval, few-shot classification, and ML-100K recommendation, while ML-1M recommendation falls into a regime where Dot is competitive (outside the FIM ratio's scope, Appendix~\ref{appendix:fim_scope}). The ``Variants'' column lists the similarities tested per task: \textbf{C}osine, \textbf{D}ot, \textbf{QN}orm, \textbf{DN}orm, and \textbf{E}uclidean where it is the community default.}
\label{tab:crosstask_summary}
\footnotesize
\setlength{\tabcolsep}{3pt}
\resizebox{\textwidth}{!}{%
\begin{tabular}{lllllll}
\toprule
Task --- Dataset (Model) & Variants & Source of (a)symmetry & Predicted & Observed best & Best alt.\ vs.\ Cosine \\
\midrule
\multicolumn{6}{l}{\emph{Functionally symmetric (predicted: Cosine)}} \\
\midrule
Sentence sim.\ --- STS-B (Contriever, RetroMAE) & C/D/QN/DN & $s(a,b){=}s(b,a)$ definitional & Cosine & Cosine & $-40$ pts Spearman avg.\ \\
Multimodal --- MS-COCO (CLIP ViT-B/32) & C/D/QN/DN & Bidirectional InfoNCE loss & Cosine & Cosine & $\approx 0$ (Cohen's $d{\approx}0$) \\
KGC --- FB15k-237 (RotatE) & C/D/QN/DN/E & MRR avg.\ over h/t prediction & Cosine & Cosine (4/4) & $-58\%$ MRR avg.\ \\
KGC --- WN18RR (RotatE) & C/D/QN/DN/E & MRR avg.\ over h/t prediction & Cosine & Cosine (4/4) & $-67\%$ MRR avg.\ \\
KGC --- FB15k-237 (PairRE) & C/D/QN/DN/E & MRR avg.\ over h/t prediction & Cosine & Cosine or Euclidean & $-10\%$ MRR avg.\ \\
KGC --- WN18RR (PairRE) & C/D/QN/DN/E & MRR avg.\ over h/t prediction & Cosine & Cosine (4/4) & $-2\%$ MRR avg.\ \\
\midrule
\multicolumn{6}{l}{\emph{Functionally asymmetric (predicted: unilateral variant chosen by FIM)}} \\
\midrule
Text retrieval --- BEIR/BRIGHT/MultiHop (Contriever, RetroMAE, Qwen) & C/D/QN/DN & Query $\to$ Document only & QNorm/DNorm$^\star$ & QNorm/DNorm & up to $+72\%$ NDCG (BRIGHT) \\
Few-shot classif.\ --- CIFAR-100/\miniIN{} (CLIP+adapter) & C/D/QN/DN/E & Image $\to$ K-mean prototype & DNorm$^\star$ & DNorm (6/7) & $+5.9\%$ acc \\
Few-shot classif.\ --- CIFAR-100/\miniIN{} (DINOv2+adapter) & C/D/QN/DN/E & Image $\to$ K-mean prototype & DNorm$^\star$ & DNorm (3/4) & $+4.8\%$ acc \\
Recommendation --- MovieLens-1M (LightGCN) & C/D/QN/DN & User $\to$ Item, separate vocab & QNorm$^\star$ & Dot/DNorm$^\dagger$ & $+15\%$ NDCG@20 \\
Recommendation --- MovieLens-100K (LightGCN) & C/D/QN/DN & User $\to$ Item, separate vocab & QNorm$^\star$ & QNorm (3/4) & $+38\%$ NDCG@20 \\
\bottomrule
\end{tabular}%
}
\par\smallskip
\footnotesize
$^\star$Fine-grained prediction from the FIM condition number (Section~\ref{sec:fim_prediction}, Appendix~\ref{appendix:fim_crosstask}).
\quad
$^\dagger$Dot competitive in this regime, which lies outside the FIM ratio's scope (Appendix~\ref{appendix:fim_scope}).
\end{table}

\subsection{Functionally Symmetric Tasks (Predicted: Cosine)}
\label{sec:gen_symmetric}

\begin{table}[!t]
\centering
\small
\caption{Semantic Textual Similarity Benchmark (STS-B) test set Spearman correlation. Unlike retrieval, magnitude provides no benefit; asymmetric normalization severely degrades performance.}
\label{tab:sts_results}
\resizebox{0.5\textwidth}{!}{
\begin{tabular}{lcccc}
\toprule
\textbf{Model} & \textbf{Cosine} & \textbf{Dot} & \textbf{S1Norm} & \textbf{S2Norm} \\
\midrule
Contriever & \textbf{0.788} & 0.784 & 0.413 & 0.371 \\
RetroMAE & 0.749 & \textbf{0.750} & 0.314 & 0.328 \\
\bottomrule
\end{tabular}
}
\end{table}

\paragraph{Semantic Textual Similarity (STS).} STS predicts a numeric similarity required to satisfy $s(a, b) = s(b, a)$ by definition; the two inputs are sentences rather than query/document, so we relabel the unilateral variants \textsc{S1Norm}/\textsc{S2Norm}. Table~\ref{tab:sts_results} confirms Cosine $\approx$ Dot, while \textsc{S1Norm}/\textsc{S2Norm} drop by 40--45 points, a direct violation of role exchangeability.

\begin{figure}[!t]
  \centering
  \resizebox{0.5\textwidth}{!}{%
  \begin{tikzpicture}[
    scale=0.48,
    every node/.style={font=\scriptsize},
  ]
    \node[font=\small\bfseries, rotate=90] at (-5.8, 0) {Normalization};
    \node[font=\small\bfseries] at (3.8, 3.8) {Loss Direction};
    \node[font=\scriptsize\bfseries] at (0.1, 3.0) {Symmetric};
    \node[font=\scriptsize\bfseries] at (3.8, 3.0) {I2T only};
    \node[font=\scriptsize\bfseries] at (7.5, 3.0) {T2I only};
    \node[font=\scriptsize\bfseries, align=right, anchor=east] at (-1.9, 1.8) {Image Norm};
    \node[font=\scriptsize\bfseries, align=right, anchor=east] at (-1.9, 0) {Text Norm};
    \node[font=\scriptsize\bfseries, align=right, anchor=east] at (-1.9, -1.8) {No Norm};
    \node[draw=gray!60, fill=gray!15, rounded corners=2pt, minimum width=1.7cm, minimum height=1.6cm, align=center, text width=1.5cm] at (0.1, 0.9) {Norm\\[2pt]{\footnotesize\textcolor{gray!60}{$d \approx 0$}}};
    \node[draw=gray!60, fill=gray!15, rounded corners=2pt, minimum width=1.7cm, minimum height=0.7cm, align=center, text width=1.5cm] at (0.1, -1.8) {NoNorm\\[0pt]{\footnotesize\textcolor{gray!60}{$d \approx 0$}}};
    \node[draw=blue!50, fill=blue!10, rounded corners=2pt, minimum width=1.7cm, minimum height=0.7cm, align=center, text width=1.5cm] at (3.8, 1.8) {ImgNorm\\[0pt]{\footnotesize\textcolor{blue!70}{$d{=}1.83$}}};
    \node[text=gray!40] at (3.8, 0) {---};
    \node[draw=blue!50, fill=blue!10, rounded corners=2pt, minimum width=1.7cm, minimum height=0.7cm, align=center, text width=1.5cm] at (3.8, -1.8) {NoNorm\\[0pt]{\footnotesize\textcolor{blue!70}{$d{=}0.58$}}};
    \node[text=gray!40] at (7.5, 1.8) {---};
    \node[draw=red!50, fill=red!10, rounded corners=2pt, minimum width=1.7cm, minimum height=0.7cm, align=center, text width=1.5cm] at (7.5, 0) {TxtNorm\\[0pt]{\footnotesize\textcolor{red!70}{$d{=}2.11$}}};
    \node[draw=red!50, fill=red!10, rounded corners=2pt, minimum width=1.7cm, minimum height=0.7cm, align=center, text width=1.5cm] at (7.5, -1.8) {NoNorm\\[0pt]{\footnotesize\textcolor{red!70}{$d{=}1.31$}}};
  \end{tikzpicture}%
  }
  \caption{CLIP pre-training framework. Gray: symmetric loss yields $d \approx 0$. Blue/red: asymmetric loss enables magnitude learning on non-query side.}
  \label{fig:clip_framework}
\end{figure}
\begin{table}[!t]
\centering
\small
\caption{CLIP pre-training results on MS-COCO. The TxtNorm/ImgNorm naming refers to the training-time loss normalization; Cohen's $d$ and all retrieval metrics are computed on raw (unnormalized) features at evaluation time, so text/image magnitudes vary across candidates and Cosine and Dot can yield different rankings. Shaded cells indicate the direction(s) trained by the loss function. Symmetric loss yields $d \approx 0$; asymmetric loss enables magnitude learning but sacrifices bidirectional capability.}
\label{tab:clip_pretrain}
\resizebox{0.6\textwidth}{!}{
\begin{tabular}{lccccccc}
\toprule
& \multicolumn{2}{c}{\textbf{Cohen's $d$}} & \multicolumn{2}{c}{\textbf{Cosine R@1}} & \multicolumn{2}{c}{\textbf{Dot R@1}} & $\Delta$ \\
\cmidrule(lr){2-3} \cmidrule(lr){4-5} \cmidrule(lr){6-7}
\textbf{Config} & Image & Text & I$\to$T & T$\to$I & I$\to$T & T$\to$I & \scriptsize(Dot$-$Cos) \\
\midrule
Norm (Sym) & \cellcolor{gray!15}$-$0.55 & \cellcolor{gray!15}$-$0.30 & \cellcolor{gray!15}\textbf{38.4} & \cellcolor{gray!15}\textbf{28.0} & \cellcolor{gray!15}32.8 & \cellcolor{gray!15}26.3 & $-$5.6 \\
NoNorm (Sym) & \cellcolor{gray!15}$-$0.65 & \cellcolor{gray!15}$-$0.06 & \cellcolor{gray!15}23.4 & \cellcolor{gray!15}19.1 & \cellcolor{gray!15}\textbf{25.1} & \cellcolor{gray!15}\textbf{19.5} & +1.7 \\
\midrule
ImgNorm-I2T & 5.72 & \cellcolor{blue!15}\textbf{1.83} & \cellcolor{blue!15}30.5 & 0.3 & \cellcolor{blue!15}\textbf{32.1} & 0.3 & +1.6 \\
NoNorm-I2T & $-$0.49 & \cellcolor{blue!15}\textbf{0.58} & \cellcolor{blue!15}18.0 & 12.4 & \cellcolor{blue!15}\textbf{21.2} & 9.6 & +3.2 \\
TxtNorm-T2I & \cellcolor{red!15}\textbf{2.11} & 10.89 & 1.9 & \cellcolor{red!15}17.6 & 0.2 & \cellcolor{red!15}\textbf{23.4} & +5.8 \\
NoNorm-T2I & \cellcolor{red!15}\textbf{1.31} & 1.21 & 8.6 & \cellcolor{red!15}13.5 & 6.9 & \cellcolor{red!15}\textbf{16.7} & +3.2 \\
\bottomrule
\end{tabular}
}
\end{table}

\paragraph{Contrastive Language-Image Pre-training (CLIP).} Standard CLIP is functionally symmetric not because image and text are interchangeable inputs but because the bidirectional InfoNCE loss aggregates I$\to$T and T$\to$I. On MS-COCO, pre-trained CLIP yields Cohen's $d \approx 0$ for both modalities, and switching to Dot degrades retrieval. Controlled pre-training (\Cref{fig:clip_framework}, \Cref{tab:clip_pretrain}) localizes the symmetry to \emph{the loss}: bidirectional InfoNCE produces $d \approx 0$ even without normalization, while a one-directional loss enables magnitude learning ($d > 0$ on the output side) at the cost of bidirectional retrieval. Details in Appendix~\ref{appendix:clip_sts}.

\paragraph{Knowledge Graph Completion (KGC).} KGC scores triplets via embeddings drawn from a \emph{shared} entity table; although $s(h+r, t)$ is not internally symmetric, the standard filtered MRR averages head and tail prediction, making the aggregate procedure role-exchangeable. We train RotatE \cite{sun2019rotate} and PairRE \cite{chao2021pairre} on FB15k-237 and WN18RR with 3 seeds. The asymmetric variants (\textsc{Dot}, \textsc{QNorm}, \textsc{DNorm}) lose in \emph{all} 16 (model, dataset, scale) configurations; \textsc{Cosine} wins 14/16 and \textsc{Euclidean} wins the remaining 2 (also role-exchangeable). The mean MRR gap to the best asymmetric variant ranges from $0.6\%$ on PairRE WN18RR to $4\times$ on RotatE WN18RR at high $\alpha$. The cross-evaluation matrix also validates the Ranking Equivalence Proposition. Full sweep in Appendix~\ref{appendix:kgc}.

\subsection{Functionally Asymmetric Tasks Beyond Retrieval (Predicted: DNorm/QNorm)}
\label{sec:gen_asymmetric}

\paragraph{Few-Shot Classification.} Prototypical networks rank class prototypes given a query image, with no role exchange; the standard similarity is Euclidean distance, which our framework predicts \textsc{DNorm} should beat. Following \citet{snell2017prototypical}, we train a 2-layer adapter on frozen CLIP features. Across 7 base configurations spanning two datasets (CIFAR-100, \emph{mini}-ImageNet) and varied $N$-way and shot counts, \textsc{DNorm} wins on 6/7 (73.10\% vs.\ Euclidean 72.45\% vs.\ Cosine 69.02\%); in 26/28 individual runs, asymmetric training beats Cosine \emph{even under Cosine evaluation}, refuting the hypothesis that magnitude only encodes an extra dimension cosine evaluation discards. The same pattern holds with DINOv2 \cite{oquab2023dinov2} features (\textsc{DNorm} 3/4, 90.17\% vs.\ 88.54\% vs.\ 86.01\%), isolating the effect from CLIP's image-text pretraining. Appendix~\ref{appendix:protonet}.

\paragraph{Recommender Systems.} Collaborative filtering ranks items for users from separate embedding tables, with defaults split between Dot \cite{he2020lightgcn} and Cosine \cite{mao2021simplex}. Training LightGCN on MovieLens-1M and ML-100K with 3 seeds, asymmetric variants beat Cosine in 6/8 (dataset, scale) configurations; in the remaining 2 (lowest temperature, $\alpha{=}10$), Cosine ties within one standard deviation. At $\alpha=30$ on ML-1M, Cosine reaches $4.61\pm0.04$ NDCG@20 while Dot and DNorm reach $5.35\pm0.05$ and $5.33\pm0.06$. Trained magnitudes follow the predicted pattern ($\|u\|{=}1.92$ vs.\ $\|i\|{=}1.18$ under DNorm), normalizing away item-popularity bias. Appendix~\ref{appendix:rec}.

\section{Related Work}
\label{sec:related}

\paragraph{Magnitude Treatment Across Tasks.}
The similarity choice varies by community: dense retrievers split between cosine \cite{izacard2022contriever,xiao2022retromae,wang2022e5} and dot \cite{karpukhin2020dpr,qwen3embedding2025}; multimodal contrastive methods use cosine \cite{chen2020simclr,gao2021simcse,radford2021clip}; recommenders split between dot \cite{he2020lightgcn} and cosine \cite{mao2021simplex}; KGC \cite{sun2019rotate,chao2021pairre,trouillon2016complex} and few-shot classification \cite{snell2017prototypical} use distance scores. None expose the per-side magnitude choice; we unify them in a 2$\times$2 framework with a structural selection criterion.

\paragraph{When to Use Cosine.}
Whether to keep or discard magnitude has remained open: \citet{wang2020hypersphere} argue for hypersphere normalization, while \citet{steck2024cosine} and \citet{draganov2024pitfalls} expose pathologies of cosine and unnormalized dot. Magnitude carries signal in other neural components \cite{oyama2023norm,kobayashi2020attention,guo2024kvcache,liao2025magnitude} but only as a diagnostic. We instead show that the similarity choice determines whether magnitude can be \emph{learned} as task signal; functional symmetry then selects among cosine, dot, \textsc{QNorm}, and \textsc{DNorm}, with face verification \cite{wang2017normface,ranjan2017l2} the symmetric case.

\section{Conclusion}

A 2$\times$2 normalization framework reveals \textsc{QNorm} and \textsc{DNorm} as new operating points: they beat both cosine and dot on retrieval (up to +72\% out-of-domain and +24\% RAG), beat Cosine and the Euclidean default of Prototypical Networks on few-shot classification, and beat Cosine on recommendation. \emph{Functional symmetry}, the property that scoring treats $Q$ and $C$ as interchangeable, then predicts cosine for symmetric tasks and unilateral normalization for asymmetric ones across six task families, with the FIM condition number further selecting \textsc{QNorm} versus \textsc{DNorm} (Appendix~\ref{appendix:limitations}).

\bibliography{references}

@article{izacard2022contriever,
  title={Unsupervised Dense Information Retrieval with Contrastive Learning},
  author={Izacard, Gautier and Caron, Mathilde and Hosseini, Lucas and Riedel, Sebastian and Bojanowski, Piotr and Joulin, Armand and Grave, Edouard},
  journal={Transactions on Machine Learning Research},
  year={2022}
}

@article{wang2022e5,
  title={Text Embeddings by Weakly-Supervised Contrastive Pre-training},
  author={Wang, Liang and Yang, Nan and Huang, Xiaolong and Jiao, Binxing and Yang, Linjun and Jiang, Daxin and Majumder, Rangan and Wei, Furu},
  journal={arXiv preprint arXiv:2212.03533},
  year={2022}
}

@article{fisher1925theory,
  title={Theory of Statistical Estimation},
  author={Fisher, Ronald Aylmer},
  journal={Mathematical Proceedings of the Cambridge Philosophical Society},
  volume={22},
  number={5},
  pages={700--725},
  year={1925},
  publisher={Cambridge University Press}
}

@inproceedings{kobayashi2020attention,
  title={Attention is Not Only a Weight: Analyzing Transformers with Vector Norms},
  author={Kobayashi, Goro and Kuribayashi, Tatsuki and Yokoi, Sho and Inui, Kentaro},
  booktitle={Proceedings of the 2020 Conference on Empirical Methods in Natural Language Processing (EMNLP)},
  pages={7057--7075},
  year={2020}
}

@inproceedings{liao2025magnitude,
  title={Beyond Cosine Similarity: Magnitude-Aware {CLIP} for No-Reference Image Quality Assessment},
  author={Liao, Zhicheng and Wu, Dongxu and Shi, Zhenshan and Mai, Sijie and Zhu, Hanwei and Zhu, Lingyu and Jiang, Yuncheng and Chen, Baoliang},
  booktitle={Proceedings of the AAAI Conference on Artificial Intelligence},
  year={2026}
}

@inproceedings{guo2024kvcache,
  title={Attention Score is not All You Need for Token Importance Indicator in {KV} Cache Reduction: Value Also Matters},
  author={Guo, Zhiyu and Kamigaito, Hidetaka and Watanabe, Taro},
  booktitle={Proceedings of the 2024 Conference on Empirical Methods in Natural Language Processing (EMNLP)},
  pages={21158--21166},
  year={2024}
}

@inproceedings{thakur2021beir,
  title={{BEIR}: A Heterogeneous Benchmark for Zero-shot Evaluation of Information Retrieval Models},
  author={Thakur, Nandan and Reimers, Nils and R{\"u}ckl{\'e}, Andreas and Srivastava, Abhishek and Gurevych, Iryna},
  booktitle={Thirty-fifth Conference on Neural Information Processing Systems Datasets and Benchmarks Track (Round 2)},
  year={2021}
}

@inproceedings{bright2025,
  title={{BRIGHT}: A Realistic and Challenging Benchmark for Reasoning-Intensive Retrieval},
  author={Su, Hongjin and Yen, Howard and Xia, Mengzhou and Shi, Weijia and Muennighoff, Niklas and Wang, Han-yu and Liu, Haisu and Shi, Quan and Siegel, Zachary S. and Tang, Michael and Sun, Ruoxi and Yoon, Jinsung and Ar{\i}k, Sercan {\"O}. and Chen, Danqi and Yu, Tao},
  booktitle={International Conference on Learning Representations},
  year={2025}
}

@inproceedings{ho2020wikimultihop,
  title={Constructing A Multi-hop QA Dataset for Comprehensive Evaluation of Reasoning Steps},
  author={Ho, Xanh and Duong Nguyen, Anh-Khoa and Sugawara, Saku and Aizawa, Akiko},
  booktitle={Proceedings of the 28th International Conference on Computational Linguistics},
  pages={6609--6625},
  year={2020}
}

@article{trivedi2022musique,
  title={{MuSiQue}: Multihop Questions via Single-hop Question Composition},
  author={Trivedi, Harsh and Balasubramanian, Niranjan and Khot, Tushar and Sabharwal, Ashish},
  journal={Transactions of the Association for Computational Linguistics},
  volume={10},
  pages={539--554},
  year={2022}
}

@inproceedings{yang2018hotpotqa,
  title={{HotpotQA}: A Dataset for Diverse, Explainable Multi-hop Question Answering},
  author={Yang, Zhilin and Qi, Peng and Zhang, Saizheng and Bengio, Yoshua and Cohen, William and Salakhutdinov, Ruslan and Manning, Christopher D},
  booktitle={Proceedings of the 2018 Conference on Empirical Methods in Natural Language Processing},
  pages={2369--2380},
  year={2018}
}

@inproceedings{gupta2025novelhopqa,
  title={{NovelHopQA}: Diagnosing Multi-Hop Reasoning Failures in Long Narrative Contexts},
  author={Gupta, Abhay and Lu, Michael and Zhu, Kevin and O'Brien, Sean and Sharma, Vasu},
  booktitle={Proceedings of the 2025 Conference on Empirical Methods in Natural Language Processing},
  pages={26134--26151},
  year={2025}
}

@inproceedings{nguyen2016msmarco,
  title={{MS MARCO}: A Human Generated Machine Reading Comprehension Dataset},
  author={Nguyen, Tri and Rosenberg, Mir and Song, Xia and Gao, Jianfeng and Tiwary, Saurabh and Majumder, Rangan and Deng, Li},
  booktitle={Proceedings of the Workshop on Cognitive Computation: Integrating Neural and Symbolic Approaches},
  year={2016}
}

@inproceedings{xiao2022retromae,
  title={{RetroMAE}: Pre-Training Retrieval-oriented Language Models Via Masked Auto-Encoder},
  author={Xiao, Shitao and Liu, Zheng and Shao, Yingxia and Cao, Zhao},
  booktitle={Proceedings of the 2022 Conference on Empirical Methods in Natural Language Processing},
  pages={538--548},
  year={2022}
}

@article{qwen3embedding2025,
  title={Qwen3 Embedding: Advancing Text Embedding and Reranking Through Foundation Models},
  author={Zhang, Yanzhao and Li, Mingxin and Long, Dingkun and Zhang, Xin and Lin, Huan and Yang, Baosong and Xie, Pengjun and Yang, An and Liu, Dayiheng and Lin, Junyang and Huang, Fei and Zhou, Jingren},
  journal={arXiv preprint arXiv:2506.05176},
  year={2025}
}

@inproceedings{karpukhin2020dpr,
  title={Dense Passage Retrieval for Open-Domain Question Answering},
  author={Karpukhin, Vladimir and O{\u{g}}uz, Barlas and Min, Sewon and Lewis, Patrick and Wu, Ledell and Edunov, Sergey and Chen, Danqi and Yih, Wen-tau},
  booktitle={Proceedings of the 2020 Conference on Empirical Methods in Natural Language Processing},
  pages={6769--6781},
  year={2020}
}

@inproceedings{chen2020simclr,
  title={A Simple Framework for Contrastive Learning of Visual Representations},
  author={Chen, Ting and Kornblith, Simon and Norouzi, Mohammad and Hinton, Geoffrey},
  booktitle={International Conference on Machine Learning},
  pages={1597--1607},
  year={2020}
}

@article{oord2018infonce,
  title={Representation Learning with Contrastive Predictive Coding},
  author={van den Oord, Aaron and Li, Yazhe and Vinyals, Oriol},
  journal={arXiv preprint arXiv:1807.03748},
  year={2018}
}

@inproceedings{gao2021simcse,
  title={{SimCSE}: Simple Contrastive Learning of Sentence Embeddings},
  author={Gao, Tianyu and Yao, Xingcheng and Chen, Danqi},
  booktitle={Proceedings of the 2021 Conference on Empirical Methods in Natural Language Processing},
  pages={6894--6910},
  year={2021}
}

@inproceedings{radford2021clip,
  title={Learning Transferable Visual Models From Natural Language Supervision},
  author={Radford, Alec and Kim, Jong Wook and Hallacy, Chris and Ramesh, Aditya and Goh, Gabriel and Agarwal, Sandhini and Sastry, Girish and Askell, Amanda and Mishkin, Pamela and Clark, Jack and others},
  booktitle={International Conference on Machine Learning},
  pages={8748--8763},
  year={2021}
}

@inproceedings{wang2020hypersphere,
  title={Understanding Contrastive Representation Learning through Alignment and Uniformity on the Hypersphere},
  author={Wang, Tongzhou and Isola, Phillip},
  booktitle={International Conference on Machine Learning},
  pages={9929--9939},
  year={2020}
}

@inproceedings{wang2017normface,
  title={{NormFace}: {L2} Hypersphere Embedding for Face Verification},
  author={Wang, Feng and Xiang, Xiang and Cheng, Jian and Yuille, Alan L.},
  booktitle={Proceedings of the 25th ACM International Conference on Multimedia},
  pages={1041--1049},
  year={2017}
}

@article{ranjan2017l2,
  title={{L2}-Constrained Softmax Loss for Discriminative Face Verification},
  author={Ranjan, Rajeev and Castillo, Carlos D. and Chellappa, Rama},
  journal={arXiv preprint arXiv:1703.09507},
  year={2017}
}

@inproceedings{steck2024cosine,
  title={Is Cosine-Similarity of Embeddings Really About Similarity?},
  author={Steck, Harald and Ekanadham, Chaitanya and Kallus, Nathan},
  booktitle={Companion Proceedings of the ACM on Web Conference 2024},
  pages={887--890},
  year={2024}
}

@article{draganov2024pitfalls,
  title={The Hidden Pitfalls of the Cosine Similarity Loss},
  author={Draganov, Andrew and Vadgama, Sharvaree and Bekkers, Erik J.},
  journal={arXiv preprint arXiv:2406.16468},
  year={2024}
}

@inproceedings{snell2017prototypical,
  title={Prototypical Networks for Few-shot Learning},
  author={Snell, Jake and Swersky, Kevin and Zemel, Richard},
  booktitle={Advances in Neural Information Processing Systems},
  year={2017}
}

@inproceedings{sun2019rotate,
  title={{RotatE}: Knowledge Graph Embedding by Relational Rotation in Complex Space},
  author={Sun, Zhiqing and Deng, Zhi-Hong and Nie, Jian-Yun and Tang, Jian},
  booktitle={International Conference on Learning Representations},
  year={2019}
}

@inproceedings{chao2021pairre,
  title={{PairRE}: Knowledge Graph Embeddings via Paired Relation Vectors},
  author={Chao, Linlin and He, Jianshan and Wang, Taifeng and Chu, Wei},
  booktitle={Proceedings of the 59th Annual Meeting of the Association for Computational Linguistics},
  pages={4360--4369},
  year={2021}
}

@inproceedings{he2020lightgcn,
  title={{LightGCN}: Simplifying and Powering Graph Convolution Network for Recommendation},
  author={He, Xiangnan and Deng, Kuan and Wang, Xiang and Li, Yan and Zhang, Yongdong and Wang, Meng},
  booktitle={Proceedings of the 43rd International ACM SIGIR Conference},
  pages={639--648},
  year={2020}
}

@inproceedings{mao2021simplex,
  title={{SimpleX}: A Simple and Strong Baseline for Collaborative Filtering},
  author={Mao, Kelong and Zhu, Jieming and Wang, Jinpeng and Dai, Quanyu and Dong, Zhenhua and Xiao, Xi and He, Xiuqiang},
  booktitle={Proceedings of the 30th ACM International Conference on Information \& Knowledge Management},
  pages={1243--1252},
  year={2021}
}

@inproceedings{xiao2024c,
  title={{C-Pack}: Packed Resources For General Chinese Embeddings},
  author={Xiao, Shitao and Liu, Zheng and Zhang, Peitian and Muennighoff, Niklas and Lian, Defu and Nie, Jian-Yun},
  booktitle={Proceedings of the 47th International ACM SIGIR Conference on Research and Development in Information Retrieval},
  year={2024}
}

@inproceedings{trouillon2016complex,
  title={Complex Embeddings for Simple Link Prediction},
  author={Trouillon, Th{\'e}o and Welbl, Johannes and Riedel, Sebastian and Gaussier, {\'E}ric and Bouchard, Guillaume},
  booktitle={International Conference on Machine Learning},
  pages={2071--2080},
  year={2016}
}

@article{oquab2023dinov2,
  title={{DINOv2}: Learning Robust Visual Features without Supervision},
  author={Oquab, Maxime and Darcet, Timoth{\'e}e and Moutakanni, Th{\'e}o and others},
  journal={arXiv preprint arXiv:2304.07193},
  year={2023}
}

@inproceedings{toutanova2015fb15k237,
  title={Observed versus Latent Features for Knowledge Base and Text Inference},
  author={Toutanova, Kristina and Chen, Danqi},
  booktitle={Proceedings of the 3rd Workshop on Continuous Vector Space Models and Their Compositionality},
  pages={57--66},
  year={2015}
}

@inproceedings{dettmers2018conve,
  title={Convolutional 2D Knowledge Graph Embeddings},
  author={Dettmers, Tim and Minervini, Pasquale and Stenetorp, Pontus and Riedel, Sebastian},
  booktitle={Proceedings of the AAAI Conference on Artificial Intelligence},
  year={2018}
}

@techreport{krizhevsky2009cifar,
  title={Learning Multiple Layers of Features from Tiny Images},
  author={Krizhevsky, Alex},
  institution={University of Toronto},
  year={2009}
}

@inproceedings{vinyals2016matching,
  title={Matching Networks for One Shot Learning},
  author={Vinyals, Oriol and Blundell, Charles and Lillicrap, Timothy and Wierstra, Daan and others},
  booktitle={Advances in Neural Information Processing Systems},
  year={2016}
}

@inproceedings{oyama2023norm,
  title={Norm of Word Embedding Encodes Information Gain},
  author={Oyama, Momose and Yokoi, Sho and Shimodaira, Hidetoshi},
  booktitle={Proceedings of the 2023 Conference on Empirical Methods in Natural Language Processing},
  pages={2108--2130},
  year={2023}
}

@inproceedings{wei2022mitigating,
  title={Mitigating Neural Network Overconfidence with Logit Normalization},
  author={Wei, Hongxin and Xie, Renchunzi and Cheng, Hao and Feng, Lei and An, Bo and Li, Yixuan},
  booktitle={International Conference on Machine Learning},
  pages={23631--23644},
  year={2022}
}

@inproceedings{cer2017semeval,
  title={{SemEval-2017 Task 1}: Semantic Textual Similarity Multilingual and Crosslingual Focused Evaluation},
  author={Cer, Daniel and Diab, Mona and Agirre, Eneko and Lopez-Gazpio, I{\~n}igo and Specia, Lucia},
  booktitle={Proceedings of the 11th International Workshop on Semantic Evaluation (SemEval-2017)},
  pages={1--14},
  year={2017}
}

@inproceedings{lewis2020rag,
  title={Retrieval-Augmented Generation for Knowledge-Intensive {NLP} Tasks},
  author={Lewis, Patrick and Perez, Ethan and Piktus, Aleksandra and Petroni, Fabio and Karpukhin, Vladimir and Goyal, Naman and K{\"u}ttler, Heinrich and Lewis, Mike and Yih, Wen-tau and Rockt{\"a}schel, Tim and others},
  booktitle={Advances in Neural Information Processing Systems},
  volume={33},
  pages={9459--9474},
  year={2020}
}
\bibliographystyle{plainnat}

\newpage
\appendix


\section{Pre-trained Model Details}
\label{appendix:model_details}

Table~\ref{tab:model_details} summarizes the pre-training details of the three retrievers used in our experiments.

\begin{table}[h]
\centering
\caption{Pre-training Details of Dense Retrievers Used in This Study. MAE = Masked Auto-Encoder. InfoNCE = Noise Contrastive Estimation loss for contrastive learning. Table information is collected from original papers and official sources (e.g., Hugging Face). For unsupervised/weakly supervised stages, only document sources are shown; query synthesis methods are omitted.}
\label{tab:model_details}
\resizebox{\textwidth}{!}{
\begin{tabular}{lp{4.5cm}p{4.5cm}p{10cm}}
\toprule
 & \textbf{Contriever} & \textbf{RetroMAE} & \textbf{E5} \\
\midrule
Checkpoint & facebook/contriever & Shitao/RetroMAE & intfloat/e5-base-unsupervised \\
Backbone & google-bert/bert-base-uncased & google-bert/bert-base-uncased & google-bert/bert-base-uncased \\
Checkpoint Size & 439\,MB & 438\,MB & 877\,MB \\
Pre-training Loss & InfoNCE & MAE & InfoNCE \\
Pre-training Data & Wikipedia, CCNet & Wikipedia, BookCorpus & CCPairs, including Wikipedia, Reddit, Common Crawl (web pages from MS-MARCO document ranking corpus included), Stackexchange, S2ORC, News, SimpleWiki, GooAQ, WikiHow, Yahoo Answers \\
\bottomrule
\end{tabular}
}
\end{table}

A key distinction among these models is the composition of their pre-training data. Contriever's pre-training uses Wikipedia and CCNet, while RetroMAE uses Wikipedia and BookCorpus. In contrast, E5's pre-training corpus (CCPairs) explicitly includes web pages from the MS-MARCO document ranking corpus through Common Crawl. This data overlap has important implications for our finetuning experiments, as discussed in Section~\ref{sec:experiments}.

\section{Training Configuration}
\label{appendix:training_config}

To ensure fair comparison and reproducibility, all BERT-based retriever models in our experiments were trained using a unified configuration. This section details the hyperparameters and settings used.

\subsection{General Training Hyperparameters}

Table~\ref{tab:training_hyperparams} summarizes the training hyperparameters for all models.

\begin{table}[h]
\centering
\caption{Training Hyperparameters for All Models}
\label{tab:training_hyperparams}
\begin{tabular}{lll}
\toprule
\textbf{Hyperparameter} & \textbf{Contriever/RetroMAE/E5} & \textbf{Qwen3-Base} \\
\midrule
Training data & MS MARCO v1.1 QA & MS MARCO v1.1 QA / Passage Ranking \\
Training samples & 81,795 & 81,795 / 502,939 \\
Number of GPUs & 2 (data parallel) & 2 (data parallel) \\
Per-device batch size & 128 & 64 \\
Effective batch size & 256 & 128 \\
Number of epochs & 100 & 40 \\
Learning rate & $2\text{e-}6$ (Contriever), $5\text{e-}6$ (others) & $5 \times 10^{-6}$ \\
Warmup ratio & 0 & 0 \\
Weight decay & 0.01 & 0.01 \\
Max gradient norm & 1.0 & 1.0 \\
LR scheduler & Cosine decay & Cosine decay \\
Random seeds & 0, 42, 1337 & 0 \\
\midrule
\multicolumn{3}{l}{\textit{AdamW Optimizer: $\beta_1 = 0.9$, $\beta_2 = 0.98$, $\epsilon = 10^{-8}$}} \\
\bottomrule
\end{tabular}
\end{table}

\paragraph{Multi-GPU Training.} All experiments use distributed data parallel training across 2 GPUs via Hugging Face Accelerate. The effective batch size is $128 \times 2 = 256$ samples per optimization step. Training steps are computed based on the effective batch size: steps per epoch $= \lceil 81795 / 256 \rceil = 320$, and 100 epochs corresponds to approximately 32,000 steps. Checkpoints are saved and evaluated every 500 steps (approximately 1.6 epochs).

\paragraph{Learning Rate Selection.} We conduct learning rate sweeps using cosine similarity training to determine the optimal learning rate for each model. Learning rates are selected from $\{2\text{e-}7, 5\text{e-}7, 1\text{e-}6, 2\text{e-}6, 5\text{e-}6, 1\text{e-}5\}$. Table~\ref{tab:lr_selection} shows the results. The selected learning rates are then fixed across all similarity function variants for fair comparison.

\begin{table}[h]
\centering
\caption{Learning Rate Selection via Grid Search (Cosine Similarity Training)}
\label{tab:lr_selection}
\begin{tabular}{lcc}
\toprule
\textbf{Model} & \textbf{Best LR} & \textbf{Val NDCG@10} \\
\midrule
Contriever & $2 \times 10^{-6}$ & 92.87 \\
RetroMAE & $5 \times 10^{-6}$ & 91.87 \\
E5 & $5 \times 10^{-6}$ & 93.97 \\
Qwen3-Base & $5 \times 10^{-6}$ & 94.12 \\
\bottomrule
\end{tabular}
\end{table}

\subsection{Evaluation Configuration}

Table~\ref{tab:eval_config} summarizes the evaluation settings used during training and final evaluation.

\begin{table}[h]
\centering
\caption{Evaluation Configuration}
\label{tab:eval_config}
\begin{tabular}{ll}
\toprule
\textbf{Setting} & \textbf{Value} \\
\midrule
Evaluation batch size & 64 \\
Similarity function & Corresponding to loss function \\
Primary metric & NDCG@10 \\
Checkpoint save interval & 500 steps \\
Evaluation interval & 500 steps \\
\bottomrule
\end{tabular}
\end{table}

Note that each model variant is evaluated using its corresponding similarity function: Cosine-trained models use cosine similarity, Dot-trained models use dot product, QNorm-trained models use query-normalized dot product, DNorm-trained models use document-normalized dot product, and Learnable-trained models use the learned $\gamma$-interpolated similarity.

\paragraph{Training-time validation vs final evaluation.}
For computational efficiency during training, our training-time validation NDCG@10 (Figure~\ref{fig:training_curves}, Table~\ref{tab:seed_sensitivity}) is computed over a small in-domain corpus constructed only from the validation queries' relevant passages (e.g., $\sim$10K docs for MS MARCO v1.1 validation, all of which are relevant to some query). This makes the validation a relatively easy ranking problem ($\sim$92--94\% NDCG@10), but the metric is sufficient for model selection and tracking convergence. In contrast, our final evaluation in the main text (Table~\ref{tab:main_simplified}, Table~\ref{tab:foundation}) uses the full benchmark corpora (e.g., 8.8M docs for MS MARCO Dev, 5.4M for FEVER), which is why final NDCG@10 scores ($\sim$31--33\%) are substantially lower despite the same models. Method rankings are preserved between the two settings, validating our use of the small-corpus validation for early stopping and hyperparameter selection.

\section{Training Convergence Analysis}
\label{appendix:convergence}

To ensure reliable evaluation results, we train all models for sufficient epochs until convergence and select the best checkpoint based on validation performance.

\subsection{Convergence Criteria}

We consider a model converged when both conditions are met: (1) the standard deviation of NDCG@10 over the last 5 evaluation points is less than 0.002, and (2) the difference between the best and final NDCG@10 is less than 0.02. We evaluate validation performance every 500 training steps.

\subsection{Convergence Results}
\label{appendix:first_convergence}

Table~\ref{tab:convergence_all} summarizes the convergence dynamics for all three retrievers trained on MS MARCO v1.1 QA. We report both the first convergence point (when validation performance first stabilizes) and the best checkpoint performance, showing that models continue to improve after initial convergence.

\begin{table}[ht]
\centering
\caption{Convergence Analysis for All Retrievers on MS MARCO v1.1 QA (seed=0). ``First Conv.'' indicates when the model first satisfies convergence criteria (std of last 5 eval points $< 0.002$ and best-current diff $< 0.02$). ``Gain'' shows NDCG@10 improvement from first convergence to best checkpoint. Steps per epoch $\approx 320$.}
\label{tab:convergence_all}
\resizebox{0.6\textwidth}{!}{
\begin{tabular}{llcccccc}
\toprule
& & \multicolumn{3}{c}{\textbf{First Convergence}} & \multicolumn{2}{c}{\textbf{Best Checkpoint}} & \\
\cmidrule(lr){3-5} \cmidrule(lr){6-7}
\textbf{Model} & \textbf{Loss} & \textbf{Step} & \textbf{Epoch} & \textbf{NDCG@10} & \textbf{Epoch} & \textbf{NDCG@10} & \textbf{Gain} \\
\midrule
\multirow{5}{*}{Contriever}
 & Cosine & 3,500 & 11.0 & 92.00 & 56.3 & \textbf{92.82} & +0.82 \\
 & Dot & 3,500 & 11.0 & 92.49 & 75.1 & \textbf{93.50} & +1.01 \\
 & QNorm & 5,000 & 15.6 & 90.63 & 98.6 & \textbf{91.70} & +1.07 \\
 & DNorm & 4,000 & 12.5 & 92.57 & 84.5 & \textbf{93.57} & +1.00 \\
 & Learnable & 5,000 & 15.6 & 92.97 & 82.8 & \textbf{93.58} & +0.61 \\
\midrule
\multirow{5}{*}{RetroMAE}
 & Cosine & 3,500 & 11.0 & 91.07 & 42.3 & \textbf{91.88} & +0.81 \\
 & Dot & 6,500 & 20.3 & 91.88 & 89.2 & \textbf{93.05} & +1.17 \\
 & QNorm & 3,000 & 9.4 & 90.11 & 53.2 & \textbf{91.26} & +1.15 \\
 & DNorm & 4,000 & 12.5 & 91.90 & 68.9 & \textbf{93.08} & +1.18 \\
 & Learnable & 4,500 & 14.1 & 92.04 & 85.9 & \textbf{93.09} & +1.05 \\
\midrule
\multirow{5}{*}{E5}
 & Cosine & 2,500 & 7.8 & 93.91 & 48.5 & \textbf{94.04} & +0.13 \\
 & Dot & 2,500 & 7.8 & 92.94 & 90.8 & \textbf{94.43} & +1.49 \\
 & QNorm & 4,000 & 12.5 & 93.95 & 67.3 & \textbf{94.62} & +0.67 \\
 & DNorm & 3,000 & 9.4 & 94.34 & 70.4 & \textbf{94.80} & +0.46 \\
 & Learnable & 4,000 & 12.5 & 94.23 & 68.8 & \textbf{94.77} & +0.54 \\
\bottomrule
\end{tabular}
}
\end{table}

All fifteen model-loss combinations achieved stable convergence. Models first satisfy convergence criteria within 8--20 epochs (2,500--6,500 steps), but continue to improve by 0.13--1.49 percentage points before reaching their best performance. This justifies our choice of 100 epochs as a conservative upper bound.

\subsection{Convergence Curves}

Figure~\ref{fig:convergence_curves} shows the NDCG@10 progression during training for all model-loss combinations.


\begin{figure*}[t]
\centering
\begin{tikzpicture}
\begin{groupplot}[
    group style={
        group size=3 by 1,
        horizontal sep=1.0cm,
    },
    width=0.36\textwidth,
    height=0.26\textwidth,
    xlabel={Training Steps},
    xmin=0, xmax=35000,
    xtick={0, 5000, 10000, 15000, 20000, 25000, 30000, 35000},
    xticklabels={0, 5k, 10k, 15k, 20k, 25k, 30k, 35k},
    scaled x ticks=false,
    ymin=0.87, ymax=0.96,
    grid=major,
    grid style={dashed,gray!30},
    tick label style={font=\small},
    label style={font=\small},
    title style={font=\normalsize\bfseries},
    every axis plot/.append style={semithick, dashed, mark size=1.5pt},
]

\nextgroupplot[title={Contriever}, ylabel={Val NDCG@10}]
\addplot[blue!80, semithick, dashed, mark=square*, mark size=1.5pt, mark repeat=5]
    coordinates {(500,0.9027) (1500,0.9152) (3000,0.9195) (4000,0.9211) (5500,0.9241) (7000,0.9259) (8000,0.9255) (9500,0.9263) (11000,0.9272) (12000,0.9266) (13500,0.9275) (14500,0.9275) (16000,0.9281) (17500,0.9283) (18500,0.9275) (20000,0.9278) (21500,0.9273) (22500,0.9276) (24000,0.9276) (25000,0.9274) (26500,0.9277) (28000,0.9276) (29000,0.9280) (30500,0.9282) (32000,0.9280)};
\addplot[orange, semithick, dashed, mark=diamond*, mark size=1.5pt, mark repeat=5]
    coordinates {(500,0.9103) (1500,0.9192) (3000,0.9237) (4000,0.9258) (5500,0.9276) (6500,0.9283) (8000,0.9299) (9500,0.9305) (10500,0.9323) (12000,0.9321) (13000,0.9318) (14500,0.9330) (16000,0.9332) (17000,0.9335) (18500,0.9334) (19500,0.9343) (21000,0.9347) (22000,0.9344) (23500,0.9349) (25000,0.9350) (26000,0.9349) (27500,0.9349) (28500,0.9349) (30000,0.9349) (31500,0.9350)};
\addplot[red!80, semithick, dashed, mark=triangle*, mark size=1.5pt, mark repeat=5]
    coordinates {(500,0.8818) (1500,0.8949) (3000,0.9021) (4000,0.9062) (5500,0.9052) (6500,0.9090) (8000,0.9113) (9500,0.9110) (10500,0.9137) (12000,0.9133) (13000,0.9133) (14500,0.9148) (16000,0.9148) (17000,0.9154) (18500,0.9160) (19500,0.9163) (21000,0.9164) (22000,0.9168) (23500,0.9165) (25000,0.9168) (26000,0.9169) (27500,0.9170) (28500,0.9169) (30000,0.9169) (31500,0.9170)};
\addplot[teal, thick, mark=o, mark size=1.5pt, mark repeat=5]
    coordinates {(500,0.9064) (1500,0.9182) (3000,0.9246) (4000,0.9257) (5500,0.9295) (6500,0.9302) (8000,0.9316) (9500,0.9323) (10500,0.9327) (12000,0.9334) (13000,0.9332) (14500,0.9336) (16000,0.9337) (17000,0.9336) (18500,0.9345) (19500,0.9347) (21000,0.9352) (22000,0.9354) (23500,0.9350) (25000,0.9354) (26000,0.9354) (27500,0.9356) (28500,0.9355) (30000,0.9355) (31500,0.9356)};
\addplot[magenta, very thick, mark=*, mark size=1.5pt, mark repeat=5]
    coordinates {(500,0.9126) (1500,0.9202) (3000,0.9263) (4000,0.9291) (5500,0.9302) (6500,0.9316) (8000,0.9322) (9500,0.9321) (10500,0.9329) (12000,0.9331) (13000,0.9333) (14500,0.9345) (16000,0.9344) (17000,0.9345) (18500,0.9347) (19500,0.9348) (21000,0.9354) (22000,0.9356) (23500,0.9358) (25000,0.9357) (26000,0.9356) (27500,0.9355) (28500,0.9355) (30000,0.9357) (31500,0.9357)};

\nextgroupplot[title={RetroMAE}, yticklabels={}]
\addplot[blue!80, semithick, dashed, mark=square*, mark size=1.5pt, mark repeat=5]
    coordinates {(500,0.8967) (1500,0.9064) (3000,0.9105) (4000,0.9133) (5500,0.9132) (6500,0.9160) (8000,0.9165) (9500,0.9165) (10500,0.9170) (12000,0.9170) (13000,0.9179) (14500,0.9174) (16000,0.9171) (17000,0.9179) (18500,0.9176) (19500,0.9168) (21000,0.9174) (22000,0.9166) (23500,0.9174) (25000,0.9178) (26000,0.9181) (27500,0.9176) (28500,0.9176) (30000,0.9173) (31500,0.9174)};
\addplot[orange, semithick, dashed, mark=diamond*, mark size=1.5pt, mark repeat=5]
    coordinates {(500,0.8768) (1500,0.8978) (3000,0.9111) (4000,0.9111) (5500,0.9152) (6500,0.9188) (8000,0.9200) (9500,0.9202) (10500,0.9241) (12000,0.9254) (13000,0.9238) (14500,0.9257) (16000,0.9265) (17000,0.9274) (18500,0.9279) (19500,0.9283) (21000,0.9299) (22000,0.9301) (23500,0.9291) (25000,0.9299) (26000,0.9299) (27500,0.9300) (28500,0.9305) (30000,0.9302) (31500,0.9303)};
\addplot[red!80, semithick, dashed, mark=triangle*, mark size=1.5pt, mark repeat=5]
    coordinates {(500,0.8885) (1500,0.8992) (3000,0.9011) (4000,0.9032) (5500,0.9035) (6500,0.9042) (8000,0.9067) (9500,0.9055) (10500,0.9094) (12000,0.9100) (13000,0.9099) (14500,0.9083) (16000,0.9114) (17000,0.9126) (18500,0.9113) (19500,0.9118) (21000,0.9105) (22000,0.9109) (23500,0.9101) (25000,0.9099) (26000,0.9110) (27500,0.9106) (28500,0.9107) (30000,0.9106) (31500,0.9107)};
\addplot[teal, thick, mark=o, mark size=1.5pt, mark repeat=5]
    coordinates {(500,0.9017) (1500,0.9123) (3000,0.9186) (4000,0.9190) (5500,0.9229) (6500,0.9239) (8000,0.9248) (9500,0.9244) (10500,0.9274) (12000,0.9274) (13000,0.9280) (14500,0.9275) (16000,0.9276) (17000,0.9284) (18500,0.9282) (19500,0.9287) (21000,0.9302) (22000,0.9308) (23500,0.9303) (25000,0.9302) (26000,0.9308) (27500,0.9307) (28500,0.9307) (30000,0.9305) (31500,0.9306)};
\addplot[magenta, very thick, mark=*, mark size=1.5pt, mark repeat=5]
    coordinates {(500,0.9014) (1500,0.9130) (3000,0.9194) (4000,0.9193) (5500,0.9217) (6500,0.9237) (8000,0.9233) (9500,0.9263) (10500,0.9276) (12000,0.9272) (13000,0.9265) (14500,0.9275) (16000,0.9276) (17000,0.9294) (18500,0.9293) (19500,0.9290) (21000,0.9287) (22000,0.9305) (23500,0.9297) (25000,0.9299) (26000,0.9307) (27500,0.9309) (28500,0.9307) (30000,0.9306) (31500,0.9307)};

\nextgroupplot[title={E5}, yticklabels={}]
\addplot[blue!80, semithick, dashed, mark=square*, mark size=1.5pt, mark repeat=5]
    coordinates {(500,0.9336) (1500,0.9369) (3000,0.9376) (4000,0.9378) (5500,0.9380) (6500,0.9382) (8000,0.9390) (9500,0.9383) (10500,0.9388) (12000,0.9391) (13000,0.9393) (14500,0.9391) (16000,0.9389) (17000,0.9393) (18500,0.9395) (19500,0.9394) (21000,0.9392) (22000,0.9393) (23500,0.9390) (25000,0.9390) (26000,0.9388) (27500,0.9388) (28500,0.9389) (30000,0.9387) (31500,0.9388)};
\addplot[orange, semithick, dashed, mark=diamond*, mark size=1.5pt, mark repeat=5]
    coordinates {(500,0.9284) (1500,0.9284) (3000,0.9304) (4000,0.9341) (5500,0.9333) (6500,0.9346) (8000,0.9372) (9500,0.9371) (10500,0.9390) (12000,0.9387) (13000,0.9389) (14500,0.9395) (16000,0.9402) (17000,0.9415) (18500,0.9420) (19500,0.9423) (21000,0.9423) (22000,0.9430) (23500,0.9435) (25000,0.9437) (26000,0.9435) (27500,0.9439) (28500,0.9438) (30000,0.9441) (31500,0.9441)};
\addplot[red!80, semithick, dashed, mark=triangle*, mark size=1.5pt, mark repeat=5]
    coordinates {(500,0.9288) (1500,0.9335) (3000,0.9391) (4000,0.9395) (5500,0.9407) (6500,0.9418) (8000,0.9424) (9500,0.9442) (10500,0.9434) (12000,0.9447) (13000,0.9430) (14500,0.9433) (16000,0.9430) (17000,0.9457) (18500,0.9452) (19500,0.9455) (21000,0.9453) (22000,0.9461) (23500,0.9459) (25000,0.9457) (26000,0.9458) (27500,0.9461) (28500,0.9460) (30000,0.9460) (31500,0.9460)};
\addplot[teal, thick, mark=o, mark size=1.5pt, mark repeat=5]
    coordinates {(500,0.9354) (1500,0.9406) (3000,0.9434) (4000,0.9442) (5500,0.9446) (6500,0.9439) (8000,0.9458) (9500,0.9456) (10500,0.9464) (12000,0.9462) (13000,0.9457) (14500,0.9470) (16000,0.9463) (17000,0.9463) (18500,0.9474) (19500,0.9480) (21000,0.9475) (22000,0.9478) (23500,0.9479) (25000,0.9475) (26000,0.9480) (27500,0.9479) (28500,0.9480) (30000,0.9479) (31500,0.9477)};
\addplot[magenta, very thick, mark=*, mark size=1.5pt, mark repeat=5]
    coordinates {(500,0.9329) (1500,0.9377) (3000,0.9418) (4000,0.9423) (5500,0.9426) (6500,0.9434) (8000,0.9431) (9500,0.9445) (10500,0.9453) (12000,0.9448) (13000,0.9446) (14500,0.9450) (16000,0.9447) (17000,0.9456) (18500,0.9466) (19500,0.9473) (21000,0.9469) (22000,0.9477) (23500,0.9468) (25000,0.9466) (26000,0.9470) (27500,0.9473) (28500,0.9472) (30000,0.9473) (31500,0.9474)};

\end{groupplot}
\end{tikzpicture}
\vspace{0.5em}
\centerline{%
  \tikz{\draw[blue!80, semithick, dashed] (0,0) -- (0.5,0); \node[blue!80, mark size=2pt] at (0.25,0) {\pgfuseplotmark{square*}};} \small~Cosine \quad
  \tikz{\draw[orange, semithick, dashed] (0,0) -- (0.5,0); \node[orange, mark size=2pt] at (0.25,0) {\pgfuseplotmark{diamond*}};} \small~Dot \quad
  \tikz{\draw[red!80, semithick, dashed] (0,0) -- (0.5,0); \node[red!80, mark size=2pt] at (0.25,0) {\pgfuseplotmark{triangle*}};} \small~QNorm \quad
  \tikz{\draw[teal, thick] (0,0) -- (0.5,0); \node[teal, mark size=2pt] at (0.25,0) {\pgfuseplotmark{o}};} \small~DNorm \quad
  \tikz{\draw[magenta, very thick] (0,0) -- (0.5,0); \node[magenta, mark size=2pt] at (0.25,0) {\pgfuseplotmark{*}};} \small~Learnable%
}
\caption{Validation NDCG@10 during training for Contriever, RetroMAE, and E5 with different similarity functions (seed=0). All models demonstrate stable convergence across all loss function variants.}
\label{fig:convergence_curves}
\end{figure*}

The convergence curves reveal consistent patterns across all three retrievers. All variants show rapid initial improvement within the first 5,000--10,000 steps, followed by gradual convergence with performance stabilizing after approximately 15,000--25,000 steps. For Contriever and RetroMAE, the Dot, DNorm, and Learnable variants consistently track above the Cosine baseline throughout training, while QNorm converges to lower performance. For E5, all five variants converge to similar high performance on the validation set, with magnitude-aware variants slightly outperforming Cosine.

\paragraph{Key observations.}
\begin{itemize}[leftmargin=1.5em, itemsep=0.2em]
    \item \textbf{Early convergence}: All models first satisfy convergence criteria within 8--20 epochs (approximately 2,500--6,500 steps), indicating rapid stabilization of training dynamics.
    \item \textbf{Continued improvement}: Despite meeting convergence criteria early, models continue to improve by 0.13--1.49 percentage points in NDCG@10 before reaching their best performance. This ``post-convergence gain'' represents meaningful improvement that would be missed by early stopping at first convergence.
    \item \textbf{Model-specific patterns}: E5 converges fastest (2,500--4,000 steps, 8--12 epochs) but shows variable post-convergence gains. Notably, E5-Dot shows the largest gain (+1.49\%), suggesting that magnitude-aware training requires more optimization for E5's architecture. Contriever and RetroMAE show more consistent post-convergence gains around +1\%.
    \item \textbf{Implications for training budget}: These results suggest that training for approximately 60--90 epochs (rather than stopping at 10--20 epochs when first convergence is detected) captures important gradual improvements. This justifies our choice of 100 epochs as a conservative upper bound that ensures all models reach their optimal performance.
\end{itemize}

\section{Loss Scale Sensitivity Analysis}
\label{appendix:loss_scale}

The InfoNCE loss uses a scale factor $\tau$ (inverse temperature) that controls the sharpness of the softmax distribution over candidates. Our main experiments use $\tau = 20$ following common practice. Here we analyze the sensitivity of different similarity functions to this hyperparameter.

\subsection{Experimental Setup}

We train Contriever with each similarity function using four scale values: $\tau \in \{10, 20, 30, 40\}$ on MS MARCO v1.1 QA (82K samples). All other hyperparameters remain identical to the main experiments (Section~\ref{appendix:training_config}).

\subsection{Results}

Table~\ref{tab:loss_scale} presents the NDCG@10 scores across different scales for in-domain (TREC-DL 2019/2020 average) and three out-of-domain benchmark categories (BEIR, BRIGHT, and Multi-hop).

\begin{table*}[t]
  \caption{Loss scale ($\tau$) sensitivity for Contriever finetuned on MS MARCO 82K.
  NDCG@10 ($\times 100$) for different loss scale values. Default $\tau=20$ is used in main experiments.
  \textbf{Bold} indicates best, \underline{underline} indicates second best per method within each benchmark.}
  \label{tab:loss_scale}
  \begin{center}
    \resizebox{\textwidth}{!}{
      \begin{tabular}{lcccc|cccc|cccc|cccc}
        \toprule
        & \multicolumn{4}{c|}{\textbf{In-Domain}} & \multicolumn{4}{c|}{\textbf{BEIR}} & \multicolumn{4}{c|}{\textbf{BRIGHT}} & \multicolumn{4}{c}{\textbf{Multi-hop}} \\
        \cmidrule(lr){2-5} \cmidrule(lr){6-9} \cmidrule(lr){10-13} \cmidrule(lr){14-17}
        \textbf{Method} & $\tau$=10 & $\tau$=20 & $\tau$=30 & $\tau$=40 & $\tau$=10 & $\tau$=20 & $\tau$=30 & $\tau$=40 & $\tau$=10 & $\tau$=20 & $\tau$=30 & $\tau$=40 & $\tau$=10 & $\tau$=20 & $\tau$=30 & $\tau$=40 \\
        \midrule
        Cosine & 51.4 & \textbf{57.5} & \underline{57.1} & 57.0 & 35.6 & 40.4 & \textbf{43.0} & \underline{42.8} & 6.0 & 7.1 & \underline{9.3} & \textbf{9.5} & 41.2 & 50.4 & \underline{56.7} & \textbf{57.9} \\
        Dot & \textbf{56.7} & \underline{56.4} & 55.9 & 55.7 & 43.1 & \underline{43.5} & 43.0 & \textbf{43.5} & \underline{12.4} & \textbf{12.6} & 12.2 & 12.2 & 57.0 & \underline{58.0} & 58.0 & \textbf{58.2} \\
        QNorm & 56.7 & \textbf{57.7} & \underline{56.8} & 56.3 & 42.1 & \textbf{44.1} & 43.6 & \underline{43.7} & 11.5 & \textbf{13.0} & \underline{12.6} & 12.5 & 55.4 & 57.2 & \underline{57.8} & \textbf{58.3} \\
        DNorm & \textbf{56.7} & \underline{56.2} & 55.2 & 55.4 & \underline{42.4} & \textbf{42.6} & 42.0 & 42.1 & \textbf{10.1} & \underline{9.9} & 9.7 & 9.6 & 56.9 & \textbf{57.7} & \underline{57.4} & 57.3 \\
        Learnable & \textbf{58.4} & \underline{58.3} & 57.5 & 57.4 & 42.4 & 43.7 & \underline{43.8} & \textbf{43.9} & 10.5 & 11.7 & \underline{12.4} & \textbf{12.6} & 55.7 & 57.1 & \underline{57.5} & \textbf{58.0} \\
        \bottomrule
      \end{tabular}
    }
  \end{center}
\end{table*}

\paragraph{Key findings.}

\begin{itemize}[leftmargin=1.5em, itemsep=0.2em]
    \item \textbf{In-domain stability}: Most magnitude-aware methods (Dot, QNorm, DNorm, Learnable) show small performance variation across scales ($\leq 1.5$ NDCG@10 range), indicating robustness to this hyperparameter. Cosine similarity shows larger sensitivity (6.1 range), with poor performance at $\tau=10$.

    \item \textbf{OOD patterns}: Higher scale values ($\tau=30, 40$) generally improve out-of-domain generalization for Cosine, QNorm, and Learnable methods. Dot and DNorm remain stable across scales on OOD benchmarks.

    \item \textbf{Method ranking consistency}: Learnable achieves the best in-domain performance across all scale values, demonstrating that the learned normalization exponents provide consistent benefits regardless of the scale hyperparameter.

    \item \textbf{Default choice justification}: The default $\tau=20$ provides a good balance: it achieves optimal or near-optimal in-domain performance for most methods while maintaining competitive OOD performance. This justifies our choice for the main experiments.
\end{itemize}

These results suggest that while the loss scale does affect absolute performance, the relative ranking of similarity functions remains consistent, and our main conclusions about magnitude-aware training are robust to reasonable variations in this hyperparameter.

\section{Complete Evaluation Results}
\label{appendix:complete_results}

This section presents the complete evaluation results. We first show the full NDCG@10 results across all three training paradigms (Table~\ref{tab:combined_results}), then provide per-dataset results with all three metrics: NDCG@10 (N), Recall@100 (R), and MRR@10 (M).

\subsection{Full Results Across Training Paradigms}

\begin{table*}[t!]
  \caption{NDCG@10 across three training paradigms on MS MARCO. \textbf{Bold} = best, \underline{underline} = second best per model/data scale. PT = pretrained baseline (before finetuning).}
  \label{tab:combined_results}
  \centering

  \begin{subtable}{\textwidth}
    \centering
    \caption{Finetuning Pretrained Models (mean$\pm$std across 3 seeds)}
    \label{tab:finetune}
    \resizebox{\textwidth}{!}{
      \begin{tabular}{lcccccc|cccccc}
        \toprule
        & \multicolumn{6}{c|}{\textbf{Contriever}} & \multicolumn{6}{c}{\textbf{RetroMAE}} \\
        \cmidrule(lr){2-7} \cmidrule(lr){8-13}
        \textbf{Dataset} & \textbf{PT} & \textbf{Cos} & \textbf{Dot} & \textbf{QN} & \textbf{DN} & \textbf{Ln} & \textbf{PT} & \textbf{Cos} & \textbf{Dot} & \textbf{QN} & \textbf{DN} & \textbf{Ln} \\
        \midrule
        \multicolumn{13}{l}{\textit{In-Domain}} \\
        \quad MSM-Dev & 19.5 & 30.9$\pm$0.1 & 31.1$\pm$0.1 & 31.4$\pm$0.0 & \underline{31.4$\pm$0.1} & \textbf{32.2$\pm$0.1} & 2.9 & 30.7$\pm$0.1 & 31.2$\pm$0.2 & 32.2$\pm$0.1 & \textbf{32.9$\pm$0.2} & \underline{32.8$\pm$0.1} \\
        \quad DL-19 & 42.9 & 56.7$\pm$0.3 & 56.7$\pm$0.2 & \textbf{58.0$\pm$0.4} & 55.4$\pm$0.4 & \underline{57.8$\pm$0.1} & 14.1 & 56.1$\pm$0.3 & 56.8$\pm$0.8 & 57.5$\pm$0.4 & \underline{60.0$\pm$2.0} & \textbf{60.4$\pm$1.0} \\
        \quad DL-20 & 40.5 & \underline{57.3$\pm$0.7} & 56.3$\pm$0.2 & 56.9$\pm$0.6 & 57.2$\pm$0.0 & \textbf{58.5$\pm$0.3} & 9.0 & 56.9$\pm$0.6 & 57.8$\pm$0.9 & 58.1$\pm$0.7 & \underline{59.2$\pm$0.8} & \textbf{59.7$\pm$0.9} \\
        \midrule
        \multicolumn{13}{l}{\textit{Out-of-Domain}} \\
        \quad BEIR (14) & 28.7 & 41.0$\pm$0.5 & 43.5$\pm$0.1 & \textbf{44.0$\pm$0.3} & 42.5$\pm$0.2 & \underline{43.6$\pm$0.2} & 14.3 & 38.3$\pm$0.2 & 40.1$\pm$0.2 & 41.1$\pm$0.3 & \underline{41.2$\pm$0.2} & \textbf{41.2$\pm$0.3} \\
        \quad BRIGHT (12) & 3.8 & 7.4$\pm$0.3 & \underline{12.5$\pm$0.2} & \textbf{12.7$\pm$0.2} & 9.8$\pm$0.3 & 11.7$\pm$0.2 & 5.2 & 6.1$\pm$0.2 & 8.7$\pm$0.2 & 9.4$\pm$0.3 & \underline{9.5$\pm$0.2} & \textbf{9.8$\pm$0.1} \\
        \quad MHop (4) & 39.7 & 51.4$\pm$0.8 & \textbf{58.2$\pm$0.2} & 57.4$\pm$0.2 & \underline{57.5$\pm$0.2} & 57.2$\pm$0.1 & 18.5 & 50.7$\pm$0.4 & 52.9$\pm$0.4 & 54.8$\pm$0.3 & \textbf{55.5$\pm$0.6} & \underline{55.2$\pm$0.4} \\
        \bottomrule
      \end{tabular}
    }
  \end{subtable}

  \vspace{0.15cm}

  \begin{subtable}{\textwidth}
    \centering
    \caption{Training from Foundation Model (Qwen3-Base-0.6B, single seed)}
    \label{tab:foundation}
    \resizebox{0.6\textwidth}{!}{
      \begin{tabular}{lccccc|ccccc}
        \toprule
        & \multicolumn{5}{c|}{\textbf{MS MARCO QA (82K)}} & \multicolumn{5}{c}{\textbf{MS MARCO Passage (503K)}} \\
        \cmidrule(lr){2-6} \cmidrule(lr){7-11}
        \textbf{Dataset} & \textbf{Cos} & \textbf{Dot} & \textbf{QN} & \textbf{DN} & \textbf{Ln} & \textbf{Cos} & \textbf{Dot} & \textbf{QN} & \textbf{DN} & \textbf{Ln} \\
        \midrule
        \multicolumn{11}{l}{\textit{In-Domain}} \\
        \quad MSM-Dev & \underline{29.4} & 7.2 & 27.3 & 25.7 & \textbf{31.0} & \underline{31.3} & 13.5 & 26.8 & \textbf{33.5} & 15.8 \\
        \quad DL-19 & \underline{54.1} & 19.1 & 51.7 & 50.0 & \textbf{56.7} & \underline{56.4} & 23.5 & 51.6 & \textbf{61.2} & 39.5 \\
        \quad DL-20 & 54.6 & 10.6 & \underline{55.4} & 49.1 & \textbf{56.3} & \textbf{60.1} & 26.3 & 48.5 & \underline{59.9} & 37.8 \\
        \midrule
        \multicolumn{11}{l}{\textit{Out-of-Domain}} \\
        \quad BEIR (14) & \underline{42.6} & 15.4 & 29.7 & \textbf{42.7} & 40.3 & \underline{44.8} & 13.4 & 21.1 & \textbf{45.3} & 19.3 \\
        \quad BRIGHT (12) & \textbf{10.4} & 0.1 & 0.1 & \underline{9.8} & 2.2 & \underline{11.1} & 0.1 & 0.2 & \textbf{13.5} & 1.7 \\
        \quad MHop (4) & \textbf{49.2} & 18.7 & 37.4 & 44.1 & \underline{48.7} & \underline{54.6} & 16.8 & 47.2 & \textbf{57.6} & 10.7 \\
        \bottomrule
      \end{tabular}
    }
  \end{subtable}

  \vspace{0.15cm}

  \begin{subtable}{\textwidth}
    \centering
    \caption{Training from Random Initialization (mean$\pm$std across 3 seeds)}
    \label{tab:random_init}
    \resizebox{\textwidth}{!}{
      \begin{tabular}{lccccc|ccccc}
        \toprule
        & \multicolumn{5}{c|}{\textbf{Contriever}} & \multicolumn{5}{c}{\textbf{RetroMAE}} \\
        \cmidrule(lr){2-6} \cmidrule(lr){7-11}
        \textbf{Dataset} & \textbf{Cos} & \textbf{Dot} & \textbf{QN} & \textbf{DN} & \textbf{Ln} & \textbf{Cos} & \textbf{Dot} & \textbf{QN} & \textbf{DN} & \textbf{Ln} \\
        \midrule
        \multicolumn{11}{l}{\textit{In-Domain}} \\
        \quad MSM-Dev & \textbf{17.7$\pm$0.1} & 11.1$\pm$0.0 & 9.6$\pm$0.0 & \underline{15.2$\pm$0.2} & 12.0$\pm$0.1 & \textbf{19.2$\pm$0.1} & 14.1$\pm$0.1 & 13.7$\pm$0.2 & \underline{17.3$\pm$0.2} & 14.3$\pm$0.3 \\
        \quad DL-19 & \textbf{38.5$\pm$2.2} & 24.3$\pm$1.4 & 20.6$\pm$1.3 & \underline{29.8$\pm$0.6} & 24.6$\pm$2.4 & \textbf{41.0$\pm$2.0} & 28.8$\pm$1.5 & 27.6$\pm$1.6 & \underline{35.6$\pm$2.8} & 28.9$\pm$1.4 \\
        \quad DL-20 & \textbf{38.0$\pm$1.8} & 24.9$\pm$0.7 & 22.3$\pm$0.6 & \underline{33.6$\pm$0.9} & 27.4$\pm$0.8 & \textbf{41.3$\pm$1.3} & 34.6$\pm$1.4 & 32.9$\pm$1.0 & \underline{40.3$\pm$0.8} & 34.9$\pm$1.5 \\
        \midrule
        \multicolumn{11}{l}{\textit{Out-of-Domain}} \\
        \quad BEIR (14) & \textbf{24.4$\pm$0.5} & 13.4$\pm$0.5 & 15.5$\pm$0.5 & \underline{21.1$\pm$0.4} & 17.9$\pm$0.3 & \textbf{25.7$\pm$0.2} & 18.4$\pm$0.2 & 19.5$\pm$0.4 & \underline{23.1$\pm$0.1} & 20.6$\pm$0.2 \\
        \quad BRIGHT (12) & \underline{3.1$\pm$0.2} & 0.3$\pm$0.1 & 0.3$\pm$0.1 & \textbf{3.8$\pm$0.2} & 1.2$\pm$0.0 & \underline{3.2$\pm$0.2} & 0.8$\pm$0.0 & 1.0$\pm$0.1 & \textbf{3.9$\pm$0.3} & 1.7$\pm$0.1 \\
        \quad MHop (4) & \textbf{27.9$\pm$1.1} & 18.3$\pm$0.1 & 20.3$\pm$0.5 & \underline{27.7$\pm$0.7} & 22.7$\pm$0.3 & \textbf{31.0$\pm$0.4} & 25.6$\pm$0.3 & 26.8$\pm$0.8 & \underline{30.8$\pm$0.8} & 28.3$\pm$0.7 \\
        \bottomrule
      \end{tabular}
    }
  \end{subtable}

\end{table*}

\subsection{Per-Dataset Results with All Metrics}

The following tables present complete results for each model with NDCG@10, Recall@100, and MRR@10. The best result among the five finetuned models is shown in \textbf{bold}, and the second-best is \underline{underlined}.

\subsubsection{Contriever Results}

Table~\ref{tab:complete_contriever} presents the complete results for Contriever across all evaluation benchmarks.

\begin{table}[ht]
\centering
\caption{Complete Evaluation Results for Contriever (MS MARCO v1.1 QA 82K, mean of seeds 0, 42, 1337). Comparing Pretrained and five finetuned methods (Cosine, Dot, QNorm, DNorm, Learnable). N = NDCG@10, R = Recall@100, M = MRR@10. \textbf{Bold} = best among finetuned, \underline{underline} = second best.}
\label{tab:complete_contriever}
\resizebox{\textwidth}{!}{
\begin{tabular}{l|ccc|ccc|ccc|ccc|ccc|ccc}
\toprule
& \multicolumn{3}{c|}{\textbf{Pretrained}} & \multicolumn{3}{c|}{\textbf{Cosine}} & \multicolumn{3}{c|}{\textbf{Dot}} & \multicolumn{3}{c|}{\textbf{QNorm}} & \multicolumn{3}{c|}{\textbf{DNorm}} & \multicolumn{3}{c}{\textbf{Learnable}} \\
\textbf{Dataset} & N & R & M & N & R & M & N & R & M & N & R & M & N & R & M & N & R & M \\
\midrule
\midrule
\multicolumn{19}{l}{\textit{In-Domain}} \\
MSM-Dev & 19.45 & 64.61 & 15.29 & 30.86 & 81.49 & 24.84 & 31.07 & \underline{83.65} & 24.76 & 31.35 & 83.46 & 25.01 & \underline{31.40} & 83.50 & \underline{25.05} & \textbf{32.17} & \textbf{83.82} & \textbf{25.83} \\
DL-19 & 42.85 & 38.18 & 72.54 & 56.65 & 45.55 & \textbf{84.71} & 56.69 & \textbf{50.12} & 80.68 & \textbf{58.02} & 49.85 & 81.82 & 55.41 & 49.31 & 82.19 & \underline{57.76} & \underline{50.04} & \underline{82.30} \\
DL-20 & 40.52 & 41.22 & 71.32 & \underline{57.25} & 52.16 & \textbf{87.45} & 56.32 & 54.86 & 79.66 & 56.87 & \underline{55.39} & 81.49 & 57.17 & 54.06 & \underline{87.13} & \textbf{58.48} & \textbf{55.79} & 86.47 \\
\midrule
\multicolumn{19}{l}{\textit{BEIR (14 datasets)}} \\
ArguAna & 44.94 & 95.16 & 36.49 & \underline{49.70} & \underline{98.17} & \underline{41.03} & 44.96 & 96.30 & 35.92 & 43.32 & 95.26 & 34.52 & \textbf{54.09} & \textbf{98.60} & \textbf{44.62} & 49.08 & 97.73 & 40.08 \\
Climate-FEVER & 7.16 & 23.60 & 9.37 & \underline{20.05} & 49.53 & \underline{27.36} & 19.12 & \underline{49.55} & 25.76 & 19.13 & 48.95 & 25.93 & \textbf{20.73} & \textbf{51.07} & \textbf{28.18} & 19.59 & 48.87 & 26.69 \\
CQADupStack & 20.41 & 49.19 & 19.79 & 29.99 & 62.14 & 29.07 & \textbf{34.57} & \textbf{67.41} & \textbf{33.63} & 34.02 & 66.89 & 33.08 & 34.07 & \underline{67.24} & 33.20 & \underline{34.40} & 67.08 & \underline{33.61} \\
DBPedia & 27.03 & 41.70 & 56.80 & 35.08 & 47.50 & 68.63 & \underline{39.16} & \textbf{53.90} & 71.57 & 39.13 & \underline{53.82} & \textbf{71.94} & 37.29 & 52.36 & 68.61 & \textbf{39.44} & 53.70 & \underline{71.81} \\
FEVER & 27.22 & 64.22 & 24.08 & 67.10 & 92.54 & 64.70 & \textbf{73.62} & \textbf{94.31} & \textbf{71.79} & \underline{72.61} & \underline{94.15} & \underline{70.66} & 62.02 & 93.23 & 57.83 & 69.93 & 93.90 & 67.34 \\
FiQA & 12.41 & 36.34 & 17.26 & 27.05 & 60.82 & 33.69 & \underline{29.98} & \underline{64.74} & \textbf{36.66} & 29.41 & 63.73 & 35.95 & \textbf{29.99} & \textbf{66.36} & \underline{36.37} & 29.23 & 64.41 & 35.86 \\
HotpotQA & 41.01 & 62.82 & 54.23 & 52.90 & 68.15 & 70.44 & \textbf{63.17} & \textbf{79.58} & \textbf{80.42} & \underline{62.78} & \underline{79.41} & \underline{80.08} & 60.77 & 78.02 & 77.55 & 62.78 & 79.16 & 80.04 \\
NFCorpus & 27.11 & 26.11 & 45.92 & 31.45 & 29.48 & 51.58 & \textbf{33.24} & \textbf{29.78} & \textbf{54.09} & 32.84 & 29.62 & 53.42 & \underline{33.06} & \underline{29.67} & \underline{53.80} & 32.87 & 29.43 & 53.27 \\
NQ & 18.05 & 67.08 & 14.60 & 34.55 & 82.23 & 29.34 & 36.37 & \textbf{88.39} & 30.80 & \textbf{38.73} & 88.36 & \textbf{33.43} & 35.23 & 88.15 & 29.52 & \underline{37.12} & \underline{88.38} & \underline{31.60} \\
Quora & 83.37 & 98.53 & 82.66 & 84.45 & 98.51 & 83.70 & \underline{86.15} & \textbf{99.19} & \underline{85.33} & 85.72 & 99.14 & 84.90 & 85.93 & 99.12 & 85.23 & \textbf{86.52} & \underline{99.18} & \textbf{85.76} \\
SCIDOCS & 10.97 & 32.37 & 19.74 & 15.63 & 35.84 & 27.97 & 16.35 & 38.60 & 28.87 & 16.41 & 38.40 & 28.67 & \textbf{17.12} & \textbf{39.64} & \textbf{30.04} & \underline{16.84} & \underline{39.22} & \underline{29.36} \\
SciFact & 57.14 & 89.66 & 53.19 & 61.42 & 90.46 & 56.87 & 68.40 & \textbf{94.47} & 64.55 & 68.18 & 94.13 & 64.14 & \textbf{68.80} & 93.61 & \textbf{65.08} & \underline{68.73} & \underline{94.28} & \underline{64.86} \\
TREC-COVID & 18.16 & 1.91 & 39.04 & 46.10 & \underline{7.92} & 74.29 & \underline{47.27} & 5.98 & \underline{76.08} & \textbf{53.96} & \textbf{8.36} & \textbf{84.10} & 39.66 & 4.56 & 64.75 & 46.45 & 5.80 & 75.03 \\
Touche-2020 & 7.25 & 15.66 & 21.26 & \underline{18.00} & \textbf{40.05} & \underline{34.68} & 16.36 & 39.24 & 31.06 & \textbf{19.88} & \underline{39.80} & \textbf{39.27} & 15.56 & 39.39 & 27.48 & 17.18 & 38.93 & 30.79 \\
\midrule
\multicolumn{19}{l}{\textit{BRIGHT (12 datasets)}} \\
AOPS & 3.46 & 9.00 & 6.31 & 4.46 & 8.98 & 8.01 & \underline{6.44} & \underline{17.35} & \textbf{12.63} & 5.77 & \textbf{17.37} & 10.79 & 5.62 & 14.60 & 9.89 & \textbf{6.96} & 17.09 & \underline{11.52} \\
Biology & 1.58 & 11.09 & 1.96 & 6.72 & 24.65 & 9.78 & \underline{16.03} & \underline{46.74} & \underline{21.49} & \textbf{17.33} & \textbf{48.65} & \textbf{22.85} & 11.76 & 41.93 & 16.45 & 13.16 & 42.47 & 17.59 \\
Earth Sci. & 3.44 & 12.26 & 5.02 & 11.65 & 39.55 & 15.06 & \underline{20.69} & \underline{53.08} & \underline{27.73} & \textbf{22.55} & \textbf{55.73} & \textbf{30.40} & 16.00 & 46.76 & 21.07 & 17.87 & 51.13 & 23.37 \\
Economics & 2.27 & 9.03 & 3.93 & 9.94 & 31.61 & 13.97 & \underline{15.75} & \underline{42.29} & 21.14 & \textbf{17.01} & \textbf{44.26} & \textbf{22.80} & 12.28 & 36.90 & 17.43 & 15.68 & 42.05 & \underline{22.43} \\
LeetCode & 13.34 & 48.31 & 12.33 & 12.32 & 30.87 & \underline{12.55} & \underline{13.54} & \underline{48.18} & 12.43 & 12.83 & 44.03 & 11.95 & \textbf{14.40} & \textbf{49.27} & \textbf{13.43} & 13.29 & 47.65 & 12.02 \\
Pony & 0.07 & 7.25 & 0.13 & 1.80 & 11.02 & 3.86 & \textbf{6.21} & \textbf{21.92} & \textbf{16.41} & 4.28 & \underline{20.41} & 10.05 & 3.14 & 16.45 & 7.70 & \underline{5.09} & 19.68 & \underline{12.12} \\
Psychology & 2.43 & 11.95 & 3.00 & 9.24 & 31.78 & 10.13 & \underline{15.92} & 44.67 & \underline{20.24} & \textbf{16.75} & \textbf{49.51} & \textbf{20.63} & 12.61 & 42.32 & 14.14 & 14.54 & \underline{45.71} & 18.40 \\
Robotics & 5.21 & 14.60 & 7.30 & 6.66 & 21.16 & 9.85 & \underline{14.46} & \textbf{36.93} & \underline{18.55} & \textbf{14.71} & \underline{36.87} & \textbf{19.11} & 9.12 & 28.87 & 12.25 & 12.39 & 34.80 & 15.85 \\
StackOverflow & 4.42 & 16.64 & 4.51 & 7.79 & 26.44 & 9.37 & \underline{11.56} & \underline{41.94} & \textbf{14.53} & 10.61 & \textbf{42.03} & 12.65 & 9.27 & 37.50 & 10.36 & \textbf{11.99} & 39.44 & \underline{13.53} \\
Sust. Living & 1.43 & 14.98 & 2.23 & 8.18 & 35.20 & 10.32 & \underline{14.37} & \underline{53.08} & \underline{18.57} & \textbf{16.15} & \textbf{53.53} & \textbf{20.05} & 9.11 & 46.52 & 10.97 & 14.18 & 51.26 & 16.42 \\
TheoremQA-Q & 7.66 & 18.56 & 8.20 & 7.43 & 15.68 & 8.38 & 11.08 & 22.52 & 12.41 & \underline{11.17} & \textbf{23.54} & \underline{12.54} & 11.10 & 21.63 & \textbf{12.83} & \textbf{11.21} & \underline{22.55} & 12.44 \\
TheoremQA-T & 0.74 & 5.92 & 0.40 & 2.71 & 9.43 & 2.76 & \textbf{4.27} & \textbf{17.10} & \textbf{4.28} & \underline{3.66} & 10.82 & 3.67 & 3.59 & 12.28 & \underline{3.82} & 3.60 & \underline{13.75} & 3.46 \\
\midrule
\multicolumn{19}{l}{\textit{Multi-hop (4 datasets)}} \\
2WikiMHopQA & 47.85 & 66.49 & 65.94 & 62.26 & 72.06 & 87.10 & \textbf{66.88} & \underline{76.99} & \textbf{90.61} & 66.25 & 76.68 & 89.79 & \underline{66.46} & \textbf{77.17} & \underline{90.03} & 66.02 & 76.83 & 89.50 \\
MuSiQue & 32.11 & 60.58 & 47.66 & 33.79 & 66.01 & 49.60 & \textbf{35.96} & 68.44 & \textbf{51.17} & \underline{35.82} & 68.22 & \underline{50.96} & 34.88 & \textbf{68.51} & 49.70 & 35.51 & \underline{68.47} & 50.76 \\
NovelHopQA & 37.84 & 86.70 & 32.19 & 56.53 & 94.51 & 50.69 & \underline{66.64} & \underline{96.67} & \underline{61.24} & 64.69 & 96.59 & 59.19 & \textbf{68.05} & \textbf{96.89} & \textbf{62.86} & 64.38 & 96.54 & 58.82 \\
HotpotQA & 41.01 & 62.82 & 54.23 & 52.90 & 68.15 & 70.44 & \textbf{63.17} & \textbf{79.58} & \textbf{80.42} & \underline{62.78} & \underline{79.41} & \underline{80.08} & 60.77 & 78.02 & 77.55 & 62.78 & 79.16 & 80.04 \\
\midrule
\multicolumn{19}{l}{\textit{Averages}} \\
DL Avg (2) & 41.68 & 39.70 & 71.93 & 56.95 & 48.85 & \textbf{86.08} & 56.50 & 52.49 & 80.17 & \underline{57.45} & \underline{52.62} & 81.66 & 56.29 & 51.68 & \underline{84.66} & \textbf{58.12} & \textbf{52.92} & 84.38 \\
BEIR Avg (14) & 28.73 & 50.31 & 35.32 & 40.96 & 61.67 & 49.52 & 43.48 & \textbf{64.39} & \underline{51.89} & \textbf{44.01} & 64.29 & \textbf{52.86} & 42.45 & \underline{64.36} & 50.16 & \underline{43.58} & 64.29 & 51.86 \\
BRIGHT Avg (12) & 3.84 & 14.97 & 4.61 & 7.41 & 23.86 & 9.50 & \underline{12.53} & \underline{37.15} & \textbf{16.70} & \textbf{12.74} & \textbf{37.23} & \underline{16.46} & 9.83 & 32.92 & 12.53 & 11.66 & 35.63 & 14.93 \\
MHop Avg (4) & 39.70 & 69.15 & 50.00 & 51.37 & 75.18 & 64.46 & \textbf{58.16} & \textbf{80.42} & \textbf{70.86} & 57.38 & 80.22 & 70.01 & \underline{57.54} & 80.14 & \underline{70.04} & 57.17 & \underline{80.25} & 69.78 \\
\bottomrule
\end{tabular}
}
\end{table}

\subsection{RetroMAE Results}

Table~\ref{tab:complete_retromae} presents the complete results for RetroMAE across all evaluation benchmarks.

\begin{table}[ht]
\centering
\caption{Complete Evaluation Results for RetroMAE (MS MARCO v1.1 QA 82K, mean of seeds 0, 42, 1337). Comparing Pretrained and five finetuned methods (Cosine, Dot, QNorm, DNorm, Learnable). N = NDCG@10, R = Recall@100, M = MRR@10. \textbf{Bold} = best among finetuned, \underline{underline} = second best.}
\label{tab:complete_retromae}
\resizebox{\textwidth}{!}{
\begin{tabular}{l|ccc|ccc|ccc|ccc|ccc|ccc}
\toprule
& \multicolumn{3}{c|}{\textbf{Pretrained}} & \multicolumn{3}{c|}{\textbf{Cosine}} & \multicolumn{3}{c|}{\textbf{Dot}} & \multicolumn{3}{c|}{\textbf{QNorm}} & \multicolumn{3}{c|}{\textbf{DNorm}} & \multicolumn{3}{c}{\textbf{Learnable}} \\
\textbf{Dataset} & N & R & M & N & R & M & N & R & M & N & R & M & N & R & M & N & R & M \\
\midrule
\midrule
\multicolumn{19}{l}{\textit{In-Domain}} \\
MSM-Dev & 2.87 & 14.14 & 2.25 & 30.68 & 78.61 & 25.11 & 31.24 & 81.63 & 25.23 & 32.15 & 82.16 & 26.02 & \textbf{32.92} & \textbf{82.66} & \textbf{26.73} & \underline{32.84} & \underline{82.60} & \underline{26.67} \\
DL-19 & 14.05 & 10.35 & 37.44 & 56.11 & 44.31 & 86.31 & 56.82 & 46.67 & 87.60 & 57.51 & \underline{47.34} & 89.47 & \underline{59.95} & \textbf{47.37} & \underline{91.61} & \textbf{60.43} & 46.82 & \textbf{91.73} \\
DL-20 & 9.00 & 8.30 & 23.30 & 56.85 & 48.75 & 83.87 & 57.83 & 52.29 & 85.49 & 58.11 & 52.39 & 86.05 & \underline{59.20} & \underline{52.59} & \underline{86.63} & \textbf{59.69} & \textbf{53.07} & \textbf{87.08} \\
\midrule
\multicolumn{19}{l}{\textit{BEIR (14 datasets)}} \\
ArguAna & 32.70 & 78.66 & 26.64 & 44.06 & 96.06 & 35.90 & 47.93 & 97.30 & 38.92 & \underline{52.03} & \underline{98.08} & \underline{43.12} & 48.43 & 97.72 & 39.72 & \textbf{52.41} & \textbf{98.36} & \textbf{43.71} \\
Climate-FEVER & 5.95 & 20.03 & 8.30 & \textbf{19.92} & 46.51 & \textbf{27.44} & 17.80 & 46.53 & 24.10 & \underline{19.72} & 48.42 & \underline{26.95} & 19.03 & \underline{48.63} & 25.97 & 19.53 & \textbf{48.87} & 26.64 \\
CQADupStack & 10.22 & 27.20 & 10.04 & 26.07 & 56.18 & 25.46 & 30.85 & 61.12 & 30.21 & 31.52 & 61.60 & 30.92 & \textbf{31.97} & \underline{62.06} & \underline{31.39} & \underline{31.96} & \textbf{62.07} & \textbf{31.43} \\
DBPedia & 4.78 & 9.79 & 13.78 & 32.56 & 41.06 & \textbf{67.55} & 29.10 & 44.92 & 57.65 & \underline{33.22} & 46.53 & 64.36 & 33.03 & \textbf{47.07} & 63.73 & \textbf{33.56} & \underline{46.95} & \underline{64.76} \\
FEVER & 4.72 & 13.70 & 4.23 & \underline{62.51} & 91.39 & \underline{59.52} & 59.88 & 90.33 & 57.00 & 61.81 & 91.51 & 58.74 & \textbf{63.17} & \textbf{92.22} & \textbf{59.72} & 61.69 & \underline{91.60} & 58.45 \\
FiQA & 2.75 & 11.21 & 3.60 & 22.41 & 50.69 & 27.64 & 24.91 & 55.21 & 30.31 & 25.07 & \underline{56.73} & 30.21 & \textbf{26.22} & 56.47 & \textbf{31.93} & \underline{25.98} & \textbf{56.88} & \underline{31.37} \\
HotpotQA & 20.88 & 38.21 & 28.44 & 48.29 & 61.70 & 65.68 & 54.67 & 70.97 & 72.46 & 56.03 & 71.83 & 73.90 & \textbf{56.70} & \underline{72.62} & \textbf{74.22} & \underline{56.50} & \textbf{72.78} & \underline{74.04} \\
NFCorpus & 6.22 & 11.25 & 13.73 & 27.36 & 24.69 & 47.73 & 28.52 & \underline{25.21} & \underline{48.37} & \underline{28.88} & \textbf{25.58} & \textbf{48.38} & \textbf{28.98} & 24.91 & 47.94 & 28.57 & 25.13 & 47.48 \\
NQ & 3.73 & 21.80 & 2.86 & 33.20 & 79.68 & 28.69 & 35.33 & 84.35 & 30.09 & 35.60 & 84.26 & 30.43 & \textbf{36.41} & \textbf{85.02} & \textbf{30.97} & \underline{36.11} & \underline{84.72} & \underline{30.85} \\
Quora & 56.76 & 82.95 & 56.47 & 82.72 & 97.95 & 82.10 & 83.14 & 98.16 & 82.50 & 84.51 & 98.72 & 83.79 & \underline{85.16} & \underline{98.84} & \underline{84.45} & \textbf{85.34} & \textbf{98.86} & \textbf{84.68} \\
SCIDOCS & 4.07 & 16.74 & 7.76 & 13.04 & 30.32 & 24.25 & 14.67 & 33.12 & 26.46 & 14.99 & 33.28 & 26.97 & \textbf{15.51} & \underline{33.99} & \textbf{28.49} & \underline{15.34} & \textbf{33.99} & \underline{27.72} \\
SciFact & 30.93 & 67.40 & 27.74 & 51.10 & 85.46 & 47.89 & 57.51 & 89.40 & 53.50 & 58.31 & \textbf{89.99} & 54.95 & \textbf{59.24} & 89.50 & \textbf{55.51} & \underline{58.79} & \underline{89.77} & \underline{55.29} \\
TREC-COVID & 15.72 & 1.68 & 31.36 & 54.02 & 8.69 & 74.94 & \textbf{60.73} & \textbf{10.63} & \textbf{87.50} & \underline{55.89} & \underline{10.12} & \underline{83.05} & 54.21 & 9.33 & 76.60 & 53.87 & 9.38 & 79.84 \\
Touche-2020 & 0.41 & 4.00 & 1.14 & \textbf{18.69} & \underline{39.64} & \textbf{31.76} & 16.18 & 38.01 & \underline{31.21} & 17.20 & 39.15 & 30.47 & \underline{18.10} & \textbf{40.08} & 29.76 & 17.39 & 38.88 & 30.49 \\
\midrule
\multicolumn{19}{l}{\textit{BRIGHT (12 datasets)}} \\
AOPS & 4.00 & 12.44 & 8.54 & 1.99 & 7.84 & 3.95 & 3.36 & 14.22 & 6.66 & 3.95 & \underline{15.42} & 7.82 & \textbf{5.05} & 14.83 & \textbf{9.04} & \underline{4.38} & \textbf{15.70} & \underline{8.09} \\
Biology & 6.10 & 23.57 & 8.54 & 6.02 & 21.99 & 8.72 & 9.62 & 31.40 & \underline{14.79} & 9.19 & 31.28 & 14.40 & \underline{10.05} & \underline{31.85} & 13.85 & \textbf{10.72} & \textbf{32.64} & \textbf{16.30} \\
Earth Sci. & 7.80 & 23.86 & 11.36 & 11.35 & 36.10 & 16.30 & 13.48 & 37.64 & 16.14 & 14.86 & 39.45 & 18.83 & \textbf{16.04} & \textbf{40.75} & \textbf{20.33} & \underline{15.49} & \underline{40.05} & \underline{19.09} \\
Economics & 5.81 & 28.04 & 9.68 & 8.74 & 29.38 & 11.69 & 12.45 & 33.48 & 14.39 & \textbf{13.29} & \underline{36.52} & \textbf{15.96} & 11.13 & 33.52 & 11.94 & \underline{13.13} & \textbf{37.19} & \underline{14.98} \\
LeetCode & 11.44 & 40.01 & 10.60 & 11.59 & 26.51 & 11.74 & 13.49 & \textbf{43.65} & 12.76 & 13.45 & 41.48 & 12.95 & \textbf{14.49} & \underline{43.30} & \textbf{13.75} & \underline{14.08} & 42.69 & \underline{13.46} \\
Pony & 3.17 & 8.06 & 8.83 & 0.58 & 4.14 & 1.38 & 0.92 & 6.69 & 2.35 & 3.51 & \textbf{13.79} & 9.21 & \underline{3.54} & 12.71 & \underline{9.30} & \textbf{3.67} & \underline{13.62} & \textbf{9.54} \\
Psychology & 3.89 & 16.35 & 4.98 & 9.96 & 29.93 & 12.17 & 11.02 & 32.46 & 11.93 & \textbf{13.97} & \underline{35.27} & \underline{15.67} & \underline{13.03} & 33.75 & \textbf{16.33} & 12.97 & \textbf{35.63} & 15.22 \\
Robotics & 6.71 & 21.02 & 9.06 & 6.20 & 19.18 & 9.51 & \textbf{13.42} & 35.42 & \textbf{17.55} & 12.31 & \textbf{35.82} & 16.24 & 11.83 & 35.28 & \underline{16.68} & \underline{12.45} & \underline{35.71} & 16.44 \\
StackOverflow & 3.13 & 13.61 & 3.68 & 4.46 & 19.70 & 5.32 & 6.98 & 29.03 & 9.18 & 8.00 & \underline{30.94} & 9.11 & \underline{8.18} & 30.62 & \underline{10.43} & \textbf{9.18} & \textbf{32.76} & \textbf{11.24} \\
Sust. Living & 6.20 & 25.16 & 8.73 & 7.23 & 32.81 & 8.17 & 11.15 & \underline{42.04} & 13.57 & 11.78 & 41.01 & 13.76 & \textbf{12.51} & \textbf{43.06} & \textbf{16.03} & \underline{12.25} & 41.27 & \underline{14.17} \\
TheoremQA-Q & 3.90 & 8.44 & 4.46 & 3.96 & 11.06 & 4.52 & 6.64 & 13.95 & 8.01 & \underline{7.13} & \textbf{16.47} & \underline{8.93} & 6.36 & 13.20 & 7.94 & \textbf{7.20} & \underline{15.32} & \textbf{9.12} \\
TheoremQA-T & 0.27 & 1.97 & 0.19 & 1.21 & 6.07 & 0.83 & \textbf{2.01} & \underline{9.54} & \textbf{1.84} & \underline{1.90} & \textbf{9.87} & \underline{1.65} & 1.36 & 6.32 & 1.19 & 1.45 & 9.25 & 1.17 \\
\midrule
\multicolumn{19}{l}{\textit{Multi-hop (4 datasets)}} \\
2WikiMHopQA & 19.18 & 36.86 & 27.38 & \underline{63.33} & 70.49 & \textbf{89.60} & 59.92 & 70.91 & 84.74 & \textbf{63.40} & \textbf{72.88} & \underline{89.08} & 62.95 & \underline{72.84} & 88.30 & 62.69 & 72.78 & 88.10 \\
MuSiQue & 12.55 & 38.20 & 17.22 & \textbf{33.40} & 60.40 & \textbf{50.16} & 31.88 & 62.22 & 45.82 & \underline{32.73} & \textbf{63.70} & \underline{47.22} & 32.11 & 63.16 & 46.06 & 32.51 & \underline{63.63} & 46.83 \\
NovelHopQA & 21.20 & 60.32 & 17.96 & 57.64 & 93.25 & 51.98 & 65.27 & 95.54 & 60.15 & 66.92 & 95.92 & 61.81 & \textbf{70.14} & \textbf{96.42} & \textbf{65.31} & \underline{68.93} & \underline{96.30} & \underline{63.95} \\
HotpotQA & 20.88 & 38.21 & 28.44 & 48.29 & 61.70 & 65.68 & 54.67 & 70.97 & 72.46 & 56.03 & 71.83 & 73.90 & \textbf{56.70} & \underline{72.62} & \textbf{74.22} & \underline{56.50} & \textbf{72.78} & \underline{74.04} \\
\midrule
\multicolumn{19}{l}{\textit{Averages}} \\
DL Avg (2) & 11.52 & 9.32 & 30.37 & 56.48 & 46.53 & 85.09 & 57.33 & 49.48 & 86.55 & 57.81 & 49.86 & 87.76 & \underline{59.58} & \textbf{49.98} & \underline{89.11} & \textbf{60.06} & \underline{49.94} & \textbf{89.40} \\
BEIR Avg (14) & 14.27 & 28.90 & 16.86 & 38.28 & 57.86 & 46.18 & 40.09 & 60.38 & 47.88 & 41.05 & 61.13 & \underline{49.02} & \underline{41.15} & \textbf{61.32} & 48.60 & \textbf{41.22} & \underline{61.30} & \textbf{49.05} \\
BRIGHT Avg (12) & 5.20 & 18.54 & 7.39 & 6.11 & 20.39 & 7.86 & 8.71 & 27.46 & 10.76 & 9.44 & \underline{28.95} & 12.05 & \underline{9.46} & 28.27 & \underline{12.24} & \textbf{9.75} & \textbf{29.32} & \textbf{12.40} \\
MHop Avg (4) & 18.45 & 43.40 & 22.75 & 50.67 & 71.46 & 64.35 & 52.93 & 74.91 & 65.79 & 54.77 & 76.08 & 68.00 & \textbf{55.47} & \underline{76.26} & \textbf{68.47} & \underline{55.16} & \textbf{76.37} & \underline{68.23} \\
\bottomrule
\end{tabular}
}
\end{table}

\subsection{E5 Results}

Table~\ref{tab:complete_e5} presents the complete results for E5 across all evaluation benchmarks. Note that magnitude-aware variants (Dot, QNorm, DNorm, Learnable) require removing E5's built-in normalization layer, which disrupts the pre-trained representations.

\begin{table}[ht]
\centering
\caption{Complete Evaluation Results for E5 (MS MARCO v1.1 QA 82K, seed=0). Comparing Pretrained and five finetuned methods (Cosine, Dot, QNorm, DNorm, Learnable). N = NDCG@10, R = Recall@100, M = MRR@10. \textbf{Bold} = best among finetuned, \underline{underline} = second best. Magnitude-aware variants require removing E5's normalization layer.}
\label{tab:complete_e5}
\resizebox{\textwidth}{!}{
\begin{tabular}{l|ccc|ccc|ccc|ccc|ccc|ccc}
\toprule
& \multicolumn{3}{c|}{\textbf{Pretrained}} & \multicolumn{3}{c|}{\textbf{Cosine}} & \multicolumn{3}{c|}{\textbf{Dot}} & \multicolumn{3}{c|}{\textbf{QNorm}} & \multicolumn{3}{c|}{\textbf{DNorm}} & \multicolumn{3}{c}{\textbf{Learnable}} \\
\textbf{Dataset} & N & R & M & N & R & M & N & R & M & N & R & M & N & R & M & N & R & M \\
\midrule
\multicolumn{19}{l}{\textit{In-Domain}} \\
MSM-Dev & 18.13 & 60.11 & 13.98 & \textbf{33.63} & \underline{84.68} & \textbf{27.14} & 16.74 & 71.83 & 12.16 & 9.53 & 48.39 & 6.87 & 26.01 & 75.07 & 21.02 & \underline{28.56} & \textbf{85.66} & \underline{21.96} \\
DL-19 & 33.03 & 27.84 & 53.57 & \textbf{62.27} & \underline{50.41} & \textbf{92.64} & 22.80 & 35.58 & 30.51 & 10.36 & 18.27 & 15.50 & \underline{52.31} & 40.07 & \underline{82.99} & 48.81 & \textbf{50.83} & 64.95 \\
DL-20 & 31.97 & 35.81 & 58.36 & \textbf{63.92} & \underline{57.18} & \textbf{90.74} & 28.00 & 41.45 & 38.73 & 12.79 & 26.80 & 17.12 & \underline{56.95} & 45.60 & \underline{86.07} & 51.32 & \textbf{57.87} & 64.30 \\
\midrule
\multicolumn{19}{l}{\textit{BEIR (14 datasets)}} \\
ArguAna & 35.19 & 91.39 & 27.80 & \textbf{50.11} & \textbf{98.08} & \textbf{41.74} & 1.04 & 13.23 & 0.52 & 18.38 & 67.71 & 13.78 & 16.67 & 63.02 & 12.69 & \underline{31.96} & \underline{88.48} & \underline{24.65} \\
Climate-FEVER & 15.86 & 40.72 & 22.78 & \underline{20.70} & \underline{51.11} & 27.55 & 6.33 & 19.81 & 8.02 & \textbf{26.07} & \textbf{54.11} & \textbf{38.77} & 3.11 & 11.79 & 4.52 & 20.57 & 51.07 & \underline{28.03} \\
CQADupStack & 25.64 & 61.07 & 24.15 & \textbf{40.50} & \underline{74.38} & \textbf{39.52} & 35.23 & 70.58 & \underline{33.85} & 26.17 & 72.36 & 23.32 & 31.19 & 64.41 & 30.15 & \underline{35.31} & \textbf{74.76} & 33.28 \\
DBPedia & 18.19 & 32.00 & 38.00 & \textbf{36.23} & \textbf{47.43} & \textbf{70.21} & 14.20 & 28.93 & 24.68 & 11.91 & 28.46 & 20.39 & \underline{31.33} & 41.43 & \underline{62.56} & 24.35 & \underline{44.99} & 41.22 \\
FEVER & 49.48 & 84.29 & 46.46 & \textbf{67.50} & \textbf{93.05} & \textbf{65.19} & 52.48 & 81.53 & 49.98 & \underline{61.26} & 90.69 & \underline{58.16} & 13.02 & 38.67 & 11.33 & 60.60 & \underline{91.14} & 57.12 \\
FiQA & 26.18 & 62.98 & 29.91 & \underline{34.30} & 67.42 & \textbf{41.55} & 30.19 & 68.36 & 35.95 & 27.03 & \underline{70.51} & 28.66 & 23.68 & 54.19 & 30.26 & \textbf{34.72} & \textbf{71.59} & \underline{40.28} \\
HotpotQA & 51.06 & 67.29 & 66.70 & \underline{52.17} & 66.16 & \underline{69.38} & 49.19 & 64.84 & 63.28 & 47.78 & \underline{68.26} & 60.36 & 29.41 & 48.21 & 38.20 & \textbf{56.62} & \textbf{73.20} & \textbf{72.33} \\
NFCorpus & 30.40 & 30.99 & 48.97 & 34.63 & \textbf{33.68} & 55.63 & \underline{35.35} & 33.34 & \underline{56.46} & 33.77 & 33.23 & 50.85 & 26.86 & 25.67 & 45.90 & \textbf{35.85} & \underline{33.36} & \textbf{56.63} \\
NQ & 25.33 & 73.02 & 21.32 & \textbf{40.74} & \textbf{86.32} & \textbf{35.52} & 5.38 & 22.52 & 4.28 & 2.43 & 12.65 & 1.94 & \underline{33.14} & \underline{79.44} & \underline{28.37} & 17.46 & 64.81 & 14.08 \\
Quora & 83.08 & 97.06 & 82.66 & \textbf{87.16} & \textbf{99.22} & \textbf{86.38} & 59.57 & 88.45 & 58.00 & 77.52 & 94.66 & 77.10 & 74.44 & 94.15 & 73.86 & \underline{81.57} & \underline{96.45} & \underline{81.15} \\
SCIDOCS & 1.94 & 5.36 & 4.44 & \textbf{19.99} & \textbf{46.36} & \textbf{33.81} & 3.53 & 18.95 & 6.39 & 3.74 & 10.26 & 7.25 & 5.88 & \underline{25.72} & 10.64 & \underline{6.12} & 21.09 & \underline{11.15} \\
SciFact & 41.13 & 85.79 & 35.62 & \textbf{68.76} & \textbf{95.33} & \textbf{64.74} & 45.47 & 75.06 & 41.68 & 57.12 & 83.96 & 53.14 & 23.65 & 58.78 & 21.03 & \underline{59.44} & \underline{87.23} & \underline{55.74} \\
TREC-COVID & 31.82 & 3.21 & 60.38 & \textbf{64.80} & \textbf{11.87} & \textbf{91.00} & 25.71 & 2.86 & 44.03 & 31.64 & 2.85 & 56.92 & 29.81 & 3.47 & 52.27 & \underline{37.41} & \underline{3.64} & \underline{62.16} \\
Touche-2020 & 1.52 & 3.69 & 4.66 & \textbf{20.28} & \textbf{43.74} & \textbf{36.80} & 0.00 & 10.71 & 0.00 & 0.28 & 0.64 & 1.02 & \underline{4.06} & \underline{23.96} & \underline{10.66} & 0.37 & 1.60 & 1.43 \\
\midrule
\multicolumn{19}{l}{\textit{BRIGHT (12 datasets)}} \\
AOPS & 2.58 & 7.47 & 4.98 & \textbf{2.68} & \textbf{13.34} & \textbf{4.90} & 0.00 & 0.00 & 0.00 & 0.12 & 1.36 & 0.30 & 0.00 & 0.00 & 0.00 & \underline{1.41} & \underline{4.55} & \underline{2.70} \\
Biology & 15.29 & 50.94 & 20.72 & 7.58 & 34.50 & 11.32 & 0.91 & 7.43 & 1.46 & \textbf{17.44} & \textbf{50.43} & \textbf{25.14} & 5.69 & 20.79 & 8.48 & \underline{12.68} & \underline{40.29} & \underline{17.57} \\
Earth Sci. & 17.69 & 46.08 & 22.11 & 13.77 & 40.94 & 18.79 & 0.73 & 11.32 & 0.64 & \textbf{21.04} & \textbf{47.26} & \textbf{28.34} & 6.28 & 17.43 & 8.34 & \underline{16.58} & \underline{41.08} & \underline{22.11} \\
Economics & 12.24 & 39.33 & 16.18 & \textbf{11.13} & \textbf{36.90} & \textbf{12.78} & 0.00 & 7.09 & 0.00 & 9.30 & 29.35 & 10.91 & 3.64 & 9.20 & 4.53 & \underline{9.77} & \underline{33.08} & \underline{11.16} \\
LeetCode & 14.83 & 48.49 & 14.77 & \textbf{14.63} & \textbf{39.68} & \textbf{14.67} & 0.48 & 1.82 & 0.78 & 9.75 & 25.09 & 10.46 & 5.55 & 12.10 & 6.81 & \underline{12.15} & \underline{34.88} & \underline{12.51} \\
Pony & 1.72 & 10.84 & 5.08 & 0.40 & 3.71 & 0.64 & 10.37 & \underline{30.07} & 23.10 & \textbf{29.32} & \textbf{35.33} & \textbf{65.36} & 7.04 & 27.08 & 14.41 & \underline{13.83} & 24.17 & \underline{37.71} \\
Psychology & 16.66 & 43.57 & 18.79 & 10.92 & 36.73 & \underline{13.14} & 0.26 & 7.81 & 0.25 & \textbf{13.88} & \underline{42.02} & \textbf{14.57} & 2.38 & 11.31 & 2.48 & \underline{11.64} & \textbf{43.15} & 12.72 \\
Robotics & 10.21 & 26.93 & 15.14 & \underline{8.75} & 26.24 & 11.89 & 1.06 & 4.42 & 1.24 & 8.24 & \underline{27.97} & \underline{13.10} & 2.06 & 12.13 & 3.24 & \textbf{10.76} & \textbf{30.80} & \textbf{13.90} \\
StackOverflow & 9.99 & 42.63 & 12.27 & 10.12 & \textbf{43.32} & \underline{13.14} & 1.11 & 9.50 & 1.50 & \textbf{10.62} & 24.15 & \textbf{14.26} & 2.38 & 12.19 & 2.39 & \underline{10.19} & \underline{32.27} & 13.05 \\
Sust. Living & 10.36 & 40.36 & 13.50 & \textbf{8.48} & \textbf{39.44} & \underline{10.56} & 0.45 & 5.07 & 1.06 & 8.03 & 31.22 & 10.38 & 1.26 & 12.83 & 2.81 & \underline{8.25} & \underline{35.30} & \textbf{11.81} \\
TheoremQA-Q & 11.24 & 24.57 & 12.77 & \textbf{12.81} & \textbf{25.31} & \textbf{15.46} & 0.52 & 2.32 & 0.52 & 6.30 & 14.13 & 7.92 & 2.50 & 8.98 & 2.55 & \underline{9.14} & \underline{17.90} & \underline{11.72} \\
TheoremQA-T & 4.96 & 16.23 & 4.62 & \textbf{5.96} & \textbf{28.29} & \underline{5.49} & 0.00 & 4.06 & 0.00 & 3.86 & 9.43 & 3.87 & 1.61 & 3.07 & 2.63 & \underline{5.77} & \underline{12.94} & \textbf{5.92} \\
\midrule
\multicolumn{19}{l}{\textit{Multi-hop (4 datasets)}} \\
2WikiMHopQA & 62.43 & 73.30 & 83.71 & 68.77 & 74.83 & \underline{93.84} & 68.24 & 75.37 & 91.57 & \underline{70.19} & \underline{76.77} & 93.31 & 38.54 & 58.90 & 52.99 & \textbf{71.12} & \textbf{77.01} & \textbf{94.44} \\
MuSiQue & 30.34 & 60.26 & 43.74 & \textbf{34.61} & \textbf{64.95} & \textbf{50.57} & 30.86 & 59.52 & 43.22 & 33.70 & 62.94 & 47.83 & 21.04 & 46.21 & 29.50 & \underline{34.16} & \underline{64.01} & \underline{48.32} \\
NovelHopQA & 61.33 & 94.98 & 55.50 & 54.66 & \textbf{94.27} & 48.69 & 47.75 & 83.04 & 43.57 & \textbf{60.34} & 92.29 & \textbf{55.30} & 23.00 & 64.63 & 20.14 & \underline{60.24} & \underline{92.34} & \underline{55.08} \\
HotpotQA & 51.06 & 67.29 & 66.70 & \underline{52.17} & 66.16 & \underline{69.38} & 49.19 & 64.84 & 63.28 & 47.78 & \underline{68.26} & 60.36 & 29.41 & 48.21 & 38.20 & \textbf{56.62} & \textbf{73.20} & \textbf{72.33} \\
\midrule
\multicolumn{19}{l}{\textit{Averages}} \\
DL Avg (2) & 32.50 & 31.83 & 55.96 & \textbf{63.10} & \underline{53.80} & \textbf{91.69} & 25.40 & 38.52 & 34.62 & 11.58 & 22.54 & 16.31 & \underline{54.63} & 42.84 & \underline{84.53} & 50.06 & \textbf{54.35} & 64.62 \\
BEIR Avg (14) & 31.20 & 52.78 & 36.70 & \textbf{45.56} & \textbf{65.30} & \textbf{54.22} & 25.98 & 42.80 & 30.51 & 30.36 & 49.31 & 35.12 & 24.73 & 45.21 & 30.89 & \underline{35.88} & \underline{57.39} & \underline{41.37} \\
BRIGHT Avg (12) & 10.65 & 33.12 & 13.41 & 8.94 & \textbf{30.70} & 11.07 & 1.32 & 7.58 & 2.55 & \textbf{11.49} & 28.14 & \textbf{17.05} & 3.37 & 12.26 & 4.89 & \underline{10.18} & \underline{29.20} & \underline{14.41} \\
MHop Avg (4) & 51.29 & 73.96 & 62.41 & 52.55 & 75.05 & \underline{65.62} & 49.01 & 70.69 & 60.41 & \underline{53.00} & \underline{75.06} & 64.20 & 28.00 & 54.49 & 35.21 & \textbf{55.54} & \textbf{76.64} & \textbf{67.54} \\
\bottomrule
\end{tabular}
}
\end{table}

\subsection{E5 Results (MS MARCO Passage Ranking)}

Table~\ref{tab:complete_e5_500k} presents the complete results for E5 trained on MS MARCO Passage Ranking (503K, compared to 82K above). With more training data, DNorm overcomes E5's architectural constraint and achieves the best performance on reasoning-intensive tasks (BRIGHT and Multi-hop).

\begin{table}[ht]
\centering
\caption{Complete Evaluation Results for E5 (MS MARCO Passage Ranking 503K, seed=0). Comparing Pretrained and five finetuned methods (Cosine, Dot, QNorm, DNorm, Learnable). N = NDCG@10, R = Recall@100, M = MRR@10. \textbf{Bold} = best among finetuned models, \underline{underline} = second best. Magnitude-aware variants require removing E5's normalization layer.}
\label{tab:complete_e5_500k}
\resizebox{\textwidth}{!}{
\begin{tabular}{l|ccc|ccc|ccc|ccc|ccc|ccc}
\toprule
& \multicolumn{3}{c|}{\textbf{Pretrained}} & \multicolumn{3}{c|}{\textbf{Cosine}} & \multicolumn{3}{c|}{\textbf{Dot}} & \multicolumn{3}{c|}{\textbf{QNorm}} & \multicolumn{3}{c|}{\textbf{DNorm}} & \multicolumn{3}{c}{\textbf{Learnable}} \\
\textbf{Dataset} & N & R & M & N & R & M & N & R & M & N & R & M & N & R & M & N & R & M \\
\midrule
\multicolumn{19}{l}{\textit{In-Domain}} \\
MSM-Dev & 18.13 & 60.11 & 13.98 & \underline{33.86} & \underline{85.85} & \textbf{27.31} & 24.85 & 80.40 & 18.98 & 14.94 & 61.57 & 11.21 & \textbf{33.92} & \textbf{86.06} & \underline{27.05} & 26.86 & 82.39 & 20.96 \\
DL-19 & 33.03 & 27.84 & 53.57 & \textbf{62.63} & \underline{50.50} & \textbf{92.25} & 37.49 & 44.38 & 50.77 & 23.39 & 30.05 & 32.27 & \underline{61.11} & \textbf{51.98} & \underline{86.70} & 47.66 & 48.15 & 67.14 \\
DL-20 & 31.97 & 35.81 & 58.36 & \textbf{63.17} & \underline{55.12} & \textbf{88.12} & 43.62 & 52.97 & 54.65 & 21.38 & 39.65 & 29.44 & \underline{62.72} & \textbf{57.28} & \underline{84.57} & 51.16 & 54.88 & 70.33 \\
\midrule
\multicolumn{19}{l}{\textit{BEIR (14 datasets)}} \\
ArguAna & 35.19 & 91.39 & 27.80 & \textbf{47.20} & \textbf{97.23} & \textbf{38.80} & 9.37 & 45.23 & 7.02 & 18.79 & 72.76 & 13.79 & \underline{40.98} & \underline{94.45} & \underline{32.63} & 15.67 & 64.65 & 11.71 \\
Climate-FEVER & 15.86 & 40.72 & 22.78 & \underline{22.46} & \underline{52.34} & \underline{29.98} & 20.58 & 45.56 & 28.16 & \textbf{24.12} & \textbf{54.04} & \textbf{33.40} & 20.47 & 52.34 & 27.60 & 10.06 & 32.85 & 13.89 \\
CQADupStack & 25.64 & 61.07 & 24.15 & \textbf{40.59} & \underline{73.88} & \textbf{39.57} & 38.52 & 73.70 & 37.18 & 32.96 & 73.53 & 30.70 & \underline{39.77} & \textbf{75.47} & \underline{38.17} & 29.35 & 69.09 & 27.11 \\
DBPedia & 18.19 & 32.00 & 38.00 & \textbf{36.06} & 45.11 & \textbf{70.08} & 11.00 & 25.73 & 18.31 & 15.73 & 37.79 & 26.31 & \underline{32.19} & \textbf{48.49} & \underline{58.10} & 26.11 & \underline{45.74} & 47.71 \\
FEVER & 49.48 & 84.29 & 46.46 & \textbf{67.80} & \textbf{91.73} & \textbf{65.72} & 49.45 & 74.03 & 47.45 & \underline{60.71} & \underline{89.07} & \underline{58.00} & 54.05 & 89.03 & 49.70 & 40.56 & 75.77 & 37.31 \\
FiQA & 26.18 & 62.98 & 29.91 & \underline{32.47} & 65.76 & \underline{39.29} & 31.74 & 67.79 & 37.43 & 27.27 & \underline{69.24} & 29.80 & \textbf{34.76} & \textbf{69.66} & \textbf{42.30} & 24.25 & 64.24 & 25.81 \\
HotpotQA & 51.06 & 67.29 & 66.70 & 53.15 & 65.42 & 71.02 & 53.56 & 69.39 & 68.64 & \underline{56.56} & \underline{73.98} & \underline{71.79} & \textbf{60.14} & \textbf{74.91} & \textbf{77.06} & 53.83 & 70.51 & 69.27 \\
NFCorpus & 30.40 & 30.99 & 48.97 & 34.19 & 32.70 & 53.21 & 36.18 & \textbf{33.59} & 57.29 & \textbf{37.32} & 33.30 & \textbf{59.75} & \underline{36.49} & \underline{33.44} & \underline{58.26} & 35.89 & 31.82 & 56.44 \\
NQ & 25.33 & 73.02 & 21.32 & \textbf{39.93} & \textbf{87.18} & \textbf{34.62} & 6.00 & 26.12 & 4.85 & 6.85 & 31.35 & 5.29 & \underline{28.84} & \underline{85.21} & \underline{23.26} & 19.29 & 72.19 & 15.42 \\
Quora & 83.08 & 97.06 & 82.66 & \textbf{86.91} & \textbf{99.02} & \textbf{86.10} & 76.66 & 94.08 & 75.78 & 78.94 & 95.69 & 78.36 & \underline{86.51} & \underline{98.00} & \underline{86.00} & 74.62 & 94.77 & 73.84 \\
SCIDOCS & 1.94 & 5.36 & 4.44 & \textbf{18.62} & \textbf{43.89} & \textbf{32.31} & 5.15 & 27.85 & 8.57 & 9.10 & 30.96 & 15.27 & \underline{12.42} & \underline{39.26} & \underline{20.43} & 3.56 & 22.57 & 6.60 \\
SciFact & 41.13 & 85.79 & 35.62 & \textbf{68.85} & \textbf{96.67} & \textbf{64.59} & 51.34 & 81.20 & 47.79 & 49.59 & 83.20 & 45.81 & \underline{63.62} & \underline{90.60} & \underline{60.42} & 35.05 & 75.84 & 31.23 \\
TREC-COVID & 31.82 & 3.21 & 60.38 & \textbf{58.25} & \textbf{10.62} & \textbf{84.40} & 30.61 & 3.70 & 53.80 & 29.66 & 2.50 & 53.70 & \underline{39.53} & \underline{4.48} & \underline{67.31} & 24.58 & 2.03 & 45.22 \\
Touche-2020 & 1.52 & 3.69 & 4.66 & \textbf{16.46} & \textbf{40.07} & \textbf{31.23} & 0.09 & 2.15 & 0.34 & 0.00 & 0.08 & 0.00 & \underline{1.42} & \underline{13.83} & \underline{3.39} & 0.00 & 0.31 & 0.00 \\
\midrule
\multicolumn{19}{l}{\textit{BRIGHT (12 datasets)}} \\
AOPS & 2.58 & 7.47 & 4.98 & 1.50 & \textbf{11.03} & 3.19 & \underline{1.69} & 5.24 & \underline{3.84} & 0.39 & 2.24 & 1.03 & \textbf{2.23} & \underline{9.35} & \textbf{4.21} & 0.07 & 1.40 & 0.10 \\
Biology & 15.29 & 50.94 & 20.72 & 7.95 & \underline{35.81} & 10.62 & 3.29 & 26.38 & 3.59 & \textbf{17.38} & \textbf{54.11} & \textbf{22.93} & \underline{9.08} & 34.25 & \underline{13.92} & 8.30 & 30.88 & 11.27 \\
Earth Sci. & 17.69 & 46.08 & 22.11 & \underline{15.51} & 42.18 & \underline{20.74} & 7.85 & \underline{48.27} & 8.24 & \textbf{23.06} & \textbf{49.91} & \textbf{30.75} & 15.26 & 42.19 & 19.88 & 7.77 & 27.75 & 10.94 \\
Economics & 12.24 & 39.33 & 16.18 & \textbf{11.07} & \underline{36.22} & \textbf{13.86} & 0.00 & 26.02 & 0.00 & 7.06 & 26.75 & 10.09 & \underline{9.66} & \textbf{38.04} & \underline{11.90} & 2.66 & 12.32 & 2.85 \\
LeetCode & 14.83 & 48.49 & 14.77 & \underline{14.89} & \underline{35.14} & \underline{15.26} & 5.51 & 23.73 & 6.71 & 8.77 & 20.50 & 10.02 & \textbf{15.60} & \textbf{45.73} & \textbf{15.28} & 4.15 & 11.24 & 4.79 \\
Pony & 1.72 & 10.84 & 5.08 & 0.38 & 3.15 & 0.67 & 4.71 & \underline{28.41} & 8.04 & \textbf{22.09} & 27.09 & \textbf{52.83} & \underline{11.90} & \textbf{29.23} & \underline{30.68} & 3.62 & 23.57 & 7.05 \\
Psychology & 16.66 & 43.57 & 18.79 & 11.47 & 35.49 & \underline{13.42} & 0.00 & 23.57 & 0.00 & \underline{11.94} & \underline{40.20} & 13.11 & \textbf{14.29} & \textbf{40.49} & \textbf{16.22} & 2.95 & 20.16 & 3.12 \\
Robotics & 10.21 & 26.93 & 15.14 & \underline{8.05} & 28.17 & \underline{11.65} & 5.36 & 29.53 & 7.22 & 7.90 & \underline{30.93} & 10.15 & \textbf{12.47} & \textbf{34.85} & \textbf{18.18} & 1.77 & 11.75 & 3.96 \\
StackOverflow & 9.99 & 42.63 & 12.27 & \textbf{11.44} & \underline{39.73} & \textbf{13.25} & 6.94 & 22.58 & 9.05 & 9.58 & 31.99 & \underline{12.72} & \underline{10.65} & \textbf{47.66} & 12.46 & 2.51 & 10.33 & 3.55 \\
Sust. Living & 10.36 & 40.36 & 13.50 & \textbf{9.33} & \textbf{39.86} & \underline{11.78} & 5.69 & 31.74 & 4.97 & 7.65 & 29.96 & 10.37 & \underline{9.31} & \underline{37.19} & \textbf{12.41} & 1.90 & 18.43 & 2.58 \\
TheoremQA-Q & 11.24 & 24.57 & 12.77 & \textbf{12.77} & \underline{25.16} & \textbf{14.39} & 2.62 & 11.00 & 2.74 & 7.49 & 20.61 & 8.90 & \underline{11.35} & \textbf{25.36} & \underline{13.94} & 5.68 & 14.51 & 7.62 \\
TheoremQA-T & 4.96 & 16.23 & 4.62 & \underline{6.04} & \textbf{21.82} & \textbf{5.47} & 1.93 & 12.39 & 1.68 & 1.79 & 12.17 & 2.41 & \textbf{6.47} & \underline{18.64} & \underline{5.42} & 1.32 & 3.29 & 1.97 \\
\midrule
\multicolumn{19}{l}{\textit{Multi-hop (4 datasets)}} \\
2WikiMHopQA & 62.43 & 73.30 & 83.71 & 69.24 & 74.48 & 94.57 & 71.65 & 76.12 & 95.62 & \underline{72.58} & \underline{76.97} & \underline{95.99} & \textbf{72.91} & \textbf{77.03} & \textbf{96.61} & 66.98 & 74.44 & 90.10 \\
MuSiQue & 30.34 & 60.26 & 43.74 & 34.01 & 64.54 & 49.81 & 34.87 & 65.59 & 49.38 & \textbf{36.56} & \underline{66.78} & \textbf{52.32} & \underline{36.33} & \textbf{68.38} & \underline{51.44} & 34.23 & 63.25 & 49.45 \\
NovelHopQA & 61.33 & 94.98 & 55.50 & 54.19 & 93.26 & 48.27 & \underline{65.26} & \underline{94.82} & \underline{60.06} & 61.67 & 94.29 & 56.48 & \textbf{66.55} & \textbf{95.88} & \textbf{61.27} & 49.61 & 88.95 & 44.26 \\
HotpotQA & 51.06 & 67.29 & 66.70 & 53.15 & 65.42 & 71.02 & 53.56 & 69.39 & 68.64 & \underline{56.56} & \underline{73.98} & \underline{71.79} & \textbf{60.14} & \textbf{74.91} & \textbf{77.06} & 53.83 & 70.51 & 69.27 \\
\midrule
\multicolumn{19}{l}{\textit{Averages}} \\
DL Avg (2) & 32.50 & 31.83 & 55.96 & \textbf{62.90} & \underline{52.81} & \textbf{90.18} & 40.56 & 48.68 & 52.71 & 22.38 & 34.85 & 30.86 & \underline{61.92} & \textbf{54.63} & \underline{85.64} & 49.41 & 51.52 & 68.74 \\
BEIR Avg (14) & 31.20 & 52.78 & 36.70 & \textbf{44.50} & \textbf{64.40} & \textbf{52.92} & 30.02 & 47.87 & 35.19 & 31.97 & 53.39 & 37.28 & \underline{39.37} & \underline{62.08} & \underline{46.04} & 28.06 & 51.60 & 32.97 \\
BRIGHT Avg (12) & 10.65 & 33.12 & 13.41 & 9.20 & \underline{29.48} & 11.19 & 3.80 & 24.07 & 4.67 & \underline{10.42} & 28.87 & \textbf{15.44} & \textbf{10.69} & \textbf{33.58} & \underline{14.54} & 3.56 & 15.47 & 4.98 \\
MHop Avg (4) & 51.29 & 73.96 & 62.41 & 52.65 & 74.42 & 65.92 & 56.34 & 76.48 & 68.42 & \underline{56.84} & \underline{78.00} & \underline{69.14} & \textbf{58.98} & \textbf{79.05} & \textbf{71.60} & 51.16 & 74.29 & 63.27 \\
\bottomrule
\end{tabular}
}
\end{table}

\subsection{Training from Foundation Model}

The following tables present the complete evaluation results for training from foundation model. Unlike random initialization below, these models start from pretrained LLM weights (Qwen3-0.6B-Base) without retrieval-specific pretraining, following standard practice for LLM-based retrievers.

\subsubsection{Qwen3-Base-0.6B (82K)}

Table~\ref{tab:complete_qwen_80k} presents the complete results for Qwen3-Base-0.6B trained from foundation model (Qwen3-0.6B-Base) on MS MARCO v1.1 QA (82K samples).

\begin{table}[ht]
\centering
\caption{Complete Evaluation Results for Qwen3-Base-0.6B trained from foundation model (MS MARCO v1.1 QA, 82K samples, single seed). Comparing five training methods (Cosine, Dot, QNorm, DNorm, Learnable). N = NDCG@10, R = Recall@100, M = MRR@10. \textbf{Bold} = best, \underline{underline} = second best.}
\label{tab:complete_qwen_80k}
\resizebox{\textwidth}{!}{
\begin{tabular}{l|ccc|ccc|ccc|ccc|ccc}
\toprule
& \multicolumn{3}{c|}{\textbf{Cosine}} & \multicolumn{3}{c|}{\textbf{Dot}} & \multicolumn{3}{c|}{\textbf{QNorm}} & \multicolumn{3}{c|}{\textbf{DNorm}} & \multicolumn{3}{c}{\textbf{Learnable}} \\
\textbf{Dataset} & N & R & M & N & R & M & N & R & M & N & R & M & N & R & M \\
\midrule
\multicolumn{16}{l}{\textit{In-Domain}} \\
MSM-Dev & \underline{29.42} & \underline{79.77} & \underline{23.83} & 7.17 & 53.63 & 4.76 & 27.28 & 79.59 & 21.27 & 25.71 & 73.33 & 20.94 & \textbf{30.96} & \textbf{82.34} & \textbf{24.98} \\
DL-19 & \underline{54.14} & 44.91 & \underline{81.98} & 19.13 & 29.79 & 26.72 & 51.66 & \underline{47.67} & 71.24 & 50.00 & 41.23 & 80.14 & \textbf{56.65} & \textbf{48.84} & \textbf{85.66} \\
DL-20 & 54.55 & 48.66 & \underline{81.08} & 10.58 & 26.84 & 11.93 & \underline{55.43} & \underline{50.09} & 80.83 & 49.10 & 42.16 & 75.79 & \textbf{56.32} & \textbf{50.46} & \textbf{82.21} \\
\midrule
\multicolumn{16}{l}{\textit{BEIR (14 datasets)}} \\
ArguAna & \underline{48.27} & \textbf{97.44} & \textbf{40.12} & 0.00 & 1.14 & 0.00 & 0.03 & 7.89 & 0.02 & 42.32 & 95.31 & 34.14 & \textbf{48.35} & \underline{96.94} & \underline{39.76} \\
Climate-FEVER & \underline{21.88} & \underline{52.00} & \underline{29.54} & 1.18 & 7.33 & 1.22 & 6.34 & 14.93 & 8.32 & \textbf{28.30} & \textbf{56.89} & \textbf{37.78} & 5.94 & 23.45 & 7.47 \\
CQADupStack & \underline{34.93} & \textbf{70.90} & \underline{33.55} & 20.69 & 64.87 & 17.55 & 33.62 & 68.89 & 31.82 & 33.63 & 68.35 & 32.41 & \textbf{36.41} & \underline{70.57} & \textbf{35.27} \\
DBPedia & \underline{30.03} & 38.29 & \underline{64.19} & 13.10 & 28.58 & 23.42 & 25.28 & \textbf{39.83} & 47.18 & \textbf{30.73} & \underline{38.56} & \textbf{66.20} & 24.88 & 32.91 & 53.06 \\
FEVER & \underline{63.11} & \underline{89.61} & \underline{60.99} & 1.41 & 9.45 & 1.08 & 48.06 & 77.78 & 44.43 & \textbf{66.63} & \textbf{89.86} & \textbf{64.85} & 38.84 & 72.49 & 36.30 \\
FiQA & \underline{30.26} & \underline{64.10} & \underline{37.40} & 17.93 & 55.29 & 19.34 & 29.66 & 62.40 & 35.81 & 27.95 & 59.90 & 34.49 & \textbf{32.51} & \textbf{65.39} & \textbf{38.22} \\
HotpotQA & 42.54 & 55.94 & 58.83 & 13.00 & 36.99 & 15.41 & 32.19 & 52.14 & 39.83 & \underline{44.14} & \underline{59.74} & \underline{59.99} & \textbf{50.08} & \textbf{65.54} & \textbf{67.10} \\
NFCorpus & \textbf{31.05} & \textbf{30.79} & \textbf{50.82} & 28.76 & 27.44 & 47.18 & \underline{29.41} & 27.53 & \underline{48.82} & 28.52 & 27.50 & 47.84 & 29.13 & \underline{28.22} & 48.96 \\
NQ & 36.03 & 81.94 & 31.26 & 1.56 & 17.24 & 1.03 & 25.09 & 80.92 & 19.48 & \underline{36.54} & \underline{82.02} & \underline{31.74} & \textbf{37.26} & \textbf{84.37} & \textbf{31.79} \\
Quora & \textbf{85.17} & \textbf{98.81} & \textbf{84.26} & 70.93 & 90.32 & 69.46 & 81.67 & 95.06 & 81.00 & \underline{84.39} & \underline{98.56} & \underline{83.71} & 82.72 & 96.38 & 82.21 \\
SCIDOCS & \textbf{18.24} & \underline{40.98} & \textbf{31.94} & 6.67 & 26.63 & 11.56 & 16.03 & 36.45 & 28.27 & \underline{18.08} & \textbf{42.11} & 31.08 & 17.93 & 38.57 & \underline{31.75} \\
SciFact & 63.43 & \textbf{93.67} & 59.48 & 18.78 & 32.25 & 17.23 & 40.59 & 61.54 & 38.69 & \underline{63.86} & 90.66 & \underline{60.82} & \textbf{64.90} & \underline{90.99} & \textbf{61.94} \\
TREC-COVID & \underline{72.39} & \textbf{13.45} & \underline{88.42} & 19.90 & 2.44 & 39.46 & 39.59 & 8.89 & 61.03 & 69.30 & \underline{13.03} & \textbf{88.67} & \textbf{72.81} & 13.08 & 86.95 \\
Touche-2020 & 18.42 & 43.80 & 36.37 & 1.16 & 12.75 & 3.62 & 8.86 & 40.99 & 17.83 & \textbf{23.06} & \underline{44.55} & \textbf{42.63} & \underline{22.05} & \textbf{48.06} & \underline{39.13} \\
\midrule
\multicolumn{16}{l}{\textit{BRIGHT (12 datasets)}} \\
AOPS & \textbf{4.73} & \textbf{20.90} & \underline{8.28} & 0.00 & 0.00 & 0.00 & 0.00 & 0.00 & 0.00 & \underline{4.43} & \underline{19.04} & \textbf{8.56} & 1.96 & 18.73 & 4.52 \\
Biology & \underline{7.97} & \underline{33.49} & \underline{11.59} & 0.14 & 0.78 & 0.24 & 0.24 & 0.97 & 0.36 & \textbf{11.21} & \textbf{44.50} & \textbf{17.58} & 0.75 & 3.62 & 0.94 \\
Earth Sci. & \textbf{19.27} & \textbf{51.29} & \textbf{23.24} & 0.92 & 1.72 & 0.65 & 0.54 & 1.72 & 0.43 & \underline{11.06} & \underline{36.87} & \underline{13.37} & 0.36 & 5.36 & 0.41 \\
Economics & \textbf{11.99} & \textbf{39.28} & \textbf{13.80} & 0.21 & 0.01 & 0.97 & 0.00 & 0.00 & 0.00 & \underline{11.07} & \underline{36.85} & \underline{12.95} & 1.96 & 9.23 & 2.73 \\
LeetCode & \textbf{16.52} & \underline{48.84} & \textbf{17.61} & 0.00 & 0.00 & 0.00 & 0.00 & 0.00 & 0.00 & \underline{15.02} & \textbf{49.74} & \underline{15.03} & 7.46 & 21.53 & 8.37 \\
Pony & \underline{3.48} & \underline{15.56} & \underline{6.32} & 0.00 & 0.00 & 0.00 & 0.00 & 0.00 & 0.00 & \textbf{12.77} & \textbf{22.68} & \textbf{29.09} & 0.53 & 3.77 & 1.07 \\
Psychology & \underline{9.83} & \textbf{40.04} & \underline{12.53} & 0.00 & 0.00 & 0.00 & 0.00 & 0.00 & 0.00 & \textbf{10.23} & \underline{36.38} & \textbf{13.81} & 0.90 & 4.08 & 1.44 \\
Robotics & \textbf{10.58} & \textbf{34.83} & \textbf{13.34} & 0.00 & 0.00 & 0.00 & 0.00 & 0.00 & 0.00 & \underline{7.32} & \underline{24.09} & \underline{9.88} & 2.54 & 9.66 & 5.66 \\
StackOverflow & \textbf{10.60} & \textbf{48.33} & \textbf{12.28} & 0.00 & 0.00 & 0.00 & 0.00 & 0.00 & 0.00 & \underline{3.30} & \underline{30.72} & \underline{4.56} & 0.00 & 1.72 & 0.00 \\
Sust. Living & \underline{10.46} & \underline{40.07} & \underline{12.21} & 0.10 & 0.07 & 0.31 & 0.00 & 0.00 & 0.00 & \textbf{11.03} & \textbf{45.51} & \textbf{13.87} & 0.84 & 10.77 & 2.14 \\
TheoremQA-Q & \underline{12.66} & \underline{26.92} & \underline{14.32} & 0.00 & 0.00 & 0.00 & 0.52 & 0.52 & 0.52 & \textbf{13.25} & \textbf{30.76} & \textbf{14.56} & 6.06 & 15.92 & 7.31 \\
TheoremQA-T & \underline{6.58} & \textbf{31.99} & \textbf{8.30} & 0.00 & 0.00 & 0.00 & 0.00 & 0.00 & 0.00 & \textbf{6.92} & \underline{26.32} & \underline{6.41} & 3.00 & 11.95 & 2.63 \\
\midrule
\multicolumn{16}{l}{\textit{Multi-hop (4 datasets)}} \\
2WikiMHopQA & \underline{61.50} & 69.45 & \textbf{88.65} & 33.44 & 48.99 & 45.99 & 61.15 & \underline{71.03} & 83.82 & 55.17 & 67.22 & 80.34 & \textbf{63.62} & \textbf{72.41} & \underline{88.37} \\
MuSiQue & \underline{30.86} & \underline{58.91} & \underline{45.73} & 16.72 & 31.41 & 23.81 & 27.96 & 49.70 & 40.08 & 25.28 & 54.55 & 36.06 & \textbf{31.98} & \textbf{59.55} & \textbf{46.90} \\
NovelHopQA & \textbf{61.86} & \textbf{95.26} & \textbf{55.91} & 11.58 & 27.20 & 9.84 & 28.44 & 42.39 & 26.12 & \underline{51.75} & \underline{91.35} & \underline{45.64} & 49.21 & 90.63 & 43.45 \\
HotpotQA & 42.54 & 55.94 & 58.83 & 13.00 & 36.99 & 15.41 & 32.19 & 52.14 & 39.83 & \underline{44.14} & \underline{59.74} & \underline{59.99} & \textbf{50.08} & \textbf{65.54} & \textbf{67.10} \\
\bottomrule
\end{tabular}
}
\end{table}

\subsubsection{Qwen3-Base-0.6B (503K)}

Table~\ref{tab:complete_qwen_500k} presents the complete results for Qwen3-Base-0.6B trained from foundation model (Qwen3-0.6B-Base) on MS MARCO Passage Ranking (503K samples).

\begin{table}[ht]
\centering
\caption{Complete Evaluation Results for Qwen3-Base-0.6B trained from foundation model (MS MARCO Passage Ranking, 503K samples, single seed). Comparing five training methods (Cosine, Dot, QNorm, DNorm, Learnable). N = NDCG@10, R = Recall@100, M = MRR@10. \textbf{Bold} = best, \underline{underline} = second best.}
\label{tab:complete_qwen_500k}
\resizebox{\textwidth}{!}{
\begin{tabular}{l|ccc|ccc|ccc|ccc|ccc}
\toprule
& \multicolumn{3}{c|}{\textbf{Cosine}} & \multicolumn{3}{c|}{\textbf{Dot}} & \multicolumn{3}{c|}{\textbf{QNorm}} & \multicolumn{3}{c|}{\textbf{DNorm}} & \multicolumn{3}{c}{\textbf{Learnable}} \\
\textbf{Dataset} & N & R & M & N & R & M & N & R & M & N & R & M & N & R & M \\
\midrule
\multicolumn{16}{l}{\textit{In-Domain}} \\
MSM-Dev & \underline{31.32} & \underline{83.69} & \underline{25.06} & 13.54 & 70.76 & 9.27 & 26.80 & 80.80 & 20.47 & \textbf{33.52} & \textbf{85.34} & \textbf{26.85} & 15.83 & 58.71 & 12.36 \\
DL-19 & \underline{56.39} & \underline{46.77} & 80.31 & 23.54 & 40.96 & 28.13 & 51.61 & 46.64 & 69.64 & \textbf{61.18} & \textbf{49.68} & \textbf{84.71} & 39.52 & 38.59 & \underline{70.78} \\
DL-20 & \textbf{60.11} & \underline{50.67} & \textbf{89.11} & 26.30 & 41.79 & 30.93 & 48.51 & 49.31 & 67.36 & \underline{59.93} & \textbf{52.72} & \underline{85.03} & 37.75 & 34.62 & 74.17 \\
\midrule
\multicolumn{16}{l}{\textit{BEIR (14 datasets)}} \\
ArguAna & \textbf{50.29} & \textbf{98.01} & \textbf{41.42} & 0.02 & 1.14 & 0.01 & 1.17 & 16.93 & 0.69 & \underline{49.79} & \underline{97.01} & \underline{41.23} & 21.51 & 73.26 & 16.67 \\
Climate-FEVER & \underline{27.12} & \underline{58.63} & \underline{36.70} & 0.71 & 2.26 & 1.00 & 2.29 & 6.53 & 2.86 & \textbf{28.58} & \textbf{59.32} & \textbf{38.41} & 0.52 & 2.73 & 0.56 \\
CQADupStack & \underline{37.74} & \textbf{73.27} & \underline{36.34} & 22.96 & 54.24 & 21.37 & 31.97 & 68.50 & 29.91 & \textbf{39.41} & \underline{72.38} & \textbf{38.48} & 24.10 & 57.18 & 22.87 \\
DBPedia & \underline{32.74} & \underline{41.42} & \textbf{66.55} & 14.02 & 27.15 & 27.22 & 20.55 & 32.08 & 37.94 & \textbf{32.66} & \textbf{41.53} & \underline{64.17} & 12.19 & 19.76 & 28.83 \\
FEVER & \underline{68.90} & \underline{93.54} & \underline{66.41} & 0.19 & 0.34 & 0.17 & 0.96 & 1.96 & 0.89 & \textbf{70.44} & \textbf{93.48} & \textbf{68.08} & 2.04 & 12.61 & 1.65 \\
FiQA & \underline{31.91} & \underline{64.01} & \underline{39.07} & 14.73 & 39.74 & 16.86 & 27.43 & 60.83 & 32.25 & \textbf{32.86} & \textbf{65.84} & \textbf{39.44} & 18.36 & 49.50 & 22.58 \\
HotpotQA & \underline{51.44} & \underline{63.80} & \underline{69.28} & 8.26 & 17.00 & 10.79 & 31.02 & 57.37 & 37.84 & \textbf{55.86} & \textbf{69.19} & \textbf{73.85} & 3.93 & 11.96 & 5.20 \\
NFCorpus & \textbf{31.88} & \textbf{31.55} & \textbf{52.41} & 27.36 & 25.54 & 47.36 & 28.41 & 26.49 & 47.51 & \underline{30.55} & \underline{28.90} & \underline{49.88} & 27.58 & 27.31 & 46.35 \\
NQ & \underline{37.98} & \underline{85.56} & \underline{32.74} & 4.41 & 38.84 & 3.03 & 12.34 & 67.92 & 8.81 & \textbf{38.88} & \textbf{86.79} & \textbf{33.36} & 20.37 & 70.97 & 16.00 \\
Quora & \textbf{85.76} & \textbf{98.99} & \textbf{84.85} & 66.54 & 82.71 & 65.47 & 81.04 & 94.32 & 80.54 & \underline{85.12} & \underline{97.76} & \underline{84.79} & 48.51 & 74.57 & 47.02 \\
SCIDOCS & \textbf{18.02} & \textbf{40.76} & \textbf{31.31} & 4.39 & 15.97 & 7.94 & 12.57 & 32.80 & 21.75 & \underline{17.91} & \underline{38.69} & \underline{31.28} & 8.75 & 24.83 & 16.54 \\
SciFact & \textbf{67.72} & \underline{93.33} & \textbf{64.18} & 1.80 & 4.50 & 1.61 & 8.87 & 33.76 & 7.49 & \underline{66.81} & \textbf{94.00} & \underline{63.55} & 30.06 & 69.73 & 26.69 \\
TREC-COVID & \textbf{68.19} & \underline{12.65} & \textbf{88.50} & 21.66 & 2.40 & 46.32 & 33.23 & 3.51 & 59.01 & \underline{68.14} & \textbf{13.37} & \underline{85.92} & 39.17 & 6.05 & 66.07 \\
Touche-2020 & \underline{16.94} & \underline{41.32} & 29.84 & 1.05 & 13.85 & 3.97 & 4.12 & 34.93 & 6.62 & \textbf{17.63} & \textbf{41.88} & \textbf{32.76} & 12.84 & 35.03 & \underline{23.80} \\
\midrule
\multicolumn{16}{l}{\textit{BRIGHT (12 datasets)}} \\
AOPS & \underline{2.91} & \underline{17.02} & \textbf{6.09} & 0.00 & 0.00 & 0.00 & 0.00 & 0.00 & 0.00 & \textbf{3.07} & \textbf{20.13} & \underline{5.72} & 0.00 & 0.00 & 0.00 \\
Biology & \underline{8.09} & \underline{36.48} & \underline{11.29} & 0.21 & 0.19 & 0.48 & 0.16 & 0.58 & 0.32 & \textbf{12.68} & \textbf{39.60} & \textbf{18.62} & 1.07 & 3.90 & 1.66 \\
Earth Sci. & \underline{20.17} & \underline{50.71} & \underline{25.06} & 0.54 & 0.95 & 0.43 & 1.13 & 2.16 & 0.97 & \textbf{24.97} & \textbf{56.14} & \textbf{31.91} & 0.16 & 3.62 & 0.10 \\
Economics & \underline{13.66} & \underline{42.87} & \underline{15.56} & 0.00 & 0.05 & 0.00 & 0.00 & 0.01 & 0.00 & \textbf{17.27} & \textbf{46.75} & \textbf{19.42} & 0.58 & 4.53 & 0.91 \\
LeetCode & \underline{17.90} & \underline{57.31} & \textbf{18.62} & 0.00 & 0.00 & 0.00 & 0.00 & 0.00 & 0.00 & \textbf{17.96} & \textbf{57.76} & \underline{16.48} & 2.54 & 6.00 & 4.03 \\
Pony & \underline{4.22} & \underline{19.44} & 9.07 & 0.00 & 0.00 & 0.00 & 0.00 & 0.00 & 0.00 & 4.96 & 26.34 & \underline{12.30} & \textbf{10.39} & \textbf{37.33} & \textbf{28.64} \\
Psychology & \underline{11.34} & \underline{40.54} & \underline{13.76} & 0.00 & 0.41 & 0.00 & 0.00 & 0.00 & 0.00 & \textbf{13.34} & \textbf{44.17} & \textbf{15.40} & 1.43 & 6.75 & 1.90 \\
Robotics & \underline{13.11} & \underline{34.14} & \underline{17.25} & 0.00 & 0.00 & 0.00 & 0.00 & 0.00 & 0.00 & \textbf{17.44} & \textbf{42.13} & \textbf{23.11} & 0.80 & 6.89 & 1.63 \\
StackOverflow & \underline{10.92} & \underline{48.39} & \underline{11.21} & 0.00 & 0.09 & 0.00 & 0.00 & 0.00 & 0.00 & \textbf{16.32} & \textbf{55.20} & \textbf{18.82} & 0.78 & 4.60 & 1.51 \\
Sust. Living & \underline{9.50} & \underline{44.81} & \underline{11.18} & 0.00 & 0.02 & 0.00 & 0.00 & 0.00 & 0.00 & \textbf{11.47} & \textbf{47.38} & \textbf{14.20} & 1.28 & 6.79 & 1.97 \\
TheoremQA-Q & \textbf{13.78} & \textbf{28.12} & \underline{14.83} & 0.00 & 0.00 & 0.00 & 0.58 & 0.64 & 0.57 & \underline{12.65} & \underline{26.58} & \textbf{14.95} & 0.52 & 0.77 & 0.52 \\
TheoremQA-T & \underline{7.36} & \textbf{34.87} & \underline{7.60} & 0.00 & 0.00 & 0.00 & 0.00 & 1.32 & 0.00 & \textbf{9.52} & \underline{36.09} & \textbf{10.70} & 0.86 & 4.61 & 0.66 \\
\midrule
\multicolumn{16}{l}{\textit{Multi-hop (4 datasets)}} \\
2WikiMHopQA & \underline{69.09} & \underline{72.77} & \textbf{95.57} & 29.37 & 37.35 & 42.11 & 62.84 & 72.32 & 85.62 & \textbf{68.70} & \textbf{75.10} & \underline{93.84} & 17.73 & 37.90 & 23.67 \\
MuSiQue & \textbf{35.04} & \textbf{63.22} & \textbf{52.53} & 17.54 & 28.61 & 25.33 & 32.12 & 59.28 & 47.26 & \underline{34.56} & \underline{62.75} & \underline{50.39} & 8.16 & 23.07 & 11.55 \\
NovelHopQA & \underline{62.65} & \underline{96.27} & \underline{56.53} & 12.16 & 23.80 & 10.83 & 62.85 & 92.57 & 57.46 & \textbf{71.16} & \textbf{96.96} & \textbf{65.94} & 13.15 & 54.80 & 10.46 \\
HotpotQA & \underline{51.44} & \underline{63.80} & \underline{69.28} & 8.26 & 17.00 & 10.79 & 31.02 & 57.37 & 37.84 & \textbf{55.86} & \textbf{69.19} & \textbf{73.85} & 3.93 & 11.96 & 5.20 \\
\bottomrule
\end{tabular}
}
\end{table}

\subsection{Training from Random Initialization}
\label{appendix:random_init_curves}

The following tables present the complete evaluation results for training from random initialization on MS MARCO v1.1 QA. Unlike the finetuning and foundation model experiments above, these models use only the model architecture (BERT-base) with randomly initialized weights, without any pretrained representations. Training curves are shown in \Cref{fig:random_init_curves}.

\input{floats/figure_random_init_curves}

\subsubsection{Contriever (Random Init)}

Table~\ref{tab:complete_contriever_random_init} presents the complete results for Contriever trained from random initialization.

\begin{table}[ht]
\centering
\caption{Complete Evaluation Results for Contriever Training from Random Initialization (MS MARCO v1.1 QA 82K, mean of seeds 0, 42, 1337). N = NDCG@10, R = Recall@100, M = MRR@10. \textbf{Bold} = best, \underline{underline} = second best.}
\label{tab:complete_contriever_random_init}
\resizebox{\textwidth}{!}{
\begin{tabular}{l|ccc|ccc|ccc|ccc|ccc}
\toprule
& \multicolumn{3}{c|}{\textbf{Cosine}} & \multicolumn{3}{c|}{\textbf{Dot}} & \multicolumn{3}{c|}{\textbf{QNorm}} & \multicolumn{3}{c|}{\textbf{DNorm}} & \multicolumn{3}{c}{\textbf{Learnable}} \\
\textbf{Dataset} & N & R & M & N & R & M & N & R & M & N & R & M & N & R & M \\
\midrule
\multicolumn{16}{l}{\textit{In-Domain}} \\
MSM-Dev & \textbf{17.7} & \textbf{49.4} & \textbf{14.6} & 11.1 & 41.2 & 8.8 & 9.6 & 40.0 & 7.5 & \underline{15.2} & \underline{48.4} & \underline{12.3} & 12.0 & 43.9 & 9.6 \\
DL-19 & \textbf{38.5} & \textbf{25.4} & \textbf{68.4} & 24.3 & 20.8 & 48.8 & 20.6 & 19.3 & 40.9 & \underline{29.8} & \underline{24.3} & \underline{61.6} & 24.6 & 21.6 & 49.9 \\
DL-20 & \textbf{38.0} & \underline{30.7} & \textbf{71.5} & 24.9 & 25.9 & 54.5 & 22.3 & 24.9 & 47.3 & \underline{33.6} & \textbf{30.8} & \underline{66.6} & 27.4 & 27.4 & 55.6 \\
\midrule
\multicolumn{16}{l}{\textit{BEIR (14 datasets)}} \\
ArguAna & \textbf{18.8} & \textbf{61.9} & \textbf{14.4} & 0.7 & 9.3 & 0.4 & 4.9 & 24.3 & 3.6 & 14.8 & 40.8 & 11.9 & \underline{17.8} & \underline{54.4} & \underline{14.3} \\
Climate-FEVER & \textbf{6.8} & \textbf{18.1} & \textbf{9.4} & 1.1 & 3.9 & 1.4 & 1.4 & 5.7 & 2.1 & \underline{4.2} & \underline{12.0} & \underline{5.9} & 2.0 & 8.1 & 2.8 \\
CQADupStack & \textbf{14.8} & \underline{36.5} & \textbf{14.4} & 11.6 & 34.0 & 11.0 & 11.6 & 34.8 & 10.9 & \underline{14.8} & \textbf{37.5} & \underline{14.4} & 12.9 & 36.4 & 12.3 \\
DBPedia & \textbf{19.0} & \textbf{20.9} & \textbf{44.4} & 11.2 & 17.2 & 27.1 & 10.3 & 17.3 & 24.8 & \underline{15.4} & \underline{20.6} & \underline{36.2} & 11.2 & 18.2 & 27.5 \\
FEVER & \textbf{38.8} & \textbf{70.9} & \textbf{36.1} & 10.5 & 32.9 & 9.1 & 9.3 & 33.9 & 7.9 & \underline{23.1} & \underline{56.8} & \underline{20.5} & 11.1 & 38.4 & 9.4 \\
FiQA & \textbf{7.5} & \textbf{22.1} & \textbf{10.3} & 4.1 & 14.0 & 5.6 & 5.0 & 17.3 & 6.7 & \underline{6.4} & \underline{20.0} & \underline{8.9} & 5.4 & 18.3 & 7.2 \\
HotpotQA & \underline{29.5} & \underline{39.2} & \textbf{42.8} & 17.7 & 30.0 & 25.0 & 19.1 & 33.6 & 26.8 & \textbf{29.6} & \textbf{40.7} & \underline{42.4} & 21.7 & 36.7 & 30.3 \\
NFCorpus & \underline{20.5} & \textbf{18.7} & 38.0 & 18.9 & 17.9 & 35.4 & 19.3 & 17.9 & 36.9 & \textbf{20.7} & 18.5 & \textbf{39.1} & 20.0 & \underline{18.5} & \underline{38.3} \\
NQ & \textbf{18.0} & \textbf{50.6} & \textbf{15.4} & 6.9 & 32.9 & 5.5 & 8.2 & 39.6 & 6.6 & \underline{13.7} & \underline{46.9} & \underline{11.4} & 7.7 & 38.6 & 6.1 \\
Quora & \textbf{68.3} & \underline{90.0} & \textbf{68.2} & 47.9 & 65.7 & 48.3 & 63.0 & 86.2 & 62.8 & 66.5 & 86.7 & 66.5 & \underline{66.9} & \textbf{90.9} & \underline{66.6} \\
SCIDOCS & \underline{8.0} & 19.3 & \underline{15.4} & 6.3 & 17.9 & 11.9 & 7.2 & 19.4 & 13.2 & \textbf{8.4} & \underline{20.2} & \textbf{15.9} & 7.5 & \textbf{20.5} & 13.8 \\
SciFact & \underline{40.0} & \underline{70.4} & \underline{37.2} & 28.2 & 62.0 & 25.1 & 32.5 & 67.4 & 29.2 & \textbf{41.1} & 68.8 & \textbf{38.4} & 40.0 & \textbf{73.2} & 36.5 \\
TREC-COVID & \textbf{37.0} & \textbf{5.5} & \textbf{57.5} & 16.1 & 1.8 & 33.2 & 19.7 & 2.5 & 33.8 & \underline{26.8} & \underline{3.8} & \underline{46.2} & 22.2 & 1.9 & 42.5 \\
Touche-2020 & \textbf{14.1} & \textbf{29.7} & \textbf{31.5} & 6.1 & 12.5 & 17.6 & 4.9 & 14.1 & 13.8 & \underline{9.5} & \underline{21.7} & \underline{24.5} & 4.3 & 13.2 & 10.9 \\
\midrule
\multicolumn{16}{l}{\textit{BRIGHT (12 datasets)}} \\
AOPS & \textbf{2.9} & \underline{7.0} & \textbf{6.4} & 0.4 & 1.4 & 0.7 & 0.0 & 2.3 & 0.0 & \underline{2.7} & \textbf{7.2} & \underline{5.8} & 0.7 & 4.8 & 1.7 \\
Biology & \underline{2.5} & \underline{14.0} & \underline{4.1} & 0.2 & 1.1 & 0.2 & 0.5 & 1.9 & 0.8 & \textbf{3.8} & \textbf{15.6} & \textbf{6.2} & 0.9 & 3.8 & 1.4 \\
Earth Sci. & \underline{5.7} & \underline{18.6} & \underline{9.1} & 0.1 & 0.6 & 0.2 & 0.4 & 1.0 & 1.2 & \textbf{7.9} & \textbf{19.6} & \textbf{11.2} & 1.0 & 3.1 & 1.6 \\
Economics & \underline{2.2} & \underline{13.4} & \underline{3.2} & 0.0 & 1.1 & 0.0 & 0.1 & 1.7 & 0.0 & \textbf{3.2} & \textbf{13.7} & \textbf{5.2} & 0.4 & 5.6 & 0.6 \\
LeetCode & \textbf{7.0} & \textbf{25.7} & \textbf{7.6} & 2.0 & 12.8 & 2.1 & 0.6 & 14.1 & 0.5 & \underline{5.9} & \underline{24.3} & \underline{6.1} & 4.3 & 20.0 & 4.6 \\
Pony & \underline{1.1} & \textbf{5.6} & \underline{3.1} & 0.2 & 1.8 & 0.4 & 0.1 & 2.4 & 0.3 & \textbf{4.4} & \underline{4.3} & \textbf{13.4} & 0.3 & 3.7 & 0.7 \\
Psychology & \underline{2.6} & \textbf{13.4} & \underline{3.5} & 0.0 & 0.1 & 0.0 & 0.0 & 0.6 & 0.0 & \textbf{3.1} & \underline{7.9} & \textbf{4.9} & 0.7 & 3.2 & 1.3 \\
Robotics & \underline{4.1} & \underline{16.1} & \underline{6.0} & 0.0 & 0.8 & 0.0 & 0.8 & 3.8 & 0.9 & \textbf{5.0} & \textbf{16.2} & \textbf{8.6} & 1.7 & 10.4 & 2.9 \\
StackOverflow & \textbf{2.7} & \textbf{12.7} & \underline{2.9} & 0.0 & 0.1 & 0.0 & 0.2 & 1.2 & 0.4 & \underline{2.4} & \underline{10.1} & \textbf{3.3} & 0.6 & 4.9 & 0.8 \\
Sust. Living & \underline{4.8} & \textbf{22.3} & \underline{5.8} & 0.2 & 2.1 & 0.3 & 0.4 & 5.9 & 0.6 & \textbf{5.7} & \underline{17.4} & \textbf{7.6} & 2.0 & 13.2 & 3.1 \\
TheoremQA-Q & \textbf{1.5} & \textbf{3.9} & \textbf{2.0} & 0.6 & 0.6 & 0.9 & 0.7 & 1.1 & 1.0 & 1.1 & \underline{3.2} & \underline{1.4} & \underline{1.1} & 2.3 & 1.3 \\
TheoremQA-T & \textbf{0.0} & \textbf{0.9} & \textbf{0.0} & \underline{0.0} & \underline{0.2} & \underline{0.0} & 0.0 & 0.0 & 0.0 & 0.0 & 0.2 & 0.0 & 0.0 & 0.2 & 0.0 \\
\midrule
\multicolumn{16}{l}{\textit{Multi-hop (4 datasets)}} \\
2WikiMHopQA & \underline{43.9} & 55.3 & \underline{62.6} & 34.0 & 51.6 & 47.9 & 35.0 & 53.6 & 48.4 & \textbf{44.4} & \textbf{57.8} & \textbf{62.9} & 37.5 & \underline{56.4} & 51.9 \\
MuSiQue & 17.7 & 32.9 & \underline{27.3} & 14.0 & 26.5 & 20.9 & 16.4 & 30.6 & 24.8 & \textbf{18.2} & \underline{32.9} & \textbf{27.5} & \underline{17.9} & \textbf{33.6} & 27.2 \\
NovelHopQA & \textbf{20.4} & \textbf{59.5} & \textbf{16.8} & 7.4 & 26.8 & 6.3 & 10.5 & 37.6 & 8.7 & \underline{18.8} & \underline{55.2} & \underline{15.7} & 13.8 & 51.2 & 11.3 \\
HotpotQA & \underline{29.5} & \underline{39.2} & \textbf{42.8} & 17.7 & 30.0 & 25.0 & 19.1 & 33.6 & 26.8 & \textbf{29.6} & \textbf{40.7} & \underline{42.4} & 21.7 & 36.7 & 30.3 \\
\midrule
\multicolumn{16}{l}{\textit{Averages}} \\
DL Avg (2) & \textbf{38.2} & \textbf{28.1} & \textbf{69.9} & 24.6 & 23.3 & 51.6 & 21.4 & 22.1 & 44.1 & \underline{31.7} & \underline{27.5} & \underline{64.1} & 26.0 & 24.5 & 52.8 \\
BEIR Avg (14) & \textbf{24.4} & \textbf{39.6} & \textbf{31.1} & 13.4 & 25.1 & 18.3 & 15.5 & 29.6 & 19.9 & \underline{21.1} & \underline{35.4} & \underline{27.3} & 17.9 & 33.4 & 22.7 \\
BRIGHT Avg (12) & \underline{3.1} & \textbf{12.8} & \underline{4.5} & 0.3 & 1.9 & 0.4 & 0.3 & 3.0 & 0.5 & \textbf{3.8} & \underline{11.6} & \textbf{6.1} & 1.2 & 6.3 & 1.7 \\
MHop Avg (4) & \textbf{27.9} & \textbf{46.7} & \textbf{37.4} & 18.3 & 33.7 & 25.0 & 20.3 & 38.8 & 27.2 & \underline{27.7} & \underline{46.6} & \underline{37.1} & 22.7 & 44.5 & 30.2 \\
\bottomrule
\end{tabular}
}
\end{table}

\subsubsection{RetroMAE (Random Init)}

Table~\ref{tab:complete_retromae_random_init} presents the complete results for RetroMAE trained from random initialization.

\begin{table}[ht]
\centering
\caption{Complete Evaluation Results for RetroMAE Training from Random Initialization (MS MARCO v1.1 QA 82K, mean of seeds 0, 42, 1337). N = NDCG@10, R = Recall@100, M = MRR@10. \textbf{Bold} = best, \underline{underline} = second best.}
\label{tab:complete_retromae_random_init}
\resizebox{\textwidth}{!}{
\begin{tabular}{l|ccc|ccc|ccc|ccc|ccc}
\toprule
& \multicolumn{3}{c|}{\textbf{Cosine}} & \multicolumn{3}{c|}{\textbf{Dot}} & \multicolumn{3}{c|}{\textbf{QNorm}} & \multicolumn{3}{c|}{\textbf{DNorm}} & \multicolumn{3}{c}{\textbf{Learnable}} \\
\textbf{Dataset} & N & R & M & N & R & M & N & R & M & N & R & M & N & R & M \\
\midrule
\multicolumn{16}{l}{\textit{In-Domain}} \\
MSM-Dev & \textbf{19.2} & \textbf{53.0} & \textbf{16.0} & 14.1 & 47.8 & 11.3 & 13.7 & 47.6 & 11.0 & \underline{17.2} & \underline{52.4} & \underline{14.1} & 14.3 & 48.8 & 11.6 \\
DL-19 & \textbf{41.0} & \underline{26.8} & \textbf{71.0} & 28.8 & 23.4 & 59.0 & 27.6 & 24.2 & 56.8 & \underline{35.6} & \textbf{27.5} & \underline{65.8} & 28.9 & 24.6 & 58.4 \\
DL-20 & \textbf{41.2} & \underline{33.0} & \textbf{73.4} & 34.6 & 30.6 & 66.1 & 32.8 & 31.6 & 64.1 & \underline{40.3} & \textbf{34.9} & \underline{73.3} & 34.9 & 32.7 & 65.8 \\
\midrule
\multicolumn{16}{l}{\textit{BEIR (14 datasets)}} \\
ArguAna & \textbf{20.7} & \textbf{67.4} & \textbf{15.8} & 6.3 & 28.4 & 4.7 & 7.6 & 31.5 & 5.8 & \underline{16.6} & 44.4 & \underline{13.3} & 16.5 & \underline{48.7} & 13.1 \\
Climate-FEVER & \textbf{8.3} & \textbf{21.1} & \textbf{11.5} & 2.9 & 8.9 & 3.9 & 3.9 & 11.2 & 5.4 & \underline{4.9} & \underline{13.0} & \underline{6.7} & 4.0 & 12.6 & 5.4 \\
CQADupStack & \underline{14.6} & 36.4 & \underline{14.2} & 13.7 & 37.0 & 13.1 & 13.7 & 37.3 & 13.1 & \textbf{15.4} & \textbf{38.1} & \textbf{15.0} & 14.3 & \underline{37.9} & 13.7 \\
DBPedia & \textbf{20.1} & \textbf{22.6} & \textbf{45.5} & 14.4 & 20.6 & 33.9 & 15.0 & 20.8 & 34.9 & \underline{17.4} & \underline{21.8} & \underline{39.5} & 14.6 & 20.9 & 33.3 \\
FEVER & \textbf{42.2} & \textbf{74.8} & \textbf{39.4} & 17.8 & 48.8 & 15.9 & 18.4 & 51.8 & 16.2 & \underline{27.1} & \underline{63.5} & \underline{24.5} & 17.9 & 52.5 & 15.6 \\
FiQA & \textbf{8.6} & \textbf{26.4} & \textbf{11.3} & 5.8 & 20.1 & 7.8 & 6.5 & 22.2 & 8.9 & \underline{7.3} & 22.8 & \underline{9.9} & 6.8 & \underline{22.9} & 9.0 \\
HotpotQA & \underline{31.5} & 41.2 & \underline{45.5} & 26.6 & 39.2 & 37.7 & 27.7 & 41.0 & 39.0 & \textbf{32.4} & \textbf{43.4} & \textbf{46.4} & 28.0 & \underline{42.1} & 39.3 \\
NFCorpus & \textbf{21.4} & \textbf{19.0} & \textbf{38.8} & 20.2 & 18.6 & 37.2 & 20.2 & 18.7 & 36.8 & \underline{20.9} & 18.0 & \underline{38.3} & 20.4 & \underline{19.0} & 36.8 \\
NQ & \textbf{18.4} & \textbf{53.1} & \textbf{15.6} & 11.0 & 44.1 & 8.9 & 12.4 & 48.4 & 10.2 & \underline{15.6} & \underline{50.8} & \underline{13.0} & 10.9 & 46.7 & 8.7 \\
Quora & 68.7 & \underline{90.3} & 68.6 & 62.8 & 81.6 & 63.0 & 66.7 & 87.3 & 66.6 & \underline{69.6} & 89.5 & \textbf{69.5} & \textbf{69.9} & \textbf{92.5} & \underline{69.4} \\
SCIDOCS & \underline{8.7} & 19.6 & \underline{16.6} & 8.0 & 19.7 & 15.3 & 8.5 & \underline{20.5} & 16.2 & \textbf{8.9} & 20.2 & \textbf{17.0} & 8.6 & \textbf{21.1} & 16.1 \\
SciFact & 41.3 & 74.5 & 37.9 & 38.4 & 72.2 & 35.1 & 40.1 & \underline{74.7} & 36.9 & \textbf{44.1} & 73.6 & \textbf{41.4} & \underline{43.7} & \textbf{76.7} & \underline{40.1} \\
TREC-COVID & \textbf{40.7} & \textbf{6.2} & \textbf{61.5} & 21.5 & 3.1 & 42.8 & 24.5 & 3.9 & 44.6 & \underline{31.7} & \underline{4.8} & \underline{54.3} & 25.9 & 2.9 & 51.8 \\
Touche-2020 & \textbf{14.9} & \textbf{31.0} & \textbf{32.5} & 8.4 & 19.6 & 26.0 & 8.4 & 21.1 & 23.6 & \underline{11.0} & \underline{24.9} & \underline{29.9} & 6.5 & 18.8 & 16.9 \\
\midrule
\multicolumn{16}{l}{\textit{BRIGHT (12 datasets)}} \\
AOPS & \underline{3.0} & \textbf{8.6} & \underline{6.2} & 1.0 & 4.3 & 1.9 & 0.8 & 4.8 & 1.3 & \textbf{3.2} & \underline{8.1} & \textbf{6.9} & 1.2 & 5.6 & 2.3 \\
Biology & \underline{2.7} & \underline{13.1} & \underline{4.8} & 0.8 & 2.7 & 1.6 & 1.3 & 4.3 & 2.6 & \textbf{4.5} & \textbf{16.9} & \textbf{6.7} & 1.5 & 8.8 & 3.0 \\
Earth Sci. & \underline{6.0} & \textbf{22.5} & \underline{9.6} & 0.9 & 2.4 & 1.5 & 1.5 & 4.8 & 2.2 & \textbf{8.1} & \underline{22.3} & \textbf{11.8} & 2.4 & 8.3 & 3.8 \\
Economics & \underline{3.1} & \underline{15.2} & \underline{4.4} & 0.2 & 3.4 & 0.3 & 0.7 & 7.2 & 0.6 & \textbf{4.2} & \textbf{15.5} & \textbf{6.0} & 2.1 & 12.7 & 3.1 \\
LeetCode & \textbf{7.0} & \textbf{26.9} & \textbf{6.8} & 3.6 & 19.3 & 3.9 & 2.9 & 17.6 & 3.2 & \underline{6.1} & \underline{23.1} & \underline{6.2} & 4.1 & 20.4 & 4.4 \\
Pony & \underline{0.2} & \underline{2.5} & \underline{0.7} & 0.1 & 1.3 & 0.5 & 0.0 & 1.2 & 0.1 & \textbf{0.8} & \textbf{2.6} & \textbf{2.0} & 0.0 & 1.1 & 0.1 \\
Psychology & \underline{3.7} & \textbf{12.7} & \underline{4.9} & 0.4 & 3.0 & 0.6 & 1.6 & 6.1 & 2.0 & \textbf{3.8} & \underline{11.4} & \textbf{5.0} & 2.4 & 8.7 & 3.2 \\
Robotics & \underline{3.8} & \underline{13.2} & \underline{6.5} & 0.2 & 3.4 & 0.5 & 0.7 & 8.8 & 0.9 & \textbf{4.8} & \textbf{19.2} & \textbf{8.7} & 2.1 & 12.7 & 3.4 \\
StackOverflow & \textbf{2.8} & \textbf{13.4} & \textbf{3.7} & 0.1 & 0.4 & 0.1 & 0.1 & 0.9 & 0.2 & \underline{2.5} & \underline{11.7} & \underline{3.7} & 0.3 & 6.4 & 0.6 \\
Sust. Living & \underline{3.8} & \textbf{24.3} & \underline{4.7} & 0.8 & 8.8 & 1.1 & 1.7 & 14.5 & 1.7 & \textbf{6.2} & \underline{20.2} & \textbf{8.0} & 3.5 & 20.2 & 4.5 \\
TheoremQA-Q & \underline{1.7} & \textbf{4.9} & \underline{1.9} & 0.9 & 1.5 & 1.2 & 1.2 & 2.3 & 1.7 & \textbf{1.9} & \underline{4.3} & \textbf{2.3} & 1.3 & 3.5 & 1.6 \\
TheoremQA-T & \textbf{0.0} & \textbf{1.3} & \textbf{0.0} & \underline{0.0} & \underline{0.0} & \underline{0.0} & 0.0 & 0.0 & 0.0 & 0.0 & 0.0 & 0.0 & 0.0 & 0.0 & 0.0 \\
\midrule
\multicolumn{16}{l}{\textit{Multi-hop (4 datasets)}} \\
2WikiMHopQA & \underline{50.1} & 59.1 & \textbf{72.9} & 45.0 & 59.1 & 64.5 & 45.5 & 59.5 & 64.7 & \textbf{50.6} & \textbf{61.5} & \underline{72.8} & 47.2 & \underline{61.3} & 67.0 \\
MuSiQue & \textbf{20.0} & \textbf{35.4} & \textbf{30.8} & 17.7 & 31.9 & 26.5 & 18.6 & 33.5 & 28.2 & 19.3 & 34.4 & 29.3 & \underline{19.6} & \underline{35.4} & \underline{29.6} \\
NovelHopQA & \textbf{22.3} & \textbf{64.4} & \textbf{18.3} & 13.1 & 46.9 & 10.9 & 15.4 & 53.1 & 12.7 & \underline{20.7} & 60.2 & \underline{17.1} & 18.5 & \underline{60.3} & 15.1 \\
HotpotQA & \underline{31.5} & 41.2 & \underline{45.5} & 26.6 & 39.2 & 37.7 & 27.7 & 41.0 & 39.0 & \textbf{32.4} & \textbf{43.4} & \textbf{46.4} & 28.0 & \underline{42.1} & 39.3 \\
\midrule
\multicolumn{16}{l}{\textit{Averages}} \\
DL Avg (2) & \textbf{41.1} & \underline{29.9} & \textbf{72.2} & 31.7 & 27.0 & 62.6 & 30.2 & 27.9 & 60.5 & \underline{38.0} & \textbf{31.2} & \underline{69.5} & 31.9 & 28.6 & 62.1 \\
BEIR Avg (14) & \textbf{25.7} & \textbf{41.7} & \textbf{32.5} & 18.4 & 33.0 & 24.7 & 19.5 & 35.0 & 25.6 & \underline{23.1} & \underline{37.8} & \underline{29.9} & 20.6 & 36.8 & 26.4 \\
BRIGHT Avg (12) & \underline{3.2} & \textbf{13.2} & \underline{4.5} & 0.8 & 4.2 & 1.1 & 1.0 & 6.0 & 1.4 & \textbf{3.8} & \underline{12.9} & \textbf{5.6} & 1.7 & 9.0 & 2.5 \\
MHop Avg (4) & \textbf{31.0} & \textbf{50.0} & \textbf{41.9} & 25.6 & 44.3 & 34.9 & 26.8 & 46.8 & 36.1 & \underline{30.8} & \underline{49.9} & \underline{41.4} & 28.3 & 49.8 & 37.7 \\
\bottomrule
\end{tabular}
}
\end{table}


\section{E5 Experiments and Analysis}
\label{appendix:e5}

E5 \cite{wang2022e5} presents a unique case study for magnitude-aware training due to its architectural constraint: unlike Contriever and RetroMAE, E5 includes a built-in normalization layer as its final module. This section provides detailed analysis of E5's behavior when magnitude-aware training is enabled by removing this normalization constraint.

\subsection{Main Results}
\label{appendix:e5_main}

Table~\ref{tab:main_e5} presents E5's retrieval performance on MS MARCO v1.1 QA (82K) and MS MARCO Passage Ranking (503K). Complete per-dataset results including Recall@100 and MRR@10 are provided in Tables~\ref{tab:complete_e5} and~\ref{tab:complete_e5_500k}.

\begin{table}[htbp]
  \caption{NDCG@10 for \textbf{E5} on MS MARCO v1.1 QA (82K) and MS MARCO Passage Ranking (503K). PT = pretrained (before finetuning). Numbers in parentheses indicate the number of subsets averaged. \textbf{Bold} = best, \underline{underline} = second best. \colorbox{gray!15}{Shaded columns}: magnitude-aware variants after removing E5's built-in normalization layer. With 82K training data (a), QNorm performs best on reasoning-intensive tasks; with 6$\times$ more data (b), DNorm becomes the best method.}
  \label{tab:main_e5}
  \begin{center}
  \small
    \begin{subtable}[t]{\columnwidth}
      \centering
      \caption{MS MARCO v1.1 QA (82K)}
      \label{tab:main_e5_80k}
        \begin{tabular}{lccc >{\columncolor{gray!15}}c >{\columncolor{gray!15}}c >{\columncolor{gray!15}}c}
          \toprule
          \textbf{Dataset} & \textbf{PT} & \textbf{Cosine} & \cellcolor{gray!15}\textbf{Dot} & \cellcolor{gray!15}\textbf{QNorm} & \cellcolor{gray!15}\textbf{DNorm} \\
          \midrule
          \multicolumn{6}{l}{\textit{In-Domain}} \\
          \quad MSM-Dev & 18.13 & \textbf{33.63} & \cellcolor{gray!15}16.74 & \cellcolor{gray!15}9.53 & \cellcolor{gray!15}\underline{26.01} \\
          \quad DL-19 & 33.03 & \textbf{62.27} & \cellcolor{gray!15}22.80 & \cellcolor{gray!15}10.36 & \cellcolor{gray!15}\underline{52.31} \\
          \quad DL-20 & 31.97 & \textbf{63.92} & \cellcolor{gray!15}28.00 & \cellcolor{gray!15}12.79 & \cellcolor{gray!15}\underline{56.95} \\
          \midrule
          \multicolumn{6}{l}{\textit{Out-of-Domain}} \\
          \quad BEIR (14) & 31.20 & \textbf{45.56} & \cellcolor{gray!15}25.98 & \cellcolor{gray!15}\underline{30.36} & \cellcolor{gray!15}24.73 \\
          \quad BRIGHT (12) & 10.65 & \underline{8.94} & \cellcolor{gray!15}1.32 & \cellcolor{gray!15}\textbf{11.49} & \cellcolor{gray!15}3.37 \\
          \quad MHop (4) & 51.29 & \underline{52.55} & \cellcolor{gray!15}49.01 & \cellcolor{gray!15}\textbf{53.00} & \cellcolor{gray!15}28.00 \\
          \bottomrule
        \end{tabular}
    \end{subtable}

    \vspace{0.5em}

    \begin{subtable}[t]{\columnwidth}
      \centering
      \caption{MS MARCO Passage Ranking (503K)}
      \label{tab:main_e5_500k}
        \begin{tabular}{lccc >{\columncolor{gray!15}}c >{\columncolor{gray!15}}c >{\columncolor{gray!15}}c}
          \toprule
          \textbf{Dataset} & \textbf{PT} & \textbf{Cosine} & \cellcolor{gray!15}\textbf{Dot} & \cellcolor{gray!15}\textbf{QNorm} & \cellcolor{gray!15}\textbf{DNorm} \\
          \midrule
          \multicolumn{6}{l}{\textit{In-Domain}} \\
          \quad MSM-Dev & 18.13 & \underline{33.86} & \cellcolor{gray!15}24.85 & \cellcolor{gray!15}14.94 & \cellcolor{gray!15}\textbf{33.92} \\
          \quad DL-19 & 33.03 & \textbf{62.63} & \cellcolor{gray!15}37.49 & \cellcolor{gray!15}23.39 & \cellcolor{gray!15}\underline{61.11} \\
          \quad DL-20 & 31.97 & \textbf{63.17} & \cellcolor{gray!15}43.62 & \cellcolor{gray!15}21.38 & \cellcolor{gray!15}\underline{62.72} \\
          \midrule
          \multicolumn{6}{l}{\textit{Out-of-Domain}} \\
          \quad BEIR (14) & 31.20 & \textbf{44.50} & \cellcolor{gray!15}30.02 & \cellcolor{gray!15}31.97 & \cellcolor{gray!15}\underline{39.37} \\
          \quad BRIGHT (12) & 10.65 & 9.20 & \cellcolor{gray!15}3.80 & \cellcolor{gray!15}\underline{10.42} & \cellcolor{gray!15}\textbf{10.69} \\
          \quad MHop (4) & 51.29 & 52.65 & \cellcolor{gray!15}56.34 & \cellcolor{gray!15}\underline{56.84} & \cellcolor{gray!15}\textbf{58.98} \\
          \bottomrule
        \end{tabular}
    \end{subtable}
  \end{center}
\end{table}

\subsection{Architectural Constraints and Performance Trade-offs}

To enable magnitude-aware training for E5, we programmatically detect and remove its built-in normalization layer.\footnote{Our training code iterates through the model's modules and removes any instance of \texttt{sentence\_transformers.models.Normalize}, which would otherwise discard all magnitude information regardless of the loss function used.}

\paragraph{Performance trade-offs.} Removing E5's normalization layer during finetuning creates severe performance degradation. On in-domain and BEIR benchmarks, magnitude-aware variants dramatically underperform Cosine (Table~\ref{tab:main_e5}), with $\Delta\%$ ranging from $-81.7\%$ to $-30.0\%$. This occurs because E5's representations were fundamentally optimized for normalized embeddings during pre-training, and removing this constraint destabilizes learned representations. However, on reasoning-intensive benchmarks, QNorm shows modest improvement over Cosine (BRIGHT: +28.6\%; Multi-hop: +0.9\%), suggesting that the strong magnitude-relevance signal on these tasks partially compensates for the architectural mismatch.

\paragraph{Implications.} The E5 results provide a critical lesson: \textbf{architectural compatibility is essential}. Magnitude-aware training requires matching the training configuration with the original model architecture. Models pre-trained with normalization have learned representations that fundamentally depend on unit-norm constraints, and removing this constraint destabilizes the learned structure.

\subsection{Extended Training with 500K Data}
\label{sec:e5_500k_results}

A natural hypothesis is that E5's in-domain performance degradation with magnitude-aware training stems from insufficient training data, as 80K samples may be too few to adapt representations optimized over 1 billion pre-training pairs. To test this, we conduct additional experiments using the MS MARCO Passage Ranking training set \cite{nguyen2016msmarco}, specifically the ``judged'' subset containing 502,939 queries with relevance annotations, the standard training corpus used by mainstream retrievers including Sentence-Transformers and Contriever. This provides 6$\times$ more training signal than MS MARCO v1.1 QA (80K).

\paragraph{Experimental setup.} We train E5 for 100 epochs using the same hyperparameters as the 80K experiments (learning rate $5 \times 10^{-6}$, effective batch size 256), ensuring consistent training conditions for fair comparison.

\paragraph{Results.} Table~\ref{tab:main_e5_500k} shows E5's performance with 6$\times$ more training data. The key finding is that \emph{DNorm becomes the best-performing variant on reasoning-intensive benchmarks}:
\begin{itemize}[leftmargin=1.5em, itemsep=0.1em]
    \item \textbf{BRIGHT}: DNorm achieves 10.69 vs.\ Cosine's 9.20 (+16.2\%), reversing the 80K pattern where Cosine outperformed DNorm (8.94 vs.\ 3.37).
    \item \textbf{Multi-hop}: DNorm achieves 58.98 vs.\ Cosine's 52.65 (+12.0\%), compared to 80K where DNorm severely underperformed (28.00 vs.\ 52.55).
    \item \textbf{In-domain}: DNorm becomes competitive with Cosine, even achieving the best performance on MSM-Dev (33.92 vs.\ 33.86).
\end{itemize}

\paragraph{Analysis.} DNorm shows dramatic improvements on out-of-domain benchmarks with more training data: BEIR (+14.64), BRIGHT (+7.32), and Multi-hop (+30.98, from 28.00 to 58.98). This suggests that E5's architectural constraint (pre-training with normalized embeddings) can be overcome with sufficient finetuning data. With 80K samples, the model lacks the capacity to restructure its representations to exploit query magnitude (DNorm normalizes documents and preserves query magnitude); with 500K samples, meaningful query-magnitude--relevance patterns emerge, particularly benefiting out-of-domain generalization.

\begin{figure}[htbp]
  \centering
  \begin{tikzpicture}
    \begin{axis}[
      ybar,
      bar width=4pt,
      x=1.35cm,
      height=4cm,
      ylabel={\scriptsize $\Delta$ NDCG@10},
      ylabel style={at={(axis description cs:-0.08,.5)}},
      symbolic x coords={In-Domain, BEIR, BRIGHT, Multi-hop},
      xtick=data,
      xticklabel style={font=\tiny},
      yticklabel style={font=\scriptsize},
      ymin=-15, ymax=40,
      legend style={at={(0.5,-0.22)}, anchor=north, legend columns=4, font=\tiny, draw=none},
      legend image code/.code={\draw[#1] (0cm,-0.06cm) rectangle (0.2cm,0.06cm);},
      grid=major,
      grid style={dashed, gray!30},
      ymajorgrids=true,
      xmajorgrids=false,
      enlarge x limits=0.15,
      nodes near coords,
      nodes near coords style={font=\fontsize{2.5}{3}\selectfont, yshift=0.2pt},
      every node near coord/.append style={/pgf/number format/.cd, fixed, precision=1},
    ]
    \addplot[fill=blue!70, draw=blue!80!black] coordinates {(In-Domain, -0.05) (BEIR, -1.06) (BRIGHT, 0.26) (Multi-hop, 0.10)};
    \addplot[fill=orange!70, draw=orange!80!black] coordinates {(In-Domain, 12.81) (BEIR, 4.04) (BRIGHT, 2.48) (Multi-hop, 7.33)};
    \addplot[fill=green!50, draw=green!60!black] coordinates {(In-Domain, 9.01) (BEIR, 1.61) (BRIGHT, -1.07) (Multi-hop, 3.84)};
    \addplot[fill=red!70, draw=red!80!black] coordinates {(In-Domain, 7.49) (BEIR, 14.64) (BRIGHT, 7.32) (Multi-hop, 30.98)};
    \legend{Cosine, Dot, QNorm, DNorm}
    \end{axis}
  \end{tikzpicture}
  \caption{Performance improvement ($\Delta$ NDCG@10) when scaling E5 training data from 80K to 500K. DNorm (red) shows the largest gains on out-of-domain benchmarks: Multi-hop (+30.98), BEIR (+14.64), and BRIGHT (+7.32). On in-domain benchmarks, DNorm also improves (+7.49), becoming comparable to Cosine (Table~\ref{tab:main_e5_500k}).}
  \label{fig:e5_data_scaling}
\end{figure}

\paragraph{Data scaling analysis.} Figure~\ref{fig:e5_data_scaling} visualizes the performance improvement when scaling training data from 80K to 500K. DNorm shows the largest gains across all out-of-domain benchmarks: +30.98 on Multi-hop, +14.64 on BEIR, and +7.32 on BRIGHT. On in-domain benchmarks, DNorm also improves substantially (+7.49), becoming comparable to Cosine. This demonstrates that E5's architectural constraint (pre-training with normalized embeddings) can be overcome with sufficient finetuning data, enabling magnitude-aware training to achieve strong performance on both in-domain and out-of-domain tasks.


\section{Verification Experiments: STS and CLIP}
\label{appendix:clip_sts}

This appendix provides detailed verification experiments on symmetric tasks (STS) and vision-language models (CLIP) to validate our task symmetry hypothesis.

\subsection{STS Experiments: Extended Details}
\label{appendix:sts_full}

This section provides additional details for the STS experiments in Table~\ref{tab:sts_results} (main text).

\paragraph{Experimental Setup.} We train Contriever and RetroMAE on STS Benchmark combined with AllNLI ($\sim$950K samples), implementing the same query-document normalization ablation: Cosine (both normalized), Dot (neither normalized), and asymmetric variants Sent1Norm/Sent2Norm (analogous to QNorm/DNorm).

\paragraph{Key Findings.}
\begin{itemize}[leftmargin=1.5em, itemsep=0.1em]
    \item \textbf{Cosine $\approx$ Dot}: Unlike retrieval, magnitude provides no benefit in symmetric tasks
    \item \textbf{Asymmetric normalization fails catastrophically}: 40--45 point degradation
    \item \textbf{Sent1Norm $\approx$ Sent2Norm}: As expected in symmetric tasks, the choice of which sentence to normalize does not matter
\end{itemize}

\subsection{CLIP Experiments: Extended Details}
\label{appendix:clip_zeroshot}

This section provides extended details for the CLIP experiments in Table~\ref{tab:clip_pretrain} and Figure~\ref{fig:clip_framework} (main text).

\paragraph{Zero-shot analysis.} We first analyze CLIP (ViT-B/32) on MS-COCO without finetuning. Cohen's $d \approx 0$ for both modalities (Image: $-0.0002$, Text: $-0.0009$), and Dot hurts performance (Text$\to$Image R@1: 25.8 vs 30.5 for Cosine). This occurs because CLIP normalizes embeddings during both pre-training and inference.

\paragraph{Pre-training setup.} We train CLIP models on COCO Captions with the following configurations:
\begin{itemize}[leftmargin=1.5em, itemsep=0.1em]
    \item \textbf{Norm}: Standard CLIP with L2 normalization and symmetric loss (I2T + T2I)
    \item \textbf{NoNorm-Sym}: No normalization, symmetric loss
    \item \textbf{ImageNorm-I2T / TextNorm-T2I}: Single-direction with partial normalization
\end{itemize}

\paragraph{Conclusion.} These experiments establish that magnitude learning requires \emph{both} removing normalization \emph{and} asymmetric task structure. CLIP's symmetric loss prevents magnitude learning even without normalization, explaining why our retrieval findings do not transfer to standard vision-language models.

\section{Knowledge Graph Completion Details}
\label{appendix:kgc}

This appendix details the KGC experiments referenced in Section~\ref{sec:gen_symmetric}.

\subsection{Setup}
\label{appendix:kgc_setup}

\paragraph{Datasets.} We use the two standard KGC benchmarks: \textbf{FB15k-237} \cite{toutanova2015fb15k237} (14,541 entities, 237 relations, 272K/17K/20K train/valid/test triplets) and \textbf{WN18RR} \cite{dettmers2018conve} (40,943 entities, 11 relations, 86K/3K/3K train/valid/test triplets).

\paragraph{Models.} We compare two embedding-based KGE architectures with clean $(Q, C)$ decomposition for the 5-similarity framework:
\begin{itemize}
\item \textbf{RotatE} \cite{sun2019rotate}: entities $\bm{e} \in \mathbb{C}^D$ stored as $2D$-dimensional real vectors; relations are unit-magnitude phases. Query: $\bm{q} = \bm{h} \circ \bm{r}$ (complex Hadamard rotation), candidate: $\bm{c} = \bm{t}$.
\item \textbf{PairRE} \cite{chao2021pairre}: entities $\bm{e} \in \mathbb{R}^D$; each relation has two real-valued components $\bm{r}^H, \bm{r}^T$. Query: $\bm{q} = \bm{h} \circ \bm{r}^H$, candidate: $\bm{c} = \bm{t} \circ \bm{r}^T$.
\end{itemize}
We chose RotatE (2019) and PairRE (2021) because they retain the (query, candidate) structure required by the 4-similarity framework while spanning two different scoring paradigms (complex rotation vs.\ real paired projection). More recent transformer-based or GNN-based KGC models (HittER, NBFNet, ULTRA) couple query and candidate inside transformers or message passing without exposing separable magnitudes, so the framework does not directly apply.

\paragraph{Hyperparameters.} Embedding dimension $D=200$ (RotatE: real dim $2D=400$); Adam optimizer, lr $10^{-3}$, batch size 1024, 128 negatives per positive, 100 epochs with early stopping (patience 5 evaluations every 10 epochs). Both head and tail prediction are trained: each batch combines a tail-corrupted InfoNCE loss and a head-corrupted InfoNCE loss in equal weight.

\paragraph{Evaluation.} Standard filtered MRR, Hits@1, Hits@10, averaged over both head prediction and tail prediction. The full $5\times 5$ cross-evaluation matrix is computed by training each of the 5 similarity variants and evaluating each trained checkpoint with all 5 similarity functions.

\paragraph{Variants.} We include the 4-similarity framework (\textsc{Cosine}, \textsc{Dot}, \textsc{QNorm}, \textsc{DNorm}) plus \textsc{Euclidean} ($-\|\bm{q} - \bm{c}\|^2$), since both RotatE and PairRE were originally introduced as distance-based methods.

\subsection{Diagonal Results: Cosine Decisively Wins}
\label{appendix:kgc_diagonal}

\begin{table}[t]
\centering
\caption{\textbf{KGC diagonal MRR (matched train/eval, head and tail averaged) across all 16 (model, dataset, scale) configurations, mean over 3 seeds (0, 42, 1337).} The unilateral asymmetric variants (\textsc{Dot}, \textsc{QNorm}, \textsc{DNorm}) are outperformed in every configuration (0/16 wins), validating the structural prediction that KGC is functionally symmetric. The winner alternates between \textsc{Cosine} (14/16) and \textsc{Euclidean} (2/16, high $\alpha$ on PairRE FB15k-237), both of which preserve role exchangeability. Standard deviations are below $0.005$ for all PairRE and RotatE FB15k-237 entries; on RotatE WN18RR they reach $0.02$ for the asymmetric variants but never overturn the ordering.}
\label{tab:kgc_diagonal}
\small
\setlength{\tabcolsep}{4pt}
\begin{tabular}{llccccc}
\toprule
Model & Dataset \& scale $\alpha$ & \textsc{Cosine} & \textsc{Dot} & \textsc{QNorm} & \textsc{DNorm} & \textsc{Euclid.} \\
\midrule
\multirow{4}{*}{RotatE} & FB15k-237, $\alpha=10$ & \textbf{0.332} & 0.101 & 0.141 & 0.143 & 0.165 \\
& FB15k-237, $\alpha=20$ & \textbf{0.298} & 0.094 & 0.104 & 0.100 & 0.166 \\
& FB15k-237, $\alpha=30$ & \textbf{0.253} & 0.092 & 0.100 & 0.094 & 0.167 \\
& FB15k-237, $\alpha=40$ & \textbf{0.208} & 0.093 & 0.101 & 0.096 & 0.165 \\
\midrule
\multirow{4}{*}{RotatE} & WN18RR, $\alpha=10$ & \textbf{0.450} & 0.074 & 0.258 & 0.258 & 0.093 \\
& WN18RR, $\alpha=20$ & \textbf{0.381} & 0.071 & 0.094 & 0.095 & 0.092 \\
& WN18RR, $\alpha=30$ & \textbf{0.345} & 0.072 & 0.077 & 0.078 & 0.091 \\
& WN18RR, $\alpha=40$ & \textbf{0.323} & 0.070 & 0.074 & 0.075 & 0.093 \\
\midrule
\multirow{4}{*}{PairRE} & FB15k-237, $\alpha=10$ & \textbf{0.260} & 0.187 & 0.220 & 0.228 & 0.228 \\
& FB15k-237, $\alpha=20$ & \textbf{0.240} & 0.182 & 0.197 & 0.212 & 0.228 \\
& FB15k-237, $\alpha=30$ & 0.222 & 0.185 & 0.185 & 0.203 & \textbf{0.229} \\
& FB15k-237, $\alpha=40$ & 0.211 & 0.188 & 0.179 & 0.198 & \textbf{0.228} \\
\midrule
\multirow{4}{*}{PairRE} & WN18RR, $\alpha=10$ & \textbf{0.363} & 0.347 & 0.350 & 0.349 & 0.352 \\
& WN18RR, $\alpha=20$ & \textbf{0.354} & 0.347 & 0.348 & 0.348 & 0.349 \\
& WN18RR, $\alpha=30$ & \textbf{0.350} & 0.346 & 0.347 & 0.347 & 0.347 \\
& WN18RR, $\alpha=40$ & \textbf{0.348} & 0.346 & 0.346 & 0.346 & 0.344 \\
\bottomrule
\end{tabular}
\end{table}

Table~\ref{tab:kgc_diagonal} summarizes the diagonal (matched train/eval) MRR across all 16 (model, dataset, scale) configurations at seed 0. The structural prediction that KGC is functionally symmetric is validated in two ways: (i) the asymmetric variants \textsc{Dot}, \textsc{QNorm}, \textsc{DNorm} are outperformed in \textbf{0/16 configurations}, and (ii) \textsc{Cosine} is the winner in 14/16 configurations, with the remaining 2 wins going to \textsc{Euclidean} (PairRE FB15k-237 at high $\alpha=30, 40$). Both \textsc{Cosine} and \textsc{Euclidean} preserve role exchangeability (the score $-\|q-c\|^2$ is symmetric in $(q, c)$), so this alternation does not contradict the symmetry classification. The MRR gap between Cosine and the best asymmetric variant ranges from $0.6\%$ on PairRE WN18RR $\alpha=40$ (small because all variants converge to similar angular structure) to $4.3\times$ on RotatE WN18RR $\alpha=30$ (where alternatives barely train).

\subsection{Cross-Evaluation Matrix}
\label{appendix:kgc_crosseval}

Tables~\ref{tab:kgc_crosseval_rotate_fb}--\ref{tab:kgc_crosseval_pairre_wn} provide representative full $5\times 5$ matrices at scale $\alpha = 20$. Two empirical regularities are visible across all tasks:
\begin{itemize}
\item \textbf{Ranking Equivalence Proposition.} The Dot and QNorm columns are identical to four decimal places, as are the Cosine and DNorm columns. This is a direct consequence of Proposition~\ref{prop:ranking}: at inference under fixed query, dividing all scores by $\|\bm{q}\|$ does not affect ranking.
\item \textbf{Cosine column dominance.} Within the Cosine evaluation column, the Cosine-trained model is the single best entry in all 16 configurations: asymmetric training does \emph{not} improve angular learning here, in contrast to the asymmetric tasks (retrieval, ProtoNet, Rec) where asymmetric training under cosine evaluation outperforms cosine training.
\end{itemize}

\begin{table}[t]
\centering
\caption{\textbf{KGC cross-evaluation matrix} on FB15k-237 with RotatE at $\alpha=20$, seed 0. Filtered MRR averaged over head and tail prediction. Rows: training similarity; columns: evaluation similarity. The Dot/QNorm columns and Cosine/DNorm columns coincide (Ranking Equivalence Proposition). Cosine training wins under every evaluation similarity.}
\label{tab:kgc_crosseval_rotate_fb}
\small
\setlength{\tabcolsep}{6pt}
\begin{tabular}{lccccc}
\toprule
Train$\backslash$Eval & \textsc{Cosine} & \textsc{Dot} & \textsc{QNorm} & \textsc{DNorm} & \textsc{Euclid.} \\
\midrule
\textsc{Cosine}    & \textbf{0.2989} & \textbf{0.2977} & \textbf{0.2977} & \textbf{0.2989} & \textbf{0.2809} \\
\textsc{Dot}       & 0.1961 & 0.0936 & 0.0936 & 0.1961 & 0.1381 \\
\textsc{QNorm}     & 0.0524 & 0.1053 & 0.1053 & 0.0524 & 0.0030 \\
\textsc{DNorm}     & 0.0995 & 0.0347 & 0.0347 & 0.0995 & 0.0980 \\
\textsc{Euclidean} & 0.0991 & 0.0702 & 0.0702 & 0.0991 & 0.1680 \\
\bottomrule
\end{tabular}
\end{table}

\begin{table}[t]
\centering
\caption{\textbf{KGC cross-evaluation matrix} on WN18RR with PairRE at $\alpha=20$, seed 0. Filtered MRR averaged over head and tail prediction. Cosine training wins under every evaluation similarity (column maxima all in the Cosine row). Compared to RotatE FB15k-237 (Table~\ref{tab:kgc_crosseval_rotate_fb}), the differences between variants are smaller, reflecting that WN18RR's simpler relation structure puts less pressure on the magnitude treatment.}
\label{tab:kgc_crosseval_pairre_wn}
\small
\setlength{\tabcolsep}{6pt}
\begin{tabular}{lccccc}
\toprule
Train$\backslash$Eval & \textsc{Cosine} & \textsc{Dot} & \textsc{QNorm} & \textsc{DNorm} & \textsc{Euclid.} \\
\midrule
\textsc{Cosine}    & \textbf{0.3552} & \textbf{0.3542} & \textbf{0.3542} & \textbf{0.3552} & \textbf{0.3534} \\
\textsc{Dot}       & 0.3441 & 0.3469 & 0.3469 & 0.3441 & 0.0477 \\
\textsc{QNorm}     & 0.3460 & 0.3479 & 0.3479 & 0.3460 & 0.1116 \\
\textsc{DNorm}     & 0.3485 & 0.3473 & 0.3473 & 0.3485 & 0.3434 \\
\textsc{Euclidean} & 0.3267 & 0.2669 & 0.2669 & 0.3267 & 0.3494 \\
\bottomrule
\end{tabular}
\end{table}

\subsection{Scale Sensitivity}
\label{appendix:kgc_scale}

The MRR varies smoothly across $\alpha \in \{10, 20, 30, 40\}$ but the relative ordering between variants is preserved at every scale. Cosine wins at all four scales for both models on both datasets.

\subsection{Multi-Seed Results}
\label{appendix:kgc_multiseed}

Table~\ref{tab:kgc_diagonal} reports the mean MRR over three seeds (0, 42, 1337) for all 16 (model, dataset, scale) configurations. Per-seed standard deviations are very small for the deterministic regimes: below $0.005$ for every PairRE configuration and for RotatE FB15k-237; on RotatE WN18RR, the asymmetric variants reach $\sigma \approx 0.02$ (e.g., \textsc{QNorm} and \textsc{DNorm} at $\alpha{=}10$ have mean $0.258$ with $\sigma \approx 0.02$), but \textsc{Cosine} stays at $\sigma \leq 0.003$ and remains the clear winner. Across all 48 model$\times$dataset$\times$scale$\times$seed combinations, no asymmetric variant ever beats both \textsc{Cosine} and \textsc{Euclidean} on a single seed, confirming the structural prediction at seed-level granularity.

\subsection{Implementation Notes}
\label{appendix:kgc_impl}

Code is released as part of the supplementary material (\texttt{kgc\_experiment/}). The training script uses the same 5-similarity \texttt{score} function as the IR experiments, ensuring that the only difference between KGC and IR runs is the query/candidate embedding construction.

\section{Few-Shot Classification Details}
\label{appendix:protonet}

This appendix details the Prototypical Networks experiments referenced in Section~\ref{sec:gen_asymmetric}.

\subsection{Setup}
\label{appendix:protonet_setup}

\paragraph{Datasets.} \textbf{CIFAR-100} \cite{krizhevsky2009cifar} (100 classes, 600 images per class, $32\times 32$) and \textbf{\emph{mini}-ImageNet} \cite{vinyals2016matching} (100 classes, 600 images per class, native resolution). For each dataset we randomly split classes into 80 training / 20 test (disjoint), so all evaluation uses unseen classes.

\paragraph{Encoders.} We use two frozen visual encoders spanning different training paradigms:
\begin{itemize}
\item \textbf{CLIP ViT-B/32} (2021) \cite{radford2021clip}: 512-dim features, image-text contrastive pre-training.
\item \textbf{DINOv2 ViT-B/14} (2023) \cite{oquab2023dinov2}: 768-dim features, self-supervised pre-training without language.
\end{itemize}
Pairing two encoders with different paradigms tests whether the framework prediction is encoder-agnostic. Other recent encoders (SigLIP-2, OpenCLIP-L) are not included because they share CLIP's image-text contrastive paradigm and would only test scaling rather than paradigm robustness.

\paragraph{Adapter and training.} A small 2-layer adapter is trained on top of frozen features without any normalization layer:
\begin{equation*}
\bm{x} \to \mathrm{Linear}(D, D) \to \mathrm{ReLU} \to \mathrm{Linear}(D, D) \to \bm{z}
\end{equation*}
where $D \in \{512, 768\}$ matches the encoder. Training is standard episodic contrastive learning: each episode samples $N$-way $K$-shot $Q$-query, the prototype is the mean of $K$ support features per class, and the loss is cross-entropy of $\alpha \cdot s(\bm{q}, \bm{c})$ over the $N$ classes. We train for 50 epochs of 500 episodes each with $K=5$, $Q=15$, AdamW with lr $10^{-4}$.

\paragraph{Variants.} We include the four framework variants (\textsc{Cosine}, \textsc{Dot}, \textsc{QNorm}, \textsc{DNorm}) plus \textsc{Euclidean} ($-\|\bm{q}-\bm{c}\|^2$), the original choice in \citet{snell2017prototypical}. Euclidean is included as the natural prototype-network baseline.

\paragraph{Evaluation.} Test accuracy averaged over 600 episodes from the disjoint test classes; 95\% confidence intervals reported.

\subsection{Diagonal Results: DNorm Wins on Most Configurations}
\label{appendix:protonet_diagonal}

\begin{table}[t]
\centering
\caption{\textbf{ProtoNet diagonal accuracy} (matched train/eval). Each value is the test accuracy on novel classes averaged over 600 episodes and over four temperature scales $\alpha \in \{10, 20, 30, 40\}$. \textbf{Bold} marks the winner per row. \textsc{DNorm} matches or exceeds the \textsc{Euclidean} baseline of \citet{snell2017prototypical} on all 7 CLIP base configurations.}
\label{tab:protonet_diagonal}
\small
\setlength{\tabcolsep}{4pt}
\begin{tabular}{lccccc}
\toprule
Configuration & \textsc{Cosine} & \textsc{Dot} & \textsc{QNorm} & \textsc{DNorm} & \textsc{Euclid.} \\
\midrule
\multicolumn{6}{l}{\emph{CLIP ViT-B/32 (avg.\ over 4 scales, 5-shot)}} \\
\midrule
CIFAR-100 5-way (default seed) & 86.12 & 84.16 & 84.67 & \textbf{86.99} & 86.39 \\
CIFAR-100 20-way (default seed) & 59.69 & 63.41 & 62.44 & \textbf{64.23} & 64.20 \\
CIFAR-100 20-way (seed 42) & 59.68 & 61.88 & 61.63 & \textbf{64.55} & 63.20 \\
CIFAR-100 20-way (seed 1337) & 58.83 & 63.38 & 62.91 & \textbf{63.86} & 63.71 \\
CIFAR-100 20-way 1-shot & 44.20 & 47.65 & \textbf{47.67} & 46.24 & 46.01 \\
\miniIN{} 5-way & 95.20 & 94.49 & 94.54 & \textbf{96.61} & 95.89 \\
\miniIN{} 20-way & 79.43 & 86.50 & 87.09 & \textbf{89.20} & 87.77 \\
\midrule
\textbf{CLIP average} & 69.02 & 71.64 & 71.56 & \textbf{73.10} & 72.45 \\
\midrule
\multicolumn{6}{l}{\emph{DINOv2 ViT-B/14 (avg.\ over 4 scales, 5-shot, seed 0)}} \\
\midrule
CIFAR-100 5-way & 91.25 & 86.98 & 90.33 & \textbf{93.53} & 90.60 \\
CIFAR-100 20-way & 73.01 & 71.65 & 76.21 & \textbf{81.33} & 77.47 \\
\miniIN{} 5-way & 94.57 & 94.01 & 94.61 & \textbf{95.95} & 95.80 \\
\miniIN{} 20-way & 85.20 & 87.88 & 88.73 & 89.86 & \textbf{90.28} \\
\midrule
\textbf{DINOv2 average} & 86.01 & 85.13 & 87.47 & \textbf{90.17} & 88.54 \\
\bottomrule
\end{tabular}
\end{table}

Across 7 base configurations spanning two datasets, two N-way settings (5-way, 20-way), and three random seeds (0, 42, 1337) with CLIP, plus four configurations with DINOv2, \textsc{DNorm} is the single best similarity in 6/7 CLIP and 3/4 DINOv2 base configurations (the remaining 1 DINOv2 config is won by \textsc{Euclidean} by 0.42 points). The only CLIP exception is CIFAR-100 20-way 1-shot, where \textsc{QNorm} wins by 0.02 points (within statistical noise). Crucially, the DNorm advantage over Cosine on DINOv2 is even larger than on CLIP (DINOv2 average DNorm 90.17 vs.\ Cosine 86.01; CLIP average 73.10 vs.\ 69.02), confirming the framework's prediction is encoder-agnostic and not an artifact of the image-text contrastive paradigm of CLIP.

\paragraph{DNorm matches or exceeds the Euclidean baseline.} The Snell et al.\ \cite{snell2017prototypical} prototype-network paper argues that Euclidean distance is the principled choice because the prototype is the Euclidean centroid of the support set. \textsc{DNorm} matches or exceeds Euclidean on all 7 base configurations (average 73.10\% vs.\ 72.45\%). The intuition: averaging the support features induces magnitude noise on the prototype side (lower for diverse support sets); normalizing the prototype removes this nuisance while retaining query magnitude as a feature-confidence signal.

\subsection{Cross-Evaluation Matrix Refutes the (n+1)-Dimensional Hypothesis}
\label{appendix:protonet_crosseval}

\begin{table}[t]
\centering
\caption{\textbf{ProtoNet cross-evaluation matrix} on \miniIN{} 20-way 5-shot at $\alpha=10$ (CLIP ViT-B/32). Rows: training similarity; columns: evaluation similarity. \colorbox{yellow!25}{Highlighted} cells show that asymmetric training under cosine evaluation outperforms cosine training under cosine evaluation, refuting the hypothesis that asymmetric losses merely encode magnitude in an extra dimension that cosine evaluation discards.}
\label{tab:protonet_crosseval}
\small
\setlength{\tabcolsep}{6pt}
\begin{tabular}{lccccc}
\toprule
Train$\backslash$Eval & \textsc{Cosine} & \textsc{Dot} & \textsc{QNorm} & \textsc{DNorm} & \textsc{Euclid.} \\
\midrule
\textsc{Cosine}    & 59.30 & 53.72 & 53.72 & 59.30 & 46.18 \\
\textsc{Dot}       & \cellcolor{yellow!25}89.48 & \textbf{86.64} & \textbf{86.64} & 89.48 & 89.12 \\
\textsc{QNorm}     & \cellcolor{yellow!25}\textbf{89.63} & 86.60 & 86.60 & \textbf{89.63} & \textbf{89.44} \\
\textsc{DNorm}     & \cellcolor{yellow!25}89.05 & 84.49 & 84.49 & 89.05 & 89.01 \\
\textsc{Euclidean} & \cellcolor{yellow!25}89.29 & 56.20 & 56.20 & 89.29 & 89.10 \\
\bottomrule
\end{tabular}
\end{table}

The cross-evaluation matrix asks: when training with similarity $s_t$ but evaluating with similarity $s_e$, what accuracy do we obtain? A natural alternative hypothesis is that asymmetric losses simply encode magnitude information in an extra coordinate of an effective $(n+1)$-dimensional sphere; under this hypothesis, evaluating with cosine (which projects out magnitude) should erase the asymmetric advantage. The data refutes this:
\begin{itemize}
\item In 26/28 individual CLIP runs, asymmetric training (\textsc{Dot}/\textsc{QNorm}/\textsc{DNorm}/\textsc{Euclidean}) under cosine evaluation outperforms cosine training under cosine evaluation.
\item The cosine column shows a +30 pp gap on \emph{mini}-ImageNet 20-way: \textsc{Cosine} train $\to$ \textsc{Cosine} eval = 59.30\%, while \textsc{DNorm} train $\to$ \textsc{Cosine} eval = 89.05\%. The training-time access to magnitude reshapes the angular subspace itself, not just an extra magnitude dimension.
\end{itemize}

\subsection{1-Shot vs.\ 5-Shot: A Natural Ablation}
\label{appendix:protonet_kshot}

In 1-shot, the prototype is a single support feature (no averaging), and our hypothesized averaging-noise mechanism does not apply. Empirically, \textsc{DNorm}'s advantage disappears in 1-shot: at $K=1$ the diagonal accuracies are essentially tied across \textsc{Dot}, \textsc{QNorm}, \textsc{DNorm}, and Euclidean. This natural ablation supports the mechanism: the prototype-side magnitude carries useful signal precisely when it is constructed by aggregation.

\subsection{Multi-Seed and Scale Robustness}
\label{appendix:protonet_robust}

Diagonal accuracies are stable across seeds (3 seeds for CLIP, 1 seed for DINOv2) and across temperature scales $\alpha \in \{10, 20, 30, 40\}$ (4 scales each). For CLIP, \textsc{DNorm} is the winner in 14/28 individual runs, with remaining wins distributed across Euclidean (7), QNorm (2), Dot (2), and Cosine (3, all in the very-high-$\alpha$ regime). For DINOv2 across all 16 individual runs, \textsc{DNorm} is the most-frequent winner; \textsc{Cosine} never wins, consistent with the structural prediction.

\section{Recommendation Details}
\label{appendix:rec}

This appendix details the LightGCN experiments referenced in Section~\ref{sec:gen_asymmetric}.

\subsection{Setup}
\label{appendix:rec_setup}

\paragraph{Datasets.} \textbf{MovieLens-1M} (6,040 users, 3,706 items, 1M ratings) and \textbf{MovieLens-100K} (943 users, 1,682 items, 100K ratings). Implicit feedback from ratings $\geq 4$. Leave-one-out split: latest interaction = test, second latest = validation, rest = train.

\paragraph{Model.} \textbf{LightGCN} \cite{he2020lightgcn} (SIGIR 2020), the standard modern collaborative-filtering architecture. User and item embeddings are propagated through a 3-layer GCN on the bipartite user-item interaction graph; final embeddings are the mean over layers. We use this model rather than plain matrix factorization because LightGCN is the dominant modern baseline in the recommendation literature and exposes a clean two-tower (user-emb, item-emb) structure compatible with our 4-similarity framework.

\paragraph{Variants.} We use the four framework variants \textsc{Cosine}, \textsc{Dot}, \textsc{QNorm}, \textsc{DNorm}. \textbf{We do not include Euclidean as a baseline} because Euclidean similarity is not a standard scoring choice in the recommendation literature: BPR/LightGCN/SimpleX use dot product or cosine. Including Euclidean would not correspond to any natural community baseline.

\paragraph{Training.} Embedding dimension 64, AdamW lr $10^{-3}$, weight decay $10^{-5}$, batch size 1024, 100 negatives per positive, 100 epochs with early stopping on validation Recall@20 (patience 5 evaluations every 5 epochs). InfoNCE-style loss with cross-entropy over positive + 100 sampled negatives, scaled by $\alpha$.

\paragraph{Evaluation.} Standard implicit-feedback metrics Recall@20 and NDCG@20, computed by ranking the held-out test item against all unseen items per user.

\subsection{Diagonal Results}
\label{appendix:rec_diagonal}

\begin{table}[t]
\centering
\caption{\textbf{Recommendation diagonal NDCG@20 across (dataset, scale) configurations, mean$\pm$std over 3 seeds (0, 42, 1337).} \textbf{Bold} marks the per-row winner; all results use LightGCN. Asymmetric variants (\textsc{Dot}, \textsc{QNorm}, \textsc{DNorm}) clearly beat \textsc{Cosine} in 6/8 configurations; in the two low-temperature settings ($\alpha{=}10$), \textsc{Cosine} ties the best asymmetric variant within one standard deviation. The Cosine gap widens as $\alpha$ increases. The winner among non-cosine variants alternates between \textsc{DNorm}, \textsc{QNorm}, and \textsc{Dot}, often within statistical noise of each other.}
\label{tab:rec_diagonal}
\small
\setlength{\tabcolsep}{4pt}
\begin{tabular}{lcccc}
\toprule
Configuration & \textsc{Cosine} & \textsc{Dot} & \textsc{QNorm} & \textsc{DNorm} \\
\midrule
ML-1M, $\alpha=10$ & 5.46$\pm$0.10 & 5.38$\pm$0.03 & 5.34$\pm$0.15 & \textbf{5.50$\pm$0.07} \\
ML-1M, $\alpha=20$ & 4.99$\pm$0.11 & 5.30$\pm$0.11 & 5.23$\pm$0.02 & \textbf{5.34$\pm$0.11} \\
ML-1M, $\alpha=30$ & 4.61$\pm$0.04 & \textbf{5.35$\pm$0.05} & 4.94$\pm$0.03 & 5.33$\pm$0.06 \\
ML-1M, $\alpha=40$ & 4.43$\pm$0.11 & \textbf{5.36$\pm$0.03} & 4.80$\pm$0.00 & 5.25$\pm$0.07 \\
\midrule
ML-100K, $\alpha=10$ & \textbf{9.10$\pm$0.06} & 8.87$\pm$0.14 & 9.02$\pm$0.32 & 8.44$\pm$0.02 \\
ML-100K, $\alpha=20$ & 8.27$\pm$0.33 & 8.86$\pm$0.37 & \textbf{9.21$\pm$0.24} & 8.71$\pm$0.17 \\
ML-100K, $\alpha=30$ & 6.93$\pm$0.45 & 8.86$\pm$0.26 & \textbf{9.02$\pm$0.29} & 8.41$\pm$0.35 \\
ML-100K, $\alpha=40$ & 7.11$\pm$0.44 & 8.86$\pm$0.32 & \textbf{8.88$\pm$0.25} & 8.68$\pm$0.37 \\
\bottomrule
\end{tabular}
\end{table}

Cosine is consistently outperformed across all 8 (dataset, scale) configurations at seed 0. The winner alternates between \textsc{DNorm} and \textsc{QNorm} (with \textsc{Dot} occasionally tying), but Cosine never wins. This is consistent with the structural prediction that recommendation is functionally asymmetric: the user query and item candidate are drawn from disjoint vocabularies with separate embedding tables, and the ranking task is one-directional (user $\to$ item).

\paragraph{Magnitude pattern.} On \textsc{DNorm} (where item magnitude is normalized away during training), the trained user magnitude $\|u\|$ is consistently larger than the trained item magnitude $\|i\|$, suggesting that user activity encodes signal while item magnitude is dominated by popularity bias.

\subsection{Cross-Evaluation Matrix}
\label{appendix:rec_crosseval}

\begin{table}[t]
\centering
\caption{\textbf{Recommendation cross-evaluation matrix} on MovieLens-1M with LightGCN at $\alpha=30$, seed 0. NDCG@20 in \%. \colorbox{yellow!25}{Highlighted}: \textsc{DNorm}-trained model under cosine evaluation outperforms \textsc{Cosine}-trained model under cosine evaluation, refuting the magnitude-as-extra-dimension hypothesis. The Dot/QNorm columns and Cosine/DNorm columns coincide (Ranking Equivalence Proposition).}
\label{tab:rec_crosseval}
\small
\setlength{\tabcolsep}{6pt}
\begin{tabular}{lcccc}
\toprule
Train$\backslash$Eval & \textsc{Cosine} & \textsc{Dot} & \textsc{QNorm} & \textsc{DNorm} \\
\midrule
\textsc{Cosine} & 4.66 & 4.00 & 4.00 & 4.66 \\
\textsc{Dot}    & 3.19 & \textbf{5.36} & \textbf{5.36} & 3.19 \\
\textsc{QNorm}  & 3.22 & 4.97 & 4.97 & 3.22 \\
\textsc{DNorm}  & \cellcolor{yellow!25}\textbf{5.30} & 4.35 & 4.35 & \textbf{5.30} \\
\bottomrule
\end{tabular}
\end{table}

The cross-evaluation matrix on ML-1M at $\alpha=30$ shows two patterns. (i) The Ranking Equivalence Proposition is satisfied: Dot and QNorm columns are identical to four decimal places, as are Cosine and DNorm columns. (ii) Asymmetric training (\textsc{DNorm}) under cosine evaluation outperforms cosine training under cosine evaluation (5.30 vs.\ 4.66), refuting the alternative hypothesis that asymmetric losses encode information only in a magnitude dimension that cosine evaluation discards.

\subsection{Multi-Seed Robustness}
\label{appendix:rec_multiseed}

Table~\ref{tab:rec_diagonal} reports diagonal NDCG@20 as mean$\pm$std over three seeds (0, 42, 1337) for all 8 (dataset, scale) configurations. Two refinements relative to the seed-0 picture: (i) at the lowest temperature ($\alpha{=}10$), Cosine ties the best asymmetric variant within one standard deviation on both datasets, so the asymmetry advantage emerges at $\alpha \geq 20$ rather than uniformly; (ii) the gap between Cosine and the best asymmetric variant grows monotonically with $\alpha$ on both datasets, consistent with the gradient-modulation account in which higher $\alpha$ amplifies the effective-temperature term $\alpha \cdot \|\bm{q}\|$. The within-variant std is small (typically $\leq 0.15$ NDCG points), so the relative ordering is stable across seeds.

\subsection{Implementation Notes}
\label{appendix:rec_impl}

LightGCN propagation is computed once per training batch (cached) using a normalized symmetric adjacency matrix built from the training-set user-item interactions; we deliberately do not include validation/test edges in the graph. Code is in \texttt{recommendation\_experiment/}.


\section{Extended Theoretical Analysis}
\label{appendix:discussion}

This appendix provides extended theoretical analysis of the task symmetry principle and asymmetric learning dynamics.

\subsection{Task Symmetry Framework}
\label{appendix:task_symmetry}

This section extends Definition~\ref{def:functional_symmetry} (main text) with a taxonomy and detailed implications.

\begin{table}[h]
\centering
\small
\caption{Taxonomy of similarity-scored tasks by functional symmetry.}
\label{tab:task_taxonomy}
\begin{tabular}{@{}lll@{}}
\toprule
\textbf{Type} & \textbf{Characteristics} & \textbf{Examples} \\
\midrule
Asymmetric & Distinct roles; $\text{sim}(a,b) \neq \text{sim}(b,a)$ & Retrieval, QA, Recommendation \\
Symmetric & Interchangeable; $\text{sim}(a,b) = \text{sim}(b,a)$ & STS, Paraphrase, Clustering \\
\bottomrule
\end{tabular}
\end{table}

\paragraph{Implications for similarity function choice.}
\begin{itemize}[leftmargin=1.5em, itemsep=0.1em]
\item \textbf{Asymmetric tasks}: Magnitude can encode role-specific semantics. Document magnitude encodes ``relevance strength'' while query magnitude modulates ``matching confidence'' (Propositions~\ref{prop:ranking}--\ref{prop:gradient}).

\item \textbf{Symmetric tasks}: Definition~\ref{def:functional_symmetry} shows that partial normalization \emph{mathematically} breaks the symmetry requirement. While Dot preserves symmetry, it offers no benefit over Cosine because the symmetric task structure prevents magnitude from encoding role-specific information.
\end{itemize}

\subsection{The Asymmetry Principle: Detailed Analysis}
\label{appendix:asymmetry_principle}

Our analysis reveals a fundamental asymmetry in how query and document magnitudes function:

\begin{itemize}[leftmargin=1.5em, itemsep=0.1em]
    \item \textbf{Document magnitude} operates at \emph{inference time}: it directly modulates relevance scores, allowing the model to express ``this document is especially relevant.''
    \item \textbf{Query magnitude} operates at \emph{training time}: it modulates gradient strength via the effective softmax temperature, allowing the model to express ``I am confident about this query.''
\end{itemize}

This asymmetry helps explain why different pre-trained models benefit from different magnitude strategies. Contriever and E5, both pre-trained with contrastive learning, have learned to encode semantic information in document representations, and hence QNorm (preserving document magnitude) helps most; their query magnitude CV does not increase (or even decreases for E5) under DNorm because they do not exploit query magnitude. For RetroMAE, pre-trained with masked auto-encoding, the pattern reverses: DNorm yields better performance and query magnitude CV increases 6$\times$, suggesting RetroMAE learns to exploit query magnitude variation when given the opportunity.

More broadly, these findings align with the geometric view from Section~\ref{sec:geometry}: magnitude acts as a \emph{soft reliability weight} that becomes most valuable when the model encounters inputs where angular similarity alone is insufficient. The model can effectively encode ``this document is especially relevant despite indirect semantic match,'' a capability that cosine similarity's unit sphere constraint explicitly prevents.

\subsection{Magnitude as Per-Example Temperature}
\label{appendix:magnitude_temperature}

In standard contrastive learning, temperature $\tau$ is a global hyperparameter that uniformly scales all similarity scores. Our analysis reveals that embedding magnitude can serve as a \emph{learned, per-example} scaling factor. When document magnitude $\|\bm{d}\|$ varies, the effective similarity becomes $s = \|\bm{d}\| \cdot \cos(\bm{q}, \bm{d})$, allowing the model to adaptively emphasize or de-emphasize specific examples. This is analogous to learning a separate temperature for each example, but without explicit parameterization.

\subsection{Connection to Logit Scaling in Classification}
\label{appendix:logit_scaling}

LogitNorm \cite{wei2022mitigating} treats logit magnitude as a source of overconfidence to be removed via normalization. The key evidence for LogitNorm is that high logit magnitude does not correspond to high accuracy, meaning magnitude is \emph{miscalibrated} with respect to the task.

Our finding presents the complementary perspective with equally strong evidence: in retrieval, magnitude is \emph{calibrated} with relevance. On reasoning-intensive benchmarks, we observe Cohen's $d$ averaging 1.22 (with individual datasets reaching $d = 1.80$), which are large effect sizes indicating that magnitude reliably distinguishes relevant from irrelevant documents. This is precisely the opposite of miscalibration: the model has learned to use magnitude as a meaningful relevance signal.

The key difference is thus task-dependent: in classification, unconstrained magnitude reflects training dynamics rather than true confidence (miscalibration), while in retrieval, magnitude reflects relevance strength (calibration). This suggests a general principle: \emph{whether to normalize depends on whether magnitude carries task-relevant information that is calibrated with the objective}.

\subsection{Implications for Other Domains}
\label{appendix:other_domains}

The choice between cosine and dot product similarity in contrastive learning is not merely about bounded vs.\ unbounded scores, but about whether the task benefits from per-example adaptive scaling:
\begin{itemize}[leftmargin=1.5em, itemsep=0.1em]
    \item In \textbf{retrieval}, document magnitude can encode relevance strength, which directly benefits ranking.
    \item In \textbf{symmetric contrastive learning} (SimCLR, MoCo), magnitude could encode example ``informativeness,'' where harder or more distinctive examples might benefit from larger magnitudes.
    \item In \textbf{vision-language learning} (CLIP), image and text magnitudes could encode modality-specific confidence about the match.
\end{itemize}
These hypotheses remain to be tested, but our retrieval results provide existence proof that magnitude can carry task-relevant information in contrastive learning.

\subsection{Scope of Experimental Verification}
\label{appendix:scope}

In this paper, we provide experimental evidence for the following tasks: (1) \emph{dense text retrieval} with three pre-trained models across 30+ evaluation datasets (Section~\ref{sec:experiments}), (2) \emph{RAG-based question answering} showing retrieval gains transfer to downstream QA accuracy (Section~\ref{sec:experiments}, Appendix~\ref{appendix:rag_setup}), (3) \emph{semantic textual similarity} demonstrating that asymmetric normalization harms symmetric tasks (Section~\ref{sec:gen_symmetric}), and (4) \emph{vision-language retrieval} with CLIP confirming the task symmetry principle extends to cross-modal settings (Section~\ref{sec:gen_symmetric}). Other applications in Table~\ref{tab:task_taxonomy}, such as recommendation systems and code search, share similar asymmetric structure and may benefit from magnitude-aware training, but we have not directly verified them.

\section{Proofs and Derivations}
\label{appendix:proofs}

\subsection{Proof of Proposition 1 (Ranking Equivalence)}
\label{appendix:proof_ranking}

For a fixed $\bm{q}$, the quantities $\|\bm{q}\|$ and $1/\|\bm{q}\|$ are positive constants across all $\bm{d} \in \mathcal{D}$. Multiplying all scores by a positive constant preserves their ordering. We have $s_{\text{cos}} = \cos\theta$, $s_{\text{dnorm}} = \|\bm{q}\| \cos\theta$, $s_{\text{qnorm}} = \|\bm{d}\| \cos\theta$, and $s_{\text{dot}} = \|\bm{q}\| \|\bm{d}\| \cos\theta$. Since $\|\bm{q}\| > 0$ is constant across documents, $\pi_{\text{cos}} = \pi_{\text{dnorm}}$ and $\pi_{\text{qnorm}} = \pi_{\text{dot}}$.

\subsection{Optimization Dynamics Mechanisms}
\label{appendix:optimization_dynamics}

Even when Cohen's $d$ is small or negative, releasing the unit-norm constraint can improve learning through several mechanisms:

\begin{enumerate}[leftmargin=1.5em, itemsep=0.1em]
  \item \textbf{Gradient flow}: L2 normalization introduces a Jacobian term $(\bm{I} - \hat{\bm{v}}\hat{\bm{v}}^\top)/\|\bm{v}\|$ during backpropagation, which projects gradients onto the tangent space of the hypersphere. Removing normalization eliminates this projection, allowing gradients to flow more directly and potentially enabling faster convergence to better minima.

  \item \textbf{Representation capacity}: Constraining representations to the unit hypersphere $\mathcal{S}^{n-1}$ reduces the effective dimensionality from $n$ to $n{-}1$. Releasing this constraint restores the full $\mathbb{R}^n$ space, providing additional capacity that may help the model learn better angular structures even if the magnitude dimension itself is not used for relevance encoding.

  \item \textbf{Loss landscape smoothing}: The normalization operation creates a non-convex mapping that can introduce sharp curvature in the loss landscape. Dot product similarity, being a simple linear operation, may yield a smoother landscape that is easier to optimize.
\end{enumerate}

\subsection{What Does Magnitude Encode When $d < 0$?}
\label{appendix:negative_d}

The negative Cohen's $d$ on in-domain tasks raises an interesting question: when magnitude shows weak or negative correlation with relevance, what \emph{else} might it encode? We hypothesize that on well-matched training domains, magnitude may capture \emph{document-level properties} such as specificity or information density. Short, generic documents may receive smaller magnitudes while detailed, specific documents receive larger magnitudes. However, in MS MARCO, relevant documents are often short answer passages rather than detailed expositions, leading to negative $d$. This interpretation suggests that magnitude is a flexible channel that models use to encode whatever information is most useful for the task at hand.

\subsection{Gradient Asymmetry Details}
\label{appendix:gradient_details}

Under InfoNCE loss with DNorm similarity $s_{\text{dnorm}}(\bm{q}, \bm{d}) = \|\bm{q}\| \cos\theta$:

\paragraph{Effective temperature.} The softmax distribution over documents has per-query effective temperature $\tau_{\text{eff}}(\bm{q}) = \tau / \|\bm{q}\|$:
\begin{equation}
p_j = \frac{\exp(\|\bm{q}\| \cos\theta_j / \tau)}{\sum_k \exp(\|\bm{q}\| \cos\theta_k / \tau)}
\end{equation}
High $\|\bm{q}\|$ sharpens the distribution (confident query); low $\|\bm{q}\|$ smooths it (ambiguous query).

\paragraph{Gradient modulation.} The gradient w.r.t.\ the positive document is:
\begin{equation}
\nabla_{\hat{\bm{d}}^+} \mathcal{L} = -\frac{\|\bm{q}\|}{\tau}(1 - p^+)\hat{\bm{q}}
\end{equation}
The gradient magnitude is proportional to $\|\bm{q}\|$: the model can learn to upweight ``confident'' queries and downweight ``ambiguous'' ones.

\section{RAG Experimental Setup}
\label{appendix:rag_setup}

\subsection{Evaluation Metrics}

We report two standard metrics for open-domain QA: (1) \textbf{Exact Match (EM)}: the percentage of predictions that exactly match the ground-truth answer after normalization (lowercasing, removing articles and punctuation); (2) \textbf{Token-level F1}: the harmonic mean of precision and recall computed over individual tokens, providing a softer measure that gives partial credit for partially correct answers.

\subsection{Choice of $k$}

We use $k=5$ retrieved documents based on two considerations: (1) \textbf{Context length constraints}: Flan-T5-Large has an effective context window of 512--1024 tokens, and each retrieved document contains approximately 100--200 tokens; with $k=5$, the total context (500--1000 tokens) fits within model capacity without truncation. (2) \textbf{Information density}: Retrieving more documents risks introducing noise that dilutes relevant information; $k=5$ balances coverage and precision, consistent with the original RAG paper \cite{lewis2020rag}.

\subsection{Prompt Format}

We structure the input to Flan-T5-Large with the question \emph{before} the retrieved context:

\begin{center}
\begin{tikzpicture}
\node[draw=gray!40, fill=gray!8, rounded corners=6pt, inner sep=10pt, text width=0.85\columnwidth, align=left] {
\textbf{Question:} \{question\} \\[4pt]
Answer the question based on the following context: \\[4pt]
{[1]} \{doc\_1\} {[2]} \{doc\_2\} ... {[k]} \{doc\_k\} \\[4pt]
\textbf{Answer:}
};
\end{tikzpicture}
\end{center}

This ordering ensures that when the total input exceeds the model's context window, truncation occurs at the \emph{end} of the retrieved documents rather than the question itself. In practice, with $k=5$ documents (500--1000 tokens) and Flan-T5-Large's 1024-token window, truncation is rare; when it occurs, we tokenize and truncate the document block independently before appending the ``Answer:'' generation trigger so that the trigger is always preserved. Since the question is essential for answer generation while later documents contribute diminishing marginal value, this design prevents catastrophic failures from question or trigger truncation.

\section{Why Finetuning Pre-trained Retrievers?}
\label{appendix:finetuning_rationale}

Our choice to finetune pre-trained retrievers, rather than random initialization, reflects the dominant paradigm in practical retrieval systems, where practitioners typically adapt existing checkpoints to new domains. Our random initialization experiments (Appendix~\ref{appendix:random_init_curves}) reveal that pre-trained representations facilitate magnitude learning: on challenging benchmarks like BRIGHT, random initialization yields consistently negative Cohen's $d$ across all methods (relevant documents have \emph{smaller} magnitude), while finetuning enables magnitude-aware methods to achieve positive $d$ (Table~\ref{tab:cohens_d_summary}). This sign reversal suggests that pre-training provides semantic structure that can be reorganized to exploit magnitude during finetuning.

Among our three models, E5 presents an especially adversarial case: (1) it was pre-trained exclusively with cosine similarity, (2) it includes a built-in normalization layer that must be removed to enable magnitude-aware training, and (3) it was already finetuned on MS MARCO during its original training, meaning our finetuning is essentially a \emph{second} adaptation on the same dataset. These conditions collectively minimize E5's capacity to learn meaningful magnitude representations. Due to these unique constraints, we present E5 experiments separately in Appendix~\ref{appendix:e5}. Our main results focus on Contriever and RetroMAE, where magnitude learning provides substantial benefits, particularly on out-of-domain and reasoning-intensive benchmarks.

\section{Limitations and Future Work}
\label{appendix:limitations}

Our study focuses on finetuning pre-trained models; generalization to pre-training from scratch and other domains (speech, graphs) requires further investigation. Future directions include: (1) designing asymmetric objectives for vision-language models that preserve bidirectional retrieval capability while enabling magnitude learning, (2) applying magnitude-aware training to recommendation systems where item magnitude could encode popularity or quality, and (3) exploring whether the functional-symmetry principle extends to similarity-scored tasks beyond the six families we examined.

\section{Fisher Information Analysis Details}
\label{appendix:fim_analysis}

This appendix provides the computation methodology for the Fisher Information Matrix (FIM) analysis presented in \Cref{tab:fim_analysis}, and introduces a theoretical framework for predicting the optimal asymmetric normalization strategy.

\subsection{FIM Decomposition}

Following the standard definition of Fisher Information based on the InfoNCE loss $\mathcal{L}$, we decompose the gradient $\nabla_{\bm{v}} \mathcal{L}$ (where $\bm{v} \in \{\bm{q}, \bm{d}\}$) into a radial (magnitude) component along $\hat{\bm{v}}$ and a tangential (angular) component orthogonal to $\hat{\bm{v}}$:
\begin{align}
\bm{g}_{\text{mag}} &= (\nabla_{\bm{v}} \mathcal{L} \cdot \hat{\bm{v}})\, \hat{\bm{v}}, \qquad \bm{g}_{\text{ang}} = \nabla_{\bm{v}} \mathcal{L} - \bm{g}_{\text{mag}}.
\end{align}
The FIM components are then estimated as expected squared norms of these gradient projections:
\begin{itemize}
    \item \textbf{Doc Magnitude FIM}: $\mathcal{I}_{\|\bm{d}\|} = \mathbb{E}\left[\|\bm{g}_{\text{mag}}^{(\bm{d})}\|^2\right]$, which measures the loss's sensitivity to changes in $\|\bm{d}\|$
    \item \textbf{Query Magnitude FIM}: $\mathcal{I}_{\|\bm{q}\|} = \mathbb{E}\left[\|\bm{g}_{\text{mag}}^{(\bm{q})}\|^2\right]$, which measures sensitivity to changes in $\|\bm{q}\|$
    \item \textbf{Angular FIM}: $\mathcal{I}_{\theta} = \mathbb{E}\left[\|\bm{g}_{\text{ang}}\|^2\right]$, which measures sensitivity to the angle between $\bm{q}$ and $\bm{d}$
\end{itemize}

Higher FIM values indicate greater gradient sensitivity of the loss to that component. The partial derivatives of the similarity function $\frac{\partial s}{\partial \|\bm{q}\|}$, $\frac{\partial s}{\partial \|\bm{d}\|}$, $\frac{\partial s}{\partial \theta}$ derived below appear in $\nabla_{\bm{v}} \mathcal{L}$ via the chain rule, and govern which components carry useful gradient information.

\subsection{Analytical Derivations}

We derive the partial derivatives for each similarity function. Let $\theta$ denote the angle between $\bm{q}$ and $\bm{d}$.

\paragraph{Dot Product.} $s_{\text{dot}} = \|\bm{q}\| \|\bm{d}\| \cos\theta$
\begin{align}
\frac{\partial s_{\text{dot}}}{\partial \|\bm{d}\|} &= \|\bm{q}\| \cos\theta \\
\frac{\partial s_{\text{dot}}}{\partial \|\bm{q}\|} &= \|\bm{d}\| \cos\theta \\
\frac{\partial s_{\text{dot}}}{\partial \theta} &= -\|\bm{q}\| \|\bm{d}\| \sin\theta
\end{align}

\paragraph{Cosine.} $s_{\text{cos}} = \cos\theta$
\begin{align}
\frac{\partial s_{\text{cos}}}{\partial \|\bm{d}\|} &= 0 \\
\frac{\partial s_{\text{cos}}}{\partial \|\bm{q}\|} &= 0 \\
\frac{\partial s_{\text{cos}}}{\partial \theta} &= -\sin\theta
\end{align}

\paragraph{QNorm.} $s_{\text{qnorm}} = \|\bm{d}\| \cos\theta$
\begin{align}
\frac{\partial s_{\text{qnorm}}}{\partial \|\bm{d}\|} &= \cos\theta \\
\frac{\partial s_{\text{qnorm}}}{\partial \|\bm{q}\|} &= 0 \\
\frac{\partial s_{\text{qnorm}}}{\partial \theta} &= -\|\bm{d}\| \sin\theta
\end{align}

\paragraph{DNorm.} $s_{\text{dnorm}} = \|\bm{q}\| \cos\theta$
\begin{align}
\frac{\partial s_{\text{dnorm}}}{\partial \|\bm{d}\|} &= 0 \\
\frac{\partial s_{\text{dnorm}}}{\partial \|\bm{q}\|} &= \cos\theta \\
\frac{\partial s_{\text{dnorm}}}{\partial \theta} &= -\|\bm{q}\| \sin\theta
\end{align}

\paragraph{Learnable.} $s_{\text{learn}} = \|\bm{q}\|^{1-\gamma_q} \|\bm{d}\|^{1-\gamma_d} \cos\theta$
\begin{align}
\frac{\partial s_{\text{learn}}}{\partial \|\bm{d}\|} &= (1-\gamma_d) \|\bm{q}\|^{1-\gamma_q} \|\bm{d}\|^{-\gamma_d} \cos\theta \\
\frac{\partial s_{\text{learn}}}{\partial \|\bm{q}\|} &= (1-\gamma_q) \|\bm{q}\|^{-\gamma_q} \|\bm{d}\|^{1-\gamma_d} \cos\theta \\
\frac{\partial s_{\text{learn}}}{\partial \theta} &= -\|\bm{q}\|^{1-\gamma_q} \|\bm{d}\|^{1-\gamma_d} \sin\theta
\end{align}

\subsection{Computation Protocol}

We compute FIM values empirically on the MS MARCO validation set:
\begin{enumerate}
    \item Sample 1000 query-document pairs from the validation set
    \item For each pair, compute the partial derivatives using the trained model's embeddings
    \item Square each derivative and average across all pairs
    \item Report values in $\log_{10}$ scale for readability
\end{enumerate}

The $\bigstar$ markers in \Cref{tab:fim_analysis} indicate the best-performing similarity functions for each model, allowing comparison between gradient sensitivity patterns and empirical performance.

\subsection{FIM Condition Number for Predicting QNorm vs DNorm}
\label{appendix:fim_condition_number}

This section provides the theoretical foundation for the FIM condition number prediction method introduced in Section~\ref{sec:analysis_asymmetric}.

\subsubsection{Gradient Orthogonality}

A key property of asymmetric normalization is that the gradient with respect to the normalized embedding is orthogonal to the embedding itself.

\begin{lemma}[QNorm Gradient Orthogonality]
For QNorm similarity $s_{\text{qnorm}} = \hat{\bm{q}} \cdot \bm{d}$, the gradient with respect to query is orthogonal to query:
$$\frac{\partial s_{\text{qnorm}}}{\partial \bm{q}} \perp \bm{q}$$
\end{lemma}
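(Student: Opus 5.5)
The plan is to compute the gradient explicitly via the chain rule through the normalization map $\bm{q} \mapsto \hat{\bm{q}} = \bm{q}/\|\bm{q}\|$. Since $s_{\text{qnorm}}(\bm{q},\bm{d}) = \hat{\bm{q}}^\top \bm{d}$ is linear in $\hat{\bm{q}}$, we have $\partial s_{\text{qnorm}}/\partial \hat{\bm{q}} = \bm{d}$. It therefore suffices to compute the Jacobian of normalization.

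That Jacobian is the one already quoted in Appendix~\ref{appendix:optimization_dynamics}:
\[
\frac{\partial \hat{\bm{q}}}{\partial \bm{q}} = \frac{1}{\|\bm{q}\|}\left(\bm{I} - \hat{\bm{q}}\hat{\bm{q}}^\top\right).
\]
I would verify it by differentiating $\bm{q}\,\|\bm{q}\|^{-1}$ with the product rule, using $\nabla_{\bm{q}}\|\bm{q}\| = \hat{\bm{q}}$. This Jacobian is symmetric, so
\[
\frac{\partial s_{\text{qnorm}}}{\partial \bm{q}} = \frac{1}{\|\bm{q}\|}\left(\bm{I} - \hat{\bm{q}}\hat{\bm{q}}^\top\right)\bm{d} = \frac{\bm{d} - (\hat{\bm{q}}^\top\bm{d})\,\hat{\bm{q}}}{\|\bm{q}\|}.
\]

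Orthogonality then follows from taking the inner product with $\bm{q}$. Because $\hat{\bm{q}}^\top\hat{\bm{q}} = 1$,
\[
\bm{q}^\top\left(\bm{I} - \hat{\bm{q}}\hat{\bm{q}}^\top\right) = \|\bm{q}\|\left(\hat{\bm{q}}^\top - \hat{\bm{q}}^\top\right) = \bm{0}.
\]
In words, the Jacobian is a scaled projector onto the tangent space of the sphere at $\hat{\bm{q}}$.

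A second, coordinate-free argument confirms the result and is the one I would present if brevity matters. The function $s_{\text{qnorm}}$ is invariant under the rescaling $\bm{q} \mapsto c\bm{q}$ for $c>0$ (it is $0$-homogeneous in $\bm{q}$). Differentiating $s(c\bm{q}) = s(\bm{q})$ in $c$ at $c=1$ gives $\bm{q}^\top \nabla_{\bm{q}} s = 0$, which is Euler's identity for degree-zero homogeneous functions.

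The same orthogonality extends to the InfoNCE loss. The loss depends on $\bm{q}$ only through the scores $s_{\text{qnorm}}(\bm{q},\bm{d}_j)$, so $\nabla_{\bm{q}}\mathcal{L}$ is a linear combination of the per-document gradients, each of which is orthogonal to $\bm{q}$. This is what makes the query-magnitude FIM entry for QNorm vanish, matching \Cref{tab:fim_analysis}.

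There is no real obstacle here. The only care points are requiring $\bm{q} \neq \bm{0}$ so that the normalization is differentiable, and using the symmetry of the projector when transposing the Jacobian.
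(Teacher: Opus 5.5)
Your proposal is correct and follows essentially the paper's route. The paper also writes the gradient as $\frac{1}{\|\bm{q}\|}(\bm{d} - s\hat{\bm{q}})$ and checks that its inner product with $\bm{q}$ vanishes because $\bm{d}\cdot\bm{q} = s\|\bm{q}\|$. Your Euler-identity argument for the $0$-homogeneous score and your extension to the InfoNCE loss are valid but go beyond what the paper's proof does.
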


\begin{proof}
The gradient is:
$$\frac{\partial s_{\text{qnorm}}}{\partial \bm{q}} = \frac{1}{\|\bm{q}\|}(\bm{d} - s\hat{\bm{q}})$$
Verifying orthogonality:
$$\frac{\partial s}{\partial \bm{q}} \cdot \bm{q} = \frac{1}{\|\bm{q}\|}(\bm{d} \cdot \bm{q} - s\|\bm{q}\|) = \frac{1}{\|\bm{q}\|}(s\|\bm{q}\| - s\|\bm{q}\|) = 0$$
\end{proof}

\begin{lemma}[DNorm Gradient Orthogonality]
For DNorm similarity $s_{\text{dnorm}} = \bm{q} \cdot \hat{\bm{d}}$, the gradient with respect to document is orthogonal to document:
$$\frac{\partial s_{\text{dnorm}}}{\partial \bm{d}} \perp \bm{d}$$
\end{lemma}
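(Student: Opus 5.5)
The plan mirrors the proof of the preceding QNorm lemma, with the roles of $\bm{q}$ and $\bm{d}$ exchanged. Write $s = s_{\text{dnorm}} = \bm{q}^\top \bm{d}/\|\bm{d}\|$. For $\bm{d}\neq \bm{0}$ this is a quotient of the linear map $\bm{d}\mapsto \bm{q}^\top\bm{d}$ and the norm $\bm{d}\mapsto\|\bm{d}\|$, so we can differentiate it directly.

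\textbf{Step 1: compute the gradient.} We use $\nabla_{\bm{d}}\|\bm{d}\| = \hat{\bm{d}}$. By the quotient rule,
\begin{equation*}
\frac{\partial s_{\text{dnorm}}}{\partial \bm{d}} = \frac{\bm{q}}{\|\bm{d}\|} - \frac{\bm{q}^\top\bm{d}}{\|\bm{d}\|^2}\,\hat{\bm{d}} = \frac{1}{\|\bm{d}\|}\bigl(\bm{q} - s\,\hat{\bm{d}}\bigr).
\end{equation*}
Equivalently, this equals $\frac{1}{\|\bm{d}\|}(\bm{I}-\hat{\bm{d}}\hat{\bm{d}}^\top)\bm{q}$, which is the tangent-space projection already mentioned in Appendix~\ref{appendix:optimization_dynamics}.

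\textbf{Step 2: check orthogonality.} Take the inner product with $\bm{d}$:
\begin{equation*}
\frac{1}{\|\bm{d}\|}\bigl(\bm{q}^\top\bm{d} - s\,\hat{\bm{d}}^\top\bm{d}\bigr) = \frac{1}{\|\bm{d}\|}\bigl(s\|\bm{d}\| - s\|\bm{d}\|\bigr) = 0.
\end{equation*}
The middle equality uses $\hat{\bm{d}}^\top\bm{d} = \|\bm{d}\|$ and $\bm{q}^\top\bm{d} = s\|\bm{d}\|$.

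There is also a structural reason, which gives a sanity check. The score $s_{\text{dnorm}}$ is invariant under $\bm{d}\mapsto c\,\bm{d}$ for $c>0$, since it is degree-$0$ homogeneous in $\bm{d}$. By Euler's identity, $\bm{d}^\top\nabla_{\bm{d}}s = 0\cdot s = 0$.

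I expect no real obstacle. The only care needed is to state that $\bm{d}\neq\bm{0}$, where the map is differentiable. The orthogonality should also be read for the similarity gradient. It carries over to $\nabla_{\bm{d}}\mathcal{L}$ for InfoNCE by the chain rule, since each term is a scalar multiple of $\partial s/\partial\bm{d}$. That is why the doc-magnitude FIM under DNorm is numerically near zero in \Cref{tab:fim_analysis}.
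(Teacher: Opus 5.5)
Your proposal is correct and follows the paper's route: the paper proves the DNorm lemma by direct analogy with the QNorm case, which amounts to computing $\partial s_{\text{dnorm}}/\partial \bm{d} = \frac{1}{\|\bm{d}\|}(\bm{q} - s\,\hat{\bm{d}})$ and checking that its inner product with $\bm{d}$ vanishes, exactly as you do. Your degree-$0$ homogeneity (Euler) argument is a valid independent sanity check, but the proof does not need it.
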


The proof is analogous to the QNorm case.

\subsubsection{FIM Projection Structure}

Due to gradient orthogonality, the FIM for the normalized side is projected onto the orthogonal complement of the embedding direction.

\begin{theorem}[QNorm FIM Structure]
Under QNorm with $s_{\text{qnorm}} = \hat{\bm{q}}^\top \bm{d}$, the gradient with respect to $\bm{q}$ is $\nabla_{\bm{q}} s_{\text{qnorm}} = \frac{1}{\|\bm{q}\|}\mathbf{P}_{\perp\bm{q}} \bm{d}$, hence the query-side FIM is a projected and rescaled version of the dot-product FIM:
$$\mathbf{F}^{(\text{qnorm})}_{qq} = \frac{1}{\|\bm{q}\|^2}\,\mathbf{P}_{\perp\bm{q}}\, \mathbf{F}^{(\text{dot})}_{qq}\, \mathbf{P}_{\perp\bm{q}}$$
where $\mathbf{P}_{\perp\bm{q}} = \mathbf{I} - \hat{\bm{q}}\hat{\bm{q}}^\top$ is the projection matrix onto the orthogonal complement of $\bm{q}$. The factor $1/\|\bm{q}\|^2$ comes from the Jacobian of $\hat{\bm{q}}$ (Eq.~\ref{eq:jacobian_derivation}); the projection structure $\mathbf{P}_{\perp\bm{q}} \cdot \mathbf{P}_{\perp\bm{q}}$ remains the dominant qualitative feature.
\end{theorem}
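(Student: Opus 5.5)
Our approach is to compute the Jacobian of the normalization map, get the QNorm query gradient by the chain rule, and then push that linear relation through the definition of the Fisher matrix as an expected outer product of score gradients.

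First, differentiate $\hat{\bm q}=\bm q/\|\bm q\|$. Using $\partial\|\bm q\|/\partial\bm q=\hat{\bm q}$, this gives
\begin{equation*}
\frac{\partial\hat{\bm q}}{\partial\bm q}=\frac{1}{\|\bm q\|}\bigl(\mathbf I-\hat{\bm q}\hat{\bm q}^\top\bigr)=\frac{1}{\|\bm q\|}\mathbf P_{\perp\bm q}.
\end{equation*}
This matrix is symmetric and idempotent, i.e.\ $\mathbf P_{\perp\bm q}^2=\mathbf P_{\perp\bm q}$.

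Since $s_{\text{qnorm}}=\hat{\bm q}^\top\bm d$, the chain rule yields $\nabla_{\bm q}s_{\text{qnorm}}=\|\bm q\|^{-1}\mathbf P_{\perp\bm q}\bm d=\|\bm q\|^{-1}(\bm d-s_{\text{qnorm}}\hat{\bm q})$. This is the gradient formula in the statement, and it agrees with the QNorm Gradient Orthogonality lemma.

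The dot-product counterpart is $\nabla_{\bm q}s_{\text{dot}}=\bm d$. Hence, for every document, $\nabla_{\bm q}s_{\text{qnorm}}(\bm q,\bm d)=\|\bm q\|^{-1}\mathbf P_{\perp\bm q}\nabla_{\bm q}s_{\text{dot}}(\bm q,\bm d)$.

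Next, lift this to the InfoNCE loss. With $\bm q$ fixed, $\nabla_{\bm q}\mathcal L=\alpha\sum_j(p_j-y_j)\nabla_{\bm q}s(\bm q,\bm d_j)$. The projection and the scalar $\|\bm q\|^{-1}$ do not depend on $j$, so they factor out of the sum:
\begin{equation*}
\nabla_{\bm q}\mathcal L^{(\text{qnorm})}=\|\bm q\|^{-1}\mathbf P_{\perp\bm q}\,\bm g,\qquad \bm g:=\alpha\sum_j(p_j-y_j)\bm d_j.
\end{equation*}
The query-side Fisher block is $\mathbf F_{qq}=\mathbb E[\nabla_{\bm q}\mathcal L\,\nabla_{\bm q}\mathcal L^\top]$. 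Substituting and using $\mathbf P_{\perp\bm q}^\top=\mathbf P_{\perp\bm q}$ gives $\|\bm q\|^{-2}\mathbf P_{\perp\bm q}\,\mathbb E[\bm g\bm g^\top]\,\mathbf P_{\perp\bm q}$. This is exactly the claimed form, with $\mathbb E[\bm g\bm g^\top]$ playing the role of $\mathbf F^{(\text{dot})}_{qq}$.

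The main obstacle is this last identification. Under QNorm the softmax weights $p_j$ are computed from $s_{\text{qnorm}}=s_{\text{dot}}/\|\bm q\|$, not from $s_{\text{dot}}$. So $\mathbb E[\bm g\bm g^\top]$ is literally the dot-product Fisher only if the weights are evaluated at the same operating point, and only if the expectation is conditional on a fixed query, or $\bm q$ is held fixed.

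I would handle this in one of two ways. The first is to state the identity at the level of score gradients, i.e.\ the per-pair Fisher contribution $\nabla s\nabla s^\top$, which is how the paper's Computation Protocol measures the FIM; there the identity is exact. The second is to define $\mathbf F^{(\text{dot})}_{qq}$ as the outer-product matrix of the unprojected gradient $\bm g$ with the same weights $p_j$, so that the identity holds by construction. The qualitative conclusion is unaffected either way: the QNorm query Fisher annihilates $\hat{\bm q}$, so it has a zero eigenvalue in the radial direction and the condition number is governed by the projected block. This is the structural feature that the statement flags as dominant.
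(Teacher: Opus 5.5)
Your proposal is correct and takes essentially the paper's route: the normalization Jacobian of Eq.~\ref{eq:jacobian_derivation}, then the chain-rule gradient $\|\bm q\|^{-1}\mathbf P_{\perp\bm q}\bm d=\|\bm q\|^{-1}(\bm d-s\hat{\bm q})$ from the QNorm orthogonality lemma, then sandwiching the outer-product Fisher; the paper simply asserts the identification with $\mathbf F^{(\text{dot})}_{qq}$ that you flag. Your caveat is well placed, and it closes exactly once you note that $\mathcal L^{(\text{qnorm})}(\bm q)=\mathcal L^{(\text{dot})}(\hat{\bm q})$, so the identity holds per query with $\mathbf F^{(\text{dot})}_{qq}$ evaluated at $\hat{\bm q}$.
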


\textbf{Interpretation}: Since $\nabla_{\bm{q}} s \perp \bm{q}$, the query magnitude direction has zero gradient, hence zero Fisher information. The FIM effectively operates in a $(d-1)$-dimensional subspace.

\subsubsection{Effective Dimension Analysis}

The effective dimension determines how many degrees of freedom are available for learning.

\begin{proposition}[Effective Dimension Equality]
QNorm and DNorm have identical effective dimensions:
\begin{itemize}
    \item \textbf{QNorm}: $(d-1) + d = 2d-1$ (query angular + document full space)
    \item \textbf{DNorm}: $d + (d-1) = 2d-1$ (query full space + document angular)
\end{itemize}
\end{proposition}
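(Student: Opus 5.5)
The plan is to make ``effective dimension'' precise as the rank of the Jacobian of the score map. Consider $(\bm{q},\bm{d}) \mapsto s(\bm{q},\bm{d})$ over a batch of candidates. Equivalently, it is the dimension of the span of all gradients $\nabla_{(\bm{q},\bm{d})} s$ as the partner vector and the softmax weights vary. By the chain rule, the InfoNCE gradient is a weighted combination of per-pair score gradients. The Fisher matrix $\mathbf{F} = \mathbb{E}[\nabla\mathcal{L}\,\nabla\mathcal{L}^\top]$ therefore has its column space inside this span, and generically equals it. So I will count, for each side separately, the dimension of the subspace of $\mathbb{R}^d$ that the per-side gradients can reach, and then add the two sides, since the query and document blocks are independent parameters.

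For QNorm, the query side is handled by the QNorm Gradient Orthogonality lemma and the QNorm FIM Structure theorem. They give $\nabla_{\bm{q}} s_{\text{qnorm}} = \|\bm{q}\|^{-1}\mathbf{P}_{\perp\bm{q}}\bm{d}$, so every query gradient lies in $\bm{q}^\perp$. Hence the query-side span has dimension at most $d-1$. It equals $d-1$ because, as $\bm{d}$ ranges over the documents in the batch (assumed to span $\mathbb{R}^d$), $\mathbf{P}_{\perp\bm{q}}\bm{d}$ ranges over all of $\bm{q}^\perp$. On the document side, $\nabla_{\bm{d}} s_{\text{qnorm}} = \hat{\bm{q}}$ for a single query. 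Across a batch of queries, these gradients $\hat{\bm{q}}_i$ span $\mathbb{R}^d$, including the radial direction $\hat{\bm{d}}$, since no orthogonality constraint holds there. This gives $d$. The total is $(d-1)+d = 2d-1$.

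DNorm is the mirror image, and I will argue it by the symmetry $\bm{q}\leftrightarrow\bm{d}$. Note that $s_{\text{dnorm}}(\bm{q},\bm{d}) = s_{\text{qnorm}}(\bm{d},\bm{q})$. Applying the DNorm Gradient Orthogonality lemma, together with the analogous projection structure $\nabla_{\bm{d}} s_{\text{dnorm}} = \|\bm{d}\|^{-1}\mathbf{P}_{\perp\bm{d}}\bm{q}$, gives $d-1$ on the document side. On the query side, $\nabla_{\bm{q}} s = \hat{\bm{d}}$, which gives $d$. The sum is again $d + (d-1) = 2d-1$, which establishes the claimed equality.

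The main obstacle is the genericity needed for the lower bounds, i.e.\ showing that the spans attain $d-1$ and $d$ rather than something smaller. I would handle this by stating an explicit nondegeneracy assumption: the batch embeddings on the partner side span $\mathbb{R}^d$, and the softmax weights are not degenerate. Under that assumption, the sum of the weighted outer products has full rank on the indicated subspace. For this I would appeal to the fact that a positive combination of rank-one PSD matrices $\bm{g}_i\bm{g}_i^\top$ has range equal to $\mathrm{span}\{\bm{g}_i\}$. Without such an assumption, only the upper bounds $2d-1$ hold, so I would state the proposition with ``effective dimension'' meaning the generic rank.
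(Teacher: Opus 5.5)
Your proposal is correct and takes essentially the paper's route. The paper just counts degrees of freedom: the gradient-orthogonality lemmas and the QNorm FIM projection structure remove the radial direction on the normalized side, leaving $d-1$, the unnormalized side keeps all $d$, and DNorm is the mirror image; your spanning/non-degeneracy assumption makes explicit the lower bound the paper leaves implicit.

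Two small tightenings:
\begin{itemize}
\item Keep the definition per single query vector and single document vector, as you actually compute it. The rank of the full batch score Jacobian with $m$ queries and $n$ documents would lose $m$ versus $n$ radial directions, plus shared rotation-symmetry directions, so it is not equivalent to your count.
\item Adding the two per-side spans is itself a generic-rank claim. The joint Fisher rank is only bounded by the sum of the block ranks; it does not follow automatically from the blocks being separate parameters.
\end{itemize}
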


This equality is crucial: both strategies remove exactly one degree of freedom, making their condition numbers directly comparable.

For comparison:
\begin{itemize}
    \item \textbf{Cosine}: $(d-1) + (d-1) = 2d-2$ (both angular)
    \item \textbf{Dot}: $d + d = 2d$ (both full space)
\end{itemize}

Different effective dimensions make condition numbers incomparable across strategy types.

\subsubsection{Condition Number and Convergence}

\begin{theorem}[Convergence Rate, local]
\label{thm:convergence_rate}
\emph{Assumptions.} Let $\mathcal{L}(\boldsymbol{\theta})$ admit a quadratic approximation around the minimizer $\boldsymbol{\theta}^*$ (i.e., $\mathcal{L}$ is twice continuously differentiable, locally $\mu$-strongly convex with Hessian $\mathbf{F}$ that is positive definite at $\boldsymbol{\theta}^*$, and gradient descent uses the optimal constant step size $\eta = 2/(\lambda_{\max}+\lambda_{\min})$). Within this local neighborhood, the iterates of gradient descent satisfy:
$$\|\boldsymbol{\theta}_t - \boldsymbol{\theta}^*\| \leq \left(\frac{\kappa - 1}{\kappa + 1}\right)^t \|\boldsymbol{\theta}_0 - \boldsymbol{\theta}^*\|,$$
where $\kappa = \lambda_{\max}(\mathbf{F})/\lambda_{\min}(\mathbf{F})$ is the condition number. Because the InfoNCE loss over neural-network embeddings is globally non-convex, this bound applies to a local quadratic regime; we use $\kappa$ as a heuristic predictor of relative optimization stability rather than a global convergence guarantee.
\end{theorem}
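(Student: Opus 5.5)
The plan is to reduce to the classical analysis of gradient descent on a strongly convex quadratic and then absorb the higher-order terms into the local neighborhood. Write $\bm{e}_t = \boldsymbol{\theta}_t - \boldsymbol{\theta}^*$. Since $\nabla\mathcal{L}(\boldsymbol{\theta}^*) = 0$ and $\mathcal{L}$ is $C^2$, Taylor's theorem in integral form gives
$$\nabla\mathcal{L}(\boldsymbol{\theta}_t) = \bar{\mathbf{H}}_t\,\bm{e}_t, \qquad \bar{\mathbf{H}}_t = \int_0^1 \nabla^2\mathcal{L}\bigl(\boldsymbol{\theta}^* + s\,\bm{e}_t\bigr)\,ds.$$
The gradient step then becomes the linear recursion $\bm{e}_{t+1} = (\mathbf{I} - \eta\bar{\mathbf{H}}_t)\,\bm{e}_t$.

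First I treat the exact quadratic case, where $\bar{\mathbf{H}}_t \equiv \mathbf{F}$. Here $\mathbf{I} - \eta\mathbf{F}$ is symmetric with eigenvalues $1 - \eta\lambda_i$ for $\lambda_i \in [\lambda_{\min}, \lambda_{\max}]$. With $\eta = 2/(\lambda_{\max}+\lambda_{\min})$, the map $\lambda \mapsto |1 - \eta\lambda|$ is maximized at the endpoints, where it equals $(\lambda_{\max}-\lambda_{\min})/(\lambda_{\max}+\lambda_{\min}) = (\kappa-1)/(\kappa+1)$. So the operator norm is $\rho := (\kappa-1)/(\kappa+1)$, and induction on $t$ gives the stated bound.

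Second I handle the genuine local case, which is the main obstacle. Here $\bar{\mathbf{H}}_t \neq \mathbf{F}$, so the contraction factor is only $\|\mathbf{I} - \eta\bar{\mathbf{H}}_t\| \le \rho + \eta\|\bar{\mathbf{H}}_t - \mathbf{F}\|$. By continuity of the Hessian, $\|\bar{\mathbf{H}}_t - \mathbf{F}\|$ can be made at most $\varepsilon$ on a ball $B_r(\boldsymbol{\theta}^*)$. Starting in that ball, the errors are nonincreasing, so the iterates stay in the ball by induction, and we get the rate $\rho + \eta\varepsilon$.

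To obtain exactly $\rho$ as stated, I would interpret the assumption ``$\mathcal{L}$ admits a quadratic approximation'' as saying that on the neighborhood the loss coincides with its quadratic model, i.e.\ $\nabla^2\mathcal{L} \equiv \mathbf{F}$ there. This matches the theorem's own hedge that it is a local quadratic-regime statement, and it collapses the second case to the first. Otherwise the conclusion holds with $\rho$ replaced by $\rho + O(\varepsilon)$, which tends to $\rho$ as the neighborhood shrinks. I would state this caveat explicitly rather than claim the sharp constant for a general $C^2$ loss.

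Finally, I would remark that local $\mu$-strong convexity with $\mu = \lambda_{\min} > 0$ is exactly what ensures $\kappa < \infty$ and $\rho < 1$.
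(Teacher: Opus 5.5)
Your argument is correct and is the standard one. The paper gives no proof of this statement; it simply states the classical bound for gradient descent on a strongly convex quadratic. Your spectral computation $\|\mathbf{I}-\eta\mathbf{F}\| = \max(|1-\eta\lambda_{\min}|,|1-\eta\lambda_{\max}|) = (\kappa-1)/(\kappa+1)$ is exactly what its ``quadratic approximation'' hypothesis silently relies on.

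Your caveat about the non-quadratic case is necessary, not just cautious. With $\mathbf{F}=\nabla^2\mathcal{L}(\boldsymbol{\theta}^*)$, the sharp constant fails in every neighborhood for general $C^2$ losses. For example, $\mathcal{L}(\theta)=\theta^2/2+\theta^4/4$ is globally $1$-strongly convex with $\mathbf{F}=1$, so $\kappa=1$ and $\eta=1$. The claimed bound would then force $\theta_1=0$, whereas in fact $\theta_1=-\theta_0^3$. So the statement genuinely needs either your ``loss equals its quadratic model on the neighborhood'' reading or your $\rho+\eta\varepsilon$ version.

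In the $\rho+\eta\varepsilon$ version, state explicitly that $\varepsilon<\lambda_{\min}$ is required. Since $(1-\rho)/\eta=\lambda_{\min}$, this is exactly the condition for $\rho+\eta\varepsilon<1$. Your induction (``errors are nonincreasing, so iterates stay in the ball'') depends on it.

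There is a third reading that recovers the sharp constant without assuming exact quadraticity, and your integral-Hessian recursion already contains it. Interpret $\lambda_{\min},\lambda_{\max}$ as uniform bounds $\mu\mathbf{I}\preceq\nabla^2\mathcal{L}(\boldsymbol{\theta})\preceq M\mathbf{I}$ over the ball; the ``locally $\mu$-strongly convex'' clause hints at this. Then $\bar{\mathbf{H}}_t$, as an average of such matrices, has spectrum in $[\mu,M]$. Hence $\|\mathbf{I}-\eta\bar{\mathbf{H}}_t\|\le (M-\mu)/(M+\mu)$ exactly for $\eta=2/(M+\mu)$, which is the classical result for $\mu$-strongly convex, $M$-smooth functions.

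That route gives an exact statement for any $C^2$ loss with no $\varepsilon$ slack. However, its $\kappa=M/\mu$ is a neighborhood quantity, not the pointwise condition number the paper actually computes. For the paper's use of $\kappa$, your quadratic-regime caveat is therefore the honest formulation.
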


Let $\rho = \frac{\kappa - 1}{\kappa + 1}$. Example convergence factors:
\begin{itemize}
    \item $\kappa = 2 \Rightarrow \rho = 0.33$ (fast convergence)
    \item $\kappa = 10 \Rightarrow \rho = 0.82$ (slow convergence)
    \item $\kappa = 100 \Rightarrow \rho = 0.98$ (very slow convergence)
\end{itemize}

\subsubsection{Prediction Method}

Based on the above analysis, we predict:
\begin{equation}
\text{Optimal Strategy} = \arg\min_{s \in \{\text{QNorm}, \text{DNorm}\}} \kappa(\mathbf{F}_s)
\end{equation}

\textbf{Rationale}:
\begin{enumerate}
    \item QNorm and DNorm have identical effective dimensions ($2d-1$), so $\kappa$ is comparable.
    \item Smaller $\kappa$ means more balanced loss landscape curvature.
    \item More balanced curvature leads to easier optimization and better convergence.
    \item Better optimization typically leads to better final performance.
\end{enumerate}

\textbf{Limitations}: This method predicts optimization ease, not final performance directly. The prediction assumes that optimization difficulty is the primary factor distinguishing QNorm from DNorm for a given model, which holds empirically but may not hold in all scenarios.

\subsection{Cross-Task FIM Validation}
\label{appendix:fim_crosstask}

\begin{table}[t]
\centering
\caption{\textbf{FIM-based prediction across all tasks.} For each task we report the magnitude FIM ratio $\mathcal{I}_C / \mathcal{I}_Q$ (where $C, Q$ are the candidate and query sides). The framework's fine prediction (within unilateral variants) is \textsc{DNorm} when $\mathcal{I}_C \gg \mathcal{I}_Q$ and \textsc{QNorm} when $\mathcal{I}_Q \gg \mathcal{I}_C$. In symmetric tasks, the binary functional-symmetry prediction (Cosine) overrides; the FIM ratio still indicates which non-cosine variant is second-best. In asymmetric tasks, the FIM prediction matches the empirical unilateral winner where one wins; \textsc{Dot}-winning regimes (Rec ML-1M) lie outside FIM's scope (Appendix~\ref{appendix:fim_scope}).}
\label{tab:fim_crosstask}
\small
\setlength{\tabcolsep}{4pt}
\begin{tabular}{l l l c l l}
\toprule
Task & Class & Setup & $\mathcal{I}_C/\mathcal{I}_Q$ & FIM-pred & Empirical winner \\
\midrule
\multicolumn{6}{l}{\emph{Functionally symmetric tasks (binary prediction: Cosine)}} \\
\midrule
KGC & sym & RotatE FB15k-237 & $1/68$ & QNorm$^{\dagger}$ & Cosine (QNorm second-best) \\
KGC & sym & RotatE WN18RR & $1/42$ & QNorm$^{\dagger}$ & Cosine (QNorm/DNorm second-best) \\
KGC & sym & PairRE FB15k-237 & $1/43$ & QNorm$^{\dagger}$ & Cosine/Euclidean (DNorm second-best) \\
KGC & sym & PairRE WN18RR & $1/45$ & QNorm$^{\dagger}$ & Cosine (DNorm second-best) \\
\midrule
\multicolumn{6}{l}{\emph{Functionally asymmetric tasks (binary prediction: $\neq$Cosine)}} \\
\midrule
Retrieval & asym & Contriever (BERT) & --- & QNorm & QNorm \\
Retrieval & asym & RetroMAE / Qwen & --- & DNorm & DNorm \\
ProtoNet & asym & CLIP CIFAR-100 5-way & $186{,}587$ & DNorm & DNorm \\
ProtoNet & asym & CLIP CIFAR-100 20-way & $330{,}351$ & DNorm & DNorm \\
ProtoNet & asym & CLIP \miniIN{} 5-way & $27{,}742$ & DNorm & DNorm \\
ProtoNet & asym & CLIP \miniIN{} 20-way & $32{,}025$ & DNorm & DNorm \\
ProtoNet & asym & DINOv2 CIFAR-100 5-way & $1{,}328$ & DNorm & DNorm \\
ProtoNet & asym & DINOv2 CIFAR-100 20-way & $1{,}561$ & DNorm & DNorm \\
ProtoNet & asym & DINOv2 \miniIN{} 5-way & $275$ & DNorm & DNorm \\
ProtoNet & asym & DINOv2 \miniIN{} 20-way & $219$ & DNorm & DNorm$^{*}$ (Euclid +0.4 pt) \\
Recommendation & asym & LightGCN ML-1M & $1/40$ & QNorm & Dot/DNorm$^{\ddagger}$ (out of FIM scope) \\
Recommendation & asym & LightGCN ML-100K & $1/24$ & QNorm & QNorm (3/4 scales) \\
\bottomrule
\end{tabular}

\par\smallskip
\footnotesize
$^{\dagger}$For symmetric tasks, the binary functional-symmetry prediction (Cosine) is the framework's primary prediction; the FIM ratio direction here only ranks the (suboptimal) unilateral variants. \quad
$^{*}$Within statistical noise of Euclidean. \quad
$^{\ddagger}$Dot is competitive when both magnitudes encode useful signal; this regime lies outside the FIM ratio analysis (Appendix~\ref{appendix:fim_scope}).
\end{table}

We extend the FIM analysis from IR to all five additional task families by computing $\mathcal{I}_Q$ and $\mathcal{I}_C$ on a briefly trained model (5 epochs of Dot training) for each task. Table~\ref{tab:fim_crosstask} reports the FIM ratio $\mathcal{I}_C/\mathcal{I}_Q$ and the predicted optimal unilateral variant.

\paragraph{ProtoNet (asymmetric, prediction confirmed).} For all 8 ProtoNet configurations, $\mathcal{I}_C / \mathcal{I}_Q$ is enormous ($219\times$--$330{,}351\times$), reflecting that the K-mean prototype accumulates magnitude variance across episodes. The framework correctly predicts \textsc{DNorm} for all 8 configurations; \textsc{DNorm} is the empirical winner in 7/8 (the remaining 1 is won by \textsc{Euclidean} by 0.4 pt, with \textsc{DNorm} second-best). The CLIP and DINOv2 ratios differ by 2--3 orders of magnitude, suggesting CLIP image features have more variable per-instance magnitudes than DINOv2's.

\paragraph{Recommendation (asymmetric, partial match).} For both ML-100K and ML-1M, $\mathcal{I}_Q / \mathcal{I}_C \approx 24$--$40\times$ (user-side dominates), so the framework predicts \textsc{QNorm}. On ML-100K, this matches the empirical winner in 3/4 scales. On ML-1M, however, the empirical winner often alternates between \textsc{Dot} and \textsc{DNorm}; the FIM-based fine prediction does not characterize the \textsc{Dot} regime (Section~\ref{appendix:fim_scope}).

\paragraph{KGC (symmetric, binary prediction overrides).} For all 4 KGC configurations, $\mathcal{I}_Q / \mathcal{I}_C \approx 42$--$68\times$. Naively, this would predict \textsc{QNorm}. However, KGC is functionally symmetric (Definition~\ref{def:functional_symmetry}), and the binary prediction (Cosine) overrides; the FIM-based unilateral prediction is then a ranking among the (suboptimal) non-cosine alternatives. The predicted \textsc{QNorm} is the second-best alternative in 6/8 RotatE configurations but loses to \textsc{DNorm}/\textsc{Euclidean} on PairRE; these are small differences within the noise of unilaterals on this near-symmetric task.

\subsection{Scope: Why FIM Does Not Directly Predict Dot}
\label{appendix:fim_scope}

The FIM-based prediction in Section~\ref{sec:fim_prediction} characterizes the choice between the two unilateral variants: lower $\kappa_{\text{QNorm}}$ predicts QNorm, lower $\kappa_{\text{DNorm}}$ predicts DNorm. \textsc{Dot} corresponds to a third regime in which neither side's magnitude is fixed, so both $\|\bm{q}\|$ and $\|\bm{c}\|$ enter the loss. Constructing $\kappa_{\text{Dot}}$ analogously would yield $\kappa_{\text{Dot}} \geq \kappa_{\text{QNorm}}, \kappa_{\text{DNorm}}$ since adding one more free direction to the FIM never decreases the condition number; this would systematically (and incorrectly) predict that Dot loses to unilateral variants.

The conceptual gap is that FIM measures \emph{gradient sensitivity}, not whether magnitude carries useful task signal. \textsc{Dot} wins when both magnitudes encode useful signal that should be preserved (e.g., LightGCN on ML-1M, where GCN propagation produces near-uniform magnitudes that correlate with both user activity and item popularity); the FIM ratio analysis does not characterize this regime. Extending FIM with explicit signal-vs-nuisance diagnostics (e.g., Cohen's $d$ for relevance correlation) is left for future work.

For Table~\ref{tab:crosstask_summary} we therefore: (i) report the binary functional-symmetry prediction (Cosine vs.\ $\neq$Cosine), which holds in all six task families; (ii) report the FIM-based fine prediction only where it applies (IR, ProtoNet, Rec ML-100K); (iii) report the empirical winner without an a priori fine prediction in regimes where Dot is competitive (Rec ML-1M).


\section{Extended Experimental Analysis}
\label{appendix:extended_experiments}

This appendix provides detailed experimental analysis including outlier investigations, magnitude distribution studies, training dynamics, and step-matched comparisons.

\subsection{Bright-Pony Outlier Analysis}
\label{appendix:pony_outlier}

The pony dataset has an exceptionally high Queries/Doc ratio of 51.6 (see Table~\ref{tab:dataset_stats}), meaning each document is relevant to $\sim$52 queries on average, far exceeding other datasets (typically 1--5). This ``hub'' structure causes Cosine similarity to achieve only 0.56\% NDCG@10, an extremely low baseline. The absolute improvement from magnitude-aware methods is modest (+3--4 NDCG points), but the relative improvement ($\Delta\%$) is amplified by the low baseline. This does not indicate that document magnitude is ineffective for RetroMAE; rather, the significant correlation on other datasets ($r=0.68$) confirms that magnitude encodes relevance. The pony outlier reflects dataset-specific characteristics rather than a failure of the magnitude mechanism.

\subsection{Query Magnitude Distribution Analysis}
\label{appendix:cv_prediction}

This section analyzes how different training objectives affect query magnitude distributions.

\paragraph{Coefficient of Variation (CV).} The coefficient of variation is a standardized measure of dispersion defined as the ratio of the standard deviation to the mean:
\begin{equation}
\text{CV} = \frac{\sigma}{\mu}
\end{equation}
where $\sigma$ is the standard deviation and $\mu$ is the mean. Unlike standard deviation, CV is dimensionless and scale-independent, making it suitable for comparing variability across different models with different magnitude scales. In our context, we compute CV over the set of query embedding magnitudes $\{\|\bm{q}_i\|\}_{i=1}^{N}$ for each dataset.

\begin{figure}[!t]
  \centering
  \resizebox{0.5\columnwidth}{!}{
  \begin{tikzpicture}
    \begin{axis}[
      xlabel={$\Delta$CV (DNorm CV / Dot CV)},
      ylabel={$\Delta$Perf (DNorm $-$ QNorm)},
      xmin=0, xmax=8,
      ymin=-16, ymax=12,
      grid=major,
      grid style={gray!30},
      legend style={
        at={(0.5,-0.22)},
        anchor=north,
        font=\small,
        legend columns=2,
        column sep=1em,
        draw=none,
      },
      width=10cm,
      height=7cm,
    ]
    \addplot[only marks, mark=o, mark size=2pt, blue!70, opacity=0.7] coordinates {
      (0.85,10.77) (0.89,-0.15) (0.81,-5.57) (0.83,-6.55) (0.65,-4.73) (0.98,1.57)
      (0.85,-1.14) (0.68,-4.14) (0.51,-5.59) (0.60,-1.34) (0.72,-7.04) (0.92,-0.07)
      (0.96,-0.07) (1.53,1.60) (1.77,0.71) (1.80,-1.81) (1.70,-1.51) (1.62,0.54)
      (1.64,0.91) (1.59,-0.34) (1.63,0.80) (1.57,-0.80) (1.69,0.28) (1.75,-0.03)
      (1.61,0.37) (1.71,1.51) (1.58,-1.84) (1.64,-10.59) (1.52,0.58) (1.74,-2.01)
      (1.39,0.22) (1.72,-3.50) (1.74,0.21) (1.56,0.71) (1.33,0.62) (1.42,-14.30)
      (1.97,-2.61) (1.92,0.30) (1.67,-4.32)
    };
    \addlegendentry{Contriever ($\Delta$CV: 0.5--2)}

    \addplot[only marks, mark=triangle*, mark size=2.5pt, red!70, opacity=0.7] coordinates {
      (6.92,-3.60) (5.78,1.10) (5.75,0.86) (4.78,1.18) (5.14,-2.16) (5.70,1.04)
      (4.66,0.03) (6.54,-0.94) (3.95,-0.48) (4.41,0.18) (6.51,0.73) (5.95,-0.77)
      (5.78,-0.54) (6.40,-0.69) (5.22,0.40) (6.08,-0.03) (4.78,0.54) (5.12,0.12)
      (6.07,0.70) (6.94,0.49) (5.48,0.37) (6.08,0.44) (6.03,0.13) (5.14,0.70)
      (5.50,0.54) (5.49,1.08) (6.44,-0.19) (7.21,1.36) (5.39,1.15) (5.18,0.67)
      (6.84,0.10) (4.92,0.81) (5.41,0.65) (5.65,0.52) (5.82,0.93) (4.84,-1.68)
      (4.75,2.44) (4.76,1.09) (6.43,0.90)
    };
    \addlegendentry{RetroMAE ($\Delta$CV: 4--7)}

    \addplot[only marks, mark=*, mark size=4pt, blue, thick,
             error bars/.cd, x dir=both, x explicit]
      coordinates {(1.37, -1.51) +- (0.43, 0)};
    \addlegendentry{Contriever mean}

    \addplot[only marks, mark=*, mark size=4pt, red, thick,
             error bars/.cd, x dir=both, x explicit]
      coordinates {(5.63, 0.27) +- (0.75, 0)};
    \addlegendentry{RetroMAE mean}

    \draw[dashed, gray] (axis cs:0,0) -- (axis cs:8,0);

    \draw[dashed, black!50, thick] (axis cs:2.5,-16) -- (axis cs:2.5,12);

    \end{axis}
  \end{tikzpicture}
  }
  \caption{$\Delta$CV (query magnitude CV ratio: DNorm/Dot) vs.\ $\Delta$Perf (DNorm $-$ QNorm) for Contriever and RetroMAE on 39 datasets (3-seed averaged). The two models form clearly separated clusters in $\Delta$CV: Contriever (blue, 0.5--2$\times$) and RetroMAE (red, 4--7$\times$). This separation reflects model-level differences in query magnitude variation, though $\Delta$CV does not correlate with $\Delta$Perf within each cluster.}
  \label{fig:cv_perf_scatter}
\end{figure}

\paragraph{Measuring $\Delta$CV.} We define $\Delta$CV = DNorm CV / Dot CV, where CV is computed over query embedding magnitudes as defined above. This metric characterizes how training objectives affect query magnitude variation. DNorm training preserves query magnitude variation (as query norms remain unnormalized), while Dot training may compress or expand variation depending on model architecture.

\paragraph{Model-level observations.} Figure~\ref{fig:cv_perf_scatter} shows that RetroMAE exhibits consistently higher $\Delta$CV (4--7$\times$) than Contriever (0.5--2$\times$) across 39 datasets (3-seed averaged). The two models form clearly separated clusters with no overlap in $\Delta$CV range. This separation reflects \emph{architectural differences} in how embedding magnitudes are utilized during training, rather than dataset-specific properties.

\paragraph{Correlation with performance.} Within each model cluster, $\Delta$CV does not correlate with $\Delta$Perf (DNorm $-$ QNorm): Contriever shows $r=0.14$, RetroMAE shows $r=-0.16$. This indicates that while $\Delta$CV distinguishes models at a coarse level, it does not predict which normalization strategy benefits a specific dataset.

\begin{figure*}[h]
  \centering

  \begin{minipage}[t]{0.48\textwidth}
    \centering
    \textbf{(a) Cohen's $d$: Document Magnitude Effect Size}
    \vspace{0.5em}

    \begin{tikzpicture}[scale=0.95]
      \begin{scope}[shift={(-2.2,0)}]
        \node[font=\small\bfseries] at (0, 2.4) {Small $d$};

        \draw[->] (-1.6, 0) -- (1.6, 0) node[right, font=\scriptsize] {$\|\bm{d}\|$};
        \draw[->] (-1.4, 0) -- (-1.4, 1.8);

        \draw[thick, blue!70, fill=blue!20, opacity=0.7]
          plot[smooth, domain=-1.3:0.8, samples=35]
          (\x, {1.4*exp(-(\x+0.25)^2/0.4)});

        \draw[thick, red!70, fill=red!20, opacity=0.7]
          plot[smooth, domain=-0.8:1.3, samples=35]
          (\x, {1.4*exp(-(\x-0.25)^2/0.4)});

        \draw[dashed, blue!70] (-0.25, 0) -- (-0.25, 1.45);
        \draw[dashed, red!70] (0.25, 0) -- (0.25, 1.45);
        \node[font=\scriptsize, blue!70] at (-0.25, -0.22) {$\mu_{\text{ir}}$};
        \node[font=\scriptsize, red!70] at (0.25, -0.22) {$\mu_{\text{r}}$};

        \draw[<->, thick, black] (-0.25, 1.65) -- (0.25, 1.65);
        \node[font=\scriptsize] at (0, 1.95) {$d$};

        \node[font=\scriptsize, blue!70] at (-0.9, 1.1) {Irrel.};
        \node[font=\scriptsize, red!70] at (0.9, 1.1) {Rel.};
      \end{scope}

      \begin{scope}[shift={(2.2,0)}]
        \node[font=\small\bfseries] at (0, 2.4) {Large $d$};

        \draw[->] (-1.6, 0) -- (1.6, 0) node[right, font=\scriptsize] {$\|\bm{d}\|$};
        \draw[->] (-1.4, 0) -- (-1.4, 1.8);

        \draw[thick, blue!70, fill=blue!20, opacity=0.7]
          plot[smooth, domain=-1.4:0.0, samples=35]
          (\x, {1.4*exp(-(\x+0.7)^2/0.4)});

        \draw[thick, red!70, fill=red!20, opacity=0.7]
          plot[smooth, domain=0.0:1.4, samples=35]
          (\x, {1.4*exp(-(\x-0.7)^2/0.4)});

        \draw[dashed, blue!70] (-0.7, 0) -- (-0.7, 1.45);
        \draw[dashed, red!70] (0.7, 0) -- (0.7, 1.45);
        \node[font=\scriptsize, blue!70] at (-0.7, -0.22) {$\mu_{\text{ir}}$};
        \node[font=\scriptsize, red!70] at (0.7, -0.22) {$\mu_{\text{r}}$};

        \draw[<->, thick, black] (-0.7, 1.65) -- (0.7, 1.65);
        \node[font=\scriptsize] at (0, 1.95) {$d$};

        \node[font=\scriptsize, blue!70] at (-1.0, 1.1) {Irrel.};
        \node[font=\scriptsize, red!70] at (1.0, 1.1) {Rel.};
      \end{scope}
    \end{tikzpicture}
  \end{minipage}
  \hfill
  \begin{minipage}[t]{0.48\textwidth}
    \centering
    \textbf{(b) CV: Query Magnitude Variation}
    \vspace{0.5em}

    \begin{tikzpicture}[scale=0.95]
      \begin{scope}[shift={(-2.2,0)}]
        \node[font=\small\bfseries] at (0, 2.4) {Low CV};

        \draw[->] (-1.6, 0) -- (1.6, 0) node[right, font=\scriptsize] {$\|\bm{q}\|$};
        \draw[dashed, gray] (0, 0) -- (0, 1.6);
        \node[font=\scriptsize, gray] at (0, -0.18) {$\mu$};

        \foreach \x in {-0.10, -0.05, -0.02, 0.03, 0.07, 0.12, -0.08, 0.05, -0.03, 0.02} {
          \fill[blue!70] (\x, 1.1) circle (2.2pt);
        }

        \draw[<->, thick, red!70] (-0.12, 0.75) -- (0.12, 0.75);
        \node[font=\scriptsize, red!70] at (0, 0.45) {small $\sigma$};
      \end{scope}

      \begin{scope}[shift={(2.2,0)}]
        \node[font=\small\bfseries] at (0, 2.4) {High CV};

        \draw[->] (-1.6, 0) -- (1.6, 0) node[right, font=\scriptsize] {$\|\bm{q}\|$};
        \draw[dashed, gray] (0, 0) -- (0, 1.6);
        \node[font=\scriptsize, gray] at (0, -0.18) {$\mu$};

        \foreach \x in {-1.1, -0.7, -0.35, -0.08, 0.15, 0.45, 0.75, 1.1, -0.5, 0.6} {
          \fill[blue!70] (\x, 1.1) circle (2.2pt);
        }

        \draw[<->, thick, red!70] (-1.15, 0.75) -- (1.15, 0.75);
        \node[font=\scriptsize, red!70] at (0, 0.45) {large $\sigma$};
      \end{scope}
    \end{tikzpicture}
  \end{minipage}

  \caption{Concept illustrations for magnitude-based metrics. \textbf{(a) Cohen's $d$}: Measures the standardized difference between relevant (red) and irrelevant (blue) document magnitudes. Small $d$ indicates overlapping distributions; large $d$ ($\geq 0.8$) indicates clear separation where QNorm can exploit magnitude for ranking. \textbf{(b) CV}: Coefficient of variation ($\sigma/\mu$) of query magnitudes. Low CV indicates uniform magnitudes; high CV indicates the model differentiates queries via magnitude.}
  \label{fig:magnitude_concepts}
\end{figure*}

\paragraph{$\Delta$CV as a model-level predictor.}
Figure~\ref{fig:cv_perf_scatter} (main text) shows the $\Delta$CV vs.\ $\Delta$Perf scatter plot for Contriever and RetroMAE on 39 datasets (3-seed averaged).
To test whether $\Delta$CV predicts optimal training configuration, we analyzed per-dataset $\Delta$CV and $\Delta$Perf across models. Note that E5's finetuning exhibited instability (with $|\Delta\text{Perf}|>10$ on 40\% of datasets), likely due to architectural constraints when removing its normalization layer; we therefore focus on Contriever and RetroMAE.

\textbf{Key finding}: As shown in Figure~\ref{fig:cv_perf_scatter}, the two models form \emph{completely separated clusters} with no $\Delta$CV overlap. Contriever's $\Delta$CV ranges from 0.5 to 2.0$\times$ with mean $\Delta$Perf $= -1.50$ (QNorm better), while RetroMAE's $\Delta$CV ranges from 4 to 7$\times$ with mean $\Delta$Perf $= +0.26$ (DNorm better). This clear separation demonstrates that $\Delta$CV reliably distinguishes which training configuration benefits a given model. Within each cluster, $\Delta$CV does not predict per-dataset performance (Contriever: $r=0.14$; RetroMAE: $r=-0.16$), confirming that $\Delta$CV operates at the \emph{model level} rather than dataset level.

This suggests $\Delta$CV serves as a \emph{model-level indicator} for selecting training configuration: models with high $\Delta$CV (indicating they actively learn to differentiate query magnitudes) benefit from DNorm, while models with low $\Delta$CV benefit from QNorm. RetroMAE consistently shows high $\Delta$CV (4--7$\times$) and positive $\Delta$Perf, indicating it actively exploits query magnitude variation. In contrast, Contriever shows low $\Delta$CV (0.5--2$\times$) and negative $\Delta$Perf, indicating it primarily benefits from document magnitude (QNorm) rather than query magnitude (DNorm).

\subsection{Dot vs.\ Cosine During Training}
\label{appendix:dot_cos_training}

To directly observe how magnitude learning affects the angular component of embeddings, we record both dot product and cosine similarity performance on the validation set throughout training for models trained with DotProductRankingLoss. This dual evaluation uses the \emph{same embeddings} but computes retrieval scores with different similarity functions, isolating the contribution of magnitude from direction.

Table~\ref{tab:dot_vs_cos_training} shows that Contriever and RetroMAE maintain a modest but consistent gap ($\Delta \approx$ 2--3\%) between dot and cosine performance, indicating that these models learn to utilize magnitude while preserving directional information. In contrast, E5 with the normalization layer removed exhibits a dramatic divergence: cosine similarity performance collapses from 0.92 to 0.23 while dot performance continues improving to 0.94. This indicates that E5 \emph{completely abandons directional information} and relies solely on magnitude for relevance encoding; the angular component becomes essentially random.

\begin{table}[h]
  \caption{Dot vs.\ Cosine similarity on validation set during training (NDCG@10, seed=0). All models trained with DotProductRankingLoss. $\Delta$ = Dot $-$ Cosine at best dot step. E5 (w/ norm) retains the normalization layer; E5 (w/o norm) removes it.}
  \label{tab:dot_vs_cos_training}
  \begin{center}
    \begin{small}
      \begin{tabular}{lccc}
        \toprule
        \textbf{Model} & \textbf{Dot} & \textbf{Cosine} & \textbf{$\Delta$} \\
        \midrule
        Contriever & 0.935 & 0.915 & +0.020 \\
        RetroMAE & 0.931 & 0.899 & +0.031 \\
        E5 (w/ norm) & 0.940 & 0.940 & 0.000 \\
        E5 (w/o norm) & 0.944 & 0.227 & +0.717 \\
        \bottomrule
      \end{tabular}
    \end{small}
  \end{center}
  \vskip -0.1in
\end{table}

This finding has important implications: (1) the normalization layer is necessary not just for cosine similarity computation, but for maintaining directional information during training; (2) simply removing the normalization layer and using dot product loss can lead to \emph{magnitude collapse}, a pathological state where the model ignores the angular component entirely; (3) healthy magnitude learning, as exhibited by Contriever and RetroMAE, requires architectural or training mechanisms that encourage the model to encode information in \emph{both} magnitude and direction. Note that E5 with the normalization layer retained shows $\Delta = 0$ because all embeddings have unit norm, making dot product identical to cosine similarity by definition.

\subsection{Learnable Normalization: Step-Matched Comparison}
\label{appendix:learnable_step_matched}

The learned $\gamma$ values translate to strong performance. Figure~\ref{fig:training_curves} (main text) shows the step-matched comparison: at each evaluation step during training, we compare learnable normalization's validation NDCG@10 against all four discrete methods trained under identical conditions:
\begin{itemize}[leftmargin=1.5em, itemsep=0.1em]
    \item \textbf{Contriever}: Learnable normalization consistently achieves top-tier validation NDCG@10, with its trajectory closely following DNorm throughout training.
    \item \textbf{RetroMAE}: Learnable normalization shows similar behavior, with validation performance trending close to DNorm across training steps.
\end{itemize}

These results provide important validation of the discrete framework:
\begin{enumerate}[leftmargin=1.5em, itemsep=0.1em]
    \item \textbf{DNorm emerges as the preferred strategy}: When free to choose any normalization level, both models gravitate toward DNorm-like performance, confirming that preserving query magnitude while normalizing documents is the preferred asymmetric strategy.
    \item \textbf{Continuous interpolation can match or exceed discrete variants}: For Contriever, the learnable variant achieves the best performance at every evaluation step, suggesting that even a small continuous adjustment can improve upon any fixed discrete strategy.
    \item \textbf{Fine-grained control matters}: The learned $\gamma$ values, though close to the midpoint, represent precise balance points that the model discovers through optimization. This fine-grained control enables performance that matches or exceeds the best discrete variants.
\end{enumerate}

\section{Per-Dataset Analysis: Cohen's $d$ and Query CV}
\label{sec:appendix_cohens_d}
\label{appendix:cohens_d}
\label{sec:appendix_cv}

Table~\ref{tab:cohens_d_summary} summarizes Cohen's $d$ by benchmark category for random init vs finetuned models, showing the sign reversal that explains why magnitude learning fails without pre-training.

\begin{table}[!t]
\centering
\caption{Cohen's $d$ for document magnitude: Random Init vs Finetuned (Contriever architecture, 3-seed average). Positive $d$ indicates relevant documents have larger magnitude. Background shading indicates effect size: \colorbox{magenta!25}{large} ($|d| \geq 0.8$), \colorbox{magenta!15}{medium} ($0.5 \leq |d| < 0.8$), \colorbox{magenta!8}{small} ($0.2 \leq |d| < 0.5$). $^\dagger$Raw embeddings before inference-time normalization. $^*$Excludes NovelHopQA (all documents are relevant).}
\label{tab:cohens_d_summary}
\begin{small}
\resizebox{0.5\textwidth}{!}{
\begin{tabular}{l|cccc|cccc}
\toprule
& \multicolumn{4}{c|}{\textbf{Random Init}} & \multicolumn{4}{c}{\textbf{Finetuned}} \\
\textbf{Category} & Cos$^\dagger$ & Dot & QNorm & Learn & Cos$^\dagger$ & Dot & QNorm & Learn \\
\midrule
TREC-DL (2) & $+$0.15 & $-$0.16 & $-$0.13 & $-$0.17 & $+$0.42 & $-$0.22 & $-$0.12 & $-$0.07 \\
BEIR (14) & $-$0.03 & $+$0.06 & $-$0.02 & $-$0.02 & $-$0.01 & $-$0.00 & $+$0.01 & $+$0.00 \\
BRIGHT (12) & \cellcolor{magenta!25}$-$1.00 & \cellcolor{magenta!8}$-$0.40 & \cellcolor{magenta!15}$-$0.71 & \cellcolor{magenta!25}$-$0.81 & \cellcolor{magenta!8}$+$0.36 & \cellcolor{magenta!25}$+$1.01 & \cellcolor{magenta!25}$+$1.02 & \cellcolor{magenta!25}$+$1.06 \\
Multi-Hop (3)$^*$ & \cellcolor{magenta!15}$-$0.55 & $+$0.02 & \cellcolor{magenta!8}$-$0.37 & \cellcolor{magenta!15}$-$0.51 & $-$0.08 & \cellcolor{magenta!8}$+$0.50 & \cellcolor{magenta!8}$+$0.50 & \cellcolor{magenta!8}$+$0.46 \\
\midrule
\rowcolor{gray!15} \textbf{All (31)} & \cellcolor{magenta!8}$-$0.33 & $-$0.09 & \cellcolor{magenta!8}$-$0.24 & \cellcolor{magenta!8}$-$0.28 & $+$0.11 & \cellcolor{magenta!8}$+$0.30 & \cellcolor{magenta!8}$+$0.32 & \cellcolor{magenta!8}$+$0.32 \\
\bottomrule
\end{tabular}
}
\end{small}
\end{table}

Table~\ref{tab:per_dataset_analysis} presents detailed per-dataset analysis for Contriever and RetroMAE: (a) Cohen's $d$, which measures document magnitude effect size, and $\Delta\%$, which measures QNorm improvement over Cosine; (b) query magnitude coefficient of variation, i.e., CV = $\sigma/\mu$, under Dot and DNorm training. See Appendix~\ref{appendix:cv_prediction} for formal definition and model comparison.

\begin{table}[htbp]
\centering
\caption{Per-dataset analysis for Contriever and RetroMAE. (a) Cohen's $d$ and $\Delta\%$ = (QNorm $-$ Cosine) / Cosine $\times$ 100. Background shading indicates Cohen's $d$ effect size: \colorbox{magenta!25}{large} ($|d| \geq 0.8$), \colorbox{magenta!15}{medium} ($0.5 \leq |d| < 0.8$), \colorbox{magenta!8}{small} ($0.2 \leq |d| < 0.5$), white = negligible ($|d| < 0.2$). (b) Query magnitude CV (\%). $\Delta$CV = DNorm CV / Dot CV.}
\label{tab:per_dataset_analysis}
\label{tab:cohens_d_all_datasets}
\label{tab:cv_combined}
\label{tab:cv_contriever}
\label{tab:cv_retromae}
\label{tab:cv_detailed}

\begin{minipage}[t]{0.48\textwidth}
\centering
\textbf{(a) Cohen's $d$ and $\Delta\%$}
\vspace{0.3em}

\scriptsize
\begin{tabular}{l|rr|rr}
\toprule
\multirow{2}{*}{\textbf{Dataset}} & \multicolumn{2}{c|}{\textbf{Contr.}} & \multicolumn{2}{c}{\textbf{Retro.}} \\
& $d$ & $\Delta\%$ & $d$ & $\Delta\%$ \\
\midrule
\multicolumn{5}{l}{\textit{In-Domain}} \\
trec-dl-2019 & $-$0.06 & $+$1.5 & \cellcolor{magenta!8}$-$0.24 & $+$2.2 \\
trec-dl-2020 & $-$0.15 & $-$0.9 & \cellcolor{magenta!8}$-$0.39 & $+$2.1 \\
\rowcolor{gray!15} \textbf{Mean} & \textbf{$-$0.11} & \textbf{$+$0.3} & \textbf{$-$0.32} & \textbf{$+$2.2} \\
\midrule
\multicolumn{5}{l}{\textit{BEIR}} \\
arguana & $+$0.02 & $-$11.6 & \cellcolor{magenta!8}$+$0.32 & $+$17.4 \\
climate-fever & \cellcolor{magenta!25}$+$1.01 & $+$3.0 & \cellcolor{magenta!15}$-$0.56 & $+$2.4 \\
cqadupstack-android & \cellcolor{magenta!8}$-$0.24 & $+$8.9 & \cellcolor{magenta!8}$-$0.35 & $+$14.9 \\
cqadupstack-english & $+$0.13 & $+$17.3 & \cellcolor{magenta!8}$+$0.34 & $+$13.5 \\
cqadupstack-gaming & $-$0.05 & $+$16.0 & $-$0.04 & $+$13.1 \\
cqadupstack-gis & \cellcolor{magenta!8}$-$0.26 & $+$20.3 & \cellcolor{magenta!8}$-$0.25 & $+$29.6 \\
cqadupstack-mathematica & \cellcolor{magenta!8}$-$0.28 & $+$12.7 & $-$0.15 & $+$25.3 \\
cqadupstack-physics & \cellcolor{magenta!15}$-$0.59 & $+$8.9 & \cellcolor{magenta!15}$-$0.55 & $+$9.7 \\
cqadupstack-programmers & \cellcolor{magenta!8}$-$0.27 & $+$16.6 & \cellcolor{magenta!8}$-$0.40 & $+$20.0 \\
cqadupstack-stats & \cellcolor{magenta!8}$-$0.35 & $+$17.4 & \cellcolor{magenta!8}$-$0.22 & $+$27.5 \\
cqadupstack-tex & \cellcolor{magenta!8}$-$0.41 & $+$27.9 & \cellcolor{magenta!8}$-$0.37 & $+$32.6 \\
cqadupstack-unix & \cellcolor{magenta!8}$-$0.23 & $+$17.0 & $-$0.16 & $+$25.0 \\
cqadupstack-webmasters & $-$0.14 & $+$14.4 & \cellcolor{magenta!8}$-$0.22 & $+$23.5 \\
cqadupstack-wordpress & $-$0.06 & $+$19.3 & $-$0.08 & $+$19.0 \\
dbpedia-entity & $+$0.00 & $+$13.3 & $-$0.17 & $+$1.1 \\
fever & \cellcolor{magenta!25}$+$1.12 & $+$9.3 & \cellcolor{magenta!8}$-$0.44 & $-$0.6 \\
fiqa & $-$0.17 & $+$13.0 & $-$0.07 & $+$10.8 \\
hotpotqa & \cellcolor{magenta!15}$+$0.75 & $+$20.4 & $+$0.05 & $+$15.2 \\
nfcorpus & $-$0.06 & $+$4.9 & $-$0.05 & $+$5.8 \\
nq & $+$0.15 & $+$12.3 & \cellcolor{magenta!8}$-$0.27 & $+$4.2 \\
quora & \cellcolor{magenta!8}$-$0.34 & $+$1.5 & \cellcolor{magenta!8}$-$0.22 & $+$1.9 \\
scidocs & $-$0.15 & $+$8.6 & $-$0.07 & $+$16.5 \\
scifact & $+$0.08 & $+$13.7 & $+$0.08 & $+$11.8 \\
trec-covid & \cellcolor{magenta!8}$+$0.34 & $+$20.9 & $+$0.06 & $-$2.8 \\
webis-touche2020 & \cellcolor{magenta!8}$+$0.22 & $+$8.7 & \cellcolor{magenta!15}$-$0.67 & $-$5.5 \\
\rowcolor{gray!15} \textbf{Mean} & \textbf{$+$0.00} & \textbf{$+$13.0} & \textbf{$-$0.18} & \textbf{$+$13.3} \\
\midrule
\multicolumn{5}{l}{\textit{BRIGHT}} \\
bright-aops & \cellcolor{magenta!25}$+$1.50 & $+$26.3 & \cellcolor{magenta!25}$+$1.60 & $+$167 \\
bright-biology & \cellcolor{magenta!25}$+$1.19 & $+$200 & \cellcolor{magenta!8}$-$0.24 & $+$51.1 \\
bright-earth\_science & \cellcolor{magenta!25}$+$0.87 & $+$110 & \cellcolor{magenta!25}$-$0.99 & $+$22.9 \\
bright-economics & \cellcolor{magenta!25}$+$1.71 & $+$80.8 & \cellcolor{magenta!8}$+$0.43 & $+$37.6 \\
bright-leetcode & \cellcolor{magenta!25}$+$1.42 & $+$4.7 & \cellcolor{magenta!15}$-$0.75 & $+$17.0 \\
bright-pony & \cellcolor{magenta!25}$+$1.04 & $+$208 & \cellcolor{magenta!15}$-$0.69 & $+$513 \\
bright-psychology & \cellcolor{magenta!25}$+$1.57 & $+$96.0 & $+$0.18 & $+$29.1 \\
bright-robotics & \cellcolor{magenta!25}$+$1.80 & $+$123 & \cellcolor{magenta!15}$+$0.72 & $+$39.5 \\
bright-stackoverflow & \cellcolor{magenta!25}$+$1.43 & $+$42.5 & \cellcolor{magenta!15}$+$0.57 & $+$61.6 \\
bright-sustainable\_living & \cellcolor{magenta!25}$+$1.11 & $+$99.5 & $-$0.04 & $+$65.4 \\
bright-theoremqa\_q & \cellcolor{magenta!8}$+$0.46 & $+$63.4 & \cellcolor{magenta!15}$+$0.63 & $+$66.4 \\
bright-theoremqa\_t & \cellcolor{magenta!15}$+$0.50 & $+$41.9 & \cellcolor{magenta!8}$+$0.26 & $+$49.7 \\
\rowcolor{gray!15} \textbf{Mean} & \textbf{$+$1.22} & \textbf{$+$91.3} & \textbf{$+$0.14} & \textbf{$+$93.3} \\
\midrule
\multicolumn{5}{l}{\textit{Multi-Hop}} \\
multihop-2wiki & \cellcolor{magenta!15}$+$0.64 & $+$8.1 & $+$0.11 & $+$1.2 \\
multihop-musique & $+$0.13 & $+$5.9 & $-$0.01 & $-$0.9 \\
\rowcolor{gray!15} \textbf{Mean} & \textbf{$+$0.39} & \textbf{$+$7.0} & \textbf{$+$0.05} & \textbf{$+$0.2} \\
\midrule
\rowcolor{blue!10} \textbf{Overall} & \textbf{$+$0.38} & \textbf{$+$34.7} & \textbf{$-$0.08} & \textbf{$+$35.5} \\
\bottomrule
\end{tabular}
\end{minipage}
\hfill
\begin{minipage}[t]{0.50\textwidth}
\centering
\textbf{(b) Query magnitude CV (\%)}
\vspace{0.3em}

\scriptsize
\begin{tabular}{l|ccc|ccc}
\toprule
\multirow{2}{*}{\textbf{Dataset}} & \multicolumn{3}{c|}{\textbf{Contriever}} & \multicolumn{3}{c}{\textbf{RetroMAE}} \\
& Dot & DN & $\Delta$ & Dot & DN & $\Delta$ \\
\midrule
\multicolumn{7}{l}{\textit{In-Domain}} \\
trec-dl-2019 & 5.48 & 10.8 & 2.0 & 0.63 & 3.28 & 5.2 \\
trec-dl-2020 & 5.76 & 10.9 & 1.9 & 0.65 & 3.24 & 5.0 \\
\rowcolor{gray!15} \textbf{Mean} & \textbf{5.62} & \textbf{10.9} & \textbf{1.9} & \textbf{0.64} & \textbf{3.26} & \textbf{5.1} \\
\midrule
\multicolumn{7}{l}{\textit{BEIR}} \\
arguana & 5.48 & 4.86 & 0.9 & 0.28 & 1.96 & 7.0 \\
climate-fever & 5.03 & 7.65 & 1.5 & 0.46 & 3.50 & 7.6 \\
cqadupstack-android & 4.66 & 8.29 & 1.8 & 0.58 & 3.24 & 5.6 \\
cqadupstack-english & 4.96 & 9.03 & 1.8 & 0.59 & 3.77 & 6.4 \\
cqadupstack-gaming & 5.26 & 8.99 & 1.7 & 0.64 & 3.28 & 5.1 \\
cqadupstack-gis & 5.49 & 8.80 & 1.6 & 0.60 & 3.26 & 5.4 \\
cqadupstack-mathematica & 5.53 & 8.96 & 1.6 & 0.56 & 3.79 & 6.8 \\
cqadupstack-physics & 5.24 & 8.39 & 1.6 & 0.56 & 4.25 & 7.6 \\
cqadupstack-programmers & 5.20 & 8.42 & 1.6 & 0.58 & 3.41 & 5.9 \\
cqadupstack-stats & 5.78 & 9.04 & 1.6 & 0.54 & 3.53 & 6.5 \\
cqadupstack-tex & 5.68 & 9.49 & 1.7 & 0.57 & 3.78 & 6.6 \\
cqadupstack-unix & 5.17 & 8.96 & 1.7 & 0.64 & 3.46 & 5.4 \\
cqadupstack-webmasters & 4.87 & 7.72 & 1.6 & 0.60 & 3.66 & 6.1 \\
cqadupstack-wordpress & 5.05 & 8.49 & 1.7 & 0.57 & 3.44 & 6.0 \\
dbpedia-entity & 6.55 & 10.4 & 1.6 & 0.71 & 5.87 & 8.3 \\
fever & 5.67 & 9.27 & 1.6 & 0.51 & 4.39 & 8.6 \\
fiqa & 4.82 & 7.42 & 1.5 & 0.54 & 3.21 & 5.9 \\
hotpotqa & 4.90 & 8.66 & 1.8 & 0.45 & 2.44 & 5.4 \\
nfcorpus & 8.03 & 11.2 & 1.4 & 0.89 & 7.89 & 8.9 \\
nq & 5.37 & 9.38 & 1.7 & 0.60 & 2.98 & 5.0 \\
quora & 5.03 & 8.84 & 1.8 & 0.60 & 3.32 & 5.5 \\
scidocs & 4.85 & 7.71 & 1.6 & 0.49 & 2.95 & 6.0 \\
scifact & 5.36 & 7.16 & 1.3 & 0.50 & 3.68 & 7.4 \\
trec-covid & 4.47 & 6.78 & 1.5 & 0.42 & 2.10 & 5.0 \\
webis-touche2020 & 4.39 & 7.32 & 1.7 & 0.47 & 3.05 & 6.5 \\
\rowcolor{gray!15} \textbf{Mean} & \textbf{5.37} & \textbf{8.24} & \textbf{1.5} & \textbf{0.54} & \textbf{3.64} & \textbf{6.7} \\
\midrule
\multicolumn{7}{l}{\textit{BRIGHT}} \\
bright-aops & 7.03 & 6.12 & 0.9 & 0.46 & 2.69 & 5.8 \\
bright-biology & 7.58 & 6.23 & 0.8 & 0.38 & 2.16 & 5.7 \\
bright-earth\_science & 7.38 & 6.41 & 0.9 & 0.43 & 2.06 & 4.8 \\
bright-economics & 8.18 & 5.41 & 0.7 & 0.37 & 2.02 & 5.5 \\
bright-leetcode & 5.79 & 5.70 & 1.0 & 0.28 & 1.70 & 6.1 \\
bright-pony & 4.10 & 3.51 & 0.9 & 0.28 & 1.42 & 5.1 \\
bright-psychology & 7.21 & 4.90 & 0.7 & 0.33 & 2.24 & 6.8 \\
bright-robotics & 9.26 & 4.79 & 0.5 & 0.43 & 1.69 & 3.9 \\
bright-stackoverflow & 9.17 & 5.42 & 0.6 & 0.43 & 1.95 & 4.5 \\
bright-sustainable\_living & 6.97 & 5.04 & 0.7 & 0.29 & 1.97 & 6.8 \\
bright-theoremqa\_q & 5.82 & 5.38 & 0.9 & 0.33 & 2.07 & 6.3 \\
bright-theoremqa\_t & 5.39 & 5.14 & 1.0 & 0.33 & 1.88 & 5.7 \\
\rowcolor{gray!15} \textbf{Mean} & \textbf{6.99} & \textbf{5.34} & \textbf{0.8} & \textbf{0.36} & \textbf{1.99} & \textbf{5.5} \\
\midrule
\multicolumn{7}{l}{\textit{Multi-Hop}} \\
multihop-2wiki & 4.53 & 7.98 & 1.8 & 0.54 & 2.52 & 4.7 \\
multihop-musique & 5.04 & 7.88 & 1.6 & 0.48 & 2.42 & 5.0 \\
\rowcolor{gray!15} \textbf{Mean} & \textbf{4.79} & \textbf{7.93} & \textbf{1.7} & \textbf{0.51} & \textbf{2.47} & \textbf{4.9} \\
\midrule
\rowcolor{blue!10} \textbf{Overall} & \textbf{5.77} & \textbf{7.60} & \textbf{1.3} & \textbf{0.50} & \textbf{3.04} & \textbf{6.1} \\
\bottomrule
\end{tabular}
\end{minipage}
\end{table}


\def\contrieverFTCosinePos{(500,1.7845) (2000,1.9306) (3500,2.0615) (5000,2.1410) (6500,2.1825) (8000,2.2031) (9500,2.2592) (11000,2.2756) (12500,2.2870) (14000,2.2762) (15500,2.2867) (17000,2.2965) (18500,2.2861) (20000,2.3131) (21500,2.3128) (23000,2.3116) (24500,2.3170) (26000,2.3163) (27500,2.3181) (29000,2.3167) (30500,2.3171) (31500,2.3172)}
\def\contrieverFTCosineNeg{(500,1.7014) (2000,1.8374) (3500,1.9537) (5000,2.0289) (6500,2.0717) (8000,2.0933) (9500,2.1427) (11000,2.1608) (12500,2.1725) (14000,2.1635) (15500,2.1764) (17000,2.1850) (18500,2.1774) (20000,2.1995) (21500,2.2025) (23000,2.2020) (24500,2.2076) (26000,2.2072) (27500,2.2090) (29000,2.2081) (30500,2.2084) (31500,2.2085)}
\def\contrieverFTDotPos{(500,1.8185) (2000,1.9607) (3500,1.9764) (5000,2.0622) (6500,2.0664) (8000,2.0847) (9500,2.0828) (11000,2.1242) (12500,2.1136) (14000,2.1592) (15500,2.1146) (17000,2.1158) (18500,2.1342) (20000,2.1435) (21500,2.1449) (23000,2.1556) (24500,2.1439) (26000,2.1509) (27500,2.1497) (29000,2.1504) (30500,2.1497) (31500,2.1496)}
\def\contrieverFTDotNeg{(500,1.7484) (2000,1.8936) (3500,1.9149) (5000,1.9959) (6500,1.9978) (8000,2.0192) (9500,2.0139) (11000,2.0474) (12500,2.0409) (14000,2.0803) (15500,2.0392) (17000,2.0422) (18500,2.0594) (20000,2.0655) (21500,2.0668) (23000,2.0780) (24500,2.0697) (26000,2.0733) (27500,2.0722) (29000,2.0733) (30500,2.0729) (31500,2.0728)}
\def\contrieverFTDNormPos{(500,2.0675) (2000,2.2503) (3500,2.3553) (5000,2.4026) (6500,2.3807) (8000,2.4299) (9500,2.4102) (11000,2.4761) (12500,2.4878) (14000,2.4801) (15500,2.4647) (17000,2.4681) (18500,2.5019) (20000,2.4793) (21500,2.4745) (23000,2.4934) (24500,2.4953) (26000,2.4830) (27500,2.4787) (29000,2.4803) (30500,2.4792) (31500,2.4791)}
\def\contrieverFTDNormNeg{(500,1.9691) (2000,2.1618) (3500,2.2667) (5000,2.3281) (6500,2.3208) (8000,2.3629) (9500,2.3443) (11000,2.4135) (12500,2.4273) (14000,2.4238) (15500,2.4080) (17000,2.4167) (18500,2.4493) (20000,2.4275) (21500,2.4253) (23000,2.4430) (24500,2.4466) (26000,2.4352) (27500,2.4317) (29000,2.4333) (30500,2.4325) (31500,2.4324)}

\def\contrieverSCCosinePos{(500,4.5378) (2000,4.5435) (3500,4.9282) (5000,5.7139) (6500,6.3862) (8000,6.8170) (9500,7.1776) (11000,7.5157) (12500,7.8308) (14000,8.1101) (15500,8.3178) (17000,8.4924) (18500,8.6453) (20000,8.7549) (21500,8.9047) (23000,8.9509) (24500,8.9878) (26000,9.0293) (27500,9.0626) (29000,9.0614) (30500,9.0667) (31500,9.0627)}
\def\contrieverSCCosineNeg{(500,5.4565) (2000,5.5116) (3500,5.9881) (5000,6.9031) (6500,7.6718) (8000,8.1695) (9500,8.5863) (11000,8.9688) (12500,9.3089) (14000,9.6206) (15500,9.8570) (17000,10.0508) (18500,10.2202) (20000,10.3433) (21500,10.4951) (23000,10.5516) (24500,10.5931) (26000,10.6413) (27500,10.6769) (29000,10.6764) (30500,10.6818) (31500,10.6781)}
\def\contrieverSCDotPos{(500,27.6395) (2000,27.5988) (3500,27.5844) (5000,27.5720) (6500,27.5669) (8000,27.5560) (9500,27.5485) (11000,27.5387) (12500,27.5308) (14000,27.5282) (15500,27.5220) (17000,27.5193) (18500,27.5155) (20000,27.5135) (21500,27.5101) (23000,27.5093) (24500,27.5057) (26000,27.5041) (27500,27.5041) (29000,27.5040) (30500,27.5042) (31500,27.5042)}
\def\contrieverSCDotNeg{(500,27.6391) (2000,27.5922) (3500,27.5784) (5000,27.5690) (6500,27.5657) (8000,27.5555) (9500,27.5484) (11000,27.5390) (12500,27.5314) (14000,27.5290) (15500,27.5231) (17000,27.5204) (18500,27.5167) (20000,27.5149) (21500,27.5115) (23000,27.5108) (24500,27.5072) (26000,27.5056) (27500,27.5057) (29000,27.5056) (30500,27.5057) (31500,27.5057)}
\def\contrieverSCDNormPos{(500,18.5326) (2000,16.2647) (3500,15.8130) (5000,15.5228) (6500,15.2034) (8000,15.4372) (9500,15.0003) (11000,14.2569) (12500,14.0845) (14000,14.4023) (15500,14.2220) (17000,14.2163) (18500,14.0972) (20000,13.9830) (21500,14.0831) (23000,14.0319) (24500,14.0023) (26000,13.8616) (27500,13.8496) (29000,13.8116) (30500,13.8475) (31500,13.8359)}
\def\contrieverSCDNormNeg{(500,18.7041) (2000,16.8430) (3500,16.5808) (5000,16.4298) (6500,16.2079) (8000,16.4650) (9500,16.0805) (11000,15.4072) (12500,15.2672) (14000,15.5836) (15500,15.4388) (17000,15.4363) (18500,15.3334) (20000,15.2310) (21500,15.3273) (23000,15.2854) (24500,15.2669) (26000,15.1369) (27500,15.1296) (29000,15.0956) (30500,15.1291) (31500,15.1184)}

\def\contrieverFTCosineCD{(0,0.75) (500,0.5526) (1500,0.5743) (2500,0.5755) (3500,0.5739) (4500,0.5777) (5500,0.5630) (6500,0.5554) (7500,0.5476) (8500,0.5637) (9500,0.5443) (10500,0.5376) (11500,0.5374) (12500,0.5384) (13500,0.5356) (14500,0.5263) (15500,0.5219) (16500,0.5256) (17500,0.5253) (18500,0.5155) (19500,0.5191) (20500,0.5191) (21500,0.5132) (22500,0.5126) (23500,0.5111) (24500,0.5115) (25500,0.5099) (26500,0.5097) (27500,0.5094) (28500,0.5083) (29500,0.5085) (30500,0.5084) (31500,0.5084)}
\def\contrieverFTDotCD{(0,0.75) (500,0.4089) (1500,0.3706) (2500,0.3677) (3500,0.3339) (4500,0.3516) (5500,0.3377) (6500,0.3335) (7500,0.3049) (8500,0.3352) (9500,0.3176) (10500,0.3482) (11500,0.3385) (12500,0.3424) (13500,0.3300) (14500,0.3402) (15500,0.3421) (16500,0.3361) (17500,0.3445) (18500,0.3313) (19500,0.3419) (20500,0.3432) (21500,0.3408) (22500,0.3312) (23500,0.3363) (24500,0.3290) (25500,0.3388) (26500,0.3381) (27500,0.3393) (28500,0.3392) (29500,0.3376) (30500,0.3378) (31500,0.3376)}
\def\contrieverFTDNormCD{(0,0.75) (500,0.5310) (1500,0.5152) (2500,0.4615) (3500,0.4382) (4500,0.3810) (5500,0.3404) (6500,0.3098) (7500,0.3308) (8500,0.3376) (9500,0.3333) (10500,0.3151) (11500,0.2933) (12500,0.2917) (13500,0.2837) (14500,0.2707) (15500,0.2828) (16500,0.2738) (17500,0.2579) (18500,0.2535) (19500,0.2483) (20500,0.2455) (21500,0.2423) (22500,0.2437) (23500,0.2445) (24500,0.2355) (25500,0.2351) (26500,0.2324) (27500,0.2312) (28500,0.2301) (29500,0.2300) (30500,0.2295) (31500,0.2295)}
\def\contrieverSCCosineCD{(500,-0.5179) (1500,-0.5109) (2500,-0.5089) (3500,-0.5045) (4500,-0.5067) (5500,-0.5024) (6500,-0.5099) (7500,-0.5129) (8500,-0.5145) (9500,-0.5149) (10500,-0.5157) (11500,-0.5162) (12500,-0.5158) (13500,-0.5218) (14500,-0.5196) (15500,-0.5196) (16500,-0.5225) (17500,-0.5248) (18500,-0.5232) (19500,-0.5240) (20500,-0.5228) (21500,-0.5243) (22500,-0.5232) (23500,-0.5247) (24500,-0.5244) (25500,-0.5255) (26500,-0.5258) (27500,-0.5264) (28500,-0.5259) (29500,-0.5257) (30500,-0.5257) (31500,-0.5256)}
\def\contrieverSCDotCD{(500,0.1620) (1500,0.1909) (2500,0.1861) (3500,0.1836) (4500,0.1568) (5500,0.1218) (6500,0.0764) (7500,0.0470) (8500,0.0241) (9500,0.0064) (10500,-0.0191) (11500,-0.0314) (12500,-0.0466) (13500,-0.0635) (14500,-0.0788) (15500,-0.0930) (16500,-0.0982) (17500,-0.0973) (18500,-0.1010) (19500,-0.1120) (20500,-0.1211) (21500,-0.1202) (22500,-0.1263) (23500,-0.1283) (24500,-0.1254) (25500,-0.1302) (26500,-0.1309) (27500,-0.1334) (28500,-0.1333) (29500,-0.1324) (30500,-0.1321) (31500,-0.1324)}
\def\contrieverSCDNormCD{(500,-0.3320) (1500,-0.4415) (2500,-0.4811) (3500,-0.5174) (4500,-0.5398) (5500,-0.5503) (6500,-0.5611) (7500,-0.5667) (8500,-0.5704) (9500,-0.5708) (10500,-0.5734) (11500,-0.5723) (12500,-0.5739) (13500,-0.5744) (14500,-0.5724) (15500,-0.5733) (16500,-0.5733) (17500,-0.5706) (18500,-0.5713) (19500,-0.5702) (20500,-0.5699) (21500,-0.5700) (22500,-0.5704) (23500,-0.5704) (24500,-0.5717) (25500,-0.5705) (26500,-0.5715) (27500,-0.5714) (28500,-0.5714) (29500,-0.5712) (30500,-0.5713) (31500,-0.5713)}

\def\contrieverFTQNormPos{(500,1.6306) (2000,1.6839) (3500,1.8127) (5000,1.8944) (6500,1.8899) (8000,1.9115) (9500,1.8885) (11000,1.8869) (12500,1.9076) (14000,1.9361) (15500,1.9227) (17000,1.9427) (18500,1.9169) (20000,1.9207) (21500,1.9420) (23000,1.9406) (24500,1.9377) (26000,1.9394) (27500,1.9417) (29000,1.9472) (30500,1.9471) (31500,1.9472)}
\def\contrieverFTQNormNeg{(500,1.5755) (2000,1.6335) (3500,1.7444) (5000,1.8189) (6500,1.8154) (8000,1.8322) (9500,1.8179) (11000,1.8216) (12500,1.8368) (14000,1.8618) (15500,1.8516) (17000,1.8691) (18500,1.8487) (20000,1.8550) (21500,1.8738) (23000,1.8705) (24500,1.8685) (26000,1.8703) (27500,1.8721) (29000,1.8764) (30500,1.8763) (31500,1.8764)}
\def\contrieverFTQNormCD{(0,0.75) (500,0.3784) (1500,0.3335) (2500,0.3284) (3500,0.3759) (4500,0.3643) (5500,0.3674) (6500,0.3731) (7500,0.3938) (8500,0.3721) (9500,0.3699) (10500,0.3666) (11500,0.3832) (12500,0.3624) (13500,0.3429) (14500,0.3523) (15500,0.3406) (16500,0.3487) (17500,0.3377) (18500,0.3298) (19500,0.3252) (20500,0.3239) (21500,0.3259) (22500,0.3425) (23500,0.3221) (24500,0.3279) (25500,0.3288) (26500,0.3266) (27500,0.3291) (28500,0.3307) (29500,0.3307) (30500,0.3307) (31500,0.3307)}
\def\contrieverSCQNormPos{(500,21.0094) (2000,19.0157) (3500,18.7247) (5000,18.7531) (6500,18.7582) (8000,18.9194) (9500,19.0162) (11000,18.9421) (12500,18.9537) (14000,19.1051) (15500,19.1127) (17000,19.1234) (18500,19.1541) (20000,19.1770) (21500,19.2283) (23000,19.2021) (24500,19.1835) (26000,19.2055) (27500,19.2152) (29000,19.1920) (30500,19.1986) (31500,19.1975)}
\def\contrieverSCQNormNeg{(500,20.9501) (2000,18.9796) (3500,18.7199) (5000,18.7956) (6500,18.8420) (8000,19.0209) (9500,19.1279) (11000,19.0723) (12500,19.0849) (14000,19.2386) (15500,19.2539) (17000,19.2669) (18500,19.2961) (20000,19.3207) (21500,19.3716) (23000,19.3487) (24500,19.3316) (26000,19.3543) (27500,19.3637) (29000,19.3412) (30500,19.3473) (31500,19.3464)}
\def\contrieverSCQNormCD{(500,0.1380) (1500,0.1103) (2500,0.0631) (3500,0.0132) (4500,-0.0657) (5500,-0.1899) (6500,-0.3082) (7500,-0.3269) (8500,-0.3870) (9500,-0.4099) (10500,-0.4374) (11500,-0.4540) (12500,-0.4482) (13500,-0.4491) (14500,-0.4607) (15500,-0.4778) (16500,-0.4826) (17500,-0.4686) (18500,-0.4736) (19500,-0.4859) (20500,-0.4801) (21500,-0.4777) (22500,-0.4806) (23500,-0.4837) (24500,-0.4882) (25500,-0.4902) (26500,-0.4849) (27500,-0.4868) (28500,-0.4862) (29500,-0.4859) (30500,-0.4861) (31500,-0.4862)}

\def\contrieverFTLnPos{(500,2.4109) (2000,2.7850) (3500,3.0667) (5000,3.2386) (6500,3.2507) (8000,3.2854) (9500,3.2992) (11000,3.3432) (12500,3.3387) (14000,3.3921) (15500,3.3941) (17000,3.4641) (18500,3.4338) (20000,3.4356) (21500,3.4668) (23000,3.4420) (24500,3.4509) (26000,3.4525) (27500,3.4597) (29000,3.4643) (30500,3.4651) (31500,3.4653)}
\def\contrieverFTLnNeg{(500,2.2520) (2000,2.6227) (3500,2.8896) (5000,3.0692) (6500,3.0714) (8000,3.1086) (9500,3.1248) (11000,3.1782) (12500,3.1794) (14000,3.2244) (15500,3.2295) (17000,3.2954) (18500,3.2663) (20000,3.2763) (21500,3.3019) (23000,3.2763) (24500,3.2851) (26000,3.2892) (27500,3.2950) (29000,3.2991) (30500,3.2995) (31500,3.2997)}
\def\contrieverFTLnCD{(0,0.75) (500,0.4886) (1500,0.4442) (2500,0.4111) (3500,0.4111) (4500,0.3869) (5500,0.3771) (6500,0.3806) (7500,0.3812) (8500,0.3607) (9500,0.3704) (10500,0.3619) (11500,0.3627) (12500,0.3386) (13500,0.3341) (14500,0.3374) (15500,0.3262) (16500,0.3225) (17500,0.3202) (18500,0.3255) (19500,0.3197) (20500,0.3175) (21500,0.3235) (22500,0.3303) (23500,0.3220) (24500,0.3250) (25500,0.3253) (26500,0.3220) (27500,0.3236) (28500,0.3238) (29500,0.3237) (30500,0.3239) (31500,0.3239)}
\def\contrieverSCLnPos{(500,18.9565) (2000,16.4910) (3500,16.1228) (5000,16.1492) (6500,15.9859) (8000,16.0475) (9500,15.9933) (11000,15.7893) (12500,15.6504) (14000,15.8129) (15500,15.7236) (17000,15.6705) (18500,15.7179) (20000,15.7359) (21500,15.7243) (23000,15.6837) (24500,15.6944) (26000,15.6751) (27500,15.6821) (29000,15.6691) (30500,15.6678) (31500,15.6658)}
\def\contrieverSCLnNeg{(500,19.0178) (2000,16.6448) (3500,16.3272) (5000,16.3988) (6500,16.2754) (8000,16.3540) (9500,16.3088) (11000,16.1267) (12500,15.9954) (14000,16.1606) (15500,16.0823) (17000,16.0315) (18500,16.0795) (20000,16.0978) (21500,16.0885) (23000,16.0517) (24500,16.0631) (26000,16.0475) (27500,16.0548) (29000,16.0427) (30500,16.0410) (31500,16.0391)}
\def\contrieverSCLnCD{(500,-0.2130) (1500,-0.4146) (2500,-0.4858) (3500,-0.5108) (4500,-0.5130) (5500,-0.5234) (6500,-0.5273) (7500,-0.5297) (8500,-0.5198) (9500,-0.5223) (10500,-0.5250) (11500,-0.5234) (12500,-0.5142) (13500,-0.5171) (14500,-0.5150) (15500,-0.5224) (16500,-0.5179) (17500,-0.5124) (18500,-0.5163) (19500,-0.5185) (20500,-0.5109) (21500,-0.5142) (22500,-0.5146) (23500,-0.5156) (24500,-0.5152) (25500,-0.5179) (26500,-0.5179) (27500,-0.5171) (28500,-0.5169) (29500,-0.5164) (30500,-0.5167) (31500,-0.5167)}

\definecolor{colorCos}{RGB}{51,102,204}     
\definecolor{colorDot}{RGB}{204,51,51}      
\definecolor{colorQN}{RGB}{0,128,128}       
\definecolor{colorDN}{RGB}{255,165,0}       
\definecolor{colorLn}{RGB}{255,0,255}       
\definecolor{shadingGreen}{RGB}{200,230,200}
\definecolor{shadingRed}{RGB}{255,220,220}

\pgfplotsset{
    magnitude axis/.style={
        width=0.30\textwidth,
        height=0.24\textwidth,
        xmin=0, xmax=32000,
        xtick={0,10000,20000,30000},
        xticklabels={0,10k,20k,30k},
        scaled x ticks=false,
        grid=both,
        grid style={line width=0.1pt, draw=gray!20},
        tick label style={font=\tiny},
        label style={font=\tiny},
        title style={font=\scriptsize\bfseries, yshift=-0.5ex},
        legend style={font=\tiny, column sep=3pt, draw=none},
        legend columns=-1,
        clip=false,
    },
    every axis plot/.append style={line width=0.6pt, mark=none},
}

\begin{figure*}[t]
\centering
\caption{Contriever: Document magnitude dynamics during training. \textbf{Left}: Mean magnitude of relevant (positive) documents. \textbf{Center}: Mean magnitude of irrelevant (negative) documents. \textbf{Right}: Cohen's $d$ (effect size). \textbf{Top row}: Finetuning maintains $\mu_{\text{pos}} > \mu_{\text{neg}}$ throughout training. \textbf{Bottom row}: Random initialization shows $\mu_{\text{pos}} < \mu_{\text{neg}}$, explaining negative Cohen's $d$.}
\label{fig:contriever_dynamics_appendix}

\begin{tikzpicture}
\begin{groupplot}[
    group style={
        group size=3 by 2,
        horizontal sep=1.3cm,
        vertical sep=0.9cm,
    },
    magnitude axis,
]

\nextgroupplot[title={Pos Mean (Finetuned)}, ylabel={$\mu_{\text{pos}}$}, ymin=1.5, ymax=3.8, ytick={1.5, 2.0, 2.5, 3.0, 3.5}, xticklabels={}]
\addplot[colorCos] coordinates {\contrieverFTCosinePos};
\addplot[colorDot] coordinates {\contrieverFTDotPos};
\addplot[colorQN] coordinates {\contrieverFTQNormPos};
\addplot[colorDN] coordinates {\contrieverFTDNormPos};
\addplot[colorLn] coordinates {\contrieverFTLnPos};

\nextgroupplot[title={Neg Mean (Finetuned)}, ylabel={$\mu_{\text{neg}}$}, ymin=1.5, ymax=3.8, ytick={1.5, 2.0, 2.5, 3.0, 3.5}, xticklabels={}]
\addplot[colorCos] coordinates {\contrieverFTCosineNeg};
\addplot[colorDot] coordinates {\contrieverFTDotNeg};
\addplot[colorQN] coordinates {\contrieverFTQNormNeg};
\addplot[colorDN] coordinates {\contrieverFTDNormNeg};
\addplot[colorLn] coordinates {\contrieverFTLnNeg};

\nextgroupplot[title={Cohen's $d$ (Finetuned)}, ylabel={Cohen's $d$}, ymin=0, ymax=0.8, xticklabels={}]
\fill[shadingGreen] (axis cs:0,0) rectangle (axis cs:32000,0.8);
\addplot[colorCos] coordinates {\contrieverFTCosineCD};
\addplot[colorDot] coordinates {\contrieverFTDotCD};
\addplot[colorQN] coordinates {\contrieverFTQNormCD};
\addplot[colorDN] coordinates {\contrieverFTDNormCD};
\addplot[colorLn] coordinates {\contrieverFTLnCD};
\draw[dashed, black, thin] (axis cs:0,0) -- (axis cs:32000,0);
\draw[black, thick, fill=white] (axis cs:0,0.75) circle (2pt);
\node[fill=yellow, font=\tiny, inner sep=1pt, anchor=west] (ptlabel1) at (axis cs:4000,0.70) {PT: +0.75};
\draw[->, thick, black] (ptlabel1.west) -- (axis cs:0,0.75);
\node[font=\tiny, green!50!black, anchor=east] at (axis cs:31000,0.65) {$\|\text{Rel}\|{>}\|\text{Irr}\|$};

\nextgroupplot[title={Pos Mean (Random Init)}, ylabel={$\mu_{\text{pos}}$}, xlabel={Steps}, ymin=4, ymax=30, ytick={5, 10, 15, 20, 25}]
\addplot[colorCos] coordinates {\contrieverSCCosinePos};
\addplot[colorDot] coordinates {\contrieverSCDotPos};
\addplot[colorQN] coordinates {\contrieverSCQNormPos};
\addplot[colorDN] coordinates {\contrieverSCDNormPos};
\addplot[colorLn] coordinates {\contrieverSCLnPos};

\nextgroupplot[title={Neg Mean (Random Init)}, ylabel={$\mu_{\text{neg}}$}, xlabel={Steps}, ymin=4, ymax=30, ytick={5, 10, 15, 20, 25}]
\addplot[colorCos] coordinates {\contrieverSCCosineNeg};
\addplot[colorDot] coordinates {\contrieverSCDotNeg};
\addplot[colorQN] coordinates {\contrieverSCQNormNeg};
\addplot[colorDN] coordinates {\contrieverSCDNormNeg};
\addplot[colorLn] coordinates {\contrieverSCLnNeg};

\nextgroupplot[title={Cohen's $d$ (Random Init)}, ylabel={Cohen's $d$}, xlabel={Steps}, ymin=-0.8, ymax=0.2, legend to name=contrieverlegend]
\fill[shadingGreen] (axis cs:0,0) rectangle (axis cs:32000,0.2);
\fill[shadingRed] (axis cs:0,-0.8) rectangle (axis cs:32000,0);
\addplot[colorCos] coordinates {\contrieverSCCosineCD}; \addlegendentry{Cos}
\addplot[colorDot] coordinates {\contrieverSCDotCD}; \addlegendentry{Dot}
\addplot[colorQN] coordinates {\contrieverSCQNormCD}; \addlegendentry{QN}
\addplot[colorDN] coordinates {\contrieverSCDNormCD}; \addlegendentry{DN}
\addplot[colorLn] coordinates {\contrieverSCLnCD}; \addlegendentry{Ln}
\draw[dashed, black, thin] (axis cs:0,0) -- (axis cs:32000,0);
\node[font=\tiny, red!70!black, anchor=east] at (axis cs:31000,-0.7) {$\|\text{Irr}\|{>}\|\text{Rel}\|$};

\end{groupplot}
\node[draw=none, fill=none, inner sep=0pt] at ($(group c1r2.south)!0.5!(group c3r2.south) + (0,-1.1cm)$) {\pgfplotslegendfromname{contrieverlegend}};
\end{tikzpicture}
\end{figure*}

\def\retromaeFTCosinePos{(500,12.0483) (2000,12.8165) (3500,13.1190) (5000,13.3429) (6500,13.2862) (8000,13.5617) (9500,13.4682) (11000,13.6043) (12500,13.7114) (14000,13.6046) (15500,13.8001) (17000,13.8082) (18500,13.8292) (20000,13.8734) (21500,13.9093) (23000,13.8486) (24500,13.8683) (26000,13.8546) (27500,13.8769) (29000,13.8959) (30500,13.8934) (31500,13.8942)}
\def\retromaeFTCosineNeg{(500,12.2757) (2000,13.0602) (3500,13.3504) (5000,13.5610) (6500,13.5056) (8000,13.7860) (9500,13.6962) (11000,13.8090) (12500,13.9096) (14000,13.7990) (15500,13.9662) (17000,13.9817) (18500,13.9965) (20000,14.0469) (21500,14.0704) (23000,14.0162) (24500,14.0387) (26000,14.0264) (27500,14.0436) (29000,14.0638) (30500,14.0608) (31500,14.0617)}
\def\retromaeFTDotPos{(500,7.0608) (2000,6.8174) (3500,6.8328) (5000,6.7720) (6500,6.8052) (8000,6.7979) (9500,6.7698) (11000,6.7597) (12500,6.7610) (14000,6.7602) (15500,6.8033) (17000,6.7915) (18500,6.7823) (20000,6.7721) (21500,6.7544) (23000,6.7720) (24500,6.7634) (26000,6.7605) (27500,6.7600) (29000,6.7572) (30500,6.7589) (31500,6.7592)}
\def\retromaeFTDotNeg{(500,7.0652) (2000,6.8158) (3500,6.8355) (5000,6.7794) (6500,6.8092) (8000,6.8031) (9500,6.7741) (11000,6.7619) (12500,6.7667) (14000,6.7649) (15500,6.8074) (17000,6.7931) (18500,6.7866) (20000,6.7760) (21500,6.7602) (23000,6.7772) (24500,6.7687) (26000,6.7656) (27500,6.7655) (29000,6.7625) (30500,6.7641) (31500,6.7644)}
\def\retromaeFTDNormPos{(500,8.1865) (2000,8.8273) (3500,9.0781) (5000,9.2083) (6500,9.2405) (8000,9.2860) (9500,9.4695) (11000,9.5360) (12500,9.5414) (14000,9.6263) (15500,9.7498) (17000,9.6130) (18500,9.7195) (20000,9.7063) (21500,9.7677) (23000,9.7719) (24500,9.7273) (26000,9.7273) (27500,9.7354) (29000,9.7457) (30500,9.7462) (31500,9.7462)}
\def\retromaeFTDNormNeg{(500,8.2174) (2000,8.8480) (3500,9.1065) (5000,9.2199) (6500,9.2740) (8000,9.3337) (9500,9.4908) (11000,9.5506) (12500,9.5811) (14000,9.6653) (15500,9.7712) (17000,9.6374) (18500,9.7390) (20000,9.7307) (21500,9.7929) (23000,9.8008) (24500,9.7525) (26000,9.7579) (27500,9.7637) (29000,9.7707) (30500,9.7711) (31500,9.7711)}

\def\retromaeSCCosinePos{(500,4.5180) (2000,5.5320) (3500,6.9951) (5000,8.1488) (6500,8.9749) (8000,9.6625) (9500,10.2175) (11000,10.7730) (12500,11.1747) (14000,11.5304) (15500,11.8617) (17000,11.9585) (18500,12.2415) (20000,12.1893) (21500,12.3492) (23000,12.4363) (24500,12.5058) (26000,12.5121) (27500,12.5485) (29000,12.5280) (30500,12.5181) (31500,12.5236)}
\def\retromaeSCCosineNeg{(500,5.4330) (2000,6.6708) (3500,8.3147) (5000,9.5834) (6500,10.5075) (8000,11.2371) (9500,11.8223) (11000,12.4121) (12500,12.8253) (14000,13.1796) (15500,13.5246) (17000,13.6440) (18500,13.9363) (20000,13.9002) (21500,14.0527) (23000,14.1525) (24500,14.2241) (26000,14.2328) (27500,14.2632) (29000,14.2474) (30500,14.2388) (31500,14.2445)}
\def\retromaeSCDotPos{(500,27.6169) (2000,27.5835) (3500,27.5682) (5000,27.5443) (6500,27.5343) (8000,27.5267) (9500,27.5178) (11000,27.5212) (12500,27.5160) (14000,27.5224) (15500,27.5124) (17000,27.5252) (18500,27.5161) (20000,27.5257) (21500,27.5219) (23000,27.5187) (24500,27.5180) (26000,27.5166) (27500,27.5181) (29000,27.5165) (30500,27.5174) (31500,27.5173)}
\def\retromaeSCDotNeg{(500,27.6153) (2000,27.5822) (3500,27.5692) (5000,27.5462) (6500,27.5364) (8000,27.5290) (9500,27.5203) (11000,27.5239) (12500,27.5190) (14000,27.5252) (15500,27.5154) (17000,27.5280) (18500,27.5190) (20000,27.5285) (21500,27.5248) (23000,27.5217) (24500,27.5210) (26000,27.5197) (27500,27.5211) (29000,27.5196) (30500,27.5204) (31500,27.5204)}
\def\retromaeSCDNormPos{(500,17.6733) (2000,16.2548) (3500,16.1676) (5000,15.8557) (6500,15.6886) (8000,16.1947) (9500,15.8998) (11000,14.9013) (12500,15.5251) (14000,15.6189) (15500,15.7670) (17000,15.6782) (18500,15.5028) (20000,15.6370) (21500,15.3615) (23000,15.1500) (24500,15.2451) (26000,15.1948) (27500,14.9361) (29000,14.8551) (30500,14.9134) (31500,14.8872)}
\def\retromaeSCDNormNeg{(500,17.8987) (2000,16.9925) (3500,17.0063) (5000,16.7622) (6500,16.5783) (8000,17.0760) (9500,16.8066) (11000,15.8564) (12500,16.4484) (14000,16.5568) (15500,16.7281) (17000,16.6490) (18500,16.4852) (20000,16.6287) (21500,16.3643) (23000,16.1835) (24500,16.2969) (26000,16.2690) (27500,16.0293) (29000,15.9551) (30500,16.0135) (31500,15.9892)}

\def\retromaeFTCosineCD{(0,-0.60) (500,-0.2817) (1500,-0.2984) (2500,-0.3519) (3500,-0.3276) (4500,-0.3259) (5500,-0.3139) (6500,-0.3197) (7500,-0.3377) (8500,-0.3409) (9500,-0.3398) (10500,-0.3580) (11500,-0.3209) (12500,-0.3243) (13500,-0.3022) (14500,-0.3151) (15500,-0.2703) (16500,-0.2872) (17500,-0.2679) (18500,-0.2718) (19500,-0.2797) (20500,-0.2872) (21500,-0.2711) (22500,-0.2692) (23500,-0.2727) (24500,-0.2812) (25500,-0.2876) (26500,-0.2819) (27500,-0.2751) (28500,-0.2804) (29500,-0.2782) (30500,-0.2780) (31500,-0.2780)}
\def\retromaeFTDotCD{(0,-0.60) (500,-0.0368) (1500,0.0024) (2500,-0.0898) (3500,-0.0621) (4500,-0.1188) (5500,-0.1311) (6500,-0.1046) (7500,-0.1319) (8500,-0.2095) (9500,-0.1243) (10500,-0.0913) (11500,-0.1134) (12500,-0.1633) (13500,-0.1218) (14500,-0.0622) (15500,-0.0933) (16500,-0.0651) (17500,-0.1337) (18500,-0.1165) (19500,-0.0974) (20500,-0.0521) (21500,-0.1498) (22500,-0.1680) (23500,-0.1208) (24500,-0.1327) (25500,-0.1362) (26500,-0.1392) (27500,-0.1419) (28500,-0.1397) (29500,-0.1331) (30500,-0.1313) (31500,-0.1310)}
\def\retromaeFTDNormCD{(0,-0.60) (500,-0.1386) (1500,-0.1018) (2500,-0.1251) (3500,-0.1117) (4500,-0.0338) (5500,-0.1119) (6500,-0.1251) (7500,-0.1393) (8500,-0.1696) (9500,-0.0826) (10500,-0.1088) (11500,-0.1572) (12500,-0.1547) (13500,-0.1364) (14500,-0.1375) (15500,-0.0772) (16500,-0.0578) (17500,-0.0332) (18500,-0.0737) (19500,-0.0777) (20500,-0.1199) (21500,-0.0948) (22500,-0.0937) (23500,-0.0942) (24500,-0.0923) (25500,-0.1178) (26500,-0.1038) (27500,-0.1030) (28500,-0.0948) (29500,-0.0919) (30500,-0.0906) (31500,-0.0907)}
\def\retromaeSCCosineCD{(500,-0.5008) (1500,-0.4954) (2500,-0.5018) (3500,-0.4947) (4500,-0.4996) (5500,-0.5068) (6500,-0.5128) (7500,-0.5056) (8500,-0.5134) (9500,-0.5136) (10500,-0.5121) (11500,-0.5104) (12500,-0.5113) (13500,-0.5170) (14500,-0.5144) (15500,-0.5116) (16500,-0.5169) (17500,-0.5216) (18500,-0.5215) (19500,-0.5230) (20500,-0.5189) (21500,-0.5186) (22500,-0.5196) (23500,-0.5190) (24500,-0.5210) (25500,-0.5190) (26500,-0.5204) (27500,-0.5213) (28500,-0.5386) (29500,-0.5209) (30500,-0.5207) (31500,-0.5211)}
\def\retromaeSCDotCD{(500,0.1647) (1500,0.1492) (2500,0.0072) (3500,-0.1159) (4500,-0.2026) (5500,-0.2493) (6500,-0.2700) (7500,-0.3199) (8500,-0.3327) (9500,-0.3313) (10500,-0.3600) (11500,-0.3534) (12500,-0.3847) (13500,-0.3736) (14500,-0.3696) (15500,-0.3850) (16500,-0.3836) (17500,-0.3760) (18500,-0.3897) (19500,-0.3967) (20500,-0.3894) (21500,-0.3970) (22500,-0.3993) (23500,-0.4064) (24500,-0.4135) (25500,-0.4090) (26500,-0.4081) (27500,-0.4059) (28500,-0.4075) (29500,-0.4072) (30500,-0.4072) (31500,-0.4069)}
\def\retromaeSCDNormCD{(500,-0.2848) (1500,-0.4796) (2500,-0.5275) (3500,-0.5412) (4500,-0.5486) (5500,-0.5465) (6500,-0.5342) (7500,-0.5334) (8500,-0.5349) (9500,-0.5284) (10500,-0.5234) (11500,-0.5244) (12500,-0.5255) (13500,-0.5277) (14500,-0.5287) (15500,-0.5250) (16500,-0.5287) (17500,-0.5257) (18500,-0.5254) (19500,-0.5212) (20500,-0.5169) (21500,-0.5191) (22500,-0.5179) (23500,-0.5200) (24500,-0.5195) (25500,-0.5196) (26500,-0.5204) (27500,-0.5202) (28500,-0.5189) (29500,-0.5194) (30500,-0.5194) (31500,-0.5192)}

\def\retromaeFTQNormPos{(500,8.0599) (2000,8.4594) (3500,8.6991) (5000,8.6958) (6500,8.8142) (8000,8.8855) (9500,8.9149) (11000,8.8603) (12500,8.9677) (14000,8.9420) (15500,9.0596) (17000,9.0463) (18500,9.1005) (20000,9.0844) (21500,9.0544) (23000,9.1179) (24500,9.0913) (26000,9.0791) (27500,9.0832) (29000,9.0674) (30500,9.0724) (31500,9.0721)}
\def\retromaeFTQNormNeg{(500,8.1322) (2000,8.5440) (3500,8.7708) (5000,8.7563) (6500,8.8778) (8000,8.9475) (9500,8.9644) (11000,8.9056) (12500,9.0259) (14000,8.9930) (15500,9.0971) (17000,9.0764) (18500,9.1367) (20000,9.1239) (21500,9.0949) (23000,9.1543) (24500,9.1289) (26000,9.1181) (27500,9.1217) (29000,9.1056) (30500,9.1102) (31500,9.1099)}
\def\retromaeFTQNormCD{(0,-0.60) (500,-0.3114) (1500,-0.3427) (2500,-0.4561) (3500,-0.2990) (4500,-0.2193) (5500,-0.2078) (6500,-0.2627) (7500,-0.2799) (8500,-0.2542) (9500,-0.2139) (10500,-0.2949) (11500,-0.1880) (12500,-0.2680) (13500,-0.2040) (14500,-0.2001) (15500,-0.1394) (16500,-0.1367) (17500,-0.1250) (18500,-0.1484) (19500,-0.1917) (20500,-0.1440) (21500,-0.1645) (22500,-0.1750) (23500,-0.1640) (24500,-0.1483) (25500,-0.1695) (26500,-0.1538) (27500,-0.1516) (28500,-0.1525) (29500,-0.1483) (30500,-0.1478) (31500,-0.1478)}
\def\retromaeSCQNormPos{(500,20.9314) (2000,20.2689) (3500,20.7271) (5000,20.8811) (6500,21.2932) (8000,21.6540) (9500,21.7655) (11000,22.0215) (12500,22.2244) (14000,22.4551) (15500,22.4836) (17000,22.6617) (18500,22.6944) (20000,22.7008) (21500,22.7157) (23000,22.7454) (24500,22.7733) (26000,22.7396) (27500,22.7344) (29000,22.6936) (30500,22.7005) (31500,22.6991)}
\def\retromaeSCQNormNeg{(500,20.8579) (2000,20.2844) (3500,20.8021) (5000,20.9817) (6500,21.3972) (8000,21.7473) (9500,21.8715) (11000,22.1295) (12500,22.3261) (14000,22.5542) (15500,22.5860) (17000,22.7599) (18500,22.7893) (20000,22.7980) (21500,22.8133) (23000,22.8433) (24500,22.8699) (26000,22.8388) (27500,22.8318) (29000,22.7941) (30500,22.7999) (31500,22.7986)}
\def\retromaeSCQNormCD{(500,0.1418) (1500,0.0670) (2500,-0.2115) (3500,-0.3352) (4500,-0.4026) (5500,-0.4366) (6500,-0.4281) (7500,-0.4384) (8500,-0.4160) (9500,-0.4406) (10500,-0.4450) (11500,-0.4208) (12500,-0.4452) (13500,-0.4138) (14500,-0.4307) (15500,-0.4391) (16500,-0.4318) (17500,-0.4558) (18500,-0.4395) (19500,-0.4440) (20500,-0.4430) (21500,-0.4454) (22500,-0.4478) (23500,-0.4362) (24500,-0.4429) (25500,-0.4463) (26500,-0.4415) (27500,-0.4445) (28500,-0.4510) (29500,-0.4499) (30500,-0.4497) (31500,-0.4498)}

\def\retromaeFTLnPos{(500,8.0311) (2000,8.4324) (3500,8.9757) (5000,9.0501) (6500,9.2089) (8000,9.1526) (9500,9.0753) (11000,9.2560) (12500,9.2001) (14000,9.2194) (15500,9.3467) (17000,9.3847) (18500,9.3041) (20000,9.3385) (21500,9.2883) (23000,9.3230) (24500,9.3416) (26000,9.3497) (27500,9.3568) (29000,9.3495) (30500,9.3477) (31500,9.3470)}
\def\retromaeFTLnNeg{(500,8.1214) (2000,8.5403) (3500,9.0742) (5000,9.1297) (6500,9.2650) (8000,9.2298) (9500,9.1438) (11000,9.3273) (12500,9.2742) (14000,9.2879) (15500,9.4082) (17000,9.4540) (18500,9.3759) (20000,9.4094) (21500,9.3618) (23000,9.3907) (24500,9.4078) (26000,9.4181) (27500,9.4242) (29000,9.4151) (30500,9.4133) (31500,9.4125)}
\def\retromaeFTLnCD{(0,-0.60) (500,-0.3455) (1500,-0.3436) (2500,-0.4107) (3500,-0.2968) (4500,-0.2579) (5500,-0.1898) (6500,-0.1730) (7500,-0.2255) (8500,-0.2249) (9500,-0.2453) (10500,-0.2301) (11500,-0.1999) (12500,-0.2603) (13500,-0.2089) (14500,-0.1608) (15500,-0.1937) (16500,-0.2375) (17500,-0.1904) (18500,-0.2176) (19500,-0.2239) (20500,-0.2015) (21500,-0.2304) (22500,-0.2026) (23500,-0.2139) (24500,-0.2022) (25500,-0.1985) (26500,-0.2111) (27500,-0.2052) (28500,-0.2014) (29500,-0.2018) (30500,-0.2010) (31500,-0.2009)}
\def\retromaeSCLnPos{(500,17.7678) (2000,16.9697) (3500,17.0816) (5000,16.9592) (6500,17.2124) (8000,17.7064) (9500,17.8450) (11000,18.1136) (12500,18.1173) (14000,18.2499) (15500,18.4015) (17000,18.5169) (18500,18.5944) (20000,18.4109) (21500,18.4591) (23000,18.4741) (24500,18.4984) (26000,18.4494) (27500,18.3583) (29000,18.3667) (30500,18.3783) (31500,18.3770)}
\def\retromaeSCLnNeg{(500,17.8584) (2000,17.1856) (3500,17.3499) (5000,17.2561) (6500,17.5112) (8000,17.9968) (9500,18.1409) (11000,18.4042) (12500,18.4072) (14000,18.5462) (15500,18.6951) (17000,18.8068) (18500,18.8742) (20000,18.7033) (21500,18.7609) (23000,18.7673) (24500,18.7986) (26000,18.7460) (27500,18.6596) (29000,18.6673) (30500,18.6788) (31500,18.6778)}
\def\retromaeSCLnCD{(500,-0.3080) (1500,-0.4558) (2500,-0.4828) (3500,-0.4893) (4500,-0.4757) (5500,-0.4666) (6500,-0.4831) (7500,-0.4862) (8500,-0.4817) (9500,-0.4952) (10500,-0.4892) (11500,-0.4944) (12500,-0.4924) (13500,-0.4876) (14500,-0.5072) (15500,-0.4984) (16500,-0.4891) (17500,-0.4879) (18500,-0.4905) (19500,-0.4972) (20500,-0.4999) (21500,-0.5079) (22500,-0.4996) (23500,-0.5036) (24500,-0.5020) (25500,-0.4932) (26500,-0.4943) (27500,-0.4978) (28500,-0.4975) (29500,-0.4977) (30500,-0.4980) (31500,-0.4981)}

\begin{figure*}[t]
\centering
\caption{RetroMAE: Document magnitude dynamics during training. Unlike Contriever, RetroMAE shows $\mu_{\text{pos}} < \mu_{\text{neg}}$ even after finetuning, resulting in negative Cohen's $d$. This explains why RetroMAE benefits from DNorm (query magnitude) rather than QNorm (document magnitude).}
\label{fig:retromae_dynamics_appendix}

\begin{tikzpicture}
\begin{groupplot}[
    group style={
        group size=3 by 2,
        horizontal sep=1.3cm,
        vertical sep=0.9cm,
    },
    magnitude axis,
]

\nextgroupplot[title={Pos Mean (Finetuned)}, ylabel={$\mu_{\text{pos}}$}, ymin=6, ymax=15, ytick={6, 8, 10, 12, 14}, xticklabels={}]
\addplot[colorCos] coordinates {\retromaeFTCosinePos};
\addplot[colorDot] coordinates {\retromaeFTDotPos};
\addplot[colorQN] coordinates {\retromaeFTQNormPos};
\addplot[colorDN] coordinates {\retromaeFTDNormPos};
\addplot[colorLn] coordinates {\retromaeFTLnPos};

\nextgroupplot[title={Neg Mean (Finetuned)}, ylabel={$\mu_{\text{neg}}$}, ymin=6, ymax=15, ytick={6, 8, 10, 12, 14}, xticklabels={}]
\addplot[colorCos] coordinates {\retromaeFTCosineNeg};
\addplot[colorDot] coordinates {\retromaeFTDotNeg};
\addplot[colorQN] coordinates {\retromaeFTQNormNeg};
\addplot[colorDN] coordinates {\retromaeFTDNormNeg};
\addplot[colorLn] coordinates {\retromaeFTLnNeg};

\nextgroupplot[title={Cohen's $d$ (Finetuned)}, ylabel={Cohen's $d$}, ymin=-0.6, ymax=0.1, xticklabels={}]
\fill[shadingGreen] (axis cs:0,0) rectangle (axis cs:32000,0.1);
\fill[shadingRed] (axis cs:0,-0.6) rectangle (axis cs:32000,0);
\addplot[colorCos] coordinates {\retromaeFTCosineCD};
\addplot[colorDot] coordinates {\retromaeFTDotCD};
\addplot[colorQN] coordinates {\retromaeFTQNormCD};
\addplot[colorDN] coordinates {\retromaeFTDNormCD};
\addplot[colorLn] coordinates {\retromaeFTLnCD};
\draw[dashed, black, thin] (axis cs:0,0) -- (axis cs:32000,0);
\draw[black, thick, fill=white] (axis cs:0,-0.60) circle (2pt);
\node[fill=yellow, font=\tiny, inner sep=1pt, anchor=west] (ptlabel2) at (axis cs:4000,-0.55) {PT: $-$0.60};
\draw[->, thick, black] (ptlabel2.west) -- (axis cs:0,-0.60);
\node[font=\tiny, red!70!black, anchor=east] at (axis cs:31000,-0.5) {$\|\text{Irr}\|{>}\|\text{Rel}\|$};

\nextgroupplot[title={Pos Mean (Random Init)}, ylabel={$\mu_{\text{pos}}$}, xlabel={Steps}, ymin=4, ymax=30, ytick={5, 10, 15, 20, 25}]
\addplot[colorCos] coordinates {\retromaeSCCosinePos};
\addplot[colorDot] coordinates {\retromaeSCDotPos};
\addplot[colorQN] coordinates {\retromaeSCQNormPos};
\addplot[colorDN] coordinates {\retromaeSCDNormPos};
\addplot[colorLn] coordinates {\retromaeSCLnPos};

\nextgroupplot[title={Neg Mean (Random Init)}, ylabel={$\mu_{\text{neg}}$}, xlabel={Steps}, ymin=4, ymax=30, ytick={5, 10, 15, 20, 25}]
\addplot[colorCos] coordinates {\retromaeSCCosineNeg};
\addplot[colorDot] coordinates {\retromaeSCDotNeg};
\addplot[colorQN] coordinates {\retromaeSCQNormNeg};
\addplot[colorDN] coordinates {\retromaeSCDNormNeg};
\addplot[colorLn] coordinates {\retromaeSCLnNeg};

\nextgroupplot[title={Cohen's $d$ (Random Init)}, ylabel={Cohen's $d$}, xlabel={Steps}, ymin=-0.8, ymax=0.2, legend to name=retromaelegend]
\fill[shadingGreen] (axis cs:0,0) rectangle (axis cs:32000,0.2);
\fill[shadingRed] (axis cs:0,-0.8) rectangle (axis cs:32000,0);
\addplot[colorCos] coordinates {\retromaeSCCosineCD}; \addlegendentry{Cos}
\addplot[colorDot] coordinates {\retromaeSCDotCD}; \addlegendentry{Dot}
\addplot[colorQN] coordinates {\retromaeSCQNormCD}; \addlegendentry{QN}
\addplot[colorDN] coordinates {\retromaeSCDNormCD}; \addlegendentry{DN}
\addplot[colorLn] coordinates {\retromaeSCLnCD}; \addlegendentry{Ln}
\draw[dashed, black, thin] (axis cs:0,0) -- (axis cs:32000,0);
\node[font=\tiny, red!70!black, anchor=east] at (axis cs:31000,-0.7) {$\|\text{Irr}\|{>}\|\text{Rel}\|$};

\end{groupplot}
\node[draw=none, fill=none, inner sep=0pt] at ($(group c1r2.south)!0.5!(group c3r2.south) + (0,-1.1cm)$) {\pgfplotslegendfromname{retromaelegend}};
\end{tikzpicture}
\end{figure*}

\begin{figure*}[t]
  \centering
  \begin{tikzpicture}
    \begin{axis}[
      name=contriever,
      width=0.48\textwidth,
      height=0.264\textwidth,
      xlabel={Cohen's $d$ (mean $\pm$ std across 3 seeds)},
      ylabel={$\Delta\%$},
      xmin=-1.2, xmax=2.2,
      ymin=-40, ymax=250,
      grid=major,
      grid style={gray!30},
      title={Contriever: $r=0.57$, $p<0.001$},
      title style={font=\small},
      set layers,
      axis on top,
    ]
    \fill[magenta!25, opacity=0.5] (axis cs:-1.2,-40) rectangle (axis cs:-0.8,250);
    \fill[magenta!25, opacity=0.5] (axis cs:0.8,-40) rectangle (axis cs:2.2,250);
    \fill[magenta!15, opacity=0.5] (axis cs:-0.8,-40) rectangle (axis cs:-0.5,250);
    \fill[magenta!15, opacity=0.5] (axis cs:0.5,-40) rectangle (axis cs:0.8,250);
    \fill[magenta!8, opacity=0.5] (axis cs:-0.5,-40) rectangle (axis cs:-0.2,250);
    \fill[magenta!8, opacity=0.5] (axis cs:0.2,-40) rectangle (axis cs:0.5,250);

    \addplot[only marks, mark=square*, mark size=2.5pt, blue!70, opacity=0.8,
             error bars/.cd, x dir=both, x explicit, error bar style={blue!50, thick}] coordinates {
        (-0.088,1.5) +- (0.021,0) (-0.155,-0.9) +- (0.006,0)
    };

    \addplot[only marks, mark=*, mark size=2pt, green!60!black, opacity=0.7,
             error bars/.cd, x dir=both, x explicit, error bar style={green!40!black, thick}] coordinates {
        (0.019,-11.6) +- (0.009,0) (0.959,3.0) +- (0.051,0) (-0.240,8.9) +- (0.003,0)
        (0.129,17.3) +- (0.004,0) (-0.054,16.0) +- (0.003,0) (-0.258,20.3) +- (0.001,0)
        (-0.283,12.7) +- (0.002,0) (-0.591,8.9) +- (0.002,0) (-0.258,16.6) +- (0.011,0)
        (-0.350,17.4) +- (0.002,0) (-0.412,27.9) +- (0.003,0) (-0.221,17.0) +- (0.003,0)
        (-0.135,14.4) +- (0.003,0) (-0.061,19.3) +- (0.003,0) (0.001,13.3) +- (0.004,0)
        (1.109,9.3) +- (0.014,0) (0.169,13.0) +- (0.028,0) (0.753,20.4) +- (0.005,0)
        (-0.069,4.9) +- (0.011,0) (0.144,12.3) +- (0.008,0) (-0.341,1.5) +- (0.003,0)
        (-0.141,8.6) +- (0.013,0) (0.086,13.7) +- (0.002,0) (0.348,20.9) +- (0.012,0)
        (0.217,8.7) +- (0.005,0)
    };

    \addplot[only marks, mark=triangle*, mark size=3pt, red!70, opacity=0.7,
             error bars/.cd, x dir=both, x explicit, error bar style={red!50, thick}] coordinates {
        (1.481,26.3) +- (0.011,0) (1.136,199.5) +- (0.042,0) (0.776,109.8) +- (0.069,0)
        (1.684,80.8) +- (0.023,0) (1.320,4.7) +- (0.071,0) (1.014,207.8) +- (0.029,0)
        (1.559,96.0) +- (0.012,0) (1.792,122.8) +- (0.010,0) (1.427,42.5) +- (0.007,0)
        (1.089,99.5) +- (0.018,0) (0.461,63.4) +- (0.029,0) (0.510,41.9) +- (0.005,0)
    };

    \addplot[only marks, mark=pentagon*, mark size=3pt, orange!80, opacity=0.8,
             error bars/.cd, x dir=both, x explicit, error bar style={orange!60, thick}] coordinates {
        (0.753,20.4) +- (0.005,0) (0.642,8.1) +- (0.002,0) (0.124,5.9) +- (0.007,0)
    };

    \addplot[dashed, gray, thick, domain=-0.59:1.80] {42.32*x + 18.50};

    \draw[gray, thin] (axis cs:-1.2,0) -- (axis cs:2.2,0);
    \draw[gray, thin] (axis cs:0,-40) -- (axis cs:0,250);

    \end{axis}
  \end{tikzpicture}
  \hfill
  \begin{tikzpicture}
    \begin{axis}[
      name=retromae,
      width=0.48\textwidth,
      height=0.264\textwidth,
      xlabel={Cohen's $d$ (mean $\pm$ std across 3 seeds)},
      ylabel={},
      yticklabels={},
      xmin=-1.2, xmax=2.2,
      ymin=-40, ymax=250,
      grid=major,
      grid style={gray!30},
      title={RetroMAE: $r=0.68$, $p<0.001$ (excl.\ pony)},
      title style={font=\small},
      set layers,
      axis on top,
    ]
    \fill[magenta!25, opacity=0.5] (axis cs:-1.2,-40) rectangle (axis cs:-0.8,250);
    \fill[magenta!25, opacity=0.5] (axis cs:0.8,-40) rectangle (axis cs:2.2,250);
    \fill[magenta!15, opacity=0.5] (axis cs:-0.8,-40) rectangle (axis cs:-0.5,250);
    \fill[magenta!15, opacity=0.5] (axis cs:0.5,-40) rectangle (axis cs:0.8,250);
    \fill[magenta!8, opacity=0.5] (axis cs:-0.5,-40) rectangle (axis cs:-0.2,250);
    \fill[magenta!8, opacity=0.5] (axis cs:0.2,-40) rectangle (axis cs:0.5,250);

    \addplot[only marks, mark=square*, mark size=2.5pt, blue!70, opacity=0.8,
             error bars/.cd, x dir=both, x explicit, error bar style={blue!50, thick}] coordinates {
        (-0.291,2.2) +- (0.043,0) (-0.387,2.1) +- (0.044,0)
    };

    \addplot[only marks, mark=*, mark size=2pt, green!60!black, opacity=0.7,
             error bars/.cd, x dir=both, x explicit, error bar style={green!40!black, thick}] coordinates {
        (0.303,17.4) +- (0.012,0) (-0.658,2.4) +- (0.080,0) (-0.316,14.9) +- (0.041,0)
        (0.304,13.5) +- (0.026,0) (-0.022,13.1) +- (0.016,0) (-0.234,29.6) +- (0.014,0)
        (-0.177,25.3) +- (0.017,0) (-0.566,9.7) +- (0.009,0) (-0.392,20.0) +- (0.039,0)
        (-0.233,27.5) +- (0.010,0) (-0.354,32.6) +- (0.017,0) (-0.166,25.0) +- (0.010,0)
        (-0.151,23.5) +- (0.051,0) (-0.090,19.0) +- (0.013,0) (-0.151,1.1) +- (0.011,0)
        (-0.404,-0.6) +- (0.023,0) (-0.113,10.8) +- (0.041,0) (0.099,15.2) +- (0.035,0)
        (-0.063,5.8) +- (0.020,0) (-0.224,4.2) +- (0.036,0) (-0.236,1.9) +- (0.010,0)
        (-0.088,16.5) +- (0.016,0) (0.078,11.8) +- (0.008,0) (0.095,-2.8) +- (0.029,0)
        (-0.628,-5.5) +- (0.027,0)
    };

    \addplot[only marks, mark=triangle*, mark size=3pt, red!70, opacity=0.7,
             error bars/.cd, x dir=both, x explicit, error bar style={red!50, thick}] coordinates {
        (1.495,167.3) +- (0.138,0) (-0.185,51.1) +- (0.065,0) (-1.022,22.9) +- (0.058,0)
        (0.519,37.6) +- (0.068,0) (-0.563,17.0) +- (0.226,0)
        (0.277,29.1) +- (0.070,0) (0.859,39.5) +- (0.100,0) (0.728,61.6) +- (0.113,0)
        (-0.001,65.4) +- (0.055,0) (0.656,66.4) +- (0.060,0) (0.314,49.7) +- (0.061,0)
    };

    \addplot[only marks, mark=pentagon*, mark size=3pt, orange!80, opacity=0.8,
             error bars/.cd, x dir=both, x explicit, error bar style={orange!60, thick}] coordinates {
        (0.099,15.2) +- (0.035,0) (0.177,1.2) +- (0.048,0) (0.010,-0.9) +- (0.011,0)
    };

    \addplot[dashed, gray, thick, domain=-1.02:1.50] {44.62*x + 25.61};

    \draw[gray, thin] (axis cs:-1.2,0) -- (axis cs:2.2,0);
    \draw[gray, thin] (axis cs:0,-40) -- (axis cs:0,250);

    \end{axis}
  \end{tikzpicture}

  \vspace{0.2cm}
  \begin{tikzpicture}
    \node[draw=none, font=\scriptsize] at (0,0) {
      \tikz\draw[blue!70,fill=blue!70] (0,0) rectangle (0.1,0.1); In-Domain \hspace{0.12cm}
      \tikz\fill[green!60!black] (0,0) circle (0.05); BEIR \hspace{0.12cm}
      \tikz\fill[red!70] (0,-0.05) -- (0.1,-0.05) -- (0.05,0.065) -- cycle; BRIGHT \hspace{0.12cm}
      \tikz\fill[orange!80] (0,0) -- (0.05,0.07) -- (0.1,0.025) -- (0.08,-0.055) -- (0.02,-0.055) -- cycle; MultiHop \hspace{0.12cm}
      \tikz\draw[dashed, gray, thick] (0,0) -- (0.3,0); Regression
    };
  \end{tikzpicture}

  \caption{Per-dataset Cohen's $d$ vs.\ $\Delta\%$ for Contriever (left) and RetroMAE (right). Cohen's $d = (\mu_{\mathrm{rel}} - \mu_{\mathrm{irrel}}) / \sigma_{\mathrm{pooled}}$ measures the standardized difference between relevant ($\mu_{\mathrm{rel}}$) and irrelevant ($\mu_{\mathrm{irrel}}$) document embedding magnitudes, computed from the QueryNorm-trained model. $\Delta\% = (\text{QNorm} - \text{Cosine}) / \text{Cosine} \times 100$ is the relative performance improvement. Linear regression shows significant correlation: Contriever ($r=0.57$, $p<0.001$) and RetroMAE ($r=0.68$, $p<0.001$, excluding bright-pony outlier). Background shading indicates effect size: \colorbox{magenta!25}{large} ($|d| \geq 0.8$), \colorbox{magenta!15}{medium} ($0.5 \leq |d| < 0.8$), \colorbox{magenta!8}{small} ($0.2 \leq |d| < 0.5$), negligible (white). See Appendix~\ref{appendix:cohens_d} for per-dataset values.}
  \label{fig:cohens_d_scatter}
\end{figure*}

\begin{figure*}[t]
  \centering
  \begin{tikzpicture}
    \begin{axis}[
      name=qwen82k,
      width=0.48\textwidth,
      height=0.264\textwidth,
      xlabel={Cohen's $d$ (single seed)},
      ylabel={$\Delta\%$},
      xmin=-1.0, xmax=1.2,
      ymin=-110, ymax=20,
      grid=major,
      grid style={gray!30},
      title={Qwen 82K: $r=-0.31$, $p=0.047$},
      title style={font=\small},
      set layers,
      axis on top,
    ]
    \fill[magenta!25, opacity=0.5] (axis cs:-1.0,-110) rectangle (axis cs:-0.8,20);
    \fill[magenta!25, opacity=0.5] (axis cs:0.8,-110) rectangle (axis cs:1.2,20);
    \fill[magenta!15, opacity=0.5] (axis cs:-0.8,-110) rectangle (axis cs:-0.5,20);
    \fill[magenta!15, opacity=0.5] (axis cs:0.5,-110) rectangle (axis cs:0.8,20);
    \fill[magenta!8, opacity=0.5] (axis cs:-0.5,-110) rectangle (axis cs:-0.2,20);
    \fill[magenta!8, opacity=0.5] (axis cs:0.2,-110) rectangle (axis cs:0.5,20);

    \addplot[only marks, mark=square*, mark size=2.5pt, blue!70, opacity=0.8] coordinates {
        (-0.011,-4.6) (-0.074,1.6)
    };

    \addplot[only marks, mark=*, mark size=2pt, green!60!black, opacity=0.7] coordinates {
        (0.018,-1.8) (0.264,-51.9) (0.137,-71.0) (0.303,-30.4) (-0.209,-4.1)
        (0.217,-24.3) (-0.062,-36.0) (0.029,-7.3) (0.033,-4.3) (0.059,0.8)
        (-0.018,3.1) (-0.115,-12.1) (-0.005,-5.5) (0.011,-15.8) (0.228,-45.3)
        (0.083,-99.9) (0.049,-10.7) (0.106,1.2) (-0.028,-5.3) (0.139,-23.8)
        (0.097,3.2) (-0.097,-2.0) (0.007,-9.9) (0.161,-3.5) (0.057,-11.6)
    };

    \addplot[only marks, mark=triangle*, mark size=3pt, red!70, opacity=0.7] coordinates {
        (0.934,-100.0) (0.396,-97.0) (0.260,-100.0) (0.148,-100.0) (0.813,-100.0)
        (-0.220,-100.0) (0.444,-100.0) (-0.828,-95.9) (0.478,-100.0) (0.734,-100.0)
        (-0.726,-97.2) (0.761,-100.0)
    };

    \addplot[only marks, mark=pentagon*, mark size=3pt, orange!80, opacity=0.8] coordinates {
        (0.025,-9.4) (0.310,-0.6)
    };

    \addplot[dashed, gray, thick, domain=-0.83:0.93] {-39.90*x + -35.99};

    \draw[gray, thin] (axis cs:-1.0,0) -- (axis cs:1.2,0);
    \draw[gray, thin] (axis cs:0,-110) -- (axis cs:0,20);

    \end{axis}
  \end{tikzpicture}
  \hfill
  \begin{tikzpicture}
    \begin{axis}[
      name=qwen503k,
      width=0.48\textwidth,
      height=0.264\textwidth,
      xlabel={Cohen's $d$ (single seed)},
      ylabel={},
      yticklabels={},
      xmin=-1.4, xmax=1.2,
      ymin=-110, ymax=20,
      grid=major,
      grid style={gray!30},
      title={Qwen 503K: $r=-0.20$, $p=0.211$},
      title style={font=\small},
      set layers,
      axis on top,
    ]
    \fill[magenta!25, opacity=0.5] (axis cs:-1.4,-110) rectangle (axis cs:-0.8,20);
    \fill[magenta!25, opacity=0.5] (axis cs:0.8,-110) rectangle (axis cs:1.2,20);
    \fill[magenta!15, opacity=0.5] (axis cs:-0.8,-110) rectangle (axis cs:-0.5,20);
    \fill[magenta!15, opacity=0.5] (axis cs:0.5,-110) rectangle (axis cs:0.8,20);
    \fill[magenta!8, opacity=0.5] (axis cs:-0.5,-110) rectangle (axis cs:-0.2,20);
    \fill[magenta!8, opacity=0.5] (axis cs:0.2,-110) rectangle (axis cs:0.5,20);

    \addplot[only marks, mark=square*, mark size=2.5pt, blue!70, opacity=0.8] coordinates {
        (0.022,-8.5) (-0.055,-19.3)
    };

    \addplot[only marks, mark=*, mark size=2pt, green!60!black, opacity=0.7] coordinates {
        (-0.118,-17.4) (0.192,-75.7) (0.084,-91.6) (0.279,-67.5) (-0.026,-5.5)
        (0.237,-39.7) (0.098,-86.9) (-0.019,-15.2) (-0.061,-8.4) (-0.010,-5.1)
        (-0.011,-3.0) (-0.115,-30.2) (-0.067,-20.1) (0.058,-37.2) (0.153,-51.3)
        (0.113,-97.7) (-0.013,-22.0) (0.062,-12.8) (-0.051,-10.9) (0.164,-98.6)
        (-0.007,-15.6) (-0.064,-14.0) (-0.003,-16.3) (0.015,-18.3) (-0.064,-32.1)
    };

    \addplot[only marks, mark=triangle*, mark size=3pt, red!70, opacity=0.7] coordinates {
        (0.889,-100.0) (0.460,-98.0) (0.127,-100.0) (-0.562,-100.0) (0.799,-100.0)
        (0.003,-100.0) (-0.661,-100.0) (-1.214,-95.8) (0.563,-100.0) (0.703,-100.0)
        (-0.123,-94.4) (0.741,-100.0)
    };

    \addplot[only marks, mark=pentagon*, mark size=3pt, orange!80, opacity=0.8] coordinates {
        (0.040,-8.3) (0.277,-9.0)
    };

    \addplot[dashed, gray, thick, domain=-1.21:0.89] {-21.61*x + -50.37};

    \draw[gray, thin] (axis cs:-1.4,0) -- (axis cs:1.2,0);
    \draw[gray, thin] (axis cs:0,-110) -- (axis cs:0,20);

    \end{axis}
  \end{tikzpicture}

  \vspace{0.2cm}
  \begin{tikzpicture}
    \node[draw=none, font=\scriptsize] at (0,0) {
      \tikz\draw[blue!70,fill=blue!70] (0,0) rectangle (0.1,0.1); In-Domain \hspace{0.12cm}
      \tikz\fill[green!60!black] (0,0) circle (0.05); BEIR \hspace{0.12cm}
      \tikz\fill[red!70] (0,-0.05) -- (0.1,-0.05) -- (0.05,0.065) -- cycle; BRIGHT \hspace{0.12cm}
      \tikz\fill[orange!80] (0,0) -- (0.05,0.07) -- (0.1,0.025) -- (0.08,-0.055) -- (0.02,-0.055) -- cycle; MultiHop \hspace{0.12cm}
      \tikz\draw[dashed, gray, thick] (0,0) -- (0.3,0); Regression
    };
  \end{tikzpicture}

  \caption{Per-dataset Cohen's $d$ vs.\ $\Delta\%$ for Qwen 82K (left) and Qwen 503K (right). Unlike Contriever and RetroMAE (Figure~\ref{fig:cohens_d_scatter}), which were finetuned from pre-trained retrievers, Qwen was trained from a foundation LLM. Qwen shows \textbf{negative} $\Delta\%$ values (QNorm performs worse than Cosine) and a \textbf{negative correlation} between Cohen's $d$ and $\Delta\%$. Although relevant documents have slightly larger magnitudes on average (Cohen's $d > 0$ for most datasets), the effect size is small and the distributions overlap substantially. This suggests that foundation LLMs, lacking retrieval-specific pre-training, have difficulty learning to leverage document magnitude for retrieval.}
  \label{fig:cohens_d_scatter_qwen}
\end{figure*}
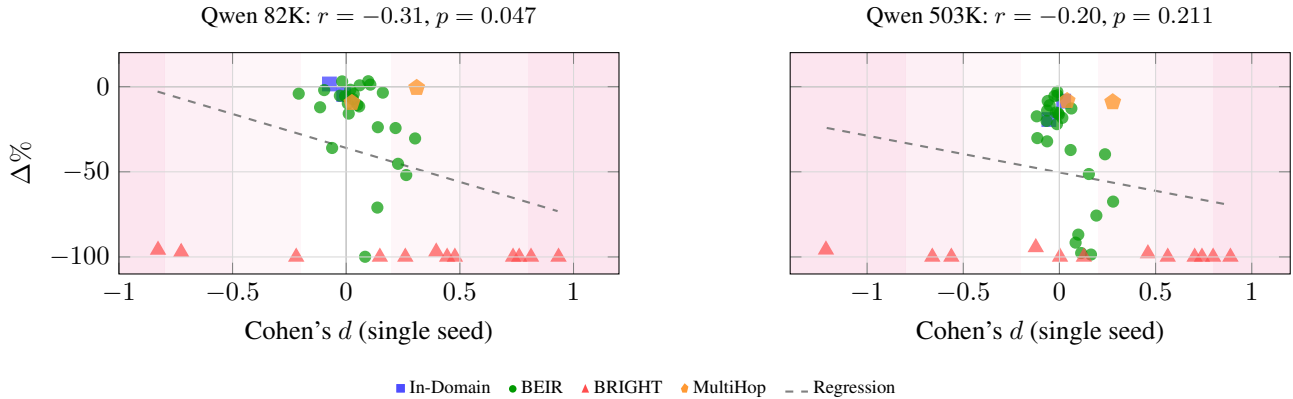

Key observations:
\begin{itemize}
  \item \textbf{Cohen's $d$}: Contriever shows consistently positive Cohen's $d$ for BRIGHT datasets ($d = 0.46$--$1.80$), indicating relevant documents have substantially larger magnitudes. RetroMAE shows mixed effect sizes. The mean Cohen's $d$ differs substantially: Contriever ($+$0.38) vs.\ RetroMAE ($-$0.08).
  \item \textbf{Query CV}: RetroMAE shows consistently high $\Delta$CV (3.9--8.9$\times$) across all datasets, with Dot CV uniformly low ($<1\%$). Contriever shows modest $\Delta$CV (0.5--2.0$\times$), but BRIGHT shows $\Delta$CV $<1$, indicating CV decreases. The non-overlapping $\Delta$CV ranges support $\Delta$CV as a model-level predictor.
  \item \textbf{Training from foundation model (Qwen)}: Figure~\ref{fig:cohens_d_scatter_qwen} shows that Qwen, which is trained from a foundation LLM rather than a pre-trained retriever, exhibits a \emph{negative} correlation between Cohen's $d$ and $\Delta\%$, with QNorm performing worse than Cosine despite relevant documents having slightly larger magnitudes on average. This contrasts with Contriever and RetroMAE (Figure~\ref{fig:cohens_d_scatter}), which were finetuned from retrievers with contrastive or autoencoding pre-training. The result suggests that foundation LLMs, lacking retrieval-specific pre-training, have difficulty learning to leverage document magnitude for retrieval.
\end{itemize}

\section{Dataset Statistics}
\label{sec:appendix_datasets}

\subsection{Training Data}

Table~\ref{tab:training_data_stats} provides statistics for the training datasets used in our experiments. MS MARCO v1.1 QA is the question-answering version containing 82K query-answer pairs, which we use for main experiments. MS MARCO Passage Ranking (judged) is the passage retrieval version containing 503K queries with relevance annotations, which we use for extended E5 experiments in Section~\ref{sec:e5_500k_results}. All statistics are computed on the training split.

\begin{table}[htbp]
\centering
\caption{Statistics of training datasets. Length statistics are measured in characters (Ave = Average, Min = Minimum, Max = Maximum). ``---'' indicates not applicable (MS MARCO Passage Ranking has no answer annotations as it is a retrieval task).}
\label{tab:training_data_stats}
\resizebox{\textwidth}{!}{%
\begin{tabular}{l|rrr|ccc|ccc|ccc}
\toprule
\multirow{2}{*}{\textbf{Dataset}} & \multirow{2}{*}{\textbf{\#Samples}} & \multirow{2}{*}{\textbf{\#Queries}} & \multirow{2}{*}{\textbf{\#Docs}} & \multicolumn{3}{c|}{\textbf{Query Length}} & \multicolumn{3}{c|}{\textbf{Doc Length}} & \multicolumn{3}{c}{\textbf{Answer Length}} \\
& & & & Ave & Min & Max & Ave & Min & Max & Ave & Min & Max \\
\midrule
MS MARCO v1.1 QA & 82,326 & 82,326 & 80,304 & 34.0 & 8 & 144 & 478 & 38 & 3,570 & 82.2 & 0 & 990 \\
MS MARCO Passage Ranking & 502,939 & 502,939 & 489,288 & 33.2 & 5 & 215 & 370 & 25 & 3,602 & --- & --- & --- \\
\bottomrule
\end{tabular}%
}
\end{table}

\subsection{Evaluation Data}

Table~\ref{tab:dataset_stats} provides detailed statistics for all evaluation datasets. Note that TREC-DL 2019 and 2020 contain only 43 and 54 queries respectively, which may lead to high variance in evaluation results. We therefore include MS MARCO Dev (6,980 queries) as a complementary in-domain benchmark to provide more stable estimates.

\begin{table}[htbp]
\centering
\resizebox{\textwidth}{!}{%
\begin{tabular}{l|l|rr|ccc|ccc|cc}
\toprule
\multirow{2}{*}{\textbf{Type}} & \multirow{2}{*}{\textbf{Dataset}} & \multirow{2}{*}{\textbf{\#Queries}} & \multirow{2}{*}{\textbf{\#Docs}} & \multicolumn{3}{c|}{\textbf{Query Length}} & \multicolumn{3}{c|}{\textbf{Doc Length}} & \textbf{Docs/} & \textbf{Queries/} \\
& & & & Ave & Min & Max & Ave & Min & Max & \textbf{Query} & \textbf{Doc} \\
\midrule
\multirow{3}{*}{\textbf{In-Domain}} & MSM-Dev & 6,980 & 8,841,823 & 33 & 9 & 186 & 335 & 3 & 1,665 & 1.07 & 1.00 \\
& trec-dl-2019 & 43 & 8,841,823 & 33 & 16 & 55 & 335 & 3 & 1,665 & 215 & 1.01 \\
& trec-dl-2020 & 54 & 8,841,823 & 34 & 12 & 70 & 335 & 3 & 1,665 & 211 & 1.01 \\
\midrule
\multirow{14}{*}{\textbf{BEIR}} & arguana & 1,406 & 8,674 & 1,193 & 251 & 5,500 & 1,030 & 2 & 6,673 & 1 & 1.00 \\
& climate-fever & 1,535 & 5,416,593 & 123 & 26 & 406 & 539 & 1 & 374,597 & 3.05 & 3.48 \\
& cqadupstack & 13,145 & 457,199 & 50 & 15 & 149 & 932 & 41 & 43,874 & 1.8 & 1.00 \\
& dbpedia-entity & 400 & 4,635,922 & 34 & 6 & 88 & 310 & 8 & 42,899 & 109 & 1.07 \\
& fever & 6,666 & 5,416,568 & 50 & 14 & 189 & 539 & 1 & 374,597 & 1.19 & 5.29 \\
& fiqa & 648 & 57,638 & 63 & 16 & 147 & 767 & 0 & 16,990 & 2.63 & 1.00 \\
& hotpotqa & 7,405 & 5,233,329 & 92 & 32 & 288 & 289 & 8 & 8,276 & 2 & 1.07 \\
& nfcorpus & 323 & 3,633 & 22 & 3 & 72 & 1,591 & 123 & 10,090 & 38 & 3.94 \\
& nq & 3,452 & 2,681,468 & 48 & 25 & 100 & 493 & 4 & 17,008 & 1.22 & 1.00 \\
& quora & 10,000 & 522,931 & 52 & 2 & 258 & 62 & 1 & 1,169 & 1.57 & 1.00 \\
& scidocs & 1,000 & 25,657 & 72 & 16 & 206 & 1,204 & 10 & 10,169 & 30 & 1.17 \\
& scifact & 300 & 5,183 & 90 & 28 & 204 & 1,499 & 221 & 10,127 & 1.13 & 1.20 \\
& trec-covid & 50 & 171,332 & 69 & 30 & 165 & 1,118 & 0 & 122,459 & 1,327 & 1.87 \\
& webis-touche2020 & 49 & 382,545 & 43 & 16 & 83 & 1,720 & 3 & 106,072 & 45 & 1.05 \\
\midrule
\multirow{12}{*}{\textbf{BRIGHT}} & aops & 111 & 188,002 & 320 & 85 & 1,167 & 754 & 58 & 7,334 & 4.72 & 4.72 \\
& biology & 103 & 57,359 & 523 & 89 & 2,195 & 330 & 1 & 31,131 & 3.61 & 1.00 \\
& earth\_science & 116 & 121,249 & 477 & 83 & 1,565 & 338 & 2 & 233,623 & 5.04 & 1.00 \\
& economics & 103 & 50,220 & 740 & 164 & 2,223 & 395 & 3 & 39,672 & 7.77 & 1.00 \\
& leetcode & 142 & 413,932 & 1,459 & 422 & 3,964 & 1,059 & 75 & 103,665 & 1.85 & 1.21 \\
& pony & 112 & 7,894 & 389 & 182 & 946 & 260 & 8 & 2,583 & 19.8 & \textbf{51.6} \\
& psychology & 101 & 52,835 & 693 & 166 & 2,334 & 384 & 3 & 226,941 & 6.85 & 1.01 \\
& robotics & 101 & 61,961 & 2,180 & 165 & 19,341 & 291 & 3 & 28,640 & 5.15 & 1.00 \\
& stackoverflow & 117 & 107,081 & 1,293 & 185 & 12,432 & 1,715 & 1 & 4,000 & 4.09 & 1.01 \\
& sustainable\_living & 108 & 60,792 & 683 & 158 & 2,843 & 344 & 1 & 158,299 & 5.33 & 1.00 \\
& theoremqa\_questions & 194 & 188,002 & 426 & 84 & 1,255 & 754 & 58 & 7,334 & 3.18 & 1.90 \\
& theoremqa\_theorems & 76 & 23,839 & 416 & 159 & 846 & 874 & 74 & 19,106 & 1.99 & 1.59 \\
\midrule
\multirow{4}{*}{\textbf{Multi-hop}} & 2wikimultihopqa & 12,576 & 125,237 & 68 & 23 & 179 & 377 & 18 & 9,780 & 2.44 & 1.00 \\
& musique & 2,417 & 48,315 & 102 & 26 & 275 & 524 & 116 & 2,044 & 2.65 & 1.00 \\
& novelhopqa & 4,345 & 4,345 & 138 & 43 & 334 & 2,336 & 152 & 42,658 & 1 & 1.00 \\
& hotpotqa & 7,405 & 5,233,329 & 92 & 32 & 288 & 289 & 8 & 8,276 & 2 & 1.07 \\
\bottomrule
\end{tabular}%
}
\caption{Statistics of evaluation benchmarks. Length statistics are measured in characters (Ave = Average, Min = Minimum, Max = Maximum). Docs/Query indicates the average number of relevant documents per query; Queries/Doc indicates the average number of queries each document is relevant to. The pony dataset has an exceptionally high Queries/Doc ratio (\textbf{51.6}), indicating that each document is relevant to many queries on average. This ``hub'' structure may affect magnitude learning patterns and explains its outlier behavior in Figure~\ref{fig:cohens_d_scatter}. HotpotQA appears in both BEIR and Multi-hop categories.}
\label{tab:dataset_stats}
\end{table}

\section{Seed Sensitivity Analysis}
\label{appendix:seed_sensitivity}

To verify the robustness of our findings, we conduct experiments with three different random seeds (0, 42, 1337) for both Contriever and RetroMAE on MS MARCO v1.1 QA. Table~\ref{tab:seed_sensitivity} reports the validation NDCG@10 statistics across seeds.

\begin{table}[htbp]
\centering
\caption{Seed sensitivity analysis (3 random seeds). We report mean and standard deviation of validation NDCG@10. Low standard deviations ($<0.06$) for Cosine, Dot, DNorm, and Learnable indicate stable convergence. QNorm exhibits higher variance due to one seed showing different early-training dynamics.}
\label{tab:seed_sensitivity}
\label{tab:seed_sensitivity_contriever}
\label{tab:seed_sensitivity_retromae}
\begin{tabular}{lcc|cc}
\toprule
& \multicolumn{2}{c|}{\textbf{Contriever}} & \multicolumn{2}{c}{\textbf{RetroMAE}} \\
\textbf{Method} & \textbf{Mean} & \textbf{Std} & \textbf{Mean} & \textbf{Std} \\
\midrule
Cosine & 92.84 & $\pm$0.03 & 91.88 & $\pm$0.02 \\
Dot & 93.49 & $\pm$0.02 & 93.02 & $\pm$0.06 \\
QNorm & 92.87 & $\pm$1.02 & 92.45 & $\pm$1.03 \\
DNorm & 93.55 & $\pm$0.02 & 93.06 & $\pm$0.04 \\
Learnable & 93.59 & $\pm$0.01 & 93.08 & $\pm$0.02 \\
\bottomrule
\end{tabular}
\end{table}

\paragraph{Key observations.}
\begin{itemize}[leftmargin=1.5em, itemsep=0.2em]
    \item \textbf{Stable performance}: For both models, Cosine, Dot, DNorm, and Learnable show very low variance with std $\leq 0.06$, indicating that our main conclusions are robust to random seed selection.
    \item \textbf{Consistent ranking}: The relative ordering of methods, specifically Learnable $\approx$ DNorm $\approx$ Dot $>$ Cosine for in-domain validation, is preserved across all seeds for both models.
    \item \textbf{QNorm variance}: Both models show higher QNorm variance due to one seed exhibiting different early-training dynamics. This variance does not affect our main conclusions.
    \item \textbf{Best step variation}: While the optimal checkpoint step varies across seeds due to different convergence speeds, all seeds reach similar final performance, validating our early stopping strategy based on validation NDCG@10.
\end{itemize}

These results demonstrate that our findings, particularly that magnitude-aware training with Dot, DNorm, and Learnable outperforms Cosine similarity, are reproducible across different random initializations for both Contriever and RetroMAE.

\subsection{Statistical Significance on Evaluation Benchmarks}

Beyond validation performance stability, we verify that the improvements on evaluation benchmarks are statistically significant. Table~\ref{tab:statistical_significance} reports per-dataset paired t-test results: for each dataset, we first average results across 3 random seeds, then treat datasets within each benchmark as observations.

\begin{table*}[htbp]
\centering
\caption{Statistical significance analysis using per-dataset paired t-test. For each dataset, we first average results across 3 random seeds, then treat datasets within each benchmark as observations for paired t-tests comparing magnitude-aware methods against Cosine. We report mean $\pm$ std of NDCG@10 across datasets. \textbf{Bold}: best; \underline{underline}: second best; $^*$: $p < 0.05$; $^{**}$: $p < 0.01$.}
\label{tab:statistical_significance}
\resizebox{\textwidth}{!}{%
\begin{tabular}{l|ccccc|ccccc}
\toprule
& \multicolumn{5}{c|}{\textbf{Contriever}} & \multicolumn{5}{c}{\textbf{RetroMAE}} \\
\textbf{Benchmark} & \textbf{Cos} & \textbf{Dot} & \textbf{QNorm} & \textbf{DNorm} & \textbf{Learn} & \textbf{Cos} & \textbf{Dot} & \textbf{QNorm} & \textbf{DNorm} & \textbf{Learn} \\
\midrule
In-Domain(3) & 48.3$\pm$15.1 & 48.0$\pm$14.7 & \underline{48.7$\pm$15.1} & 48.0$\pm$14.4 & \textbf{49.5$\pm$15.0}$^{**}$ & 47.9$\pm$14.9 & 48.6$\pm$15.1$^{*}$ & 49.3$\pm$14.8$^{**}$ & \underline{50.7$\pm$15.4}$^{*}$ & \textbf{51.0$\pm$15.7}$^{*}$ \\
BEIR(14) & 41.0$\pm$20.2 & 43.5$\pm$21.9$^{*}$ & \textbf{44.0$\pm$21.6}$^{*}$ & 42.5$\pm$20.9 & \underline{43.6$\pm$21.5}$^{**}$ & 38.3$\pm$19.7 & 40.1$\pm$20.7 & 41.1$\pm$20.5$^{**}$ & \underline{41.2$\pm$20.4}$^{**}$ & \textbf{41.2$\pm$20.5}$^{**}$ \\
BRIGHT(12) & 7.4$\pm$3.2 & \underline{12.5$\pm$4.8}$^{**}$ & \textbf{12.7$\pm$5.9}$^{**}$ & 9.8$\pm$4.1$^{**}$ & 11.7$\pm$4.3$^{**}$ & 6.1$\pm$3.8 & 8.7$\pm$4.6$^{**}$ & 9.4$\pm$4.5$^{**}$ & \underline{9.5$\pm$4.6}$^{**}$ & \textbf{9.7$\pm$4.6}$^{**}$ \\
MultiHop(4) & 51.4$\pm$12.3 & \textbf{58.2$\pm$14.9}$^{*}$ & 57.4$\pm$14.4$^{*}$ & \underline{57.5$\pm$15.4} & 57.2$\pm$14.5 & 50.7$\pm$13.1 & 52.9$\pm$14.7 & 54.8$\pm$15.4 & \textbf{55.5$\pm$16.5} & \underline{55.2$\pm$15.9} \\
\bottomrule
\end{tabular}%
}
\end{table*}

\section{STS Verification Experiment Configuration}
\label{appendix:sts_config}

This appendix provides detailed hyperparameter rationale for the STS verification experiments described in Section~\ref{sec:gen_symmetric}. The key differences from retrieval experiments arise from task and loss function differences.

\subsection{Hyperparameter Comparison}

Table~\ref{tab:sts_hyperparams} compares the hyperparameters used in retrieval and STS experiments.

\begin{table}[htbp]
\centering
\caption{Hyperparameter comparison between retrieval and STS experiments. Differences arise from task characteristics and loss function requirements.}
\label{tab:sts_hyperparams}
\begin{tabular}{lcc}
\toprule
\textbf{Hyperparameter} & \textbf{Retrieval} & \textbf{STS} \\
\midrule
Loss function & InfoNCE & MSE \\
Learning rate & $2$--$5 \times 10^{-6}$ & $2 \times 10^{-5}$ \\
Batch size & 128 & 64 \\
Epochs & 100 & 4 \\
Warmup ratio & 0 & 0.1 \\
Scale factor ($\lambda$) & 20 & 0.01 \\
\midrule
Training data & MS MARCO ($\sim$80K) & AllNLI + STS-B ($\sim$950K) \\
Evaluation & NDCG@10 & Spearman correlation \\
\bottomrule
\end{tabular}
\end{table}

\subsection{Rationale for Key Differences}

\paragraph{Learning rate ($2 \times 10^{-5}$ vs.\ $2$--$5 \times 10^{-6}$).}
The STS experiments use 10$\times$ higher learning rate because: (1) STS training uses only 4 epochs compared to retrieval's 100 epochs, requiring faster learning; (2) $2 \times 10^{-5}$ is the standard learning rate in SentenceTransformers for BERT-based models.

\paragraph{Scale factor ($\lambda = 0.01$ vs.\ $20$).}
This is the most critical difference, arising from fundamentally different loss functions:
\begin{itemize}[leftmargin=1.5em, itemsep=0.1em]
    \item \textbf{Retrieval (InfoNCE)}: The scale factor $\alpha = 20$ acts as inverse temperature ($\tau = 1/20 = 0.05$), sharpening the softmax distribution over candidates. Higher values increase the penalty for ranking errors.
    \item \textbf{STS (MSE)}: The scale factor $\lambda = 0.01$ maps dot product values (typically in the range 100--1000 for 768-dimensional embeddings) to the target similarity range $[0, 1]$. Since dot product $\langle \bm{s}_1, \bm{s}_2 \rangle \approx \|\bm{s}_1\| \|\bm{s}_2\| \cos\theta$, and embedding norms are typically 10--30, the dot product can reach several hundred. Multiplying by 0.01 brings this into the $[0, 1]$ range for MSE loss.
\end{itemize}

\paragraph{Batch size (64 vs.\ 128).}
STS uses smaller batch size because: (1) the STS-B dataset is smaller ($\sim$6K samples without AllNLI); (2) 64 is the default in SentenceTransformers.

\paragraph{Epochs (4 vs.\ 100).}
STS converges much faster because: (1) CosineSimilarityLoss in SentenceTransformers is well-optimized for this task; (2) the combined AllNLI + STS-B dataset is larger ($\sim$950K samples), so fewer epochs are needed; (3) 4 epochs is standard practice in SentenceTransformers tutorials.

\section{Detailed Derivations}

\subsection{Ranking Non-Preservation Under the \texorpdfstring{$\mathbb{R}^n \to \mathbb{S}^n$}{R\^{}n to S\^{}n} Mapping}
\label{appendix:ranking_reversal}

A natural objection to studying magnitude is that any unnormalized vector $\bm{v} \in \mathbb{R}^n$ with $\|\bm{v}\| \leq M$ can be losslessly encoded on the $(n{+}1)$-dimensional unit sphere via
\begin{equation}
\varphi(\bm{v}) = \left(\frac{\bm{v}}{M},\; \sqrt{1 - \|\bm{v}\|^2 / M^2}\right) \in \mathbb{S}^{n}.
\end{equation}
The map $\varphi$ is injective and recovers $\bm{v}$ from its first $n$ coordinates, so unnormalized embeddings appear ``equivalent'' to normalized embeddings of one higher dimension. The objection then asks: \emph{isn't this just a coordinate change?}

\paragraph{Counterexample (ranking reversal).} The mapping is bijective on the parameter space, but it does \emph{not} preserve similarity ranking. Take $n=2$, $M=10$, and:
\begin{equation*}
\bm{q} = (1, 0), \quad \bm{d}_1 = (8, 6), \quad \bm{d}_2 = (3, 0).
\end{equation*}
In $\mathbb{R}^n$ with the dot product:
\begin{equation*}
\bm{q} \cdot \bm{d}_1 = 8 \;>\; 3 = \bm{q} \cdot \bm{d}_2,
\end{equation*}
so $\bm{d}_1 \succ \bm{d}_2$. After mapping to $\mathbb{S}^n$:
\begin{align*}
\varphi(\bm{q}) &= (0.1,\; 0,\; 0.995), \\
\varphi(\bm{d}_1) &= (0.8,\; 0.6,\; 0), \\
\varphi(\bm{d}_2) &= (0.3,\; 0,\; 0.954),
\end{align*}
and cosine on $\mathbb{S}^n$ (which is dot product of unit vectors) gives:
\begin{equation*}
\varphi(\bm{q}) \cdot \varphi(\bm{d}_1) = 0.08 \;<\; 0.98 = \varphi(\bm{q}) \cdot \varphi(\bm{d}_2),
\end{equation*}
so $\bm{d}_2 \succ \bm{d}_1$. \textbf{The ranking is reversed.}

\paragraph{Geometric intuition.} The $(n{+}1)$-th coordinate $\sqrt{1-\|\bm{v}\|^2/M^2}$ is large for low-magnitude vectors (mapping them near the ``pole'' $x_{n+1}\approx 1$) and small for high-magnitude vectors (mapping them onto the ``equator'' $x_{n+1}=0$). High-magnitude documents thus sit nearly orthogonal to low-magnitude queries on $\mathbb{S}^n$, fundamentally distorting cosine similarity. The mapping is therefore a structural change to the similarity geometry, not a coordinate relabeling, and induces different retrieval rankings and different optimization dynamics in the original $\mathbb{R}^n$ space.

\subsection{Relationship Between Similarity Functions}

The four similarity functions can be understood through a unified framework. Let $\bm{q}, \bm{d} \in \mathbb{R}^n$ be the query and document embeddings. We can write:

\begin{align}
s_{\text{dot}}(\bm{q}, \bm{d}) &= \bm{q}^\top \bm{d} = \|\bm{q}\| \|\bm{d}\| \cos\theta \\
s_{\text{cos}}(\bm{q}, \bm{d}) &= \cos\theta \\
s_{\text{q-norm}}(\bm{q}, \bm{d}) &= \|\bm{d}\| \cos\theta \\
s_{\text{d-norm}}(\bm{q}, \bm{d}) &= \|\bm{q}\| \cos\theta
\end{align}

where $\theta$ is the angle between $\bm{q}$ and $\bm{d}$.

This decomposition reveals that:
\begin{itemize}
    \item Cosine similarity captures only the angular component
    \item Query-only normalization additionally captures document magnitude
    \item Document-only normalization additionally captures query magnitude
    \item Dot product captures both magnitudes and angle
\end{itemize}

\subsection{Gradient Analysis}
\label{appendix:gradient_analysis}

We provide a detailed analysis of the gradient dynamics for cosine and dot product similarity, explaining why the normalization constraint fundamentally limits magnitude learning.

\paragraph{Dot Product Gradient.}
For the InfoNCE loss with dot product similarity, the gradient with respect to the query embedding $\bm{q}$ is:
\begin{equation}
\frac{\partial \mathcal{L}_{\text{dot}}}{\partial \bm{q}} = \alpha \left(-\bm{d}^+ + \sum_{j} p_j \bm{d}^j\right),
\label{eq:grad_dot}
\end{equation}
where $\alpha$ is the inverse temperature (loss scale), $p_j = \frac{\exp(\alpha\, \bm{q}^\top \bm{d}^j)}{\sum_k \exp(\alpha\, \bm{q}^\top \bm{d}^k)}$ is the softmax probability, and $\bm{d}^+$ is the positive document. This gradient operates in the full $\mathbb{R}^n$ space and can adjust both the direction and magnitude of $\bm{q}$.

\paragraph{Cosine Similarity Gradient and Tangent Space Projection.}
For cosine similarity $s_{\cos}(\bm{q}, \bm{d}) = \hat{\bm{q}}^\top \hat{\bm{d}}$ where $\hat{\bm{v}} = \bm{v}/\|\bm{v}\|$, the gradient computation involves the Jacobian of the normalization operation. We derive this Jacobian explicitly.

\textbf{Derivation of the Normalization Jacobian.} Let $\hat{\bm{v}} = \bm{v}/\|\bm{v}\|$. The $(i,j)$-th element of the Jacobian matrix $\bm{J} = \frac{\partial \hat{\bm{v}}}{\partial \bm{v}}$ is:
\begin{equation}
J_{ij} = \frac{\partial}{\partial v_j} \left( \frac{v_i}{\|\bm{v}\|} \right) = \frac{\delta_{ij}}{\|\bm{v}\|} - \frac{v_i v_j}{\|\bm{v}\|^3} = \frac{1}{\|\bm{v}\|} \left( \delta_{ij} - \hat{v}_i \hat{v}_j \right),
\end{equation}
where $\delta_{ij}$ is the Kronecker delta. In matrix form:
\begin{equation}
\bm{J} = \frac{\partial \hat{\bm{v}}}{\partial \bm{v}} = \frac{1}{\|\bm{v}\|} \left( \bm{I} - \hat{\bm{v}} \hat{\bm{v}}^\top \right) = \frac{1}{\|\bm{v}\|} \bm{P}_{\bm{v}}.
\label{eq:jacobian_derivation}
\end{equation}

For a function $f(\hat{\bm{v}})$ composed with L2 normalization, the chain rule (with column-vector gradients) gives:
\begin{equation}
\nabla_{\bm{v}} f = \left(\frac{\partial \hat{\bm{v}}}{\partial \bm{v}}\right)^{\!\top} \nabla_{\hat{\bm{v}}} f = \frac{1}{\|\bm{v}\|}\, \bm{P}_{\bm{v}}\, \nabla_{\hat{\bm{v}}} f,
\label{eq:norm_jacobian}
\end{equation}
where $\bm{P}_{\bm{v}} = \bm{I} - \hat{\bm{v}}\hat{\bm{v}}^\top$ is the symmetric \emph{projection matrix onto the tangent space} of the unit sphere at $\hat{\bm{v}}$, so $\bm{P}_{\bm{v}}^{\top} = \bm{P}_{\bm{v}}$.

\subsection{Learnable Normalization: Gradient Analysis}
\label{appendix:learnable_gradient}

We extend the gradient analysis to the learnable normalization framework introduced in Section~\ref{par:learnable_norm}. Recall that the parameterized similarity is:
\begin{equation}
s_{\gamma_q, \gamma_d}(\bm{q}, \bm{d}) = \frac{\bm{q}^\top}{\|\bm{q}\|^{\gamma_q}} \cdot \frac{\bm{d}}{\|\bm{d}\|^{\gamma_d}}
= \|\bm{q}\|^{1-\gamma_q} \cdot \|\bm{d}\|^{1-\gamma_d} \cdot \cos\theta,
\label{eq:partial_norm_appendix}
\end{equation}
where $\gamma_q, \gamma_d \in [0, 1]$ control the degree of query-side and document-side normalization, respectively.

\begin{proposition}[Normalization Gradient]
\label{prop:norm_gradient}
\label{prop:gradient}
The partial derivatives of $s_{\gamma_q, \gamma_d}$ with respect to the normalization exponents are:
\begin{align}
\frac{\partial\, s_{\gamma_q, \gamma_d}}{\partial \gamma_q} &= -\ln\|\bm{q}\| \cdot s_{\gamma_q, \gamma_d}(\bm{q}, \bm{d}) \label{eq:grad_gamma_q} \\[4pt]
\frac{\partial\, s_{\gamma_q, \gamma_d}}{\partial \gamma_d} &= -\ln\|\bm{d}\| \cdot s_{\gamma_q, \gamma_d}(\bm{q}, \bm{d}) \label{eq:grad_gamma_d}
\end{align}
\end{proposition}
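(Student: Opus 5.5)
The plan is to use the factored form in \Cref{eq:partial_norm_appendix}, $s_{\gamma_q,\gamma_d}(\bm{q},\bm{d}) = \|\bm{q}\|^{1-\gamma_q}\,\|\bm{d}\|^{1-\gamma_d}\cos\theta$, and differentiate in the exponents while holding the embeddings $\bm{q},\bm{d}$ fixed. The key observation is that $\gamma_q$ enters only through the factor $\|\bm{q}\|^{1-\gamma_q}$, and $\gamma_d$ only through $\|\bm{d}\|^{1-\gamma_d}$. The angle $\theta$ and the other side's norm are therefore constants with respect to each exponent.

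First, I rewrite $\|\bm{q}\|^{1-\gamma_q} = \exp\big((1-\gamma_q)\ln\|\bm{q}\|\big)$. This requires $\|\bm{q}\|>0$, which is already implicit in the definition of $s_{\gamma_q,\gamma_d}$, since a zero embedding makes $\|\bm{q}\|^{-\gamma_q}$ undefined. The chain rule then gives
\[
\frac{\partial}{\partial\gamma_q}\|\bm{q}\|^{1-\gamma_q} = -\ln\|\bm{q}\|\cdot\|\bm{q}\|^{1-\gamma_q}.
\]
Multiplying by the remaining $\gamma_q$-independent factors $\|\bm{d}\|^{1-\gamma_d}\cos\theta$ reconstructs $s_{\gamma_q,\gamma_d}$ itself, which yields \Cref{eq:grad_gamma_q}. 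The identical computation with the roles of $\bm{q}$ and $\bm{d}$ swapped yields \Cref{eq:grad_gamma_d}.

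There is no real obstacle here. The only points needing care are to state explicitly that the derivative is taken with the embeddings held fixed (ignoring the encoder's dependence through backpropagation), and to note that the norms must be positive for the logarithm to be defined.

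If the reader wants derivatives with respect to the raw parameters $\hat\gamma_q,\hat\gamma_d$ rather than $\gamma_q,\gamma_d$, I would add the factor $\sigma'(\hat\gamma)=\sigma(\hat\gamma)(1-\sigma(\hat\gamma))$ from the sigmoid parameterization in \Cref{eq:learnable_norm}. Under the InfoNCE loss, these per-pair derivatives combine through the usual softmax weights $\alpha(p_j - \mathbb{1}[j=+])$.
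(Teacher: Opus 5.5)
Your proposal is correct and matches the paper's proof: both start from the factored form $s_{\gamma_q,\gamma_d} = \|\bm{q}\|^{1-\gamma_q}\|\bm{d}\|^{1-\gamma_d}\cos\theta$, differentiate the only exponent-dependent factor to obtain $-\ln\|\bm{q}\|$ (resp.\ $-\ln\|\bm{d}\|$), and recognize the remaining product as $s_{\gamma_q,\gamma_d}$ itself. Your added remarks are correct refinements that the paper leaves implicit: the norms must be positive, the embeddings are held fixed, and the sigmoid factor $\sigma(\hat\gamma)(1-\sigma(\hat\gamma))$ enters when differentiating with respect to the raw parameters.
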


\begin{proof}
From Eq.~\eqref{eq:partial_norm_appendix}, $s = \|\bm{q}\|^{1-\gamma_q} \cdot \|\bm{d}\|^{1-\gamma_d} \cdot \cos\theta$. Taking the derivative with respect to $\gamma_q$:
\begin{equation*}
\frac{\partial s}{\partial \gamma_q} = \frac{\partial}{\partial \gamma_q} \|\bm{q}\|^{1-\gamma_q} \cdot \|\bm{d}\|^{1-\gamma_d} \cos\theta = -\ln\|\bm{q}\| \cdot \|\bm{q}\|^{1-\gamma_q} \cdot \|\bm{d}\|^{1-\gamma_d} \cos\theta = -\ln\|\bm{q}\| \cdot s
\end{equation*}
The derivation for $\gamma_d$ is analogous.
\end{proof}

\subsection{Qwen 500K Learnable: Gamma Stagnation}
\label{appendix:qwen_500k_gamma}

\Cref{fig:qwen_500k_gamma} compares the learned $\gamma$ trajectories for Qwen3-Base trained on 82K vs 500K data using \emph{identical hyperparameters} (LR $5\!\times\!10^{-6}$, AdamW, no parameter-specific learning rate for $\gamma$). With 82K samples, $\gamma$ successfully drifts toward Cosine ($\gamma \approx 0.503$) within 20 epochs. In contrast, 500K training shows complete $\gamma$ stagnation at the initialization value ($\gamma \approx 0.5000$) throughout all 40 epochs, where the drift magnitude is $\sim$5000$\times$ smaller ($\Delta\gamma < 10^{-5}$ vs $\Delta\gamma \approx 0.003$). Since the same optimizer setup yields meaningful drift at 82K but stagnation at 500K, this is unlikely to be a pure learning-rate issue: the gradient signal w.r.t.\ $\gamma$ is itself attenuated under the larger and more diverse 500K data, suggesting the loss landscape for $\gamma$ has flattened. This explains the flat performance curve in \Cref{fig:training_curves}: without meaningful $\gamma$ updates, the learnable normalization cannot adapt to the data, while DNorm, which has no learnable $\gamma$ parameters, transfers successfully across both scales.

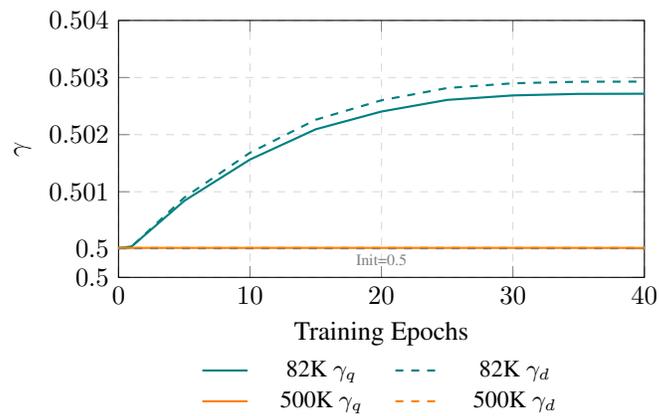
\begin{figure}[t]
\centering
\begin{tikzpicture}
\begin{axis}[
  width=0.5\columnwidth, height=5cm,
  xlabel={Training Epochs},
  ylabel={$\gamma$},
  xmin=0, xmax=40, ymin=0.4995, ymax=0.504,
  grid=major, grid style={dashed,gray!30},
  xtick={0, 10, 20, 30, 40},
  ytick={0.4995, 0.500, 0.501, 0.502, 0.503, 0.504},
  scaled x ticks=false,
  scaled y ticks=false,
  y tick label style={/pgf/number format/fixed, /pgf/number format/precision=3},
  legend style={
    at={(0.5,-0.28)},
    anchor=north,
    font=\small,
    legend columns=2,
    column sep=1em,
    draw=none,
  },
]
\addplot[teal, thick, mark=none] coordinates {(0,0.5000120997) (1,0.5000420000) (5,0.5008397698) (10,0.5015639663) (15,0.5020936728) (20,0.5024048686) (25,0.5026083589) (30,0.5026869774) (35,0.5027138591) (40,0.5027164221)};
\addlegendentry{82K $\gamma_q$}
\addplot[teal, thick, dashed, mark=none] coordinates {(0,0.5000119805) (1,0.5000403523) (5,0.5008971691) (10,0.5016829967) (15,0.5022608638) (20,0.5026019216) (25,0.5028162003) (30,0.5028995872) (35,0.5029264092) (40,0.5029286742)};
\addlegendentry{82K $\gamma_d$}
\addplot[orange, thick, mark=none] coordinates {(0,0.5000149012) (1,0.5000154376) (5,0.5000154376) (10,0.5000154376) (15,0.5000154376) (20,0.5000154376) (25,0.5000154376) (30,0.5000154376) (35,0.5000154376) (40,0.5000154376)};
\addlegendentry{500K $\gamma_q$}
\addplot[orange, thick, dashed, mark=none] coordinates {(0,0.5000158548) (1,0.5000165105) (5,0.5000165105) (10,0.5000165105) (15,0.5000165105) (20,0.5000165105) (25,0.5000165105) (30,0.5000165105) (35,0.5000165105) (40,0.5000165105)};
\addlegendentry{500K $\gamma_d$}
\draw[gray, dashed, thin] (axis cs:0,0.5) -- (axis cs:40,0.5);
\node[gray, font=\tiny] at (axis cs:20,0.4998) {Init=0.5};
\end{axis}
\end{tikzpicture}
\caption{Comparison of learned normalization strengths $\gamma_q, \gamma_d$ for Qwen3-Base trained on 82K vs 500K data. With 82K samples, $\gamma$ drifts toward Cosine ($\gamma \approx 0.503$) within 20 epochs. With 500K samples, $\gamma$ remains stagnant at initialization ($\gamma \approx 0.5000$) throughout 40 epochs, indicating the learnable parameters fail to update. The 500K drift magnitude ($\Delta\gamma < 10^{-5}$) is $\sim$5000$\times$ smaller than 82K ($\Delta\gamma \approx 0.003$).}
\label{fig:qwen_500k_gamma}
\end{figure}


\newpage
\section*{NeurIPS Paper Checklist}

The checklist is designed to encourage best practices for responsible machine learning research, addressing issues of reproducibility, transparency, research ethics, and societal impact. Do not remove the checklist: {\bf The papers not including the checklist will be desk rejected.} The checklist should follow the references and follow the (optional) supplemental material.  The checklist does NOT count towards the page
limit. 

Please read the checklist guidelines carefully for information on how to answer these questions. For each question in the checklist:
\begin{itemize}
    \item You should answer \answerYes{}, \answerNo{}, or \answerNA{}.
    \item \answerNA{} means either that the question is Not Applicable for that particular paper or the relevant information is Not Available.
    \item Please provide a short (1--2 sentence) justification right after your answer (even for \answerNA). 
\end{itemize}

{\bf The checklist answers are an integral part of your paper submission.} They are visible to the reviewers, area chairs, senior area chairs, and ethics reviewers. You will also be asked to include it (after eventual revisions) with the final version of your paper, and its final version will be published with the paper.

The reviewers of your paper will be asked to use the checklist as one of the factors in their evaluation. While \answerYes{} is generally preferable to \answerNo{}, it is perfectly acceptable to answer \answerNo{} provided a proper justification is given (e.g., error bars are not reported because it would be too computationally expensive'' or ``we were unable to find the license for the dataset we used''). In general, answering \answerNo{} or \answerNA{} is not grounds for rejection. While the questions are phrased in a binary way, we acknowledge that the true answer is often more nuanced, so please just use your best judgment and write a justification to elaborate. All supporting evidence can appear either in the main paper or the supplemental material, provided in appendix. If you answer \answerYes{} to a question, in the justification please point to the section(s) where related material for the question can be found.

IMPORTANT, please:
\begin{itemize}
    \item {\bf Delete this instruction block, but keep the section heading ``NeurIPS Paper Checklist"},
    \item  {\bf Keep the checklist subsection headings, questions/answers and guidelines below.}
    \item {\bf Do not modify the questions and only use the provided macros for your answers}.
\end{itemize}


\begin{enumerate}

\item {\bf Claims}
    \item[] Question: Do the main claims made in the abstract and introduction accurately reflect the paper's contributions and scope?
    \item[] Answer: \answerTODO{} 
    \item[] Justification: \justificationTODO{}
    \item[] Guidelines:
    \begin{itemize}
        \item The answer \answerNA{} means that the abstract and introduction do not include the claims made in the paper.
        \item The abstract and/or introduction should clearly state the claims made, including the contributions made in the paper and important assumptions and limitations. A \answerNo{} or \answerNA{} answer to this question will not be perceived well by the reviewers. 
        \item The claims made should match theoretical and experimental results, and reflect how much the results can be expected to generalize to other settings. 
        \item It is fine to include aspirational goals as motivation as long as it is clear that these goals are not attained by the paper. 
    \end{itemize}

\item {\bf Limitations}
    \item[] Question: Does the paper discuss the limitations of the work performed by the authors?
    \item[] Answer: \answerTODO{} 
    \item[] Justification: \justificationTODO{}
    \item[] Guidelines:
    \begin{itemize}
        \item The answer \answerNA{} means that the paper has no limitation while the answer \answerNo{} means that the paper has limitations, but those are not discussed in the paper. 
        \item The authors are encouraged to create a separate ``Limitations'' section in their paper.
        \item The paper should point out any strong assumptions and how robust the results are to violations of these assumptions (e.g., independence assumptions, noiseless settings, model well-specification, asymptotic approximations only holding locally). The authors should reflect on how these assumptions might be violated in practice and what the implications would be.
        \item The authors should reflect on the scope of the claims made, e.g., if the approach was only tested on a few datasets or with a few runs. In general, empirical results often depend on implicit assumptions, which should be articulated.
        \item The authors should reflect on the factors that influence the performance of the approach. For example, a facial recognition algorithm may perform poorly when image resolution is low or images are taken in low lighting. Or a speech-to-text system might not be used reliably to provide closed captions for online lectures because it fails to handle technical jargon.
        \item The authors should discuss the computational efficiency of the proposed algorithms and how they scale with dataset size.
        \item If applicable, the authors should discuss possible limitations of their approach to address problems of privacy and fairness.
        \item While the authors might fear that complete honesty about limitations might be used by reviewers as grounds for rejection, a worse outcome might be that reviewers discover limitations that aren't acknowledged in the paper. The authors should use their best judgment and recognize that individual actions in favor of transparency play an important role in developing norms that preserve the integrity of the community. Reviewers will be specifically instructed to not penalize honesty concerning limitations.
    \end{itemize}

\item {\bf Theory assumptions and proofs}
    \item[] Question: For each theoretical result, does the paper provide the full set of assumptions and a complete (and correct) proof?
    \item[] Answer: \answerTODO{} 
    \item[] Justification: \justificationTODO{}
    \item[] Guidelines:
    \begin{itemize}
        \item The answer \answerNA{} means that the paper does not include theoretical results. 
        \item All the theorems, formulas, and proofs in the paper should be numbered and cross-referenced.
        \item All assumptions should be clearly stated or referenced in the statement of any theorems.
        \item The proofs can either appear in the main paper or the supplemental material, but if they appear in the supplemental material, the authors are encouraged to provide a short proof sketch to provide intuition. 
        \item Inversely, any informal proof provided in the core of the paper should be complemented by formal proofs provided in appendix or supplemental material.
        \item Theorems and Lemmas that the proof relies upon should be properly referenced. 
    \end{itemize}

    \item {\bf Experimental result reproducibility}
    \item[] Question: Does the paper fully disclose all the information needed to reproduce the main experimental results of the paper to the extent that it affects the main claims and/or conclusions of the paper (regardless of whether the code and data are provided or not)?
    \item[] Answer: \answerTODO{} 
    \item[] Justification: \justificationTODO{}
    \item[] Guidelines:
    \begin{itemize}
        \item The answer \answerNA{} means that the paper does not include experiments.
        \item If the paper includes experiments, a \answerNo{} answer to this question will not be perceived well by the reviewers: Making the paper reproducible is important, regardless of whether the code and data are provided or not.
        \item If the contribution is a dataset and\slash or model, the authors should describe the steps taken to make their results reproducible or verifiable. 
        \item Depending on the contribution, reproducibility can be accomplished in various ways. For example, if the contribution is a novel architecture, describing the architecture fully might suffice, or if the contribution is a specific model and empirical evaluation, it may be necessary to either make it possible for others to replicate the model with the same dataset, or provide access to the model. In general. releasing code and data is often one good way to accomplish this, but reproducibility can also be provided via detailed instructions for how to replicate the results, access to a hosted model (e.g., in the case of a large language model), releasing of a model checkpoint, or other means that are appropriate to the research performed.
        \item While NeurIPS does not require releasing code, the conference does require all submissions to provide some reasonable avenue for reproducibility, which may depend on the nature of the contribution. For example
        \begin{enumerate}
            \item If the contribution is primarily a new algorithm, the paper should make it clear how to reproduce that algorithm.
            \item If the contribution is primarily a new model architecture, the paper should describe the architecture clearly and fully.
            \item If the contribution is a new model (e.g., a large language model), then there should either be a way to access this model for reproducing the results or a way to reproduce the model (e.g., with an open-source dataset or instructions for how to construct the dataset).
            \item We recognize that reproducibility may be tricky in some cases, in which case authors are welcome to describe the particular way they provide for reproducibility. In the case of closed-source models, it may be that access to the model is limited in some way (e.g., to registered users), but it should be possible for other researchers to have some path to reproducing or verifying the results.
        \end{enumerate}
    \end{itemize}

\item {\bf Open access to data and code}
    \item[] Question: Does the paper provide open access to the data and code, with sufficient instructions to faithfully reproduce the main experimental results, as described in supplemental material?
    \item[] Answer: \answerTODO{} 
    \item[] Justification: \justificationTODO{}
    \item[] Guidelines:
    \begin{itemize}
        \item The answer \answerNA{} means that paper does not include experiments requiring code.
        \item Please see the NeurIPS code and data submission guidelines (\url{https://neurips.cc/public/guides/CodeSubmissionPolicy}) for more details.
        \item While we encourage the release of code and data, we understand that this might not be possible, so \answerNo{} is an acceptable answer. Papers cannot be rejected simply for not including code, unless this is central to the contribution (e.g., for a new open-source benchmark).
        \item The instructions should contain the exact command and environment needed to run to reproduce the results. See the NeurIPS code and data submission guidelines (\url{https://neurips.cc/public/guides/CodeSubmissionPolicy}) for more details.
        \item The authors should provide instructions on data access and preparation, including how to access the raw data, preprocessed data, intermediate data, and generated data, etc.
        \item The authors should provide scripts to reproduce all experimental results for the new proposed method and baselines. If only a subset of experiments are reproducible, they should state which ones are omitted from the script and why.
        \item At submission time, to preserve anonymity, the authors should release anonymized versions (if applicable).
        \item Providing as much information as possible in supplemental material (appended to the paper) is recommended, but including URLs to data and code is permitted.
    \end{itemize}

\item {\bf Experimental setting/details}
    \item[] Question: Does the paper specify all the training and test details (e.g., data splits, hyperparameters, how they were chosen, type of optimizer) necessary to understand the results?
    \item[] Answer: \answerTODO{} 
    \item[] Justification: \justificationTODO{}
    \item[] Guidelines:
    \begin{itemize}
        \item The answer \answerNA{} means that the paper does not include experiments.
        \item The experimental setting should be presented in the core of the paper to a level of detail that is necessary to appreciate the results and make sense of them.
        \item The full details can be provided either with the code, in appendix, or as supplemental material.
    \end{itemize}

\item {\bf Experiment statistical significance}
    \item[] Question: Does the paper report error bars suitably and correctly defined or other appropriate information about the statistical significance of the experiments?
    \item[] Answer: \answerTODO{} 
    \item[] Justification: \justificationTODO{}
    \item[] Guidelines:
    \begin{itemize}
        \item The answer \answerNA{} means that the paper does not include experiments.
        \item The authors should answer \answerYes{} if the results are accompanied by error bars, confidence intervals, or statistical significance tests, at least for the experiments that support the main claims of the paper.
        \item The factors of variability that the error bars are capturing should be clearly stated (for example, train/test split, initialization, random drawing of some parameter, or overall run with given experimental conditions).
        \item The method for calculating the error bars should be explained (closed form formula, call to a library function, bootstrap, etc.)
        \item The assumptions made should be given (e.g., Normally distributed errors).
        \item It should be clear whether the error bar is the standard deviation or the standard error of the mean.
        \item It is OK to report 1-sigma error bars, but one should state it. The authors should preferably report a 2-sigma error bar than state that they have a 96\% CI, if the hypothesis of Normality of errors is not verified.
        \item For asymmetric distributions, the authors should be careful not to show in tables or figures symmetric error bars that would yield results that are out of range (e.g., negative error rates).
        \item If error bars are reported in tables or plots, the authors should explain in the text how they were calculated and reference the corresponding figures or tables in the text.
    \end{itemize}

\item {\bf Experiments compute resources}
    \item[] Question: For each experiment, does the paper provide sufficient information on the computer resources (type of compute workers, memory, time of execution) needed to reproduce the experiments?
    \item[] Answer: \answerTODO{} 
    \item[] Justification: \justificationTODO{}
    \item[] Guidelines:
    \begin{itemize}
        \item The answer \answerNA{} means that the paper does not include experiments.
        \item The paper should indicate the type of compute workers CPU or GPU, internal cluster, or cloud provider, including relevant memory and storage.
        \item The paper should provide the amount of compute required for each of the individual experimental runs as well as estimate the total compute. 
        \item The paper should disclose whether the full research project required more compute than the experiments reported in the paper (e.g., preliminary or failed experiments that didn't make it into the paper). 
    \end{itemize}
    
\item {\bf Code of ethics}
    \item[] Question: Does the research conducted in the paper conform, in every respect, with the NeurIPS Code of Ethics \url{https://neurips.cc/public/EthicsGuidelines}?
    \item[] Answer: \answerTODO{} 
    \item[] Justification: \justificationTODO{}
    \item[] Guidelines:
    \begin{itemize}
        \item The answer \answerNA{} means that the authors have not reviewed the NeurIPS Code of Ethics.
        \item If the authors answer \answerNo, they should explain the special circumstances that require a deviation from the Code of Ethics.
        \item The authors should make sure to preserve anonymity (e.g., if there is a special consideration due to laws or regulations in their jurisdiction).
    \end{itemize}

\item {\bf Broader impacts}
    \item[] Question: Does the paper discuss both potential positive societal impacts and negative societal impacts of the work performed?
    \item[] Answer: \answerTODO{} 
    \item[] Justification: \justificationTODO{}
    \item[] Guidelines:
    \begin{itemize}
        \item The answer \answerNA{} means that there is no societal impact of the work performed.
        \item If the authors answer \answerNA{} or \answerNo, they should explain why their work has no societal impact or why the paper does not address societal impact.
        \item Examples of negative societal impacts include potential malicious or unintended uses (e.g., disinformation, generating fake profiles, surveillance), fairness considerations (e.g., deployment of technologies that could make decisions that unfairly impact specific groups), privacy considerations, and security considerations.
        \item The conference expects that many papers will be foundational research and not tied to particular applications, let alone deployments. However, if there is a direct path to any negative applications, the authors should point it out. For example, it is legitimate to point out that an improvement in the quality of generative models could be used to generate Deepfakes for disinformation. On the other hand, it is not needed to point out that a generic algorithm for optimizing neural networks could enable people to train models that generate Deepfakes faster.
        \item The authors should consider possible harms that could arise when the technology is being used as intended and functioning correctly, harms that could arise when the technology is being used as intended but gives incorrect results, and harms following from (intentional or unintentional) misuse of the technology.
        \item If there are negative societal impacts, the authors could also discuss possible mitigation strategies (e.g., gated release of models, providing defenses in addition to attacks, mechanisms for monitoring misuse, mechanisms to monitor how a system learns from feedback over time, improving the efficiency and accessibility of ML).
    \end{itemize}
    
\item {\bf Safeguards}
    \item[] Question: Does the paper describe safeguards that have been put in place for responsible release of data or models that have a high risk for misuse (e.g., pre-trained language models, image generators, or scraped datasets)?
    \item[] Answer: \answerTODO{} 
    \item[] Justification: \justificationTODO{}
    \item[] Guidelines:
    \begin{itemize}
        \item The answer \answerNA{} means that the paper poses no such risks.
        \item Released models that have a high risk for misuse or dual-use should be released with necessary safeguards to allow for controlled use of the model, for example by requiring that users adhere to usage guidelines or restrictions to access the model or implementing safety filters. 
        \item Datasets that have been scraped from the Internet could pose safety risks. The authors should describe how they avoided releasing unsafe images.
        \item We recognize that providing effective safeguards is challenging, and many papers do not require this, but we encourage authors to take this into account and make a best faith effort.
    \end{itemize}

\item {\bf Licenses for existing assets}
    \item[] Question: Are the creators or original owners of assets (e.g., code, data, models), used in the paper, properly credited and are the license and terms of use explicitly mentioned and properly respected?
    \item[] Answer: \answerTODO{} 
    \item[] Justification: \justificationTODO{}
    \item[] Guidelines:
    \begin{itemize}
        \item The answer \answerNA{} means that the paper does not use existing assets.
        \item The authors should cite the original paper that produced the code package or dataset.
        \item The authors should state which version of the asset is used and, if possible, include a URL.
        \item The name of the license (e.g., CC-BY 4.0) should be included for each asset.
        \item For scraped data from a particular source (e.g., website), the copyright and terms of service of that source should be provided.
        \item If assets are released, the license, copyright information, and terms of use in the package should be provided. For popular datasets, \url{paperswithcode.com/datasets} has curated licenses for some datasets. Their licensing guide can help determine the license of a dataset.
        \item For existing datasets that are re-packaged, both the original license and the license of the derived asset (if it has changed) should be provided.
        \item If this information is not available online, the authors are encouraged to reach out to the asset's creators.
    \end{itemize}

\item {\bf New assets}
    \item[] Question: Are new assets introduced in the paper well documented and is the documentation provided alongside the assets?
    \item[] Answer: \answerTODO{} 
    \item[] Justification: \justificationTODO{}
    \item[] Guidelines:
    \begin{itemize}
        \item The answer \answerNA{} means that the paper does not release new assets.
        \item Researchers should communicate the details of the dataset\slash code\slash model as part of their submissions via structured templates. This includes details about training, license, limitations, etc. 
        \item The paper should discuss whether and how consent was obtained from people whose asset is used.
        \item At submission time, remember to anonymize your assets (if applicable). You can either create an anonymized URL or include an anonymized zip file.
    \end{itemize}

\item {\bf Crowdsourcing and research with human subjects}
    \item[] Question: For crowdsourcing experiments and research with human subjects, does the paper include the full text of instructions given to participants and screenshots, if applicable, as well as details about compensation (if any)? 
    \item[] Answer: \answerTODO{} 
    \item[] Justification: \justificationTODO{}
    \item[] Guidelines:
    \begin{itemize}
        \item The answer \answerNA{} means that the paper does not involve crowdsourcing nor research with human subjects.
        \item Including this information in the supplemental material is fine, but if the main contribution of the paper involves human subjects, then as much detail as possible should be included in the main paper. 
        \item According to the NeurIPS Code of Ethics, workers involved in data collection, curation, or other labor should be paid at least the minimum wage in the country of the data collector. 
    \end{itemize}

\item {\bf Institutional review board (IRB) approvals or equivalent for research with human subjects}
    \item[] Question: Does the paper describe potential risks incurred by study participants, whether such risks were disclosed to the subjects, and whether Institutional Review Board (IRB) approvals (or an equivalent approval/review based on the requirements of your country or institution) were obtained?
    \item[] Answer: \answerTODO{} 
    \item[] Justification: \justificationTODO{}
    \item[] Guidelines:
    \begin{itemize}
        \item The answer \answerNA{} means that the paper does not involve crowdsourcing nor research with human subjects.
        \item Depending on the country in which research is conducted, IRB approval (or equivalent) may be required for any human subjects research. If you obtained IRB approval, you should clearly state this in the paper. 
        \item We recognize that the procedures for this may vary significantly between institutions and locations, and we expect authors to adhere to the NeurIPS Code of Ethics and the guidelines for their institution. 
        \item For initial submissions, do not include any information that would break anonymity (if applicable), such as the institution conducting the review.
    \end{itemize}

\item {\bf Declaration of LLM usage}
    \item[] Question: Does the paper describe the usage of LLMs if it is an important, original, or non-standard component of the core methods in this research? Note that if the LLM is used only for writing, editing, or formatting purposes and does \emph{not} impact the core methodology, scientific rigor, or originality of the research, declaration is not required.
    \item[] Answer: \answerTODO{} 
    \item[] Justification: \justificationTODO{}
    \item[] Guidelines:
    \begin{itemize}
        \item The answer \answerNA{} means that the core method development in this research does not involve LLMs as any important, original, or non-standard components.
        \item Please refer to our LLM policy in the NeurIPS handbook for what should or should not be described.
    \end{itemize}

\end{enumerate}

\end{document}